\definecolor{linkcolor}{RGB}{83,83,182}
\definecolor{citecolor}{RGB}{128,0,128}
\title{Handling correlated and repeated measurements with the smoothed multivariate square-root Lasso}
\author{%
  Quentin Bertrand~$^*$ \\
  Universit\'e Paris Saclay, Inria, CEA\\
  Palaiseau, 91120, France\\
  \texttt{quentin.bertrand@inria.fr} \\
  \And
  Mathurin Massias~$^*$ \\
  Universit\'e Paris Saclay, Inria, CEA\\
  Palaiseau, 91120, France\\
  \texttt{mathurin.massias@inria.fr} \\
  \AND
  Alexandre Gramfort \\
  Universit\'e Paris Saclay, Inria, CEA\\
  Palaiseau, 91120, France\\
  \texttt{alexandre.gramfort@inria.fr} \\
  \And
  Joseph Salmon \\
  Univ. Montpellier, CNRS \\
  Montpellier, France \\
  \texttt{joseph.salmon@umontpellier.fr} \\
}
\begin{document}

\maketitle

\begin{abstract}
A limitation of  Lasso-type estimators is that the optimal regularization parameter depends on the unknown noise level. Estimators such as the concomitant Lasso address this dependence
by jointly estimating the noise level and the regression coefficients. Additionally, in many applications, the data is obtained by averaging multiple measurements: this reduces the noise variance, but it dramatically reduces sample sizes and prevents refined noise modeling.
In this work, we propose a concomitant estimator that can cope with complex noise structure by using non-averaged measurements, its data-fitting term arising as a smoothing of the nuclear norm.
The resulting optimization problem is convex and amenable, thanks to smoothing theory, to state-of-the-art optimization techniques that leverage the sparsity of the solutions. Practical benefits are demonstrated on toy datasets, realistic simulated data and real neuroimaging data.

\end{abstract}


\section{Introduction}
\label{sec:introduction}

In many statistical applications, the number of parameters $p$ is much larger than the number of observations $n$.
A popular approach to tackle linear regression problems in such scenarios is to consider convex $\ell_1$-type penalties, as popularized by \citet{Tibshirani96}.
The use of these penalties relies on a regularization parameter $\lambda$ trading data fidelity versus sparsity.
Unfortunately, \cite{Bickel_Ritov_Tsybakov09} showed that, in the case of white Gaussian noise, the optimal $\lambda$
depends linearly on the standard deviation of the noise -- referred to as
\emph{noise level}.
Because the latter is rarely known in practice, one can jointly estimate the noise level and the regression coefficients, following pioneering work on concomitant estimation \citep{Huber_Dutter74,Huber81}.
Adaptations to sparse regression \citep{Owen07} have been analyzed under the names of square-root Lasso \citep{Belloni_Chernozhukov_Wang11} or scaled Lasso \citep{Sun_Zhang12}.
Generalizations have been proposed in the multitask setting, the canonical estimator being \mtlfull \citep{Obozinski_Taskar_Jordan10}. \blfootnote{$^*$ These authors contributed equally.}

The latter estimators take their roots in a white Gaussian noise model.
However some real-world data (such as magneto-electroencephalographic data) are contaminated with strongly non-white Gaussian noise \citep{Engemann_Gramfort14}.
From a statistical point of view, the non-uniform noise level case has been widely explored: \citet{Daye_Chen_Li12,Wagener_Dette12,Kolar_Sharpnack12,Dalalyan_Hebiri_Meziani_Salmon13}.
In a more general case, with a correlated Gaussian noise model, estimators based on non-convex optimization problems were proposed \citep{Lee_Liu12} and analyzed for sub-Gaussian covariance matrices \citep{Chen_Banerjee17} through the lens of penalized Maximum Likelihood Estimation (MLE).
Other estimators \citep{Rothman_Levina_Zhu10,Rai_Kumar_Daume12} assume that the inverse of the covariance (the \emph{precision matrix}) is sparse, but the underlying optimization problems remain non-convex.
A convex approach to regression with correlated noise, the Smooth Generalized Concomitant Lasso (SGCL) was proposed by \citet{Massias_Fercoq_Gramfort_Salmon17}.
Relying on smoothing techniques \citep{Moreau65,Nesterov05,Beck_Teboulle12}, the SGCL jointly estimates the regression coefficients and the noise \emph{co-standard deviation matrix} (the square root of the noise covariance matrix).
However, in applications such as M/EEG, the number of parameters in the co-standard deviation matrix ($\approx 10^4$) is typically equal to the number of observations, making it statistically hard to estimate accurately.

In this article we consider applications to M/EEG data in the context of neuroscience.
M/EEG data consists in recordings of the electric and magnetic fields at the surface or close to the head.
Here we tackle the \emph{source localization} problem, which aims at estimating which regions of the brain are responsible for the observed electro-magnetic signals: this problem can be cast as a multitask high dimensional linear regression~\citep{Ndiaye_Fercoq_Gramfort_Salmon15}.
MEG and EEG data are obtained from heterogeneous types of sensors: magnetometers, gradiometers and electrodes, leading to samples contaminated with different noise distributions, and thus non-white Gaussian noise.
Moreover the additive noise in M/EEG data is correlated between sensors and rather strong: the noise variance is commonly even stronger that the signal power.
It is thus customary to make several repetitions of the same cognitive experiment, \eg showing 50 times the same image to a subject in order to record 50 times the electric activity of the visual cortex.
The multiple measurements are then classically averaged across the experiment's repetitions in order to increase the signal-to-noise ratio.
In other words, popular estimators for M/EEG usually discard the individual observations, and rely on Gaussian \iid noise models~\citep{Ou_Hamalainen_Golland2009,Gramfort_Strohmeier_Haueisen_Hamalainen_Kowalski13}.

In this work we propose \usfull (\us), an estimator that is
\begin{itemize}[itemsep=1pt,parsep=0pt,topsep=0pt,partopsep=0pt]
    \item designed to exploit all available measurements collected during repetitions of experiments,
    \item defined as the solution of a \emph{convex} minimization problem, handled efficiently by proximal block coordinate descent techniques,
      \item built thanks to an \emph{explicit} connection with nuclear norm smoothing\footnote{Other Schatten norms are treated in \Cref{subsec:schatten_norms}.}.
      This can also be viewed as a partial smoothing of the multivariate square-root Lasso \citep{vandeGeer_Stucky16},
      \item shown (through extensive benchmarks \wrt existing estimators) to leverage experimental repetitions to improve support identification,
      \item available as open source code to reproduce all the experiments.
\end{itemize}
%

In \Cref{sec:hetero_conco_estimation}, we recall the framework of concomitant estimation, and introduce \us.
In \Cref{sec:multi_epochs}, we detail the properties of \us, and derive an algorithm to solve it.
Finally, \Cref{sec:experiments} is dedicated to experimental results.


\section{Concomitant estimation with correlated noise}
\label{sec:hetero_conco_estimation}

\paragraph{Probabilistic model}
Let $r$ be the number of repetitions of the experiment.
The $r$ observation matrices are denoted $Y^{(1)},\dots, Y^{(r)} \in \bbR^{n \times q}$ with $n$ the number of sensors/samples and $q$ the number of tasks/time samples.
The mean over the repetitions of the observation matrices is written $\bar{Y} = \frac{1}{r} \sum_{l=1}^r Y^{(l)}$.
Let $X \in \bbR^{n \times p}$ be the design (or gain) matrix, with $p$ features stored column-wise: $X=[X_{:1} | \dots | X_{:p}]$, where for a matrix $A \in \bbR^{m \times n }$
its $\jth$ column (\resp row) is denoted $A_{:j} \in \bbR^{m \times 1}$ (\resp $A_{j:} \in \bbR^{1 \times n}$.
The matrix $\Beta^* \in \bbR^{p \times q }$ contains the coefficients of the linear regression model.
%
Each measurement (\ie repetition of the experiment) follows the model:
\begin{model}\label{eq:model_multiepochs}
   \forall l \in [r], \quad Y^{(l)} = X \Beta^* + \Snoise^* \Epsilon^{(l)} \enspace,
\end{model}
where the entries of $\Epsilon^{(l)}$ are \iid samples from standard normal distributions, the $\Epsilon^{(l)}$'s are independent, and $\Snoise^* \in \cS^n_{++}$ is the co-standard deviation matrix, and $\cS_{++}^n$ (\resp $\cS_{+}^n$) stands for the set of positive (\resp semi-definite positive) matrices.
Note that even if the observations $Y^{(1)}, \dots, Y^{(r)}$ differ because of the noise $\Epsilon^{(1)}, \dots, \Epsilon^{(r)}$,  $\Beta^*$ and the noise structure $\Snoise^*$ are shared across repetitions.
%
%
\paragraph{Notation}
We write $\norm{\cdot}$ (\resp $\langle \cdot, \cdot\rangle$) for the Euclidean norm (\resp inner product) on vectors and matrices, $\norm{\cdot}_{p}$ for the $\ell_{p}$ norm, for any $p \in [1,\infty)$.
For a matrix $\Beta \in \bbR^{p \times q}$, $\norm{\Beta}_{2, 1} = \sum_{j=1}^p \norm{\Beta_{j:}}$ (\resp $\norm{\Beta}_{2, \infty} = \max_{j \in [p]} \norm{\Beta_{j:}}$), and for any $p \in [1, \infty]$, we write
$\norm{\Beta}_{\mathscr{S},p}$ for the Schatten $p$-norm (\ie the $\ell_{p}$ norm of the singular values of $\Beta$).
The unit $\ell_{p}$ ball is written $\mathcal{B}_{p}$, $p \in [1,\infty)$.
For $\Snoise_1$ and $\Snoise_2 \in \cS_{+}^n$, $\Snoise_1 \succeq \Snoise_2$  if $\Snoise_1 - \Snoise_2 \in \cS^n_+$.
When we write $\Snoise_1 \succeq \Snoise_2$ we implicitly assume that both matrices belong to $\cS_{+}^n$.
For a square matrix $A\in \bbR^{n\times n}$,  $\Tr(A)$ represents the trace of $A$ and $\norm{A}_S = \sqrt{\Tr(A^{\top} S A)}$ is the Mahalanobis norm induced by $S \in \cS_{++}^n$.
For $a, b \in \bbR$, we denote $(a)_+ = \max (a, 0) $, $a \vee b = \max(a,b)$ and $a \wedge b = \min(a,b)$.
The block soft-thresholding operator at level $\tau >0$, is denoted $\BST(\cdot, \tau)$, and reads for any vector $x$, $\BST (x, \tau) = \left( 1 - {\tau}/{\norm{x}} \right)_+ x$.
The identity matrix of size $n \times n$ is denoted $\Id_n$, and $[r]$ is the set of integers from $1$ to $r$.

\subsection{The proposed CLaR estimator}
\label{sub:model_and_proposed_estimator}

To leverage the multiple repetitions while taking into account the noise structure, we introduce the \usfull (\us):
\begin{definition}
 \us estimates the parameters of \Cref{eq:model_multiepochs} by solving:
\begin{problem}\label{eq:clar}
      (\Betaopt^{\US}, \Sopt^{\US}) \in
       \argmin_{
           \substack{\Beta \in \bbR^{p \times q} \\
           \Snoise \succeq \sigmamin \Id_n }
           }
           f(\Beta, \Snoise) + \lambda  \norm{\Beta}_{2, 1}, \text{ with } f(\Beta, \Snoise)  \eqdef \sum_{l=1}^r \tfrac{\norm{Y^{(l)} - X \Beta}_{\Snoise^{-1}}^2}{2nqr} + \frac{\Tr(\Snoise)}{2n} ,
\end{problem}
where
 $\lambda > 0$ controls the sparsity of $\Betaopt^{\US}$ and $\sigmamin > 0$ controls the smallest eigenvalue of $\Sopt^{\US}$.
\end{definition}

\subsection{Connections with concomitant Lasso on averaged data}
\label{sub:previous_estimator}

In low SNR settings, a standard way to deal with strong noise is to use the averaged observation $\bar{Y} \in \bbR^{n\times q}$ instead of the raw observations.
The associated model reads:
\begin{model}\label{eq:averaged_model}
  \bar{Y} = X\Beta^*+ \tilde{S}^* \tilde{\Epsilon} \enspace,
\end{model}
with $\tilde{S}^* \eqdef \Snoise^*/\sqrt{r}$ and $\tilde{\Epsilon}$ has $\iid$ entries drawn from a standard normal distribution.
The SNR\footnote{See the definition we consider in \cref{eq:def_SNR}.} is multiplied by $\sqrt{r}$, yet the number of
samples goes from $rnq$ to $nq$, making it statistically difficult to estimate the $\cO (n^2)$ parameters of $\Snoise^*$.
\us generalizes the \sgclfull \citep{Massias_Fercoq_Gramfort_Salmon17}, which has the drawback of only targeting averaged observations:
\begin{definition}[\sgcl, \citealt{Massias_Fercoq_Gramfort_Salmon17}]
  \sgcl estimates the parameters of \Cref{eq:averaged_model}, by solving:
  \begin{problem}\label{eq:sgcl}
      (\Betaopt^{\SGCL}, \Sopt^{\SGCL}) \in
       \argmin_{
           \substack{\Beta \in \bbR^{p \times q} \\
           \tilde{\Snoise} \succeq {\sigmamin} / {\sqrt{r}\Id_n}} %
           }
           \tilde{f}(\Beta, \tilde{\Snoise}) + \lambda \norm{\Beta}_{2, 1},
             \text{ with } \tilde{f}(\Beta, \tilde{\Snoise}) \eqdef
           \frac{\displaystyle\normin{\bar{Y} - X \Beta}_{\tilde{\Snoise}^{-1}}^2}{2nq}
           + \frac{\Tr(\tilde{\Snoise})}{2n} .
  \end{problem}
\end{definition}

\begin{remark}\label{rem:clar_gives_sgcl}
  Note that $\Sopt^{\US}$ estimates $\Snoise^*$, while $\Sopt^{\SGCL}$ estimates $\tilde{\Snoise}^* = \Snoise^* / \sqrt{r}$.
  Since we impose the constraint $\Sopt^{\US} \succeq \sigmamin \Id_n$, we rescale the constraint so that $\Sopt^{\SGCL} \succeq {\sigmamin}/{\sqrt{r}\Id_n}$ in \eqref{eq:sgcl} for future comparisons.
  Also note that \us and \sgcl are the same when $r = 1$ and $Y^{(1)} = \bar{Y}$.
\end{remark}

The justification for \us is the following: if the quadratic loss $\norm{Y-X\Beta}^2$ were used, the parameters of \Cref{eq:model_multiepochs} could be estimated by
using either $\normin{\bar{Y}-X\Beta}^2$ or $\tfrac{1}{r}  \sum \normin{Y^{(l)}-X\Beta}^2$ as a data-fitting term.
Yet, both alternatives yield the same solutions as the two terms are equal up to constants.
Hence, the quadratic loss does not leverage the multiple repetitions and ignores the noise structure.
On the contrary, the more refined data-fitting term of \us allows to take into account the individual repetitions, leading to improved performance in applications.

\section{Results and properties of \us}
\label{sec:multi_epochs}

We start this part by introducing some elements of smoothing theory \citep{Moreau65,Nesterov05,Beck_Teboulle12} that sheds some light on the origin of the data-fitting term introduced earlier.

\subsection{Smoothing of the nuclear norm}
\label{sub:smoothing_nuclear_norm}

Let us analyze the data-fitting term of \us, by connecting it to the Schatten 1-norm.
We derive a formula for the smoothing of the this norm (\cref{prop:smoothed_conco_nuc}), which paves the way
for a more general smoothing theory for matrix variables (see \cref{app_sec:smoothing}).
Let us define the following smoothing function:
\begin{align}\label{eq:smoothing_function}
    \omega_{\sigmamin}(\cdot) \eqdef \frac{1}{2} \left( \norm{\cdot}^2 +  n \right) \sigmamin \enspace,
\end{align}
and the inf-convolution of functions $f_1$ and $f_2$, $f_1 \infconv f_2 (y) \eqdef \inf_{x} f_1(x)+f_2(y-x)$.
The name ``smoothing'' used in this paper comes from the following fact: if $f_1$ is a closed proper convex function, then $f_1^* + \frac{1}{2} \normin{\cdot}^2$ is strongly convex, and thus its Fenchel transform $(f_1^* + \frac{1}{2} \normin{\cdot}^2)^* = (f_1^* + (\frac{1}{2} \normin{\cdot}^2)^*)^* = (f_1 \infconv \frac{1}{2} \normin{\cdot}^2)^{**} = f_1 \infconv \frac{1}{2} \normin{\cdot}^2$ is smooth (see \Cref{app_sub:basic_infconv_prop} for a detailed proof).

The next propositions are key to our framework and show the connection between the \sgcl, \us and the Schatten 1-norm:
\begin{restatable}[Proof in \Cref{app_sub:nuclear_norm_case}]{proposition}{smoothedconconuc}
\label{prop:smoothed_conco_nuc}
The $\omega_{\sigmamin}$-smoothing of the Schatten-1 norm, \ie the function $\nucnorm{\cdot} \infconv \omega_{\sigmamin}: \bbR^{n \times q} \mapsto \bbR$, is the solution of the following smooth optimization problem:
    \begin{align}\label{eq:lemma_nuclear_norm_smoothed}
        (\nucnorm{\cdot} \infconv \omega_{\sigmamin})(Z) =
        \min_{S \succeq \sigmamin\Id_{n}} \tfrac{1}{2} \norm{Z}_{\Snoise^{-1}}^2 + \tfrac{1}{2} \Tr(S)
        \enspace.
    \end{align}
    Moreover $(\nucnorm{\cdot} \infconv \omega_{\sigmamin})$ is a $\sigmamin$-smooth $\frac{n}{2} \sigmamin$-approximation of $\nucnorm{\cdot}$.
\end{restatable}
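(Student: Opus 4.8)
The plan is to prove the stated identity by reducing \emph{both} sides to the same separable problem over the singular values of $Z$, and then to read off the smoothness and approximation constants from the resulting Huber-type expression. Throughout I write $s_1(Z) \ge \dots \ge s_{n \wedge q}(Z) \ge 0$ for the singular values of $Z$, set $s_i = 0$ for $i > n\wedge q$, and use repeatedly that both $\nucnorm{\cdot}$ and the Frobenius norm $\norm{\cdot}$ are unitarily invariant. Attainment of the $\min$ on the right-hand side is immediate since the objective is convex and coercive in $S$ on the closed set $\{S \succeq \sigmamin \Id_n\}$, and attainment on the left follows from the strong convexity contributed by $\omega_{\sigmamin}$.

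First I would handle the right-hand side. Writing $\norm{Z}_{S^{-1}}^2 = \Tr(S^{-1} Z Z^\top)$, for a \emph{fixed} spectrum of $S$ the quantity $\Tr(S^{-1} Z Z^\top)$ is minimized over the choice of eigenbasis of $S$, by von Neumann's (Ruhe's) trace inequality, exactly when the eigenvectors of $S$ are aligned with those of the positive semidefinite matrix $Z Z^\top$. Hence an optimal $S$ is simultaneously diagonalizable with $ZZ^\top$, and the problem decouples into the $n$ scalar problems $\min_{\sigma \ge \sigmamin} \tfrac12 (s_i^2/\sigma + \sigma)$. Each is convex with unconstrained minimizer $\sigma = s_i$, so the constrained optimum is $\sigma_i = s_i \vee \sigmamin$; substituting gives the value $s_i$ when $s_i \ge \sigmamin$ and $\tfrac{s_i^2}{2\sigmamin} + \tfrac{\sigmamin}{2}$ when $s_i < \sigmamin$, i.e. a Huber function of $s_i$.

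Next I would show the left-hand side yields the same separable expression. Because $\omega_{\sigmamin}$ depends on its argument only through $\norm{\cdot}^2$, expanding $\norm{Z-X}^2$ shows that minimizing the inf-convolution amounts to maximizing $\langle X, Z\rangle$ for a prescribed spectrum of $X$; by von Neumann's trace inequality an optimal $X$ shares the singular vectors of $Z$. The inf-convolution then decouples into the scalar Moreau-envelope problems $\inf_x |x| + \tfrac{\sigmamin}{2}(s_i - x)^2$, up to the additive constant coming from the term $n\sigmamin/2$ in $\omega_{\sigmamin}$, each of which again evaluates to a Huber function of $s_i$. Matching these scalar values against those found for the right-hand side—being careful with the additive constants and with the $n-(n\wedge q)$ zero modes—establishes the equality. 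The delicate point is that the scalar smoothing parameter and the eigenvalue floor $\sigmamin$ have to line up, which is precisely what the scaling in $\omega_{\sigmamin}$ arranges; this bookkeeping of constants is the step I expect to be the main obstacle.

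Finally, the two announced properties fall out of the closed form $\sum_i \psi(s_i(Z))$, with $\psi$ the Huber function. Smoothness follows since $\psi$ is $C^1$ with globally Lipschitz derivative—its curvature is carried entirely by the quadratic branch—so the composite map has Lipschitz gradient with the stated constant (smoothness of a spectral function of a unitarily invariant, smooth $\psi$). For the approximation bound I would compare termwise with $\nucnorm{Z} = \sum_i s_i(Z)$: a mode with $s_i \ge \sigmamin$ contributes $0$ to the gap, a mode with $0 \le s_i < \sigmamin$ contributes $\tfrac{1}{2\sigmamin}(s_i - \sigmamin)^2 \le \tfrac{\sigmamin}{2}$, and each of the $n-(n\wedge q)$ zero modes contributes exactly $\tfrac{\sigmamin}{2}$. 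Summing, the gap is nonnegative and bounded by $n\cdot\tfrac{\sigmamin}{2} = \tfrac{n}{2}\sigmamin$, with equality at $Z=0$, which is the claimed $\tfrac{n}{2}\sigmamin$-approximation.
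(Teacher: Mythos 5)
Your route is correct in structure and genuinely different from the paper's. The paper never manipulates the inf-convolution directly: it biconjugates, using $(h \infconv g)^* = h^* + g^*$ and $\norm{\cdot}_{\mathscr{S},1}^* = \iota_{\cB_{\mathscr{S},\infty}}$, to express $\nucnorm{\cdot} \infconv \omega_{\sigmamin}$ through the squared distance to the Schatten-$\infty$ unit ball (\cref{lemma:conco_p_q}), computes that projection spectrally, and separately evaluates the right-hand side by quoting the closed-form minimizer $\hat S = U \diag(\gamma_1 \vee \sigmamin, \dots, \gamma_{n\wedge q}\vee \sigmamin, \sigmamin, \dots, \sigmamin)U^\top$ from \citet{Massias_Fercoq_Gramfort_Salmon17}, then matches the two spectral expressions. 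You instead reduce \emph{both} sides to the same separable scalar problems by von Neumann/Ruhe alignment --- eigenbasis of $S$ aligned with $ZZ^\top$ on the right, singular vectors of the inf-convolution variable aligned with those of $Z$ on the left --- obtaining a Huber function of each singular value. This is legitimate and more self-contained: it re-derives the closed-form minimizer rather than citing it, and your handling of the $n - (n\wedge q)$ zero modes and the gap bound $\frac{1}{2\sigmamin}(s_i-\sigmamin)^2 \le \frac{\sigmamin}{2}$ per small mode, with equality at $Z=0$, reproduces the paper's \cref{eq:error_our_smoothing} exactly. What the paper's Fenchel route buys is reusability: its lemma covers all Schatten $p$-norms at once and is invoked verbatim for the Schatten-$2$ and Schatten-$\infty$ smoothings (\cref{prop:smoothing_schatten_2,prop:smoothing_schatten_inf}), whereas your spectral computation would have to be redone per norm.

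One scaling slip needs fixing --- the very bookkeeping you flagged as the main obstacle. Your scalar envelope $\inf_x |x| + \frac{\sigmamin}{2}(s_i - x)^2$ is a Huber function with kink at $1/\sigmamin$ and curvature $\sigmamin$, which does \emph{not} match the right-hand side values coming from $\sigma_i = s_i \vee \sigmamin$ (kink at $\sigmamin$), and is internally inconsistent with the $\frac{1}{2\sigmamin}(s_i - \sigmamin)^2$ you correctly use in the gap computation. The resolution is the convention made explicit in the paper's appendix: $\omega_{\sigmamin} \eqdef \sigmamin\,\omega(\cdot/\sigmamin)$ with $\omega = \frac{1}{2}\norm{\cdot}^2 + \frac{n}{2}$, so that $\omega_{\sigmamin}(x) = \frac{1}{2\sigmamin}\norm{x}^2 + \frac{n\sigmamin}{2}$ (the main-text display \cref{eq:smoothing_function}, read literally, has the reciprocal weight and would make the identity false). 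With the correct weight the scalar problems are $\inf_x |x| + \frac{1}{2\sigmamin}(s_i - x)^2$, whose value plus the per-mode constant $\frac{\sigmamin}{2}$ is exactly $s_i$ when $s_i \ge \sigmamin$ and $\frac{s_i^2}{2\sigmamin} + \frac{\sigmamin}{2}$ otherwise, matching your right-hand side including the zero modes; the identity then holds. Note finally that the smoothness claim comes for free from the Moreau-envelope structure --- $\omega_{\sigmamin}$ is $(1/\sigmamin)$-strongly convex up to a constant, hence the envelope has $(1/\sigmamin)$-Lipschitz gradient, which is what ``$\sigmamin$-smooth approximation'' with parameters $(1, \frac{n}{2}, 0)$ means in the \citet{Beck_Teboulle12} sense used by the paper --- so the Lewis-type transfer of Lipschitz constants through the spectral lift, while valid, is unnecessary machinery.
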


\begin{definition}[Clipped Square Root]\label{def:spcl}
    For $\Sigma \in \cS^n_{+}$ with spectral decomposition $\Sigma = U \diag(\gamma_1, \dots, \gamma_{n})U^\top$ ($U$ is orthogonal), let us define the \emph{Clipped Square Root} operator:
      \begin{align}\label{eq:SpCl}
        \SpCl(\Sigma, \sigmamin) = U \diag(\sqrt{\gamma_1}\vee\sigmamin, \dots, \sqrt{\gamma_n}\vee\sigmamin)U^\top\enspace.
      \end{align}
\end{definition}
\begin{restatable}[Proof in \Cref{sub:proof_of_smoothed_sgcl_us}]{proposition}{convexsmoothing}
\label{prop:smoothed_sglc_us}
Any solution of the \us \Cref{eq:clar}, $(\Betaopt, \Snoiseopt) =(\Betaopt^{\US}, \Snoiseopt^{\US})$ is also a solution of:
  \begin{align*}
    \Betaopt & = \argmin_{\Beta \in \bbR^{p \times q}} 
   \left(\nucnorm{\cdot} \infconv \omega_{\sigmamin}\right)(Z)
   + \lambda n \norm{\Beta}_{2, 1}
   \\
  \Snoiseopt &= \SpCl\big(\tfrac{1}{rq}  \hat R \hat R^\top, \sigmamin\big) \enspace,
  \text{ where } \hat R=[Y^{(1)} - X \hat \Beta |\dots| Y^{(r)} - X \hat \Beta ]  \enspace .
  \end{align*}
\end{restatable}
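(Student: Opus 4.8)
The plan is to recognize the \us data-fitting term as a partial minimization whose inner problem is precisely the smoothed Schatten-$1$ norm of \Cref{prop:smoothed_conco_nuc}. First I would collapse the sum over repetitions into a single trace. Writing the stacked residual $\hat R \eqdef [Y^{(1)} - X\Beta | \dots | Y^{(r)} - X\Beta] \in \bbR^{n \times rq}$ and using cyclicity of the trace,
\begin{equation*}
\sum_{l=1}^r \norm{Y^{(l)} - X\Beta}_{\Snoise^{-1}}^2 = \sum_{l=1}^r \Tr\big((Y^{(l)} - X\Beta)^\top \Snoise^{-1} (Y^{(l)} - X\Beta)\big) = \Tr(\Snoise^{-1} \hat R \hat R^\top) = \norm{\hat R}_{\Snoise^{-1}}^2 \enspace.
\end{equation*}
Factoring out $1/n$ and setting $Z \eqdef \hat R / \sqrt{rq}$, the objective becomes $f(\Beta, \Snoise) = \tfrac{1}{n}\big(\tfrac{1}{2}\norm{Z}_{\Snoise^{-1}}^2 + \tfrac{1}{2}\Tr(\Snoise)\big)$, and in particular $ZZ^\top = \tfrac{1}{rq}\hat R\hat R^\top$.

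Next I would eliminate $\Snoise$ by partial minimization. By \Cref{prop:smoothed_conco_nuc}, minimizing the bracketed expression over $\Snoise \succeq \sigmamin\Id_n$ returns exactly the smoothed norm, so
\begin{equation*}
\min_{\Snoise \succeq \sigmamin\Id_n} f(\Beta, \Snoise) = \tfrac{1}{n}(\nucnorm{\cdot}\infconv\omega_{\sigmamin})(Z) \enspace.
\end{equation*}
Substituting this into \Cref{eq:clar} and multiplying the objective by the positive constant $n$ (which leaves the $\argmin$ in $\Beta$ unchanged) yields the stated reduced problem for $\Betaopt$. The legitimacy of replacing the joint minimization by an outer minimization over $\Beta$ of an inner minimization over $\Snoise$ is the standard partial-minimization principle; it also guarantees that the $\Snoise$-component $\Snoiseopt$ of any joint minimizer is an inner minimizer associated to $\Betaopt$.

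It remains to exhibit $\Snoiseopt$ in closed form. For fixed $Z$, cancelling the gradient $-\tfrac{1}{2}\Snoise^{-1}ZZ^\top\Snoise^{-1} + \tfrac{1}{2}\Id_n$ of the unconstrained inner objective gives $\Snoise = (ZZ^\top)^{1/2}$; diagonalizing $ZZ^\top = U\diag(\gamma_1,\dots,\gamma_n)U^\top$ decouples the problem into scalar minimizations $s_i \mapsto \tfrac{\gamma_i}{2 s_i} + \tfrac{s_i}{2}$ over $s_i \ge \sigmamin$, whose minimizers are $\sqrt{\gamma_i}\vee\sigmamin$. This is exactly $\SpCl(ZZ^\top, \sigmamin)$, the minimizer achieving the smoothing value in \Cref{prop:smoothed_conco_nuc}. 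Since $ZZ^\top = \tfrac{1}{rq}\hat R\hat R^\top$, I conclude $\Snoiseopt = \SpCl(\tfrac{1}{rq}\hat R\hat R^\top, \sigmamin)$. The only genuinely delicate points are tracking the normalization constants through the rescaling $Z = \hat R/\sqrt{rq}$, and justifying that the optimal $\Snoise$ shares the eigenbasis of $ZZ^\top$ so that the constrained minimum is obtained by clipping eigenvalues rather than by a more complicated projection.
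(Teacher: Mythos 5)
Your proposal is correct and takes essentially the same route as the paper: you collapse the repetitions into the concatenated residual $\hat R$, rescale by $1/\sqrt{rq}$, and apply \Cref{prop:smoothed_conco_nuc} (the paper does exactly this via its repetition variant, \Cref{prop:conco_nuc_repet}) before taking the $\argmin$ over $\Beta$ by partial minimization. The closed form $\SpCl\big(\tfrac{1}{rq}\hat R \hat R^\top, \sigmamin\big)$ — including the eigenbasis-alignment step you rightly flag as the delicate point — is precisely what the paper handles through \Cref{prop:update_S_multiepoch} by citing \citet[Prop.~2]{Massias_Fercoq_Gramfort_Salmon17}, so your stationarity-plus-eigenvalue-clipping sketch matches the cited argument rather than departing from it.
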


Properties similar to \cref{prop:smoothed_sglc_us} can be traced back to {\citet[Sec 2.2]{vandeGeer_Stucky16}, who introduced the multivariate square-root Lasso:
\begin{equation}\label{eq:multivar_sqrt}
  \Betaopt \in \argmin_{\Beta \in \bbR^{p\times q}}
  \frac{1}{n\sqrt{q}}\normin{\bar{Y} - X \Beta}_{\mathscr{S}, 1}
  + \lambda \norm{\Beta}_{2,1} \enspace,
\end{equation}
and showed that if $(\bar{Y} - X \hat \Beta)(\bar{Y} - X \hat \Beta)^{\top} \succ 0$, the latter optimization problem admits a variational\footnote{also called \emph{concomitant} formulation since minimization is performed over an additional variable \citep{Owen07,Ndiaye_Fercoq_Gramfort_Leclere_Salmon16}.} formulation:
  \begin{equation}\label{eq:conco_multivar_sqrt}
      (\Betaopt, \Sopt) \in
       \argmin_{
           \substack{\Beta \in \bbR^{p \times q}, \\
           \tilde{\Snoise} \succ 0} }
           \frac{1}{2nq} \normin{\bar{Y} - X \Beta}_{\Snoise^{-1}}^2
           + \frac{\Tr(\Snoise)}{2n}
           + \lambda \norm{\Beta}_{2, 1}.
  \end{equation}
In other words \cref{prop:smoothed_sglc_us} generalizes \citet[Lemma 3.4]{vandeGeer16} for all matrices $\bar{Y} - X \hat \Beta$, getting rid of the condition $(\bar{Y} - X \hat \Beta)(\bar{Y} - X \hat \Beta)^{\top} \succ 0$.
In the present contribution, the problem formulation in \cref{prop:smoothed_conco_nuc} is motivated by computational aspects, as it helps to address the combined non-smoothness
of the data-fitting term $\nucnorm{\cdot}$ and the penalty term $\norm{\cdot}_{2, 1}$.
Note that another smoothing of the nuclear norm was proposed in \citet[Sec. 5.2]{Argyriou_Evgeniou_Pontil08,Bach_Jenatton_Mairal_Obozinski12}:
\begin{equation}
  \label{eq:smoothing_Bach_Pontil}
    Z \mapsto \min_{\substack{S \succ 0 }}
    \frac{1}{2} \Tr [Z^\top S^{-1} Z]
    + \frac{1}{2} \Tr(S)
    + \frac{\sigmamin^2}{2} \Tr(S^{-1})
\enspace ,
\end{equation}
which is a  $\sigmamin$-smooth $n \sigmamin$-approximation of $\nucnorm{\cdot}$ (see \Cref{app_sub:prop_other_smoothing}), therefore less precise than ours.

Other alternatives to exploit the multiple repetitions without simply averaging them, would consist in investigating other Schatten $p$-norms:
\begin{equation}\label{eq:schatten_as_datafit_repet}
  \argmin_{\Beta \in \bbR^{p\times q}}
  \tfrac{1}{\sqrt{rq}}\normin{[Y^{(1)} - X \Beta | \dots | Y^{(r)} - X \Beta] }_{\mathscr{S}, p}
  + \lambda n \norm{\Beta}_{2,1} \enspace .
\end{equation}
Without smoothing, problems of the form given in \Cref{eq:schatten_as_datafit_repet} present the drawback of having two non-smooth terms, and calling for primal-dual algorithms \citep{Chambolle_Pock11}
with costly proximal operators.  
Even if the non-smooth Schatten 1-norm is replaced by the formula in \Cref{eq:lemma_nuclear_norm_smoothed},
numerical challenges remain: $\Snoise$ can approach 0 arbitrarily, hence, the gradient \wrt $\Snoise$ of the data-fitting term is not Lipschitz over the optimization domain.
Recently, \citet{Molstad19} proposed two algorithms to directly solve \Cref{eq:schatten_as_datafit_repet}: a prox-linear ADMM, and accelerated proximal gradient descent, the latter lacking convergence guarantees since the composite objective has two non-smooth terms.
Before that, \citet{vandeGeer_Stucky16}
devised a fixed point method, lacking descent guarantees.
A similar problem was raised for the concomitant Lasso by \citet{Ndiaye_Fercoq_Gramfort_Leclere_Salmon16} who used smoothing techniques to address it.
Here we replaced the nuclear norm ($p=1$) by its smoothed version $\norm{\cdot}_{\mathscr{S}, 1} \infconv \omega_{\sigmamin}$. 
Similar results for the Schatten $2$-norm and Schatten $\infty$-norm are provided in the Appendix  (\Cref{prop:smoothing_schatten_2,prop:smoothing_schatten_inf}).

\subsection{Algorithmic details: convexity, (block) coordinate descent, parameters influence}
\label{sub:algorithmic_details_convexity_and_}

We detail the principal results needed to solve \Cref{eq:clar} numerically, leading to the implementation proposed in \cref{alg:sgcl_and_clar}.
We first recall useful results for alternate minimization of convex composite problems.

{\fontsize{4}{4}\selectfont
\begin{algorithm}[t]
\SetKwInOut{Input}{input}
\SetKwInOut{Init}{init}
\SetKwInOut{Parameter}{param}
\caption{\textsc{Alternate minimization for \us}}
\Input{\quad $X, \bar{Y}, \sigmamin, \lambda, \freq, T$}
\Init{
  \quad $\Beta = 0_{p, q}$, $\Snoise^{-1}=\sigmamin^{-1} \Id_n$, $\bar{R}=\bar{Y}$, $\text{cov}_Y = \frac{1}{r} \sum_{l=1}^r Y^{(l)} Y^{(l)\top} $ \tcp{precomputed}
     }
    \For{$t =1,\dots,T$
        }
        {
        \If(\tcp*[h]{noise update}){$t = 1 \quad (\mathrm{mod} \, \freq)$}
            {
               $RR^{\top} = \text{RRT}(\text{cov}_Y, Y, X, \Beta)$ \tcp*[h]{Eq.~\eqref{eq:RRT}}

                $\Snoise \leftarrow  \SpCl(\tfrac{1}{qr} RR^\top, \sigmamin)$ \tcp*[h]{Eq.~\eqref{eq:Sigma_update_multi}}

                \lFor{$j = 1, \ldots, p $}{$L_j = X_{:j}^\top \Snoise^{-1}X_{:j}$}
            }

        \For(\tcp*[h]{coef. update}){
                $j = 1, \ldots, p $
            }
            {

                    $\bar{R} \leftarrow \bar{R} + X_{:j}\Beta_{j:}$ 
\quad;\quad
                $\displaystyle \Beta_{j:} \leftarrow \mathrm{BST}\Big(\tfrac{X_{:j}^\top \Snoise^{-1} \bar{R} }{L_j}, \tfrac{\lambda n q}{L_j}\Big)$
\quad;\quad
                    $\bar{R} \leftarrow \bar{R} - X_{:j}\Beta_{j:}$ 
            }

       }
\Return{$\Beta, \Snoise$}
\label{alg:sgcl_and_clar}
\end{algorithm}
}

\begin{restatable}[Proof in \cref{app_sub:proof_joint_convexity}]{proposition}{jointconv}
  \label{prop:joint_convexity}
  \us is jointly convex in $(\Beta, \Snoise)$. Moreover, $f$ is convex and smooth on the feasible set, and $\norm{\cdot}_{2, 1}$ is convex and separable in $\Beta_{j:} $'s, thus minimizing the objective alternatively in $\Snoise$ and in $\Beta_{j:} $'s (see \cref{alg:sgcl_and_clar}) converges to a global minimum.
\end{restatable}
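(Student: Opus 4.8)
The plan is to verify three ingredients and then assemble them through a standard block coordinate descent convergence theorem. \textbf{First}, I would establish joint convexity of $f$. The term $\Tr(\Snoise)/(2n)$ is linear, hence convex, so it suffices to handle each data-fitting term. Writing the Mahalanobis norm as a trace and splitting it column-wise,
\begin{equation*}
  \norm{Y^{(l)} - X\Beta}_{\Snoise^{-1}}^2
  = \Tr\big((Y^{(l)} - X\Beta)^\top \Snoise^{-1} (Y^{(l)} - X\Beta)\big)
  = \sum_{k=1}^q (Y^{(l)}_{:k} - X\Beta_{:k})^\top \Snoise^{-1} (Y^{(l)}_{:k} - X\Beta_{:k}),
\end{equation*}
reduces the question to the joint convexity of the matrix-fractional map $(z, \Snoise) \mapsto z^\top \Snoise^{-1} z$ on $\bbR^n \times \cS_{++}^n$. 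I would obtain this from the Schur-complement characterization: the set $\{(z, \Snoise, t) : \Snoise \succ 0, \, z^\top \Snoise^{-1} z \le t\}$ coincides with the set of triples for which the bordered matrix $\left(\begin{smallmatrix} \Snoise & z \\ z^\top & t\end{smallmatrix}\right)$ is positive semidefinite, which is convex as the preimage of the PSD cone under a linear map; hence the epigraph is convex and the map is jointly convex. Since $\Beta \mapsto Y^{(l)} - X\Beta$ is affine and precomposition with an affine map preserves joint convexity, each data-fitting term is jointly convex in $(\Beta, \Snoise)$. Summing over $l$ and $k$ and adding $\lambda \norm{\Beta}_{2, 1}$, which is convex and does not depend on $\Snoise$, yields joint convexity of the full objective.

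\textbf{Second}, I would check smoothness of $f$ on the feasible set $\{\Snoise \succeq \sigmamin \Id_n\}$. There, every eigenvalue of $\Snoise$ is at least $\sigmamin > 0$, so matrix inversion $\Snoise \mapsto \Snoise^{-1}$ is $C^\infty$ with differential bounded by $\sigmamin^{-2}$ in operator norm; composing with the affine dependence on $\Beta$ shows that $f$ is $C^\infty$, with Lipschitz gradient on bounded subsets of the feasible set. This is precisely where the floor $\sigmamin$ is needed: without it $\Snoise^{-1}$ could blow up and the gradient would fail to be Lipschitz, as already noted in \cref{sub:smoothing_nuclear_norm}.

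\textbf{Third}, I would phrase the problem as minimizing $F = f + g$ over the product of closed convex sets $\{\Snoise \succeq \sigmamin \Id_n\} \times \bbR^{p \times q}$, with blocks $\Snoise$ and the rows $\Beta_{1:}, \dots, \Beta_{p:}$, where $g(\Beta) = \lambda \sum_{j=1}^p \norm{\Beta_{j:}}$ is separable across the $\Beta_{j:}$ and carries no dependence on $\Snoise$. Each block minimization is solved exactly and in closed form: the $\Snoise$-update through $\SpCl$ (\cref{prop:smoothed_sglc_us}) and each $\Beta_{j:}$-update through block soft-thresholding. With $f$ convex and continuously differentiable and the non-smooth part convex and block-separable, I would invoke the convergence theory for cyclic block coordinate minimization of convex composite objectives (Tseng's theorem). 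Smoothness of $f$ is what guarantees that any coordinate-wise minimum is in fact a global minimum: the coupling between blocks enters only through the differentiable $f$, so the blockwise first-order conditions assemble into $0 \in \partial F$ at any accumulation point.

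\textbf{Main obstacle.} The crux is the joint convexity of the matrix-fractional term: convexity in each argument separately is immediate but insufficient, and the Schur-complement argument is the cleanest way to obtain the genuinely \emph{joint} statement. A secondary point requiring care is that the regularity hypotheses of the coordinate-descent theorem truly apply with more than two blocks; here it is the combination of smoothness of $f$ and separability of $g$ (together with the exact block updates) that rules out stalling at a non-optimal point.
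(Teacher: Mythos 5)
Your proof is correct and takes essentially the same route as the paper: joint convexity is obtained from the matrix-fractional function precomposed with the affine map $\Beta \mapsto Y^{(l)} - X\Beta$, and convergence of exact alternate minimization follows from Tseng-type results for a smooth convex term plus a convex, block-separable penalty. The only difference is cosmetic: where the paper cites \citet[Example~3.4]{Boyd_Vandenberghe04} for the joint convexity of $(Z, \Snoise) \mapsto \Tr(Z^{\top} \Snoise^{-1} Z)$, you reprove it column-wise via the Schur-complement characterization of the epigraph, which is precisely the standard argument behind that citation (and your pointer for the $\Snoise$-update should be \cref{prop:update_S_multiepoch} rather than \cref{prop:smoothed_sglc_us}, a harmless slip).
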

Hence, for our alternate minimization implemenation, we only need to consider solving problems with $\Beta$ or $S$ fixed, which we detail in the next propositions.

\begin{restatable}[Minimization in $S$; proof in \cref{app_sub:proof_updae_S_clar}]{proposition}{updateS}
\label{prop:update_S_multiepoch}
  Let $\Beta \in \bbR^{n \times q}$ be fixed. The minimization of $f(\Beta, S)$ \wrt $S$ with the constraint $S\succeq \sigmamin\Id_n$ admits the closed-form solution:
  \begin{equation}\label{eq:Sigma_update_multi}
    \Snoise = \SpCl \bigg(\frac{1}{rq} \sum_{l=1}^r (Y^{(l)} - X\Beta)(Y^{(l)} - X\Beta)^\top, \sigmamin \bigg)
    \enspace .
  \end{equation}
\end{restatable}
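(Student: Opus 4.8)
The plan is to reduce the matrix minimization to $n$ decoupled scalar problems whose solution is exactly the clipped square root. First I would rewrite the $\Snoise$-dependent part of $f$. Since $\norm{A}_{\Snoise^{-1}}^2 = \Tr(A^\top \Snoise^{-1} A) = \Tr(\Snoise^{-1} A A^\top)$ by cyclicity of the trace, summing over $l$ and using linearity gives $\sum_{l=1}^r \norm{Y^{(l)} - X\Beta}_{\Snoise^{-1}}^2 = \Tr\big(\Snoise^{-1} \sum_{l=1}^r (Y^{(l)} - X\Beta)(Y^{(l)} - X\Beta)^\top\big) = \Tr(\Snoise^{-1} RR^\top)$, with $R$ the residual matrix from \cref{prop:smoothed_sglc_us}. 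Setting $\Sigma \eqdef \frac{1}{rq} RR^\top \in \cS_+^n$ and factoring out the common positive constant $\frac{1}{2n}$ (irrelevant for the argmin), the problem becomes $\min_{\Snoise \succeq \sigmamin \Id_n} \Tr(\Snoise^{-1}\Sigma) + \Tr(\Snoise)$.

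Next I would record that $\Snoise \mapsto \Tr(\Snoise^{-1}\Sigma)$ is convex on $\cS_{++}^n$ for $\Sigma \succeq 0$ and $\Tr(\Snoise)$ is linear, so the objective is convex over the convex feasible set $\{\Snoise \succeq \sigmamin \Id_n\}$; it therefore suffices to exhibit one feasible stationary point. The candidate is $\Snoise^\star = \SpCl(\Sigma, \sigmamin)$ of \cref{def:spcl}, which is well defined since $\Snoise^\star \succeq \sigmamin \Id_n \succ 0$ is invertible even when $\Sigma$ is singular.

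To identify the minimizer I would diagonalize $\Sigma = U \diag(\gamma_1, \dots, \gamma_n) U^\top$. For a fixed spectrum of $\Snoise$, the term $\Tr(\Snoise)$ depends only on the eigenvalues of $\Snoise$, while $\Tr(\Snoise^{-1}\Sigma)$ is minimized, by von Neumann's trace inequality, precisely when $\Snoise$ is diagonal in the basis $U$ with its largest eigenvalue paired to the largest $\gamma_i$; since the eigenvalues of $\Snoise$ are unchanged under such a rotation, feasibility $\Snoise \succeq \sigmamin\Id_n$ is preserved. This lets me restrict to $\Snoise = U\diag(\mu_1,\dots,\mu_n)U^\top$ with $\mu_i \geq \sigmamin$, turning the objective into the separable sum $\sum_{i=1}^n (\gamma_i/\mu_i + \mu_i)$. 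Each scalar map $\mu \mapsto \gamma_i/\mu + \mu$ is convex on $(0,\infty)$ with unconstrained minimizer $\sqrt{\gamma_i}$, so its minimizer over $[\sigmamin,\infty)$ is $\mu_i^\star = \sqrt{\gamma_i}\vee\sigmamin$. Reassembling yields $\Snoise^\star = U\diag(\sqrt{\gamma_1}\vee\sigmamin,\dots,\sqrt{\gamma_n}\vee\sigmamin)U^\top = \SpCl(\Sigma,\sigmamin)$, i.e.\ \eqref{eq:Sigma_update_multi}.

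The step I expect to need the most care is the spectral decoupling, namely justifying that an optimal $\Snoise$ shares the eigenbasis of $\Sigma$. A route that sidesteps the rearrangement argument altogether is to verify the KKT conditions directly, which are sufficient here by convexity: with a multiplier $\Lambda \succeq 0$ for the linear matrix inequality $\Snoise \succeq \sigmamin\Id_n$, stationarity reads $\Id_n - \Snoise^{-1}\Sigma\Snoise^{-1} = \Lambda$ together with complementary slackness $\Lambda(\Snoise - \sigmamin\Id_n) = 0$. In the eigenbasis of $\Sigma$ this gives $\Lambda = \diag\big(1 - \gamma_i/(\mu_i^\star)^2\big)$, and checking the two cases $\sqrt{\gamma_i} \geq \sigmamin$ (where $\Lambda_{ii} = 0$) and $\sqrt{\gamma_i} < \sigmamin$ (where $\mu_i^\star = \sigmamin$, the slack vanishes and $\Lambda_{ii} = 1 - \gamma_i/\sigmamin^2 > 0$) confirms both $\Lambda \succeq 0$ and complementary slackness. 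This certifies $\Snoise^\star = \SpCl\big(\tfrac{1}{rq}RR^\top, \sigmamin\big)$ as the global minimizer.
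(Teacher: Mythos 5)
Your proposal is correct, and it takes a genuinely more self-contained route than the paper. The paper's own proof is a two-line reduction: stack the residuals into a single matrix $Z$ so that minimizing $f(\Beta,\cdot)$ becomes $\min_{S\succeq\sigmamin\Id_n}\tfrac12\norm{Z}_{S^{-1}}^2+\tfrac12\Tr(S)$, then invoke \citet[App.~A2]{Massias_Fercoq_Gramfort_Salmon17} for the closed form $\SpCl(ZZ^\top,\sigmamin)$, together with the identity $ZZ^\top \propto \sum_{l=1}^r (Y^{(l)}-X\Beta)(Y^{(l)}-X\Beta)^\top$ (the appendix is in fact slightly sloppy about the $1/\sqrt{q}$ factor in $Z$, which your reduction to $\Sigma=\tfrac{1}{rq}RR^\top$ tracks correctly). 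You instead re-prove the cited lemma from scratch: your trace rewriting and reduction to $\min_{\Snoise\succeq\sigmamin\Id_n}\Tr(\Snoise^{-1}\Sigma)+\Tr(\Snoise)$ coincide with the paper's first step, but the spectral decoupling via the von Neumann/rearrangement argument, the scalar problems $\min_{\mu\geq\sigmamin}(\gamma_i/\mu+\mu)$ with minimizer $\sqrt{\gamma_i}\vee\sigmamin$, and above all the KKT certificate with multiplier $\Lambda=\Id_n-\Snoise^{-1}\Sigma\Snoise^{-1}$ replace the external reference. The KKT verification is airtight --- convexity of $S\mapsto\Tr(S^{-1}\Sigma)$ on $\cS_{++}^n$ makes primal feasibility, $\Lambda\succeq 0$, stationarity and complementary slackness sufficient --- and it mirrors the technique the paper itself uses for the analogous smoothed MLE update in \cref{thm:min_in_Sigma_1}, where a primal-dual pair is exhibited for the conic program. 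What the paper's route buys is brevity and reuse of a published result; what yours buys is a standalone argument with an explicit dual certificate, and your flagging of the rearrangement step as delicate is well judged (with repeated eigenvalues of $\Sigma$, ``minimized precisely when'' should be softened to ``a minimizer can be chosen with'' shared eigenbasis --- which is all you need, and which the KKT fallback renders moot in any case).
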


\begin{restatable}[Proof in \cref{app_sub:proof_updae_B_j_clar}]{proposition}{updateB}
  \label{prop:update_Bj_clar}
  For a fixed $S\in \cS^n_{++}$, each step of the block minimization of $f(\cdot, S) + \lambda \norm{\cdot}_{2,1}$ in the $\jth$ line of $\Beta$ admits a closed-form solution:
    \begin{equation}\label{eq:update_Bj_clar}
         \Beta_{j:} = \BST \left(\Beta_{j:} + \tfrac{X_{:j}^{\top} \Snoise^{-1} (\bar Y - X\Beta)}{ \norm{X_{:j}}_{ \Snoise^{-1}}^2 } ,
                              \tfrac{\lambda n q}{\norm{X_{:j}}_{ \Snoise^{-1}}^2 } \right)
         \enspace.
    \end{equation}
\end{restatable}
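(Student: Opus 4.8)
The plan is to freeze all the other blocks, isolate how the objective depends on the single row $\Beta_{j:}$, reduce the resulting one-block subproblem to the proximity operator of a scaled Euclidean norm, and read off the minimizer as block soft-thresholding. First I would fix every $\Beta_{k:}$ with $k \neq j$ and split $X\Beta = X_{:j}\Beta_{j:} + \sum_{k\neq j} X_{:k}\Beta_{k:}$. Expanding each squared Mahalanobis residual $\norm{Y^{(l)} - X\Beta}_{S^{-1}}^2 = \Tr\big((Y^{(l)}-X\Beta)^\top S^{-1}(Y^{(l)}-X\Beta)\big)$ as a quadratic in $\Beta_{j:}$, the terms that carry $\Beta_{j:}$ are a quadratic term and a linear (cross) term. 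Since $X_{:j}^\top S^{-1} X_{:j} = \norm{X_{:j}}_{S^{-1}}^2$ is a scalar, the quadratic term equals $\norm{X_{:j}}_{S^{-1}}^2\,\norm{\Beta_{j:}}^2$, while the linear term is $-2\langle \Beta_{j:},\, X_{:j}^\top S^{-1}(Y^{(l)}-\sum_{k\neq j}X_{:k}\Beta_{k:})\rangle$; the trace term $\Tr(S)/2n$ does not depend on $\Beta$ at all.

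The key simplification comes when summing over the $r$ repetitions. Because $\tfrac{1}{r}\sum_{l=1}^r\big(Y^{(l)} - \sum_{k\neq j}X_{:k}\Beta_{k:}\big) = \bar{Y} - \sum_{k\neq j}X_{:k}\Beta_{k:}$, the sum over repetitions collapses onto the averaged residual and the factor $r$ cancels against the $r$ in the denominator of $f$. Dropping the constants and rescaling, the block objective reduces to $\tfrac{\norm{X_{:j}}_{S^{-1}}^2}{2nq}\,\norm{\Beta_{j:} - c}^2 + \lambda \norm{\Beta_{j:}}$ by completing the square, where $c \eqdef X_{:j}^\top S^{-1}\big(\bar{Y} - \sum_{k\neq j}X_{:k}\Beta_{k:}\big)/\norm{X_{:j}}_{S^{-1}}^2$. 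This is (a positive multiple of) the proximity operator of the Euclidean norm, whose minimizer is exactly block soft-thresholding: $\Beta_{j:} = \BST\big(c, \lambda n q/\norm{X_{:j}}_{S^{-1}}^2\big)$.

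Finally, to match the residual-update form of \eqref{eq:update_Bj_clar}, I would rewrite $\bar{Y} - \sum_{k\neq j}X_{:k}\Beta_{k:} = (\bar{Y} - X\Beta) + X_{:j}\Beta_{j:}$, which turns $c$ into $\Beta_{j:} + X_{:j}^\top S^{-1}(\bar{Y} - X\Beta)/\norm{X_{:j}}_{S^{-1}}^2$, giving precisely the claimed expression. The computation is entirely routine; the only point that deserves care is the collapse onto the averaged residual, which shows that the individual repetitions \emph{disappear} from the coefficient update and enter solely through $S$. In other words, with $S$ fixed the $\Beta$-update coincides with that on averaged data, and all the benefit of the raw measurements is funneled through the noise matrix $S$ (computed in \cref{prop:update_S_multiepoch}).
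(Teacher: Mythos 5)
Your proof is correct and follows essentially the same route as the paper's: both reduce the block subproblem to the proximal operator of the Euclidean norm, exploiting that the quadratic part is isotropic in $\Beta_{j:}$ (with curvature $\norm{X_{:j}}_{S^{-1}}^2/nq$) and that summing over repetitions collapses onto the averaged residual $\bar{Y}-X\Beta$ --- a fact the paper encodes directly in its partial gradient $\nabla_j f = -\tfrac{1}{nq}X_{:j}^\top S^{-1}(\bar{Y}-X\Beta)$, and which you derive by explicit expansion and completion of the square. The only difference is presentational: the paper phrases the closed form as an exact prox-gradient step within the Tseng--Yun proximal block coordinate descent framework, whereas you minimize the block objective directly, which yields the identical update.
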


As for other Lasso-type estimators, there exists $\lambda_{\max} \geq 0$ such that whenever $\lambda \geq \lambda_{\max}$, the estimated coefficients vanish.
This $\lambda_{\max}$ helps calibrating roughly $\lambda$ in practice by choosing it as a fraction of $\lambda_{\max}$.
\begin{proposition}[Critical regularization parameter; proof in \cref{app_sub:proof_lambda_max_clar}.]\label{prop:lambda_max_clar}
    For the \sgclme estimator we have:
    with $\Snoise_{\max} \eqdef \SpCl\big(\tfrac{1}{qr} \sum_{l=1}^r Y^{(l)} Y^{(l)\top}, \sigmamin\big)$,
    \begin{align}
        \forall \lambda \geq \lambda_{\max}
        \eqdef \tfrac{1}{nq} \norm{X^{\top} \Snoise_{\max}^{-1} \bar{Y}}_{2, \infty}, \quad \Betaopt^{\US} = 0
        \enspace.
    \end{align}
\end{proposition}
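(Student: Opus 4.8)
The plan is to use joint convexity (\cref{prop:joint_convexity}) and characterize global optimality through first-order conditions. Write $F(\Beta, \Snoise) \eqdef f(\Beta, \Snoise) + \lambda \norm{\Beta}_{2, 1}$ for the objective of \eqref{eq:clar}, minimized over the feasible set $\{\Snoise \succeq \sigmamin \Id_n\}$. Since $F$ is jointly convex, a feasible pair is a global minimizer if and only if $0$ belongs to its subdifferential. I would show that $(0, \Snoise_{\max})$ satisfies this whenever $\lambda \geq \lambda_{\max}$, which forces the $\Beta$-part of any solution to vanish. The key structural fact is that the nonsmooth and constraint parts, $\lambda \norm{\cdot}_{2, 1} + \iota_{\{\Snoise \succeq \sigmamin \Id_n\}}$, are separable in $\Beta$ and $\Snoise$, so optimality decouples into an $\Snoise$-block condition and a $\Beta$-block condition that can be checked independently.

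For the $\Snoise$-block, I would simply invoke \cref{prop:update_S_multiepoch} at $\Beta = 0$: there $Y^{(l)} - X\Beta = Y^{(l)}$, so the minimizer in $\Snoise$ of $f(0, \cdot)$ over the feasible set is exactly $\SpCl\big(\tfrac{1}{rq}\sum_{l=1}^r Y^{(l)} Y^{(l)\top}, \sigmamin\big) = \Snoise_{\max}$. Hence the $\Snoise$-block optimality condition holds at $\Snoise_{\max}$ for \emph{every} value of $\lambda$.

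For the $\Beta$-block, I would compute $\nabla_\Beta f$. Using $\norm{A}_{\Snoise^{-1}}^2 = \Tr(A^\top \Snoise^{-1} A)$ and $\tfrac1r \sum_{l=1}^r Y^{(l)} = \bar Y$, one gets $\nabla_\Beta f(\Beta, \Snoise) = -\tfrac{1}{nq} X^\top \Snoise^{-1}(\bar Y - X\Beta)$, so that at $(0, \Snoise_{\max})$ the $\jth$ row of $\nabla_\Beta f$ equals $-\tfrac{1}{nq} X_{:j}^\top \Snoise_{\max}^{-1} \bar Y$. Optimality in $\Beta$ requires $0 \in \nabla_\Beta f(0, \Snoise_{\max}) + \lambda \, \partial \norm{\cdot}_{2,1}(0)$, and since $\partial \norm{\cdot}_{2,1}(0) = \{G \in \bbR^{p \times q} : \norm{G_{j:}} \leq 1 \ \forall j \in [p]\}$ (each row ranging over $\mathcal{B}_2$), this holds if and only if $\tfrac{1}{nq}\norm{X_{:j}^\top \Snoise_{\max}^{-1} \bar Y} \leq \lambda$ for every $j$, i.e.
\[
  \lambda \geq \tfrac{1}{nq} \max_{j \in [p]} \norm{X_{:j}^\top \Snoise_{\max}^{-1} \bar Y} = \tfrac{1}{nq}\norm{X^\top \Snoise_{\max}^{-1} \bar Y}_{2, \infty} = \lambda_{\max}.
\]
Equivalently, this is precisely the condition under which the block soft-thresholding update of \cref{prop:update_Bj_clar} returns $\Beta_{j:} = 0$ at $(0, \Snoise_{\max})$.

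Combining the two blocks, for $\lambda \geq \lambda_{\max}$ we have $0 \in \partial F(0, \Snoise_{\max})$, and convexity upgrades this to global optimality; hence every solution has vanishing regression part, $\Betaopt^{\US} = 0$. I expect the only delicate point to be justifying the decoupling of the subdifferential into the $\Beta$- and $\Snoise$-blocks — which rests on smoothness of $f$ on the feasible set together with separability of the nonsmooth/indicator term — and invoking \cref{prop:update_S_multiepoch} to dispose of the $\Snoise$-block cleanly (in particular the boundary case where $\Snoise_{\max}$ has eigenvalues clipped to $\sigmamin$), rather than re-deriving the matrix optimality conditions by hand. Beyond this, the argument is routine bookkeeping.
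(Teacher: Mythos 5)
Your proposal is correct and takes essentially the same route as the paper's proof: fix the noise block at $\Snoise_{\max}$ using the closed-form minimization in $S$ at $\Beta = 0$ (\cref{prop:update_S_multiepoch}), then apply Fermat's rule in the $\Beta$-block, using that the dual norm of $\norm{\cdot}_{2,1}$ is $\norm{\cdot}_{2,\infty}$, to obtain exactly the threshold $\lambda_{\max} = \tfrac{1}{nq}\norm{X^{\top}\Snoise_{\max}^{-1}\bar{Y}}_{2,\infty}$. The only difference is presentational: you spell out, via joint convexity and separability of the nonsmooth-plus-indicator part, the block decoupling of the joint subdifferential that the paper leaves implicit in its chain of equivalences.
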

\textit{Convex formulation benefits.}
Thanks to the convex formulation, convergence of \cref{alg:sgcl_and_clar} can be ensured using the duality gap as a stopping criterion (as it guarantees a targeted sub-optimality level).
To compute the duality gap, we derive the dual of \Cref{eq:clar} in \cref{prop:dual_clar}.
In addition, convexity allows to leverage acceleration methods such as working sets strategies \citep{Fan_Lv2008,Tibshirani_Bien_Friedman_Hastie_Simon_Tibshirani12,Johnson_Guestrin15,Massias_Gramfort_Salmon18} or safe screening rules \citep{ElGhaoui_Viallon_Rabbani12,Fercoq_Gramfort_Salmon15} while retaining theoretical convergence guarantees.
Such techniques are trickier to adapt in the non-convex case (see \cref{app_sec:competitors}), as they could change the local minima reached.

\textit{Choice of $\sigmamin$.}
Although $\sigmamin$ has a smoothing interpretation, from a practical point of view it remains an hyperparameter to set.
As in \cite{Massias_Fercoq_Gramfort_Salmon17}, $\sigmamin$ is always chosen as follows: $\sigmamin = \norm{Y} / (1000 \times nq)$.
In practice, the experimental results were little affected by the choice of $\sigmamin$.

\begin{restatable}{remark}{costsgclclar}\label{rem:cost_sgcl_clar}
  Once $\text{cov}_Y \eqdef \tfrac{1}{r} \sum_1^r Y^{(l)} Y^{(l)\top}$ is pre-computed, the cost of updating $\Snoise$ 
  does not depend on $r$, \ie is the same as working with averaged data.
  Indeed, with $R = [Y^{(1)} - X\Beta| \dots| Y^{(r)} - X\Beta]$, the following computation can be done in $\cO(qn^2)$ (details are in \cref{app_sub:proof_cost_sgcl_clar}).
  \begin{align}\label{eq:RRT}
      RR^{\top} = \text{RRT}(\text{cov}_Y, Y, X, \Beta)
    \eqdef r\text{cov}_Y +
  r(X \Beta) (X \Beta)^\top
  - r \bar{Y}^{\top} (X \Beta)
  - r (X\Beta)^\top \bar{Y} \enspace.
  \end{align}
\end{restatable}

Statistical properties showing the advantages of using \us (over \sgcl) can be found in \cref{app_sub:stat_comparison}.
As one could expect, using $r$ times more observations improves the covariance estimation.


\section{Experiments}
\label{sec:experiments}




Our Python code (with Numba compilation, \citealt{Lam_Pitrou_Seibert15}) is released as an open source package: \url{https://github.com/QB3/CLaR}.
We compare \sgclme to other estimators: \sgcl \citep{Massias_Fercoq_Gramfort_Salmon17}, an $\ell_{2, 1}$ version of MLE \citep{Chen_Banerjee17,Lee_Liu12} (\mle), a version of the \mle with multiple repetitions (\mler), an $\ell_{2,1}$ penalized version of \mrceo \citep{Rothman_Levina_Zhu10} with repetitions (\mrcer) and the Multi-Task Lasso (\mtl, \citealt{Obozinski_Taskar_Jordan10}).
The cost of an epoch of block coordinate descent is summarized in \Cref{tab:algorithms_costs} in \Cref{app_sub:summary_algos} for each algorithm\footnote{The cost of computing the duality gap is also provided whenever available.}.
All competitors are detailed in \Cref{app_sec:competitors}.

\paragraph{Synthetic data}
Here we demonstrate the ability of our estimator to recover the support \ie the ability to identify the predictive features.
There are $n=150$ observations, $p=500$ features, $q=100$ tasks.
The design $X$ is random with Toeplitz-correlated features with parameter $\rho_X = 0.6$ (correlation between $X_{:i}$ and $X_{:j}$ is $\rho_X^{|i - j|}$), and its columns have unit Euclidean norm.
The true coefficient $\Beta^*$ has $30$ non-zeros rows whose entries are independent and normally centered distributed.
$\Snoise^*$ is a Toeplitz matrix with parameter $\rho_S$. The SNR is fixed and constant across all repetitions
\begin{align}
  \label{eq:def_SNR}
\text{SNR} \eqdef \normin{X\Beta^*} / \sqrt{r} \normin{X\Beta^*- \bar Y}\enspace.
\end{align}
For \Cref{fig:roc_curves_influ_SNR,fig:roc_curves_influ_rho_S,fig:roc_curves_influ_n_epochs}, the figure of merit is the ROC curve, \ie the true positive rate (TPR) against the false positive rate (FPR).
For each estimator, the ROC curve is obtained by varying the value of the regularization parameter $\lambda$ on a geometric grid of $160$ points, from $\lambda_{\max}$  (specific to each algorithm) to $\lambda_{\min}$, the latter also being estimator specific and chosen to obtain a FPR larger than $0.4$.

\def \figsize {0.97}
\def \figprop {0.73}
\def \colprop {0.26}

\begin{figure}[tb]
  \begin{minipage}{\columnwidth}
    \begin{minipage}{\figprop\columnwidth}
        \includegraphics[width=\figsize\linewidth]{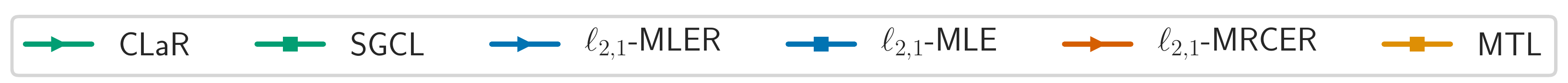}
        \centering
         \includegraphics[width=\figsize\linewidth]{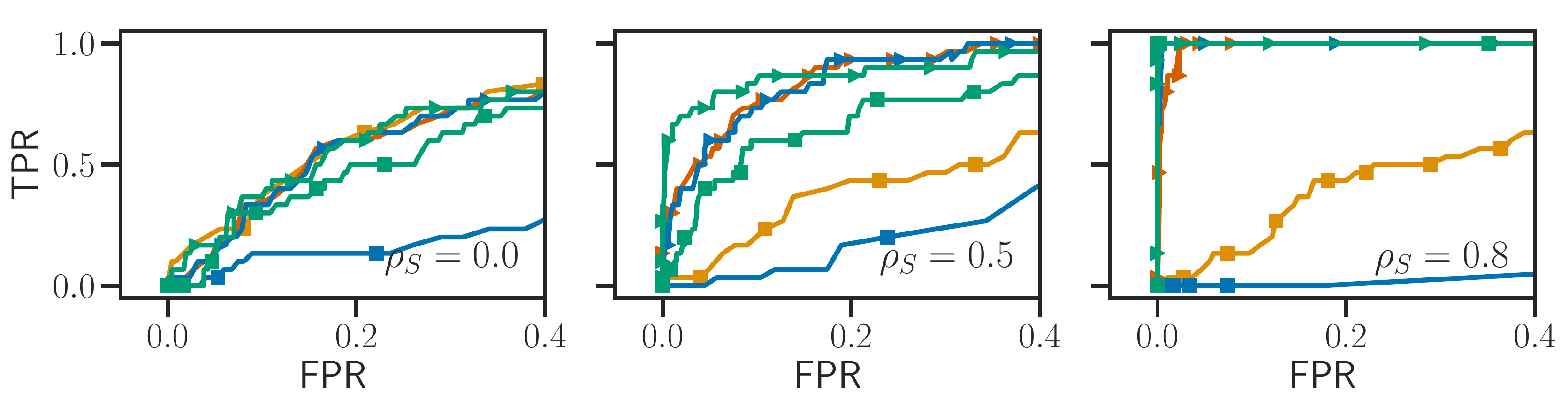}
    \end{minipage}
    \hfill
    \begin{minipage}{\colprop \columnwidth}
        \caption{\textit{Influence of noise structure.} ROC curves of support recovery ($\rho_X=0.6$, $\SNR=0.03$, $r=20$) for different $\rho_{\Snoise}$ values.}
        \label{fig:roc_curves_influ_rho_S}
    \end{minipage}
\end{minipage}  
  \begin{minipage}{\columnwidth}
      \begin{minipage}{\figprop \columnwidth}
          \centering
           \includegraphics[width=\figsize\linewidth]{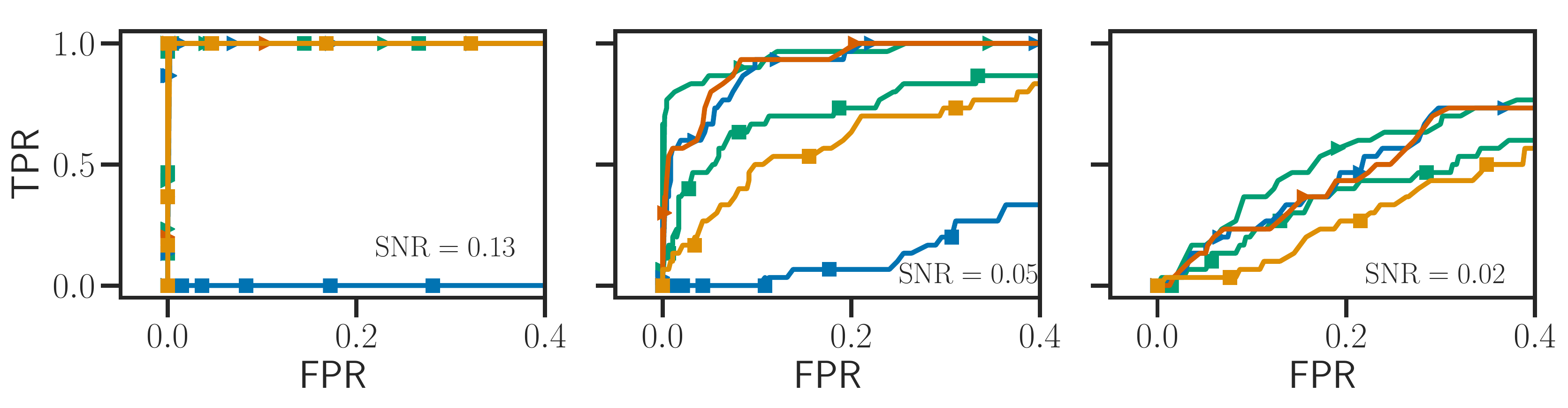}
      \end{minipage}
      \hfill
      \begin{minipage}{\colprop \columnwidth}
          \caption{\textit{Influence of SNR.} ROC curves of support recovery ($\rho_X=0.6$, $\rho_{\Snoise}=0.4$, $r=20$) for different $\SNR$ values.}
          \label{fig:roc_curves_influ_SNR}
      \end{minipage}
  \end{minipage}

\begin{minipage}{\columnwidth}
  \begin{minipage}{\figprop\columnwidth}
      \centering
       \includegraphics[width=\figsize\linewidth]{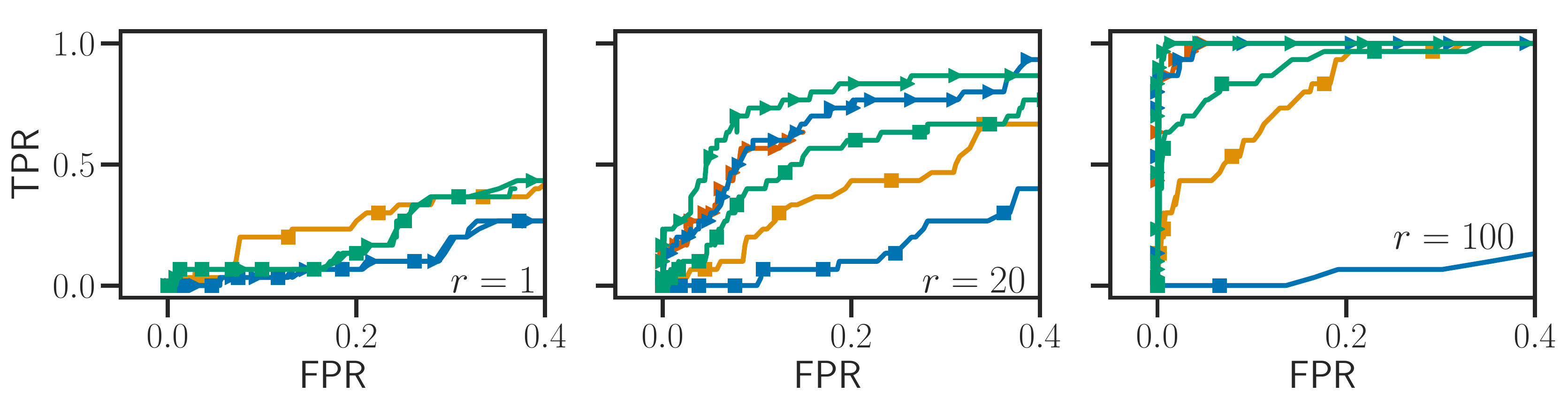}
  \end{minipage}
  \hfill
  \begin{minipage}{\colprop \columnwidth}
      \caption{\textit{Influence of the number of repetitions.} ROC curves of support recovery ($\rho_X=0.6$, $\SNR=0.03$, $\rho_{\Snoise}=0.4$) for different $r$ values.}
      \label{fig:roc_curves_influ_n_epochs}
  \end{minipage}
\end{minipage}

\end{figure}

\textit{Influence of noise structure.}
\Cref{fig:roc_curves_influ_rho_S} represents the ROC curves for different values of $\rho_{\Snoise}$.
As $\rho_\Snoise$ increases, the noise becomes more and more correlated.
From left to right, the performance of \us, \sgcl, \mrcer, \mrce, and \mler increases as they are designed to exploit correlations in the noise, while the performance of \mtl decreases, as its \iid Gaussian noise model becomes less and less valid.

\textit{Influence of SNR.}
On \Cref{fig:roc_curves_influ_SNR} we can see that when the SNR is high (left), all estimators (except \mle) reach the (0, 1) point.
This means that for each algorithm (except \mle), there exists a $\lambda$ such that the estimated support is exactly the true one.
However, when the SNR decreases (middle), the performance of \sgcl and \mtl starts to drop, while that of \us, \mler and \mrcer remains stable (\us performing better), highlighting their capacity to leverage multiple repetitions of measurements to handle the noise structure.
Finally, when the SNR is too low (right), all algorithms perform poorly, but \us, \mler and \mrcer still performs better.

\textit{Influence of the number of repetitions.}
\Cref{fig:roc_curves_influ_n_epochs} shows ROC curves of all compared approaches for different $r$, starting from $r=1$ (left) to $100$ (right).
Even with $r=20$ (middle) \us outperforms the other estimators, and when $r=100$ \us can better leverage the large number of repetitions.

\begin{figure}[tb]
  \begin{minipage}{\columnwidth}
    \begin{minipage}{\figprop\columnwidth}
        \includegraphics[width=0.97\linewidth]{roc_curves_influ_rho_S_legend}
        \centering
         \includegraphics[width=0.97\linewidth]{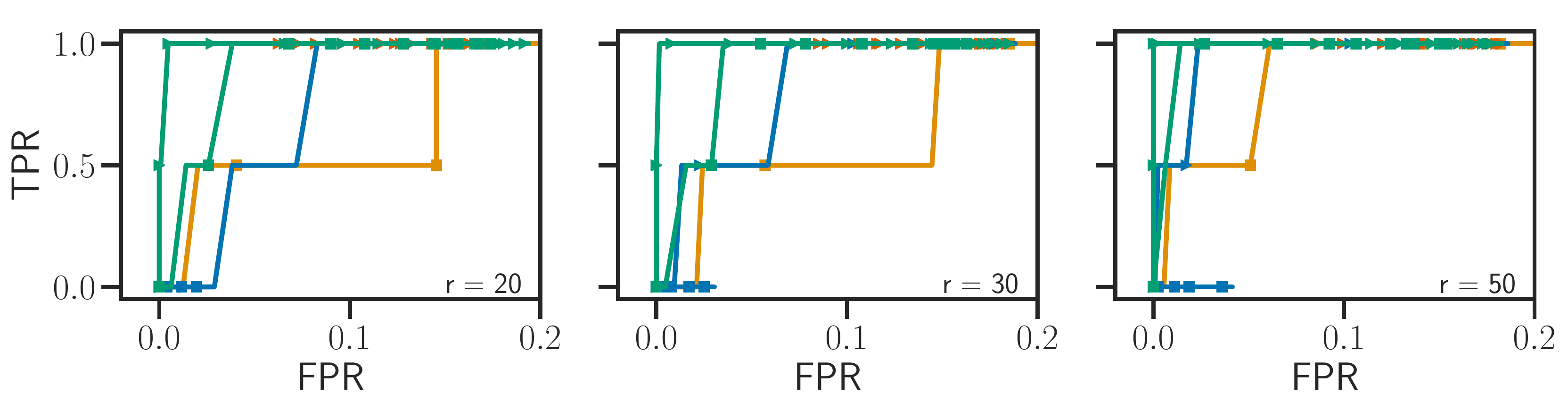}
    \end{minipage}
    \hfill
    \begin{minipage}{\colprop \columnwidth}
        \caption{\textit{Influence of the number of repetitions.} ROC curves with empirical $X$ and $\Snoise$ and simulated $\Beta^*$ ($\mathrm{amp} =2 \, \mathrm{nA.m}$), for different number of repetitions.}
        \label{fig:roc_curves_semi_real_influ_n_epochs}
    \end{minipage}
\end{minipage}
\begin{minipage}{\columnwidth}
    \begin{minipage}{\figprop \columnwidth}
          \centering
           \includegraphics[width=0.97\linewidth]{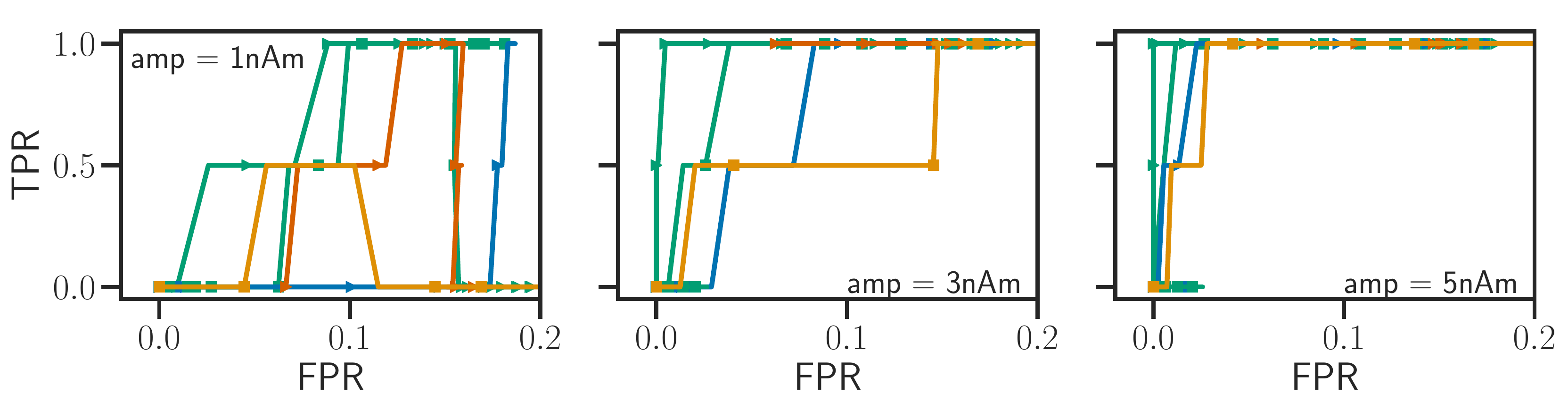}
      \end{minipage}
      \hfill
      \begin{minipage}{\colprop \columnwidth}
          \caption{\textit{Amplitude influence.} ROC curves with empirical $X$ and $\Snoise$ and simulated $\Beta^*$ ($r=50$), for different amplitudes of the signal.}
          \label{fig:roc_curves_semi_real_influ_amp}
      \end{minipage}
  \end{minipage}
\end{figure}
\paragraph{Realistic data}
We now evaluate the estimators on realistic magneto- and electroencephalography (M/EEG) data.
The M/EEG recordings measure the electrical potential and magnetic fields induced by the active neurons.
Data are time series of length $q$ with $n$ sensors and $p$ sources mapping to locations in the brain.
Because the propagation of the electromagnetic fields is driven by the linear Maxwell equations, one can assume that the relation between the measurements $Y^{(1)}, \dots, Y^{(r)}$ and the amplitudes of sources in the brain $\Beta^*$ is linear.

The M/EEG inverse problem consists in identifying $\Beta^*$.
Because of the limited number of sensors (a few hundreds in practice), as well as the physics of the problem, the M/EEG inverse problem is severely ill-posed and needs to be regularized.
Moreover, the experiments being usually short (less than 1 s.) and focused on specific cognitive functions, the number of active sources is expected to be small, \ie $\Beta^*$ is assumed to be row-sparse.
This plausible biological assumption motivates the framework of \Cref{sec:hetero_conco_estimation} \citep{Ou_Hamalainen_Golland2009}.

\textit{Dataset.}
We use the \emph{sample} dataset
\footnote{publicly available real M/EEG data recorded after auditory or visual stimulations.}
from the MNE software \citep{mne}.
The experimental conditions here are auditory stimulations in the right or left ear, leading to two main foci of activations in bilateral auditory cortices (\ie 2 non-zeros rows for $\Beta^*$).
For this experiment, we keep only the gradiometer magnetic channels. After removing
one channel corrupted by artifacts, this leads to $n=203$ signals.
The length of the temporal series is $q=100$, and the data contains $r=50$ repetitions.
We choose a source space of size $p=1281$ which corresponds to about 1\,cm distance between neighboring sources.
The orientation is fixed, and normal to the cortical mantle.

\textit{Realistic MEG data simulations.}
We use here true empirical values for $X$ and $S$ by solving Maxwell equations and taking an empirical co-standard deviation matrix.
To generate realistic MEG data we simulate neural responses $\Beta^*$ with 2 non-zeros rows corresponding to areas known to be related to auditory processing (Brodmann area 22).
Each non-zero row of $\Beta^*$ is chosen as a sinusoidal signal with realistic frequency (5\,Hz) and amplitude ($\text{amp} \sim 1-10$ nAm).
We finally simulate $r$ MEG signals $Y^{(l)}=X\Beta^* + S^*\Epsilon^{(l)}$, $\Epsilon^{(l)}$ being matrices with \iid normal entries.

The signals being contaminated with correlated noise, if one wants to use homoscedastic solvers it is necessary to whiten the data first (and thus to have an estimation of the covariance matrix, the later often being unknown).
In this experiment we demonstrate that without this whitening process, the homoscedastic solver \mtl fails, as well as solvers which does not take in account the repetitions: \sgcl and \mle.
In this scenario \us, \mler and \mrcer do succeed in recovering the sources, \us leading to the best results.
As for the synthetic data, \Cref{fig:roc_curves_semi_real_influ_amp,fig:roc_curves_semi_real_influ_n_epochs} are obtained by varying the estimator-specific regularization parameter $\lambda$ from $\lambda_{\max}$ to $\lambda_{\min}$ on a geometric grid.


\textit{Amplitude influence.} \Cref{fig:roc_curves_semi_real_influ_amp} shows ROC curves
for different values of the amplitude of the signal.
When the amplitude is high (right), all the algorithms perform well, however when the amplitude decreases (middle) only \us leads to good results, almost hitting the $(0, 1)$ corner.
When the amplitude gets lower (left) all algorithms perform worse, \us still yielding the best results.

\textit{Influence of the number of repetitions.} \Cref{fig:roc_curves_semi_real_influ_n_epochs} shows ROC curves
for different number of repetitions $r$.
When the number of repetitions is high (right, $r=50$), the algorithms taking into account all the repetitions (\us, \mler, \mrcer) perform best, almost hitting the $(0, 1)$ corner, whereas the algorithms which do not take into account all the repetitions (\mle, \mtl, \sgcl) perform poorly.
As soon as the number of repetitions decreases (middle and left) the performances of all the algorithms except \us start dropping severely.
\us is once again the algorithm taking the most advantage of the number of repetitions.

\paragraph{Real data}
As before, we use the \emph{sample} dataset, keeping only the magnetometer magnetic channels ($n=102$ signals).
We choose a source space of size $p=7498$ (about 5\,mm between neighboring sources).
The orientation is fixed, and normal to the cortical mantle.
As for realistic data, $X$ is the empirical design matrix, but this time we use the empirical measurements $Y^{(1)}, \dots, Y^{(r)}$.
The experiment are left or right auditory stimulations, extensive results for right auditory stimulations (\resp visual stimulations) can be found in \Cref{app_sub:real_right_audi} (\resp \Cref{app_sub:real_left_visual,app_sub:real_right_visual}).
As two sources are expected (one in each hemisphere, in bilateral auditory cortices), we vary $\lambda$ by dichotomy between $\lambda_{\max}$ (returning 0 sources) and a $\lambda_{\min}$ (returning more than 2 sources), until finding a $\lambda$ giving exactly 2 sources.
Results are provided in \Cref{fig:real_left_audi,fig:real_data_right_audi_r_34_main}.
Running times of each algorithm are of the same order of magnitude and can be found in \Cref{app_sub:time}.

\textit{Comments on \Cref{fig:real_left_audi}, left auditory stimulations.}
Sources found by the algorithms are represented by red spheres.
\sgcl, \mle and \mrcer completely fail, finding sources that are not in the auditory cortices at all (\sgcl sources are deep, thus not in the auditory cortices, and cannot be seen).
\mtl and \mler do find sources in auditory cortices, but only in one hemisphere (left for \mtl and right for \mler).
\us is the only one that finds one source in each hemisphere in the auditory cortices as expected.

\textit{Comments on \Cref{fig:real_data_right_audi_r_34_main}, right auditory stimulations.}
In this experiment we only keep $r=33$ repetitions (out of $65$ available) and it can be seen that only \us finds correct sources, \mtl finds sources only in one hemisphere and all the other algorithms do find sources that are not in the auditory cortices.
This highlights the robustness of \us, even with a limited number of repetitions, confirming previous experiments (see \Cref{fig:roc_curves_influ_n_epochs}).

\def \figsize {0.1674}
\def \spacefig {-0.3}

\begin{figure}[t]
  \centering 

  \begin{subfigure}{\figsize \textwidth}
  \includegraphics[width=68px,height=47px]{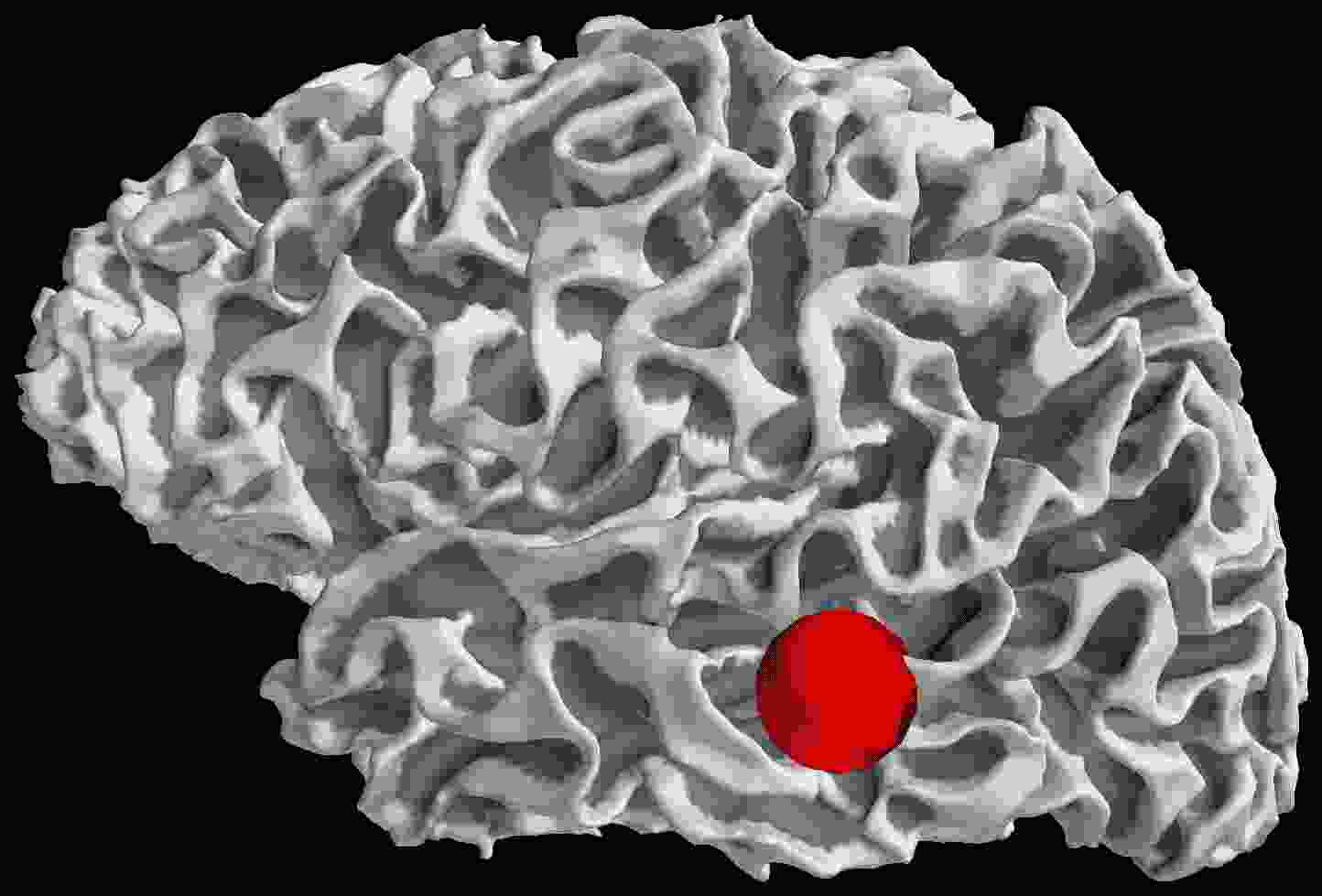}
  \end{subfigure}\hspace{\spacefig em}
  \begin{subfigure}{\figsize \textwidth}
  \includegraphics[width=68px,height=47px]{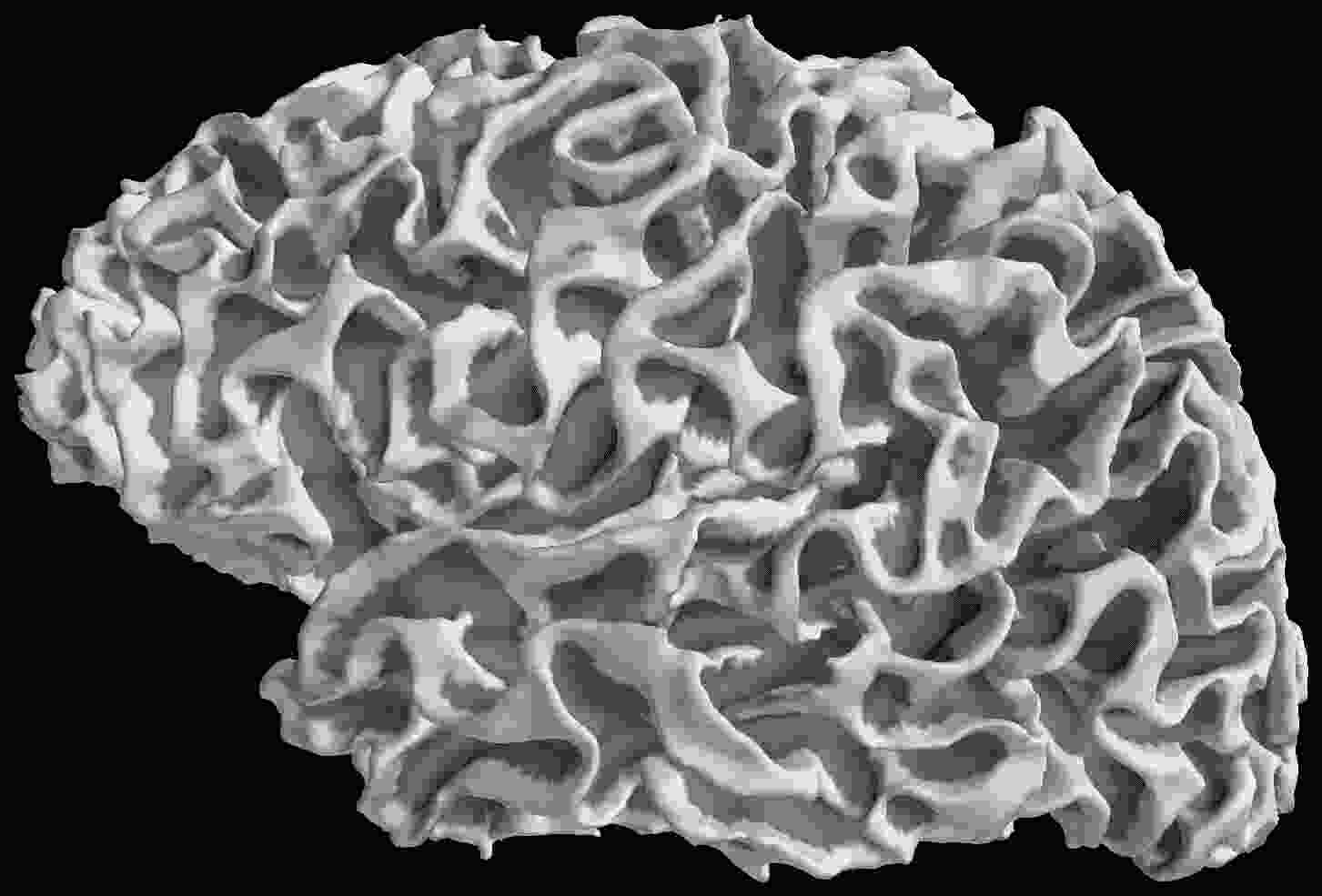}
  \end{subfigure}\hspace{\spacefig em}
  \begin{subfigure}{\figsize\textwidth}
    \includegraphics[width=68px,height=47px]{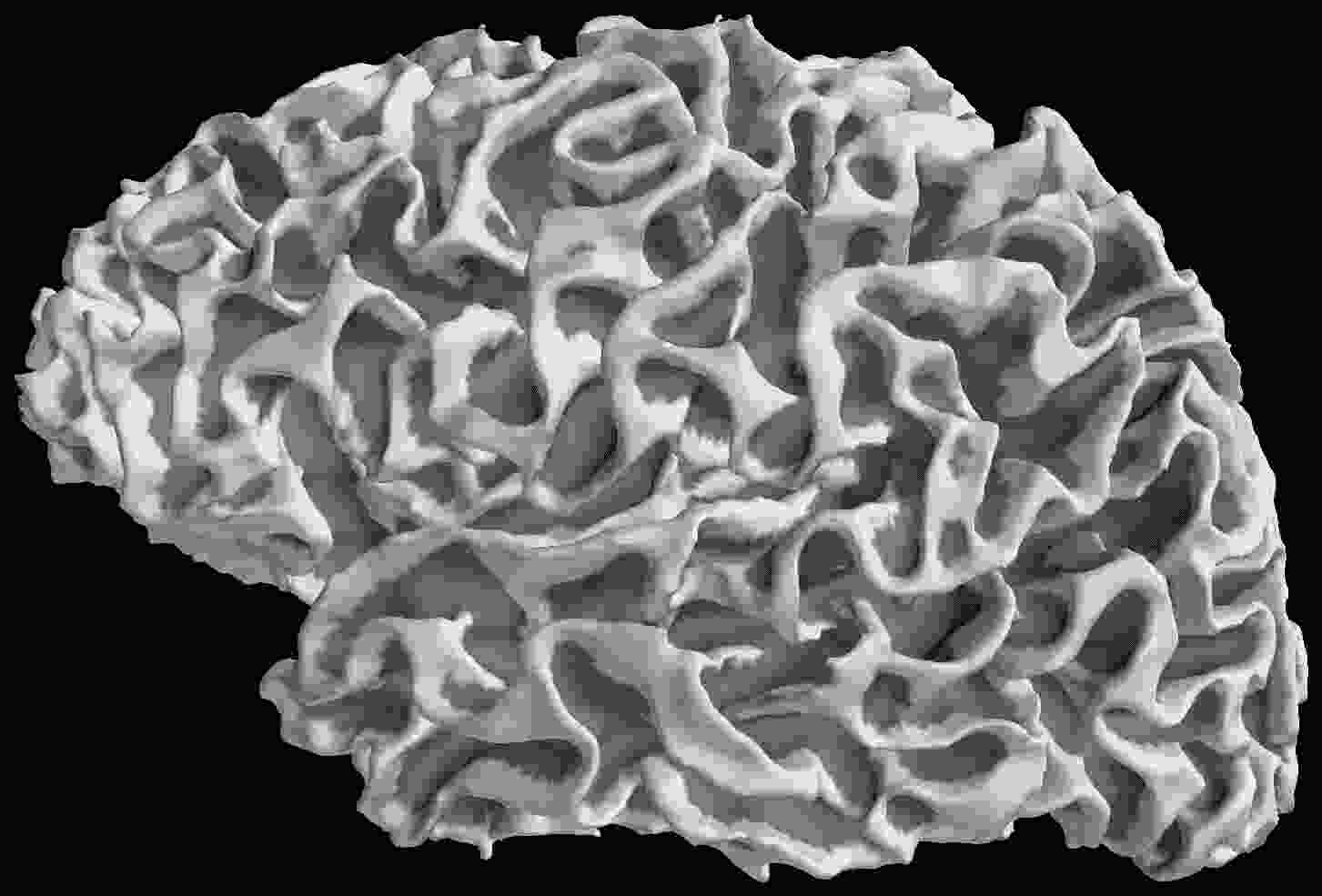}
    \end{subfigure}\hspace{\spacefig em}
  \begin{subfigure}{\figsize\textwidth}
  \includegraphics[width=68px,height=47px]{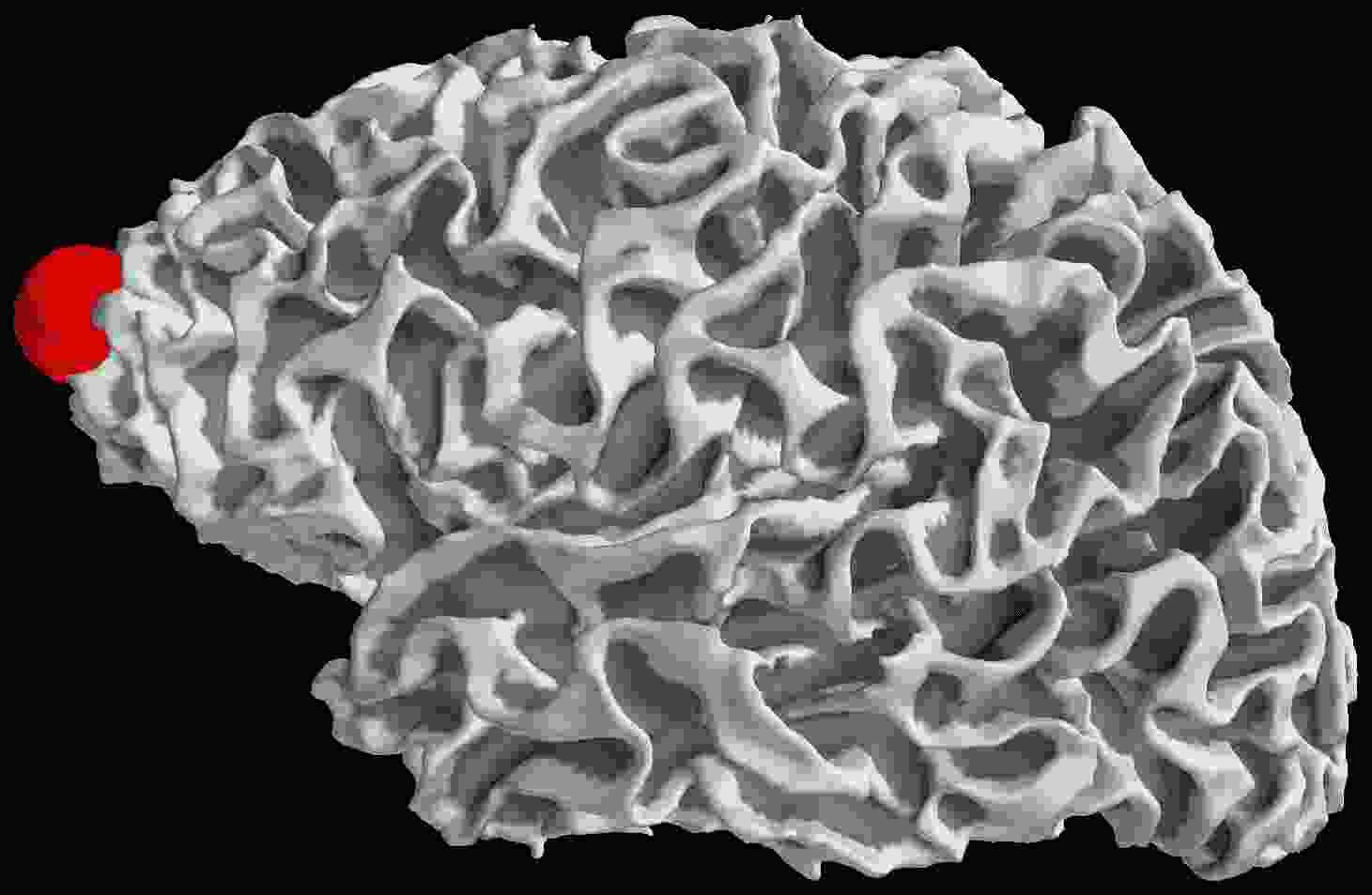}
  \end{subfigure}\hspace{\spacefig em}
  \begin{subfigure}{\figsize\textwidth}
  \includegraphics[width=68px,height=47px]{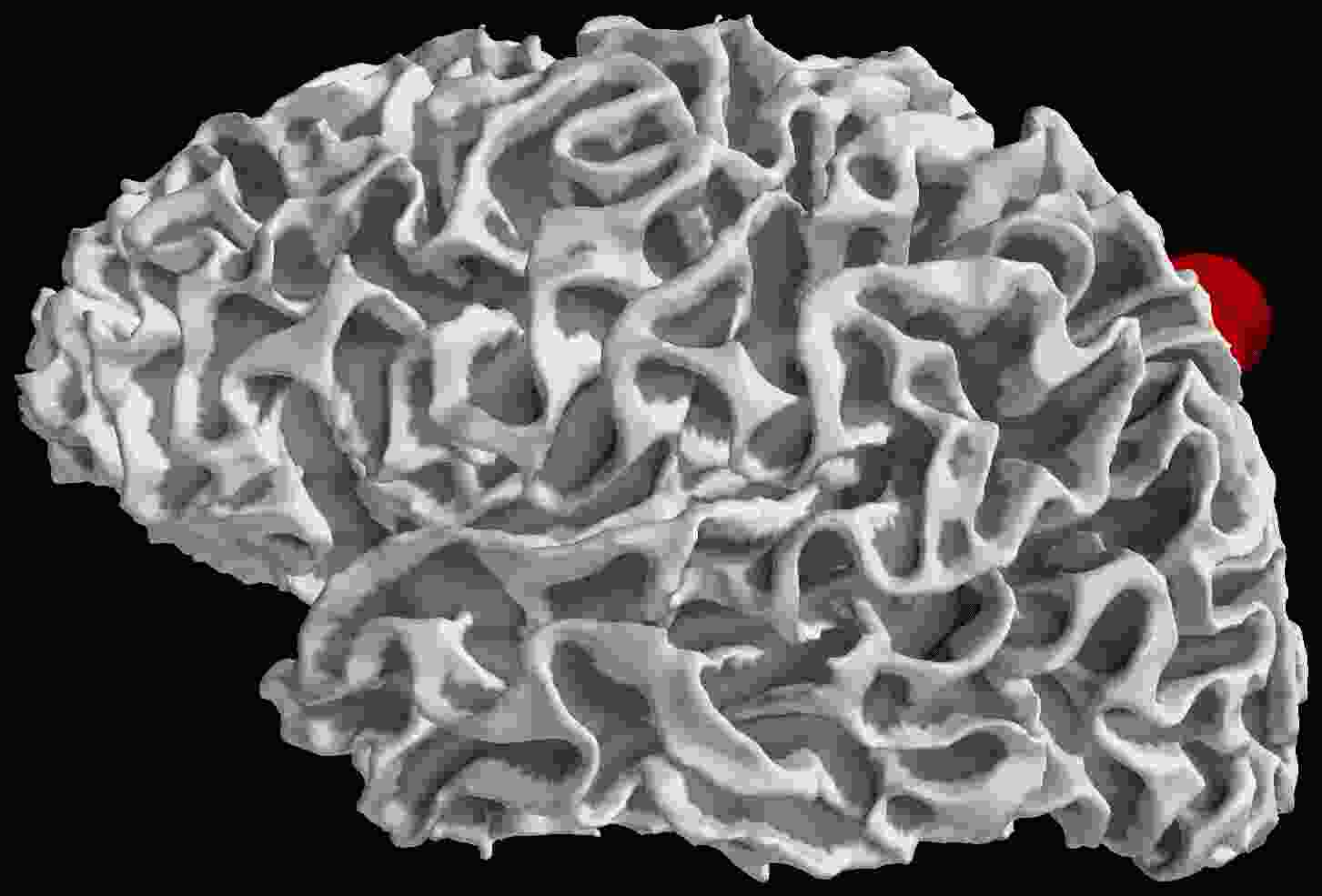}
  \end{subfigure}\hspace{\spacefig em}
  \begin{subfigure}{\figsize \textwidth}
    \includegraphics[width=68px,height=47px]{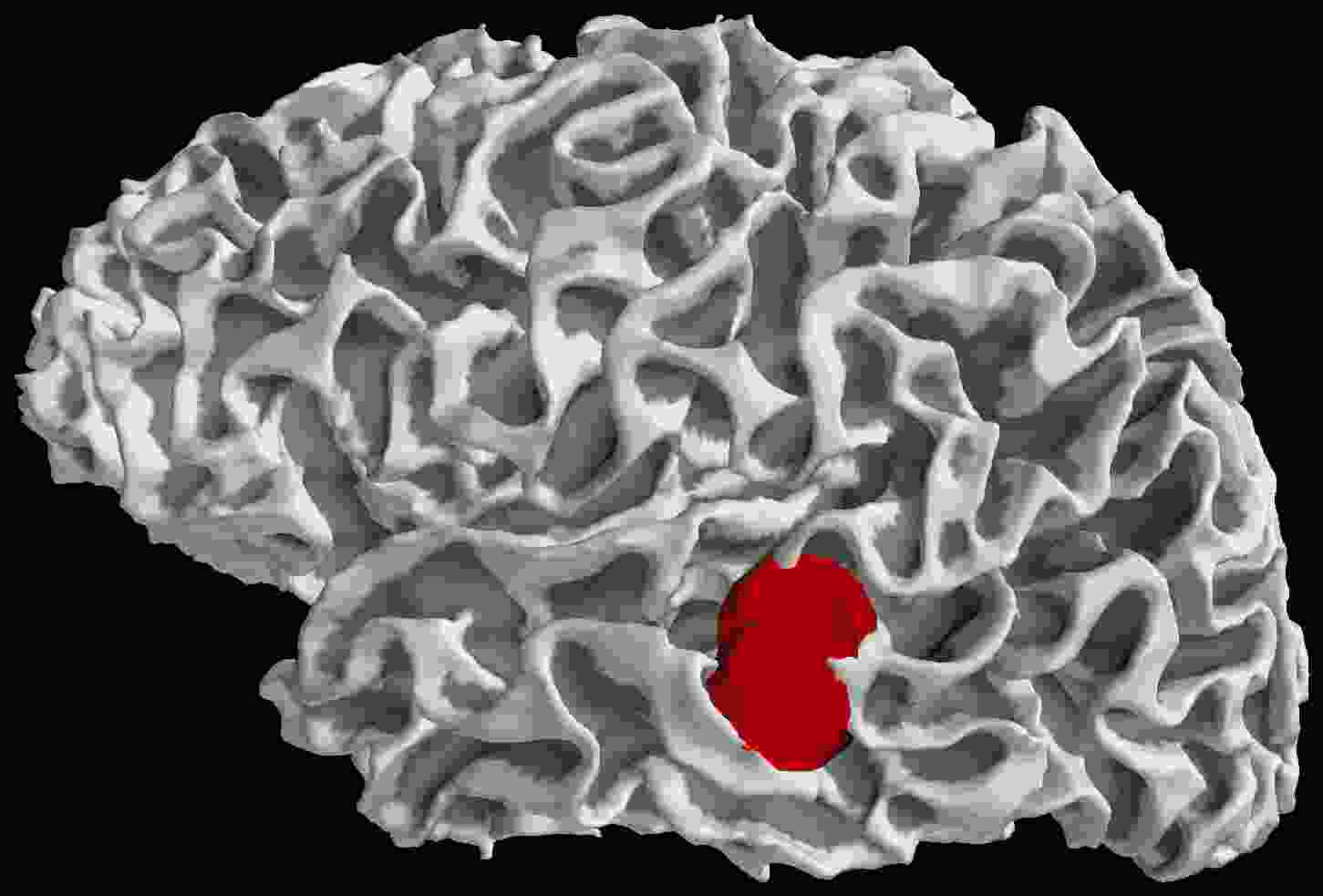}
    \end{subfigure}\hspace{\spacefig em}

  \vspace{-0.1em}
  \begin{subfigure}{\figsize\textwidth}
    \includegraphics[width=68px,height=47px]{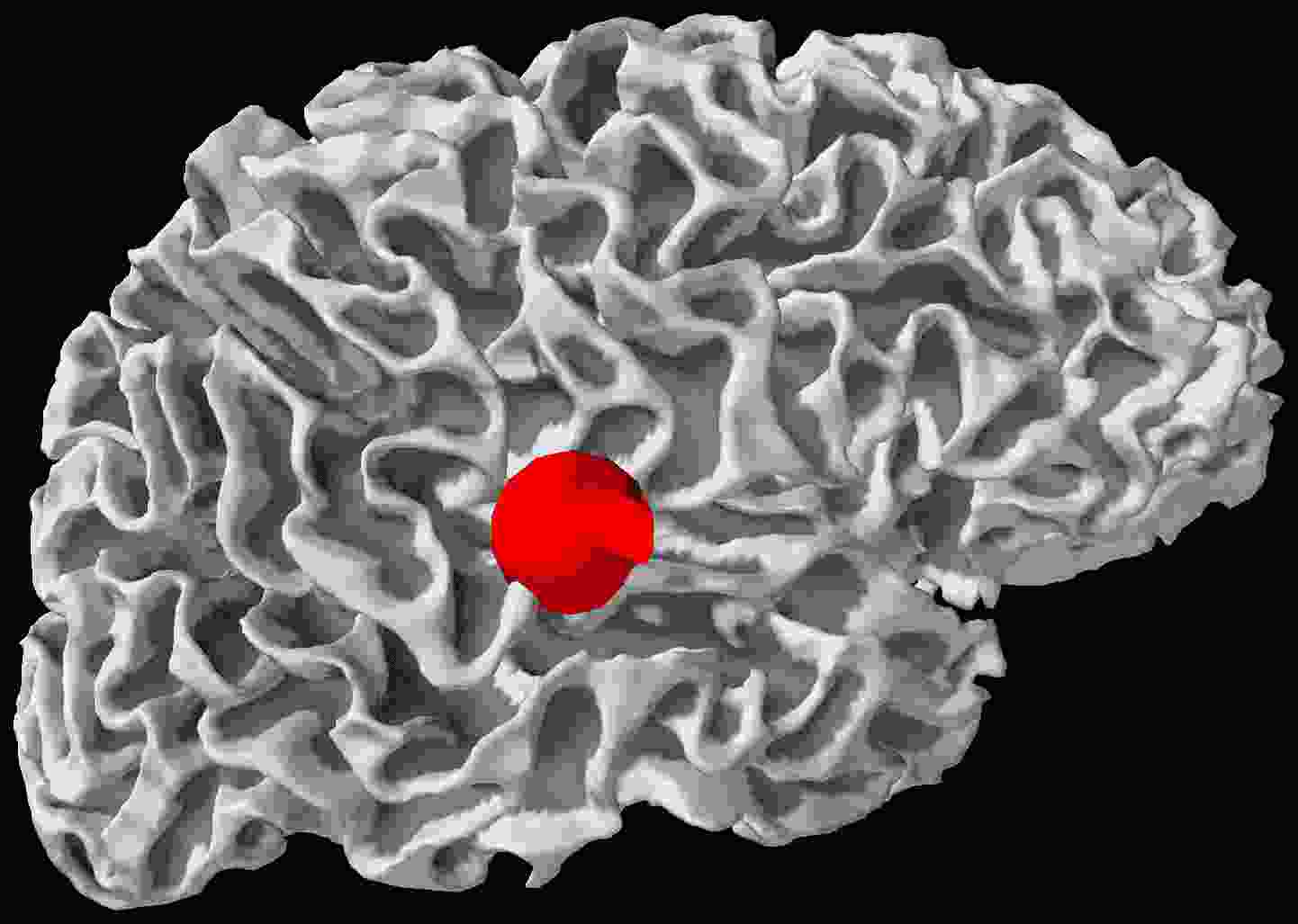}
    \caption{\us}
  \end{subfigure}\hspace{\spacefig em}
  \begin{subfigure}{\figsize\textwidth}
    \includegraphics[width=68px,height=47px]{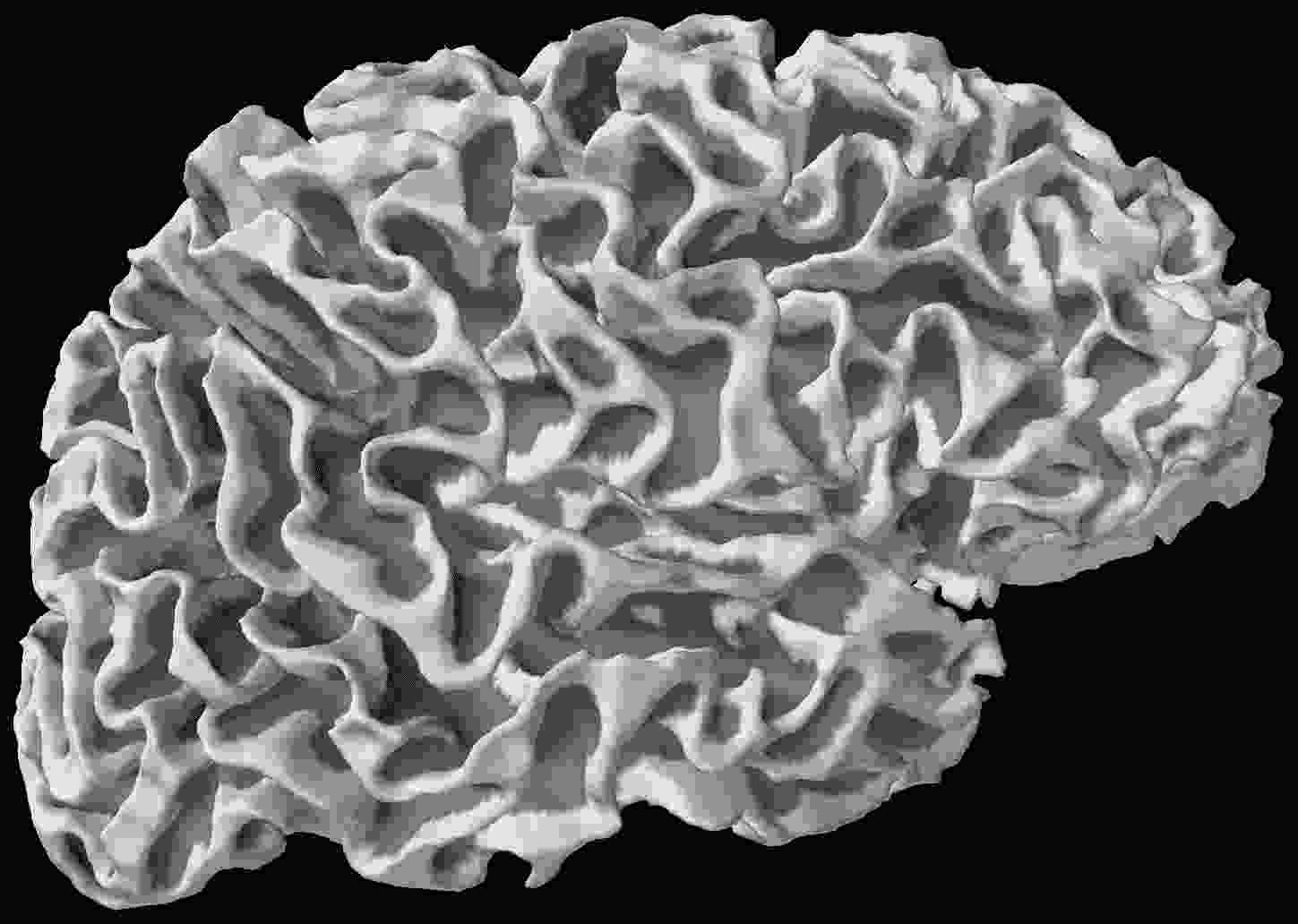}
    \caption{\sgcl}
  \end{subfigure}\hspace{\spacefig em}
  \begin{subfigure}{\figsize\textwidth}
    \includegraphics[width=68px,height=47px]{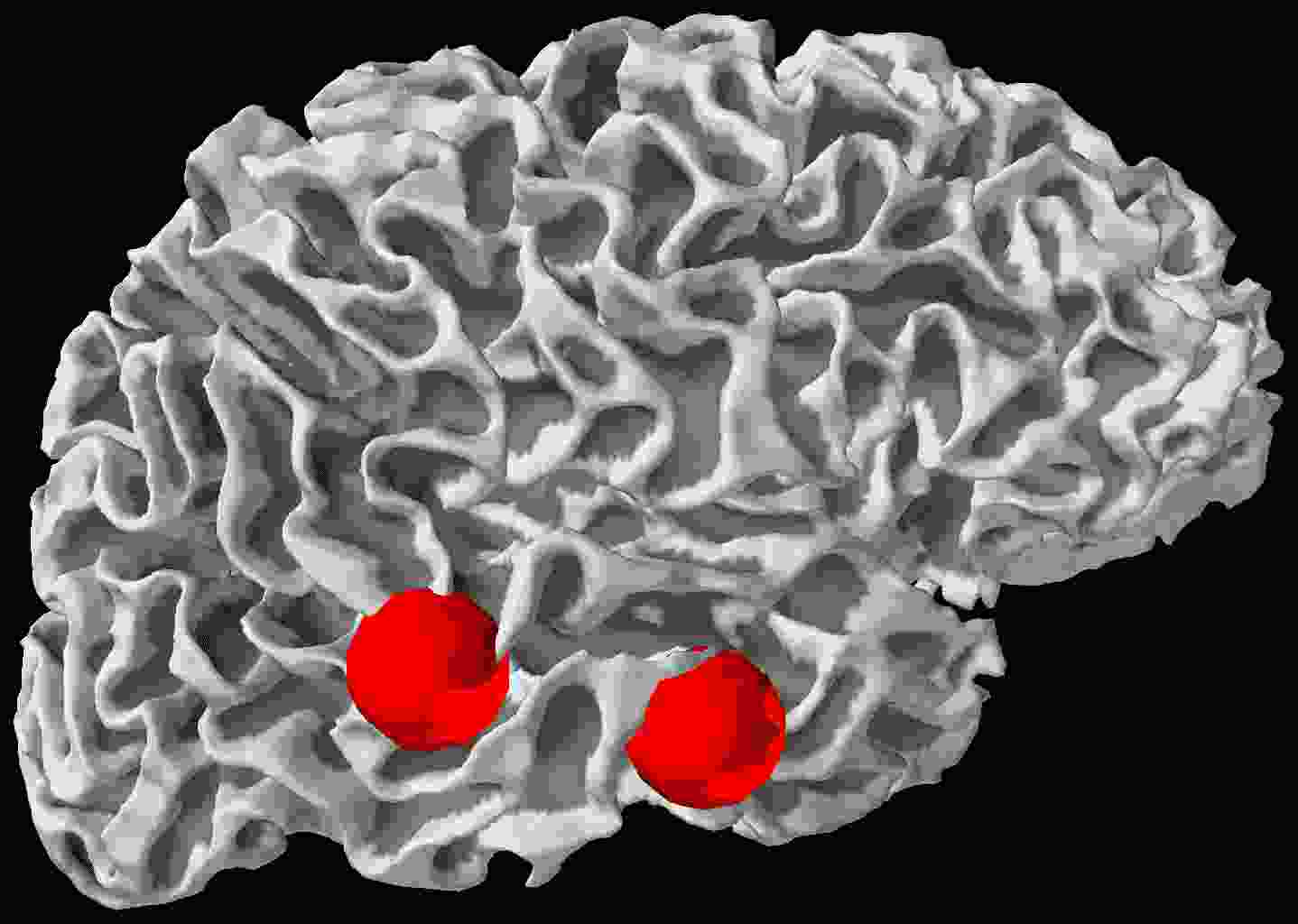}
    \caption{\mler}
    \end{subfigure}\hspace{\spacefig em}
  \begin{subfigure}{\figsize\textwidth}
    \includegraphics[width=68px,height=47px]{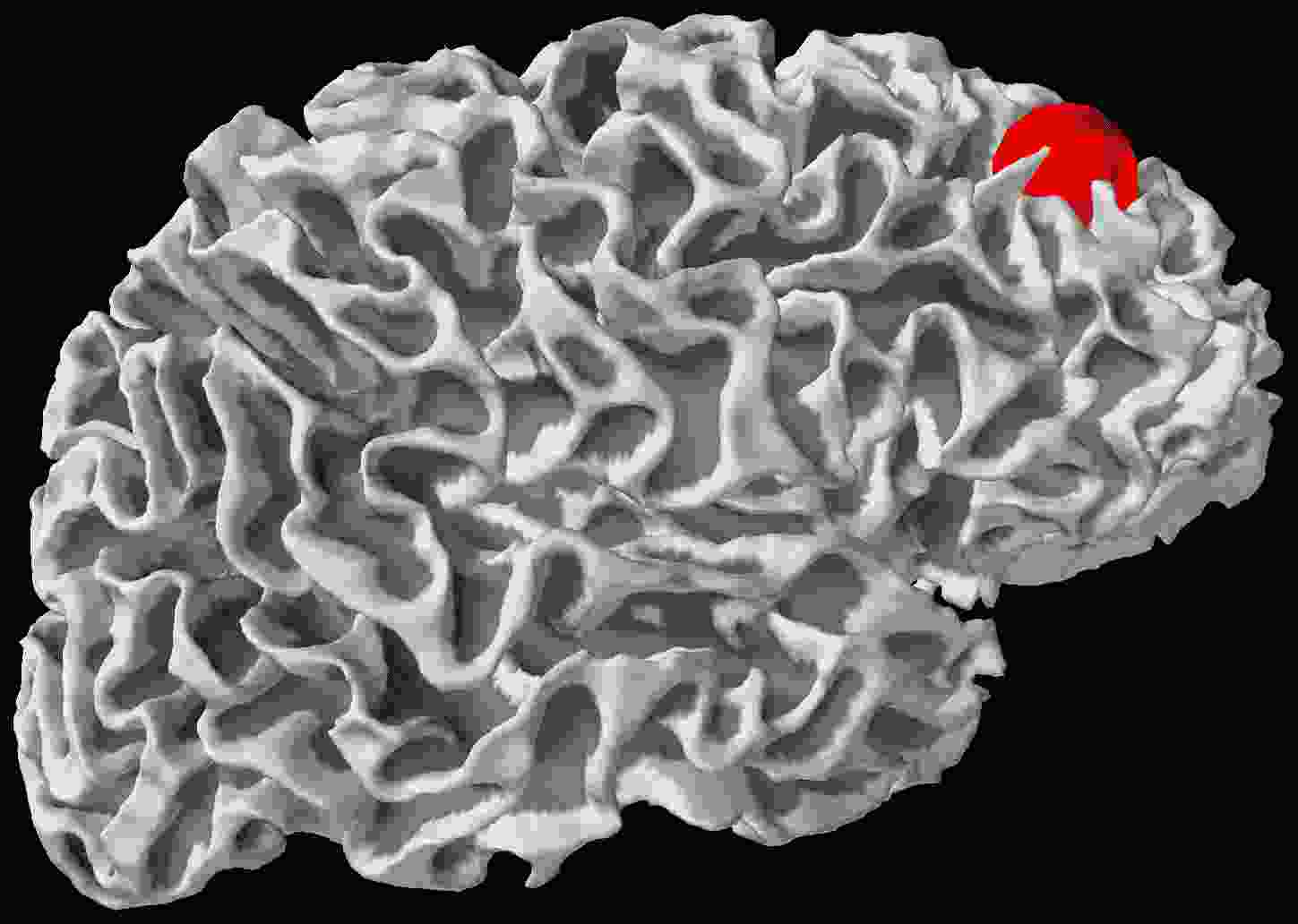}
    \caption{\mle}
    \end{subfigure}\hspace{\spacefig em}
  \begin{subfigure}{\figsize\textwidth}
    \includegraphics[width=68px,height=47px]{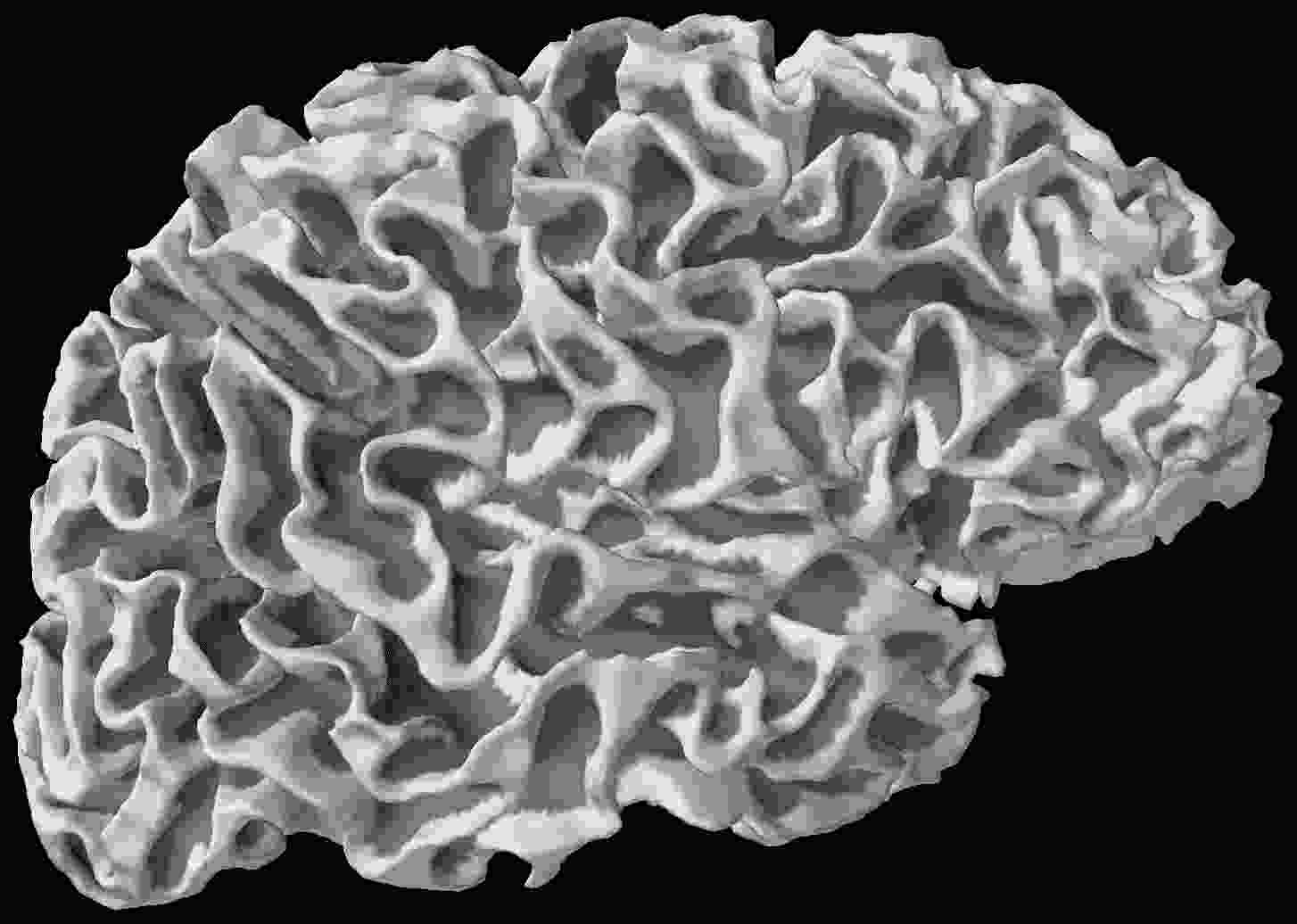}
    \caption{\mrcer}
    \end{subfigure}\hspace{\spacefig em}
  \begin{subfigure}{\figsize\textwidth}
    \includegraphics[width=68px,height=47px]{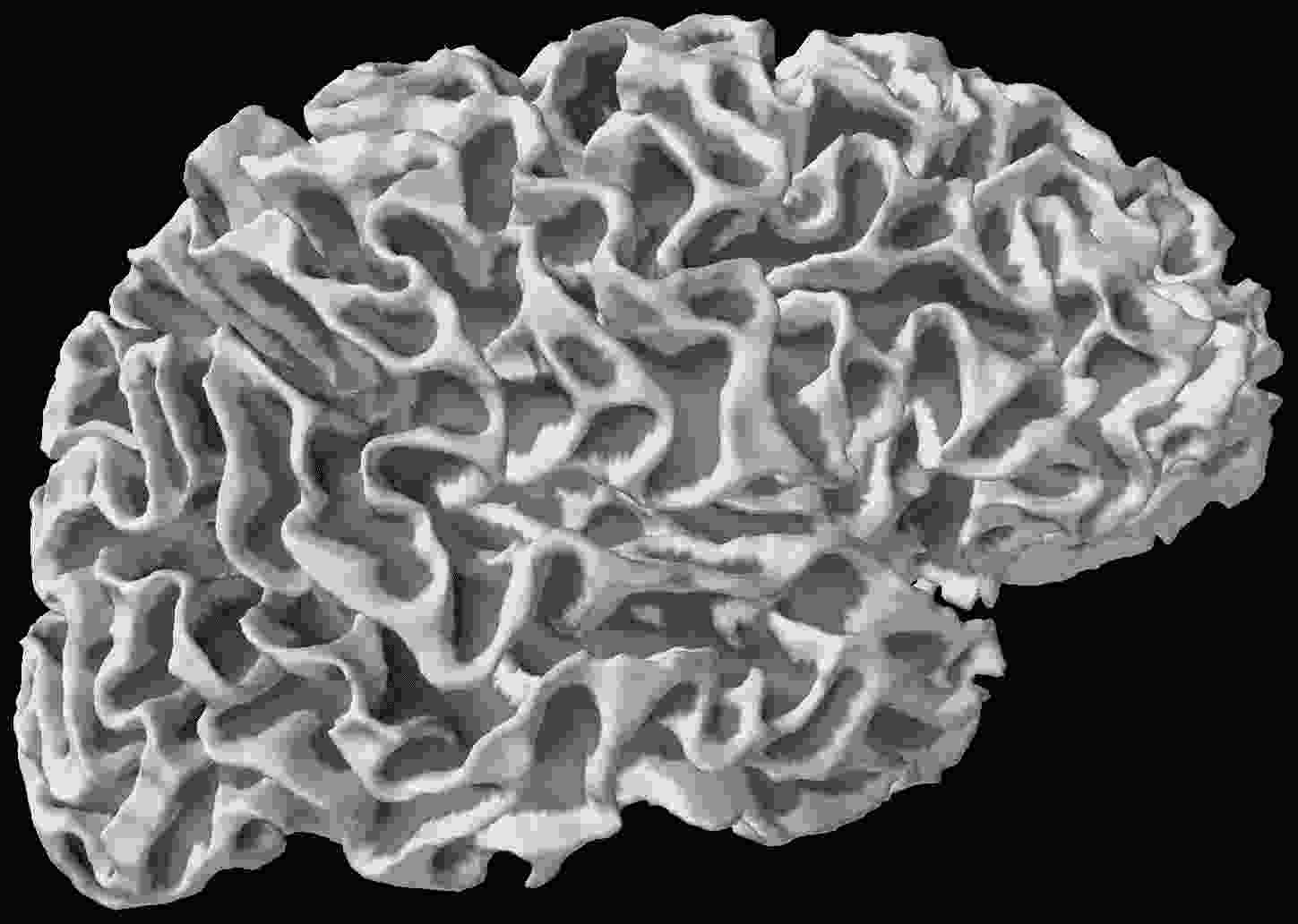}
    \caption{\mtl}
  \end{subfigure}\hspace{\spacefig em}
  \caption{\textit{Real data, left auditory stimulations ($n=102$, $p=7498$, $q=76$, $r=63$)} Sources found in the left hemisphere (top) and the right hemisphere (bottom) after left auditory stimulations.}
  \label{fig:real_left_audi}
\end{figure}



\begin{figure}[t]
  \centering 

  \begin{subfigure}{\figsize \textwidth}
  \includegraphics[width=68px,height=47px]{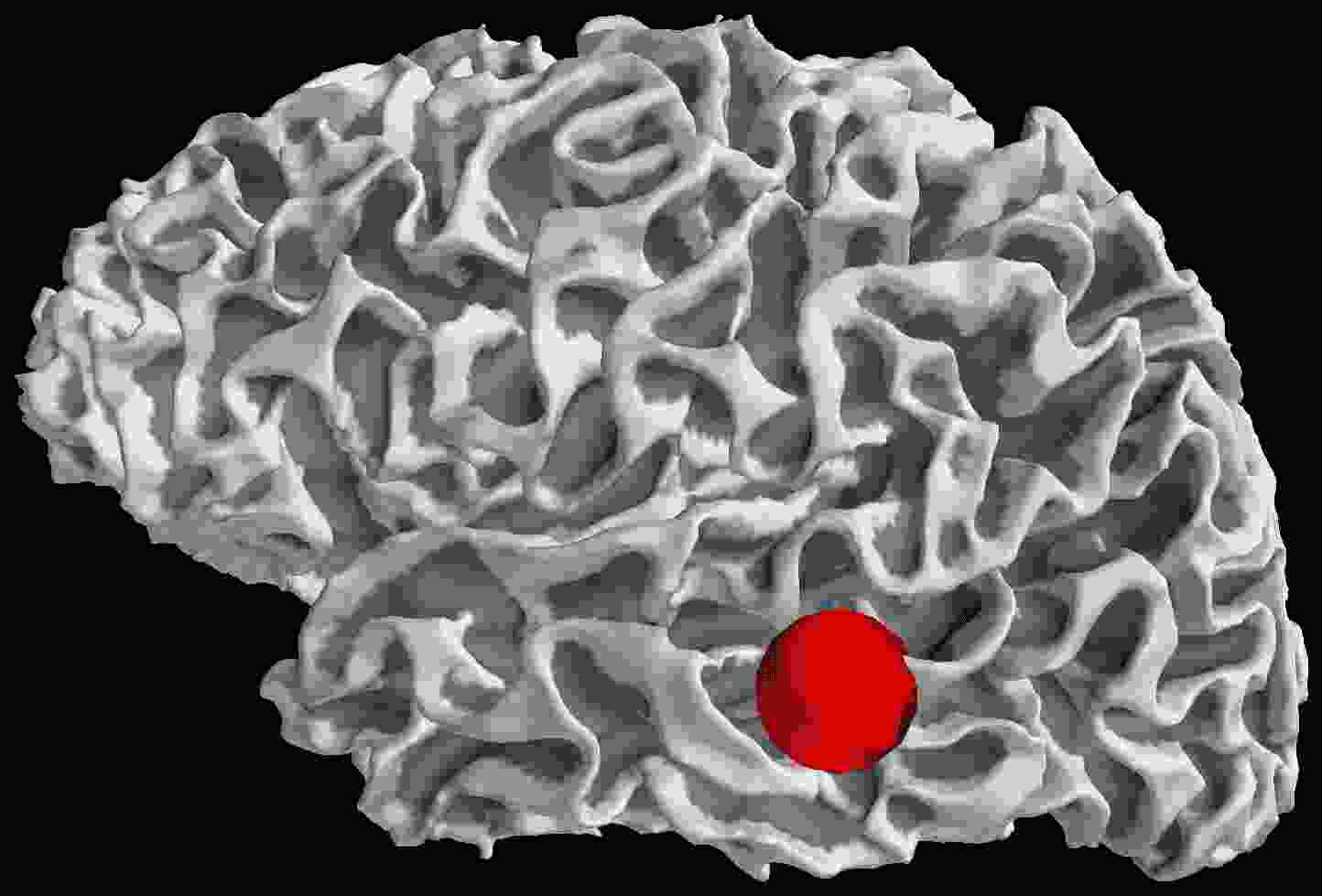}
  \end{subfigure}\hspace{\spacefig em}
  \begin{subfigure}{\figsize \textwidth}
  \includegraphics[width=68px,height=47px]{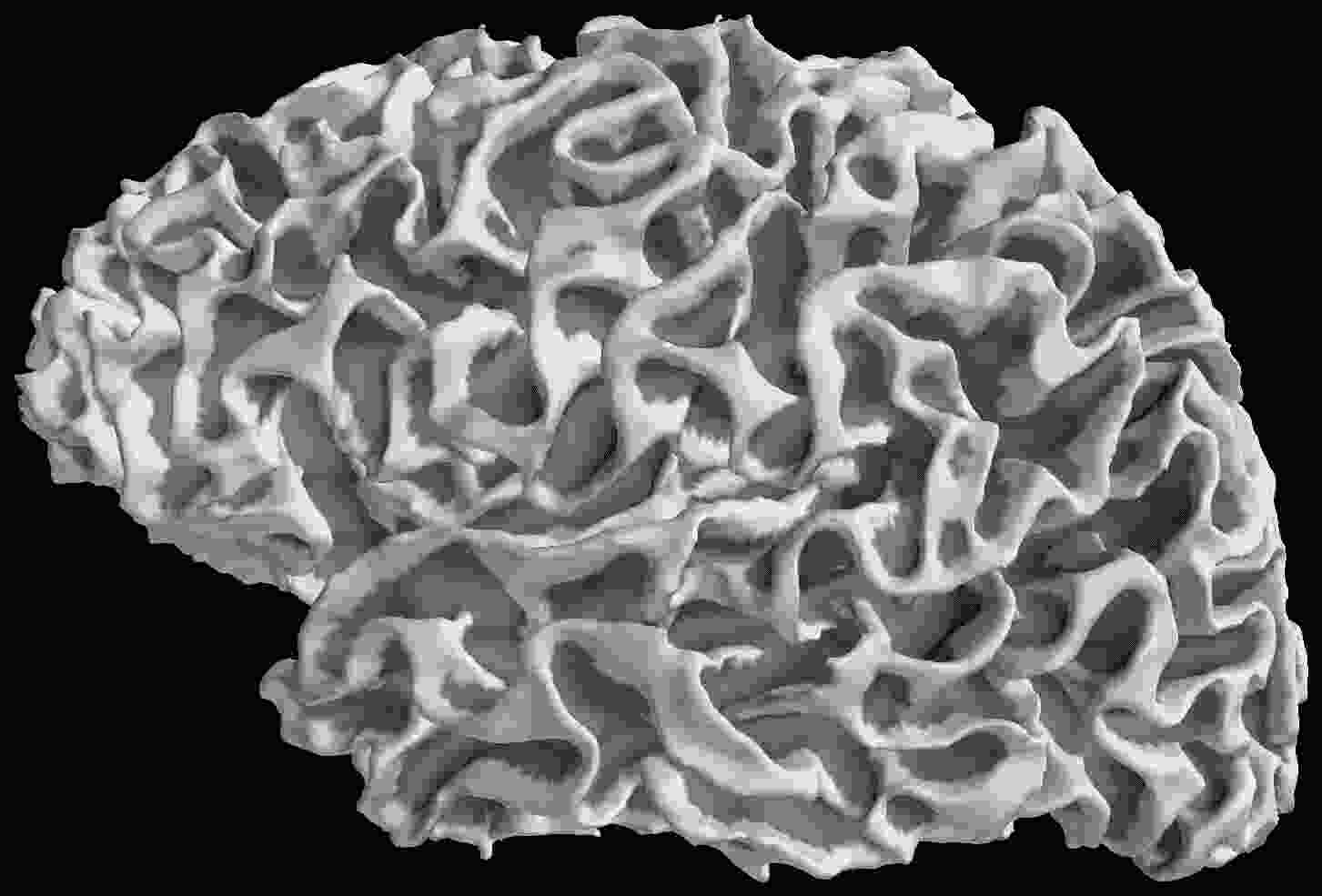}
  \end{subfigure}\hspace{\spacefig em}
  \begin{subfigure}{\figsize\textwidth}
    \includegraphics[width=68px,height=47px]{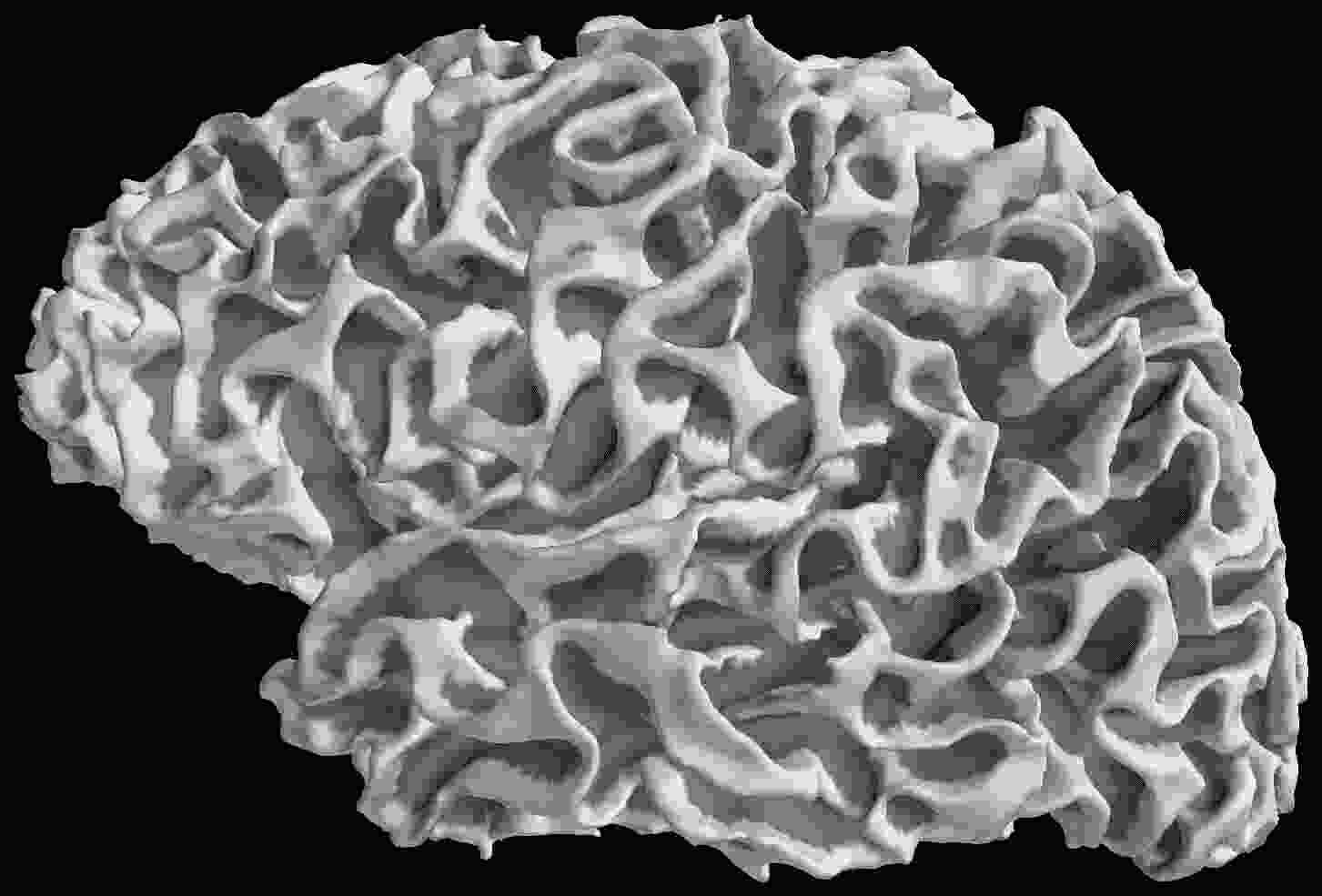}
    \end{subfigure}\hspace{\spacefig em}
  \begin{subfigure}{\figsize\textwidth}
  \includegraphics[width=68px,height=47px]{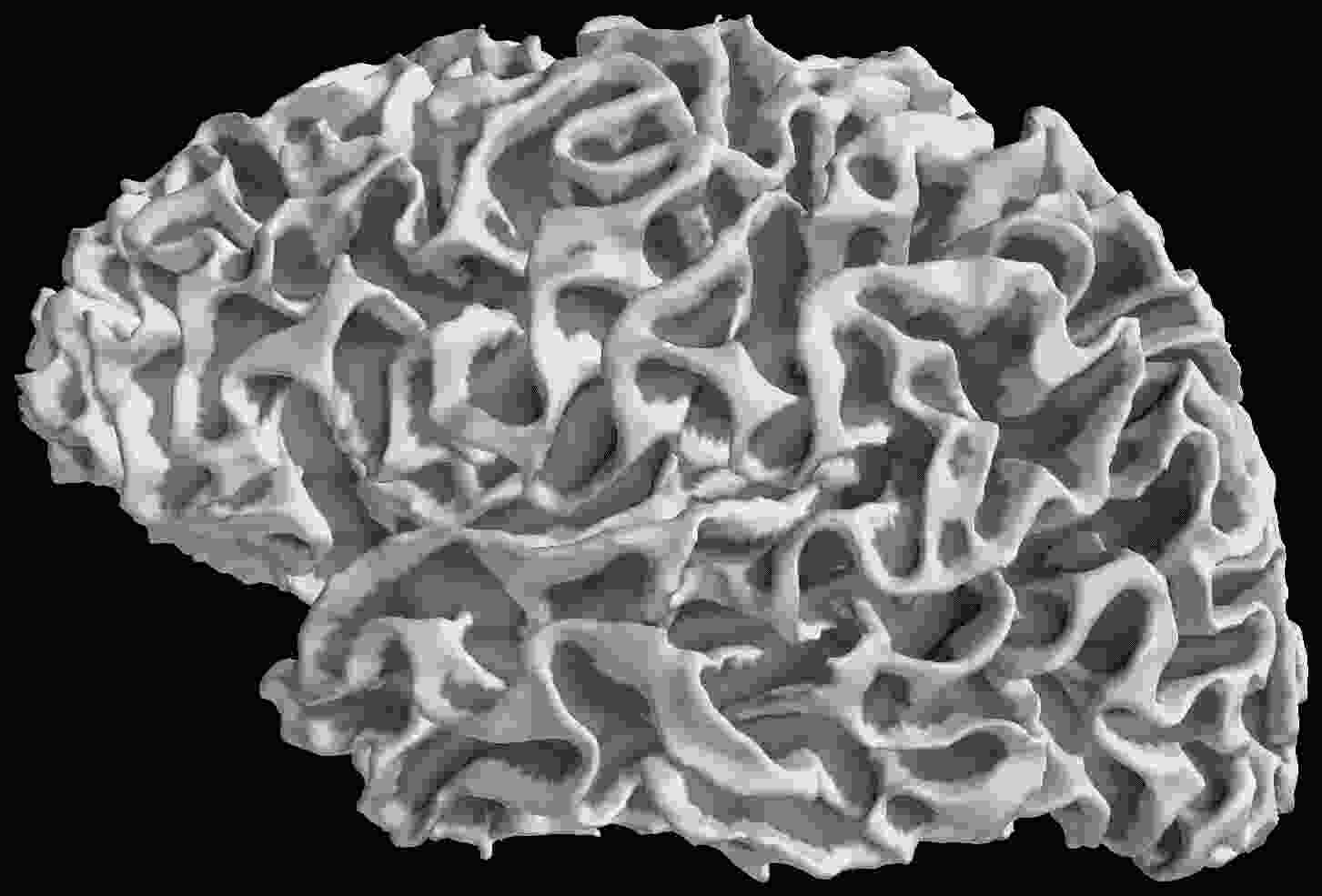}
  \end{subfigure}\hspace{\spacefig em}
  \begin{subfigure}{\figsize\textwidth}
  \includegraphics[width=68px,height=47px]{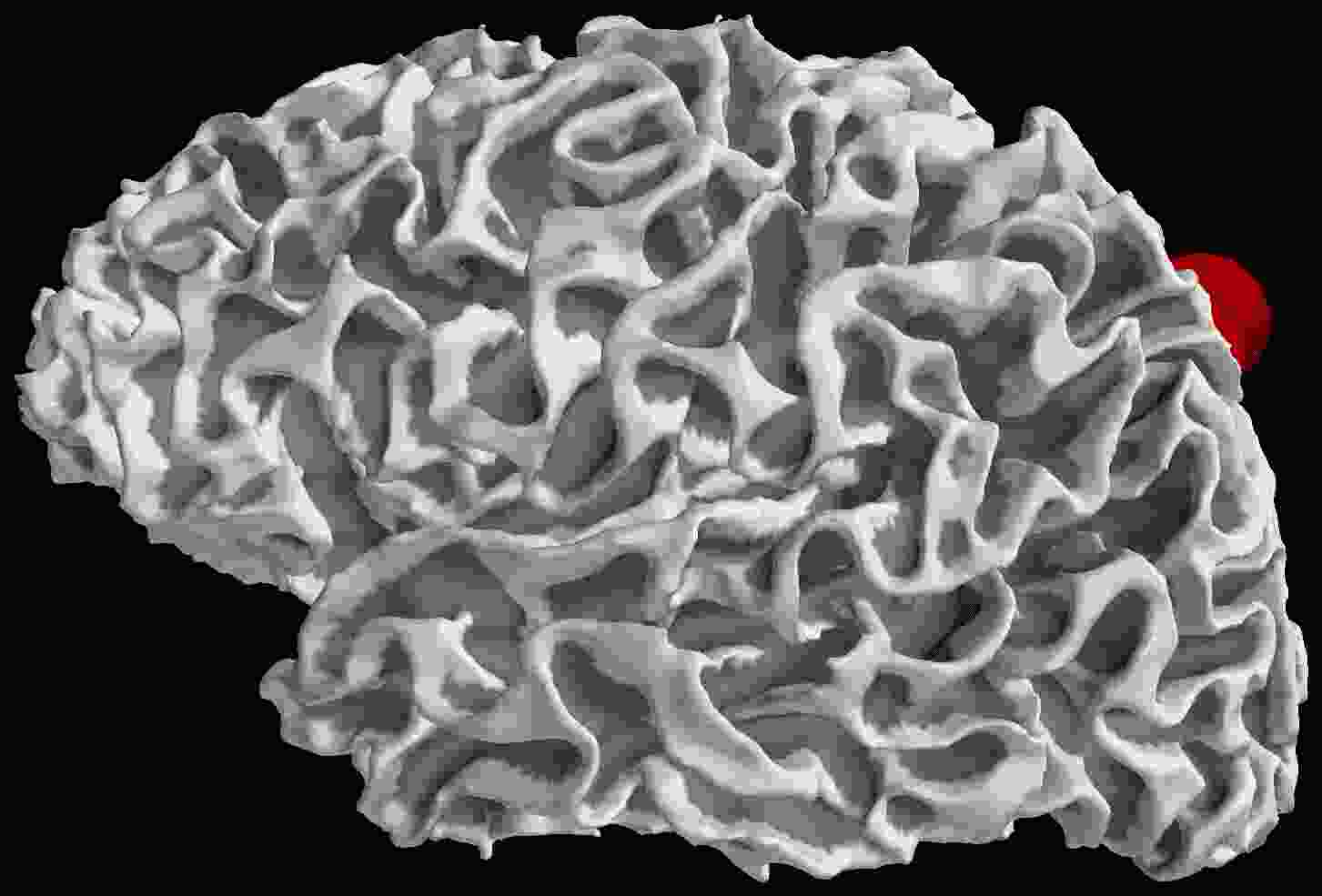}
  \end{subfigure}\hspace{\spacefig em}
  \begin{subfigure}{\figsize \textwidth}
    \includegraphics[width=68px,height=47px]{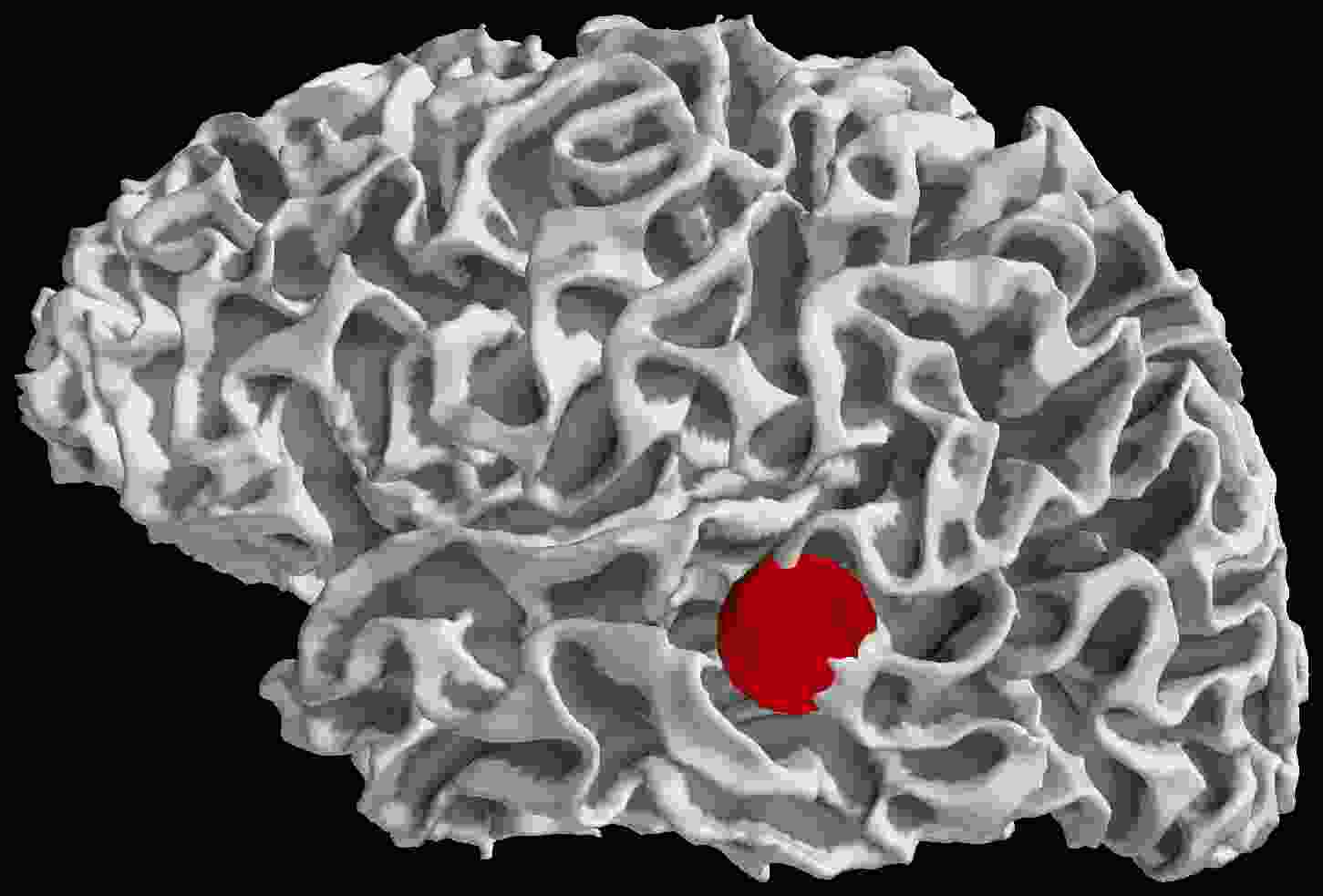}
    \end{subfigure}\hspace{\spacefig em}

  \vspace{-0.1em}
  \begin{subfigure}{\figsize\textwidth}
    \includegraphics[width=68px,height=47px]{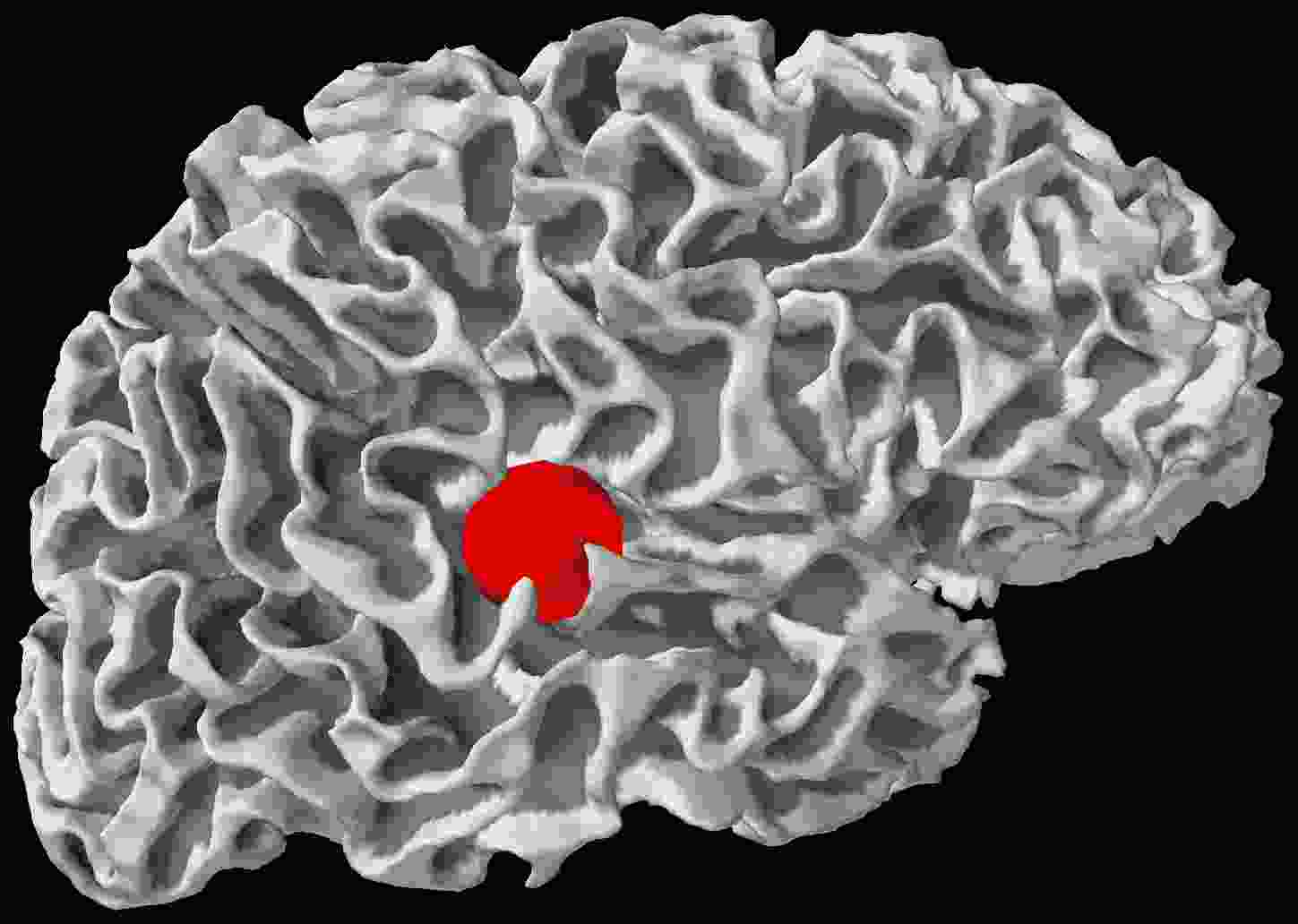}
    \caption{\us}
  \end{subfigure}\hspace{\spacefig em}
  \begin{subfigure}{\figsize\textwidth}
    \includegraphics[width=68px,height=47px]{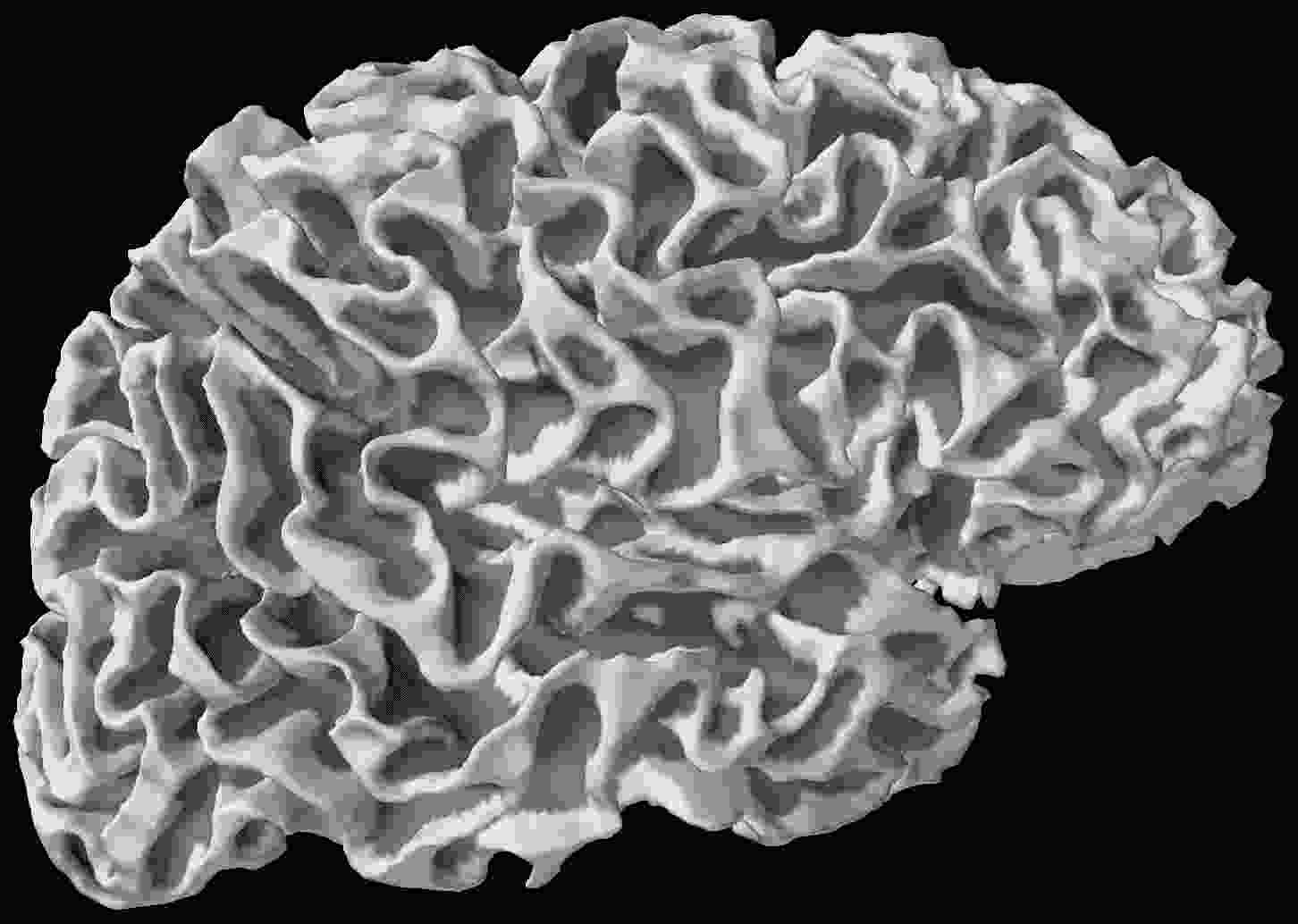}
    \caption{\sgcl}
  \end{subfigure}\hspace{\spacefig em}
  \begin{subfigure}{\figsize\textwidth}
    \includegraphics[width=68px,height=47px]{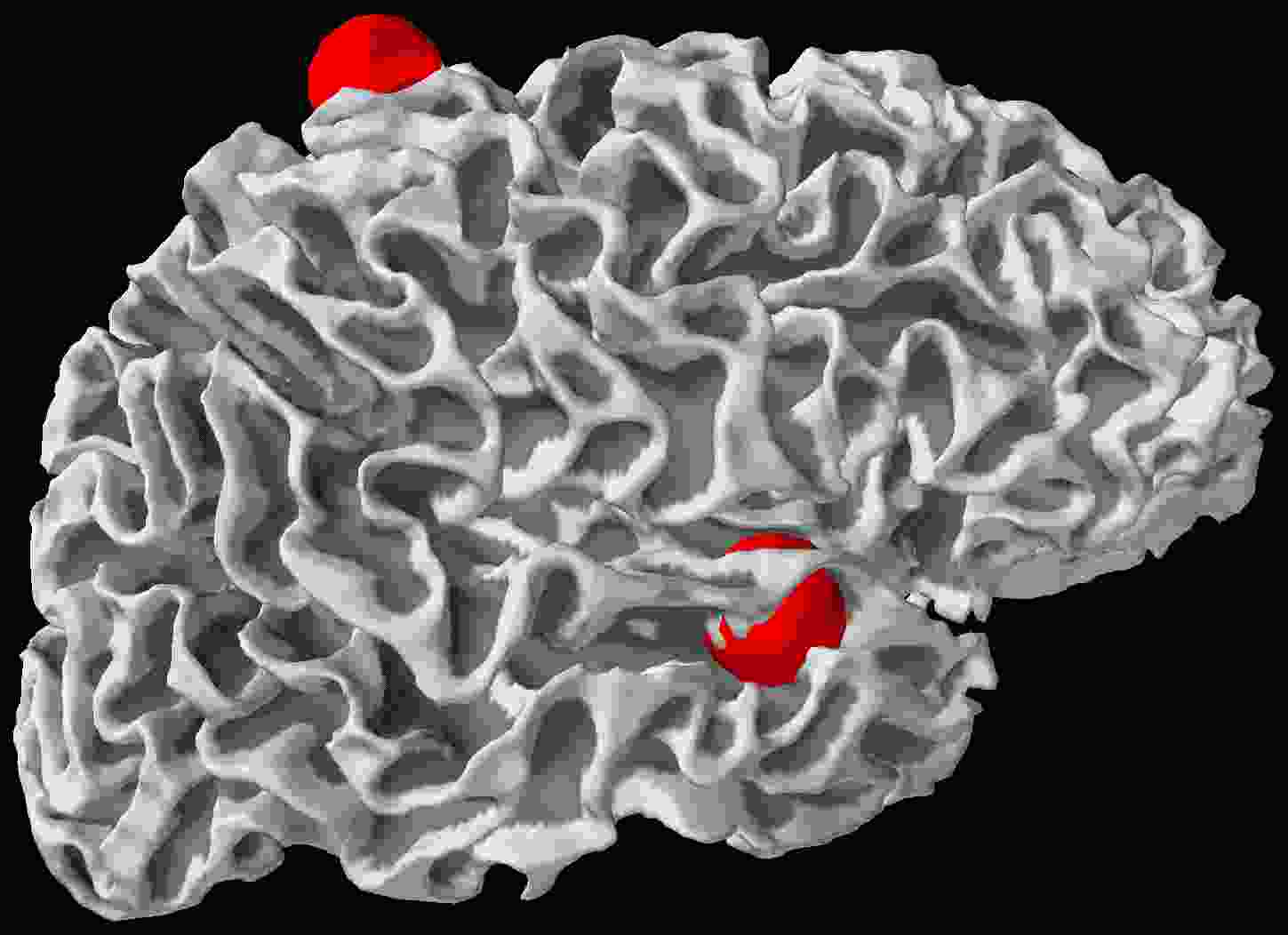}
    \caption{\mler}
    \end{subfigure}\hspace{\spacefig em}
  \begin{subfigure}{\figsize\textwidth}
    \includegraphics[width=68px,height=47px]{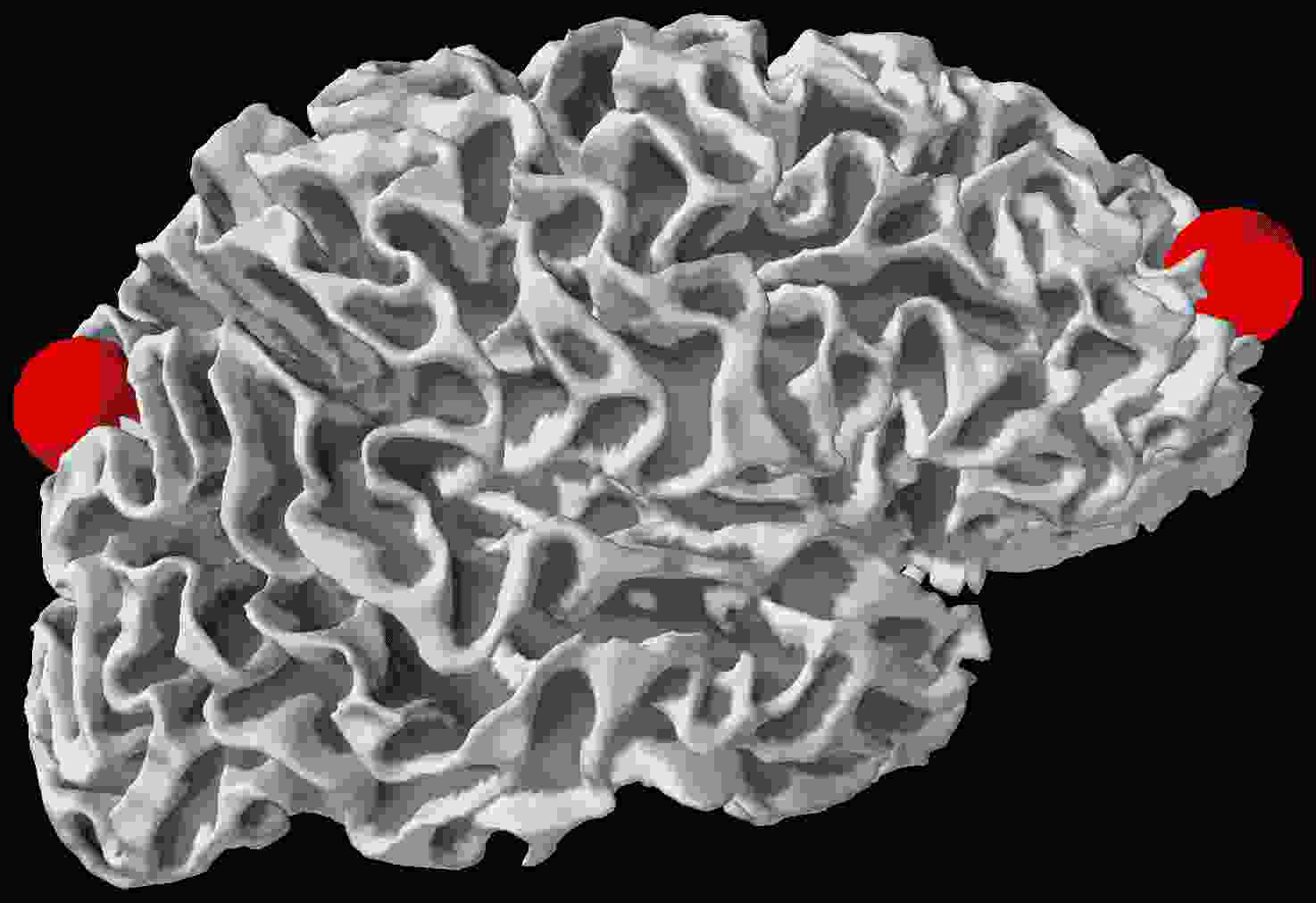}
    \caption{\mle}
    \end{subfigure}\hspace{\spacefig em}
  \begin{subfigure}{\figsize\textwidth}
    \includegraphics[width=68px,height=47px]{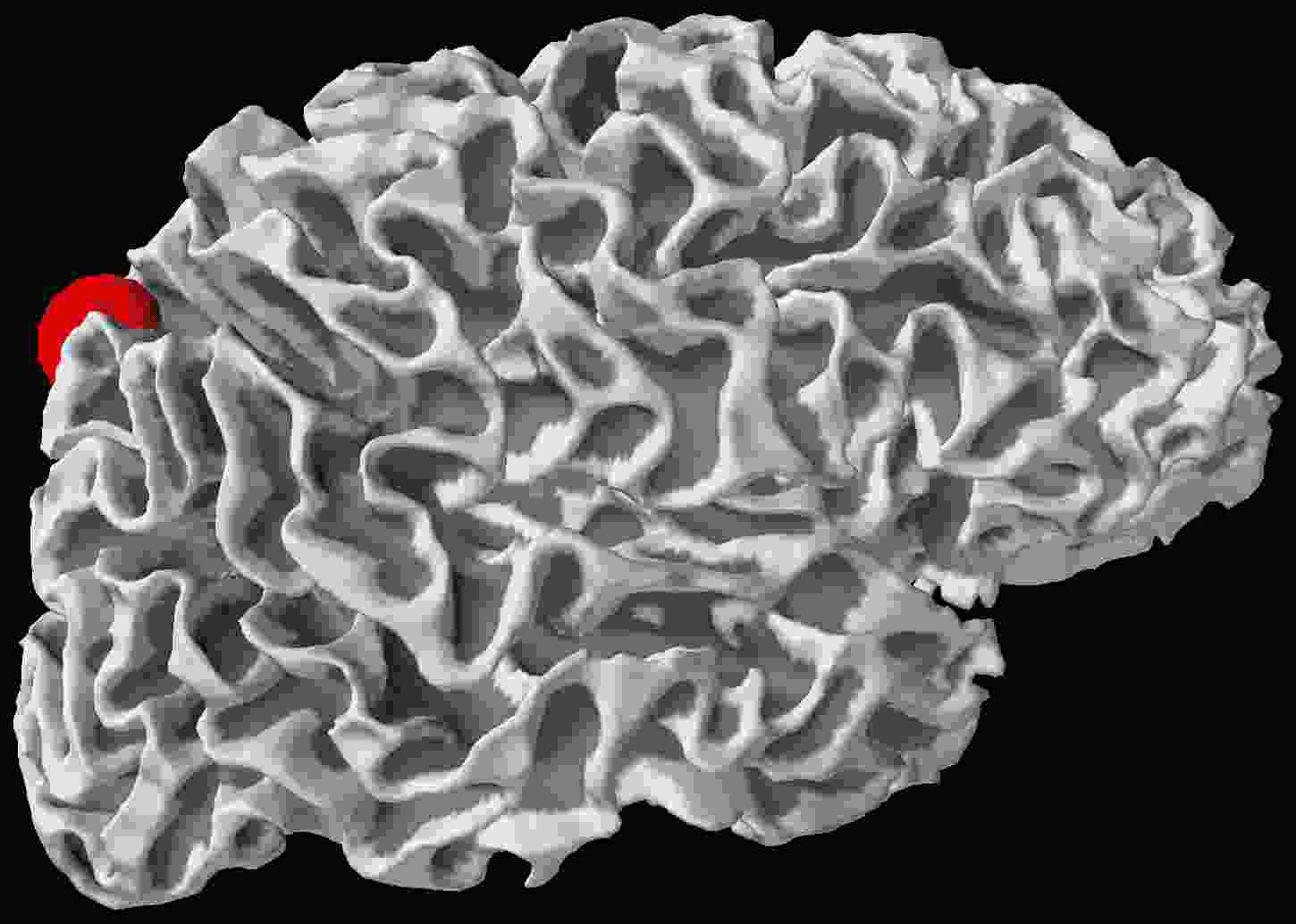}
    \caption{\mrcer}
    \end{subfigure}\hspace{\spacefig em}
  \begin{subfigure}{\figsize\textwidth}
    \includegraphics[width=68px,height=47px]{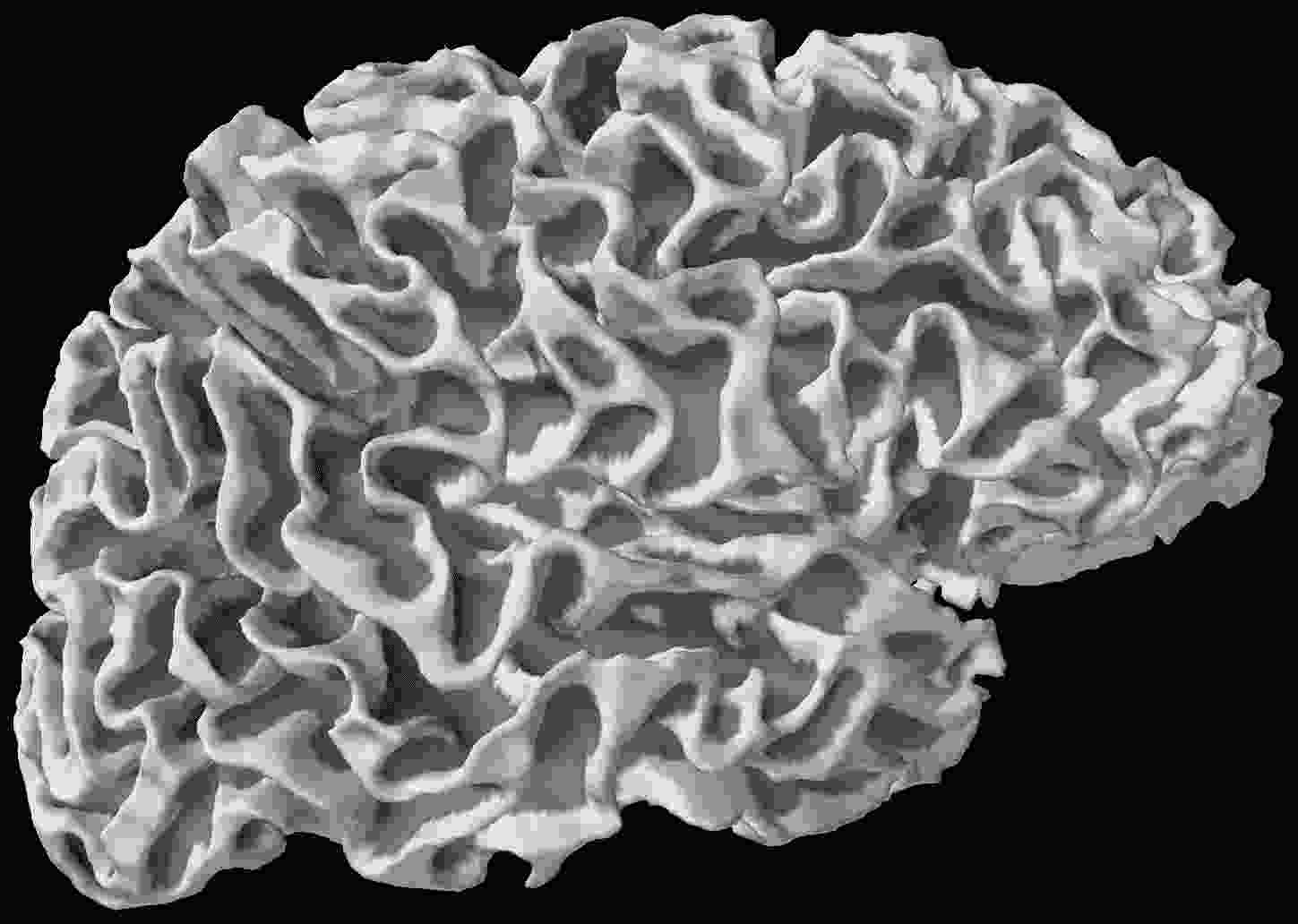}
    \caption{\mtl}
  \end{subfigure}\hspace{\spacefig em}
  \caption{\textit{Real data, right auditory stimulations ($n=102$, $q=7498$, $q=76$, $r=33$)} Sources found in the left hemisphere (top) and the right hemisphere (bottom)  after right auditory stimulations.}
  \label{fig:real_data_right_audi_r_34_main}
\end{figure}

\paragraph{Conclusion}
This work introduces CLaR, a sparse estimator for multitask regression.
It is designed to handle correlated Gaussian noise in the context of repeated observations, a standard framework in applied sciences such as neuroimaging.
The resulting optimization problem can be solved efficiently with state-of-the-art convex solvers, and the algorithmic cost is the same as for single repetition data. The theory of smoothing connects \us to the Schatten 1-Lasso in a principled manner, which opens the way to the use of more sophisticated datafitting terms.
The benefits of \us for support recovery in the presence of non-white Gaussian noise were extensively evaluated against a large number of competitors, both on simulations and on empirical MEG data.

\paragraph{Acknowledgments}
This work was funded by ERC Starting Grant SLAB ERC-YStG-676943.

\clearpage
\bibliographystyle{plainnat} 
\bibliography{references_all} 

\providecommand{\AC}{{A.-C}}\providecommand{\AM}{{A.-M}}\providecommand{\CA}{{C.-A}}\providecommand{\CH}{{C.-H}}\providecommand{\CN}{{C.-N}}\providecommand{\CC}{{C.-C}}\providecommand{\CJ}{{C.-J}}\providecommand{\DY}{{D.-Y}}\providecommand{\HJ}{{H.-J}}\providecommand{\HT}{{H.-T}}\providecommand{\HY}{{H.-Y}}\providecommand{\JC}{{J.-C}}\providecommand{\JP}{{J.-P}}\providecommand{\JB}{{J.-B}}\providecommand{\JF}{{J.-F}}\providecommand{\JJ}{{J.-J}}\providecommand{\JL}{{J.-L}}\providecommand{\JM}{{J.-M}}\providecommand{\JS}{{J.-S}}\providecommand{\JY}{{J.-Y}}\providecommand{\KC}{{K.-C}}\providecommand{\KW}{{K.-W}}\providecommand{\KR}{{K.-R}}\providecommand{\LJ}{{L.-J}}\providecommand{\MR}{{M.-R}}\providecommand{\PL}{{P.-L}}\providecommand{\RE}{{R.-E}}\providecommand{\SJ}{{S.-J}}\providecommand{\XR}{{X.-R}}\providecommand{\WX}{{W.-X}}\providecommand{\YX}{{Y.-X}}
\begin{thebibliography}{46}
\providecommand{\natexlab}[1]{#1}
\providecommand{\url}[1]{\texttt{#1}}
\expandafter\ifx\csname urlstyle\endcsname\relax
  \providecommand{\doi}[1]{doi: #1}\else
  \providecommand{\doi}{doi: \begingroup \urlstyle{rm}\Url}\fi

\bibitem[Argyriou et~al.(2008)Argyriou, Evgeniou, and
  Pontil]{Argyriou_Evgeniou_Pontil08}
A.~Argyriou, T.~Evgeniou, and M.~Pontil.
\newblock Convex multi-task feature learning.
\newblock \emph{Machine Learning}, 73\penalty0 (3):\penalty0 243--272, 2008.

\bibitem[Bach et~al.(2012)Bach, Jenatton, Mairal, and
  Obozinski]{Bach_Jenatton_Mairal_Obozinski12}
F.~Bach, R.~Jenatton, J.~Mairal, and G.~Obozinski.
\newblock Convex optimization with sparsity-inducing norms.
\newblock \emph{Foundations and Trends in Machine Learning}, 4\penalty0
  (1):\penalty0 1--106, 2012.

\bibitem[Beck(2017)]{Beck17}
A.~Beck.
\newblock \emph{First-Order Methods in Optimization}, volume~25.
\newblock SIAM, 2017.

\bibitem[Beck and Teboulle(2012)]{Beck_Teboulle12}
A.~Beck and M.~Teboulle.
\newblock Smoothing and first order methods: A unified framework.
\newblock \emph{SIAM J. Optim.}, 22\penalty0 (2):\penalty0 557--580, 2012.

\bibitem[Belloni et~al.(2011)Belloni, Chernozhukov, and
  Wang]{Belloni_Chernozhukov_Wang11}
A.~Belloni, V.~Chernozhukov, and L.~Wang.
\newblock Square-root {Lasso}: pivotal recovery of sparse signals via conic
  programming.
\newblock \emph{Biometrika}, 98\penalty0 (4):\penalty0 791--806, 2011.

\bibitem[Bickel et~al.(2009)Bickel, Ritov, and
  Tsybakov]{Bickel_Ritov_Tsybakov09}
P.~J. Bickel, Y.~Ritov, and A.~B. Tsybakov.
\newblock Simultaneous analysis of {Lasso} and {D}antzig selector.
\newblock \emph{Ann. Statist.}, 37\penalty0 (4):\penalty0 1705--1732, 2009.

\bibitem[Boyd and Vandenberghe(2004)]{Boyd_Vandenberghe04}
S.~Boyd and L.~Vandenberghe.
\newblock \emph{Convex optimization}.
\newblock Cambridge University Press, 2004.

\bibitem[Chambolle and Pock(2011)]{Chambolle_Pock11}
A.~Chambolle and T.~Pock.
\newblock A first-order primal-dual algorithm for convex problems with
  applications to imaging.
\newblock \emph{J. Math. Imaging Vis.}, 40\penalty0 (1):\penalty0 120--145,
  2011.

\bibitem[Chen and Banerjee(2017)]{Chen_Banerjee17}
S.~Chen and A.~Banerjee.
\newblock Alternating estimation for structured high-dimensional multi-response
  models.
\newblock In \emph{NIPS}, pages 2838--2848, 2017.

\bibitem[Dalalyan et~al.(2013)Dalalyan, Hebiri, Meziani, and
  Salmon]{Dalalyan_Hebiri_Meziani_Salmon13}
A.~S. Dalalyan, M.~Hebiri, K.~Meziani, and J.~Salmon.
\newblock Learning heteroscedastic models by convex programming under group
  sparsity.
\newblock In \emph{ICML}, 2013.

\bibitem[Daye et~al.(2012)Daye, Chen, and Li]{Daye_Chen_Li12}
J.~Daye, J.~Chen, and H.~Li.
\newblock High-dimensional heteroscedastic regression with an application to
  {eQTL} data analysis.
\newblock \emph{Biometrics}, 68\penalty0 (1):\penalty0 316--326, 2012.

\bibitem[{El Ghaoui} et~al.(2012){El Ghaoui}, Viallon, and
  Rabbani]{ElGhaoui_Viallon_Rabbani12}
L.~{El Ghaoui}, V.~Viallon, and T.~Rabbani.
\newblock Safe feature elimination in sparse supervised learning.
\newblock \emph{J. Pacific Optim.}, 8\penalty0 (4):\penalty0 667--698, 2012.

\bibitem[Engemann and Gramfort(2015)]{Engemann_Gramfort14}
D.~A. Engemann and A.~Gramfort.
\newblock Automated model selection in covariance estimation and spatial
  whitening of {MEG} and {EEG} signals.
\newblock \emph{NeuroImage}, 108:\penalty0 328--342, 2015.

\bibitem[Fan and Lv(2008)]{Fan_Lv2008}
J.~Fan and J.~Lv.
\newblock Sure independence screening for ultrahigh dimensional feature space.
\newblock \emph{J. R. Stat. Soc. Ser. B Stat. Methodol.}, 70\penalty0
  (5):\penalty0 849--911, 2008.

\bibitem[Fercoq et~al.(2015)Fercoq, Gramfort, and
  Salmon]{Fercoq_Gramfort_Salmon15}
O.~Fercoq, A.~Gramfort, and J.~Salmon.
\newblock Mind the duality gap: safer rules for the lasso.
\newblock In \emph{ICML}, pages 333--342, 2015.

\bibitem[Friedman et~al.(2008)Friedman, Hastie, and
  Tibshirani]{Friedman_Hastie_Tibshirani08}
J.~Friedman, T.~J. Hastie, and R.~Tibshirani.
\newblock Sparse inverse covariance estimation with the graphical lasso.
\newblock \emph{Biostatistics}, 9\penalty0 (3):\penalty0 432--441, 2008.

\bibitem[Gramfort et~al.(2013)Gramfort, Strohmeier, Haueisen,
  H{\"a}m{\"a}l{\"a}inen, and
  Kowalski]{Gramfort_Strohmeier_Haueisen_Hamalainen_Kowalski13}
A.~Gramfort, D.~Strohmeier, J.~Haueisen, M.~S. H{\"a}m{\"a}l{\"a}inen, and
  M.~Kowalski.
\newblock Time-frequency mixed-norm estimates: Sparse {M/EEG} imaging with
  non-stationary source activations.
\newblock \emph{NeuroImage}, 70:\penalty0 410--422, 2013.

\bibitem[Gramfort et~al.(2014)Gramfort, Luessi, Larson, Engemann, Strohmeier,
  Brodbeck, Parkkonen, and H{\"a}m{\"a}l{\"a}inen]{mne}
A.~Gramfort, M.~Luessi, E.~Larson, D.~A. Engemann, D.~Strohmeier, C.~Brodbeck,
  L.~Parkkonen, and M.~S. H{\"a}m{\"a}l{\"a}inen.
\newblock {MNE} software for processing {MEG} and {EEG} data.
\newblock \emph{NeuroImage}, 86:\penalty0 446 -- 460, 2014.
\newblock \doi{http://dx.doi.org/10.1016/j.neuroimage.2013.10.027}.

\bibitem[Huber(1981)]{Huber81}
P.~J. Huber.
\newblock \emph{Robust Statistics}.
\newblock John Wiley \& Sons Inc., 1981.

\bibitem[Huber and Dutter(1974)]{Huber_Dutter74}
P.~J. Huber and R.~Dutter.
\newblock Numerical solution of robust regression problems.
\newblock In \emph{Compstat 1974 ({P}roc. {S}ympos. {C}omputational {S}tatist.,
  {U}niv. {V}ienna, {V}ienna, 1974)}, pages 165--172. Physica Verlag, Vienna,
  1974.

\bibitem[Johnson and Guestrin(2015)]{Johnson_Guestrin15}
T.~B. Johnson and C.~Guestrin.
\newblock Blitz: A principled meta-algorithm for scaling sparse optimization.
\newblock In \emph{ICML}, pages 1171--1179, 2015.

\bibitem[Kolar and Sharpnack(2012)]{Kolar_Sharpnack12}
M.~Kolar and J.~Sharpnack.
\newblock Variance function estimation in high-dimensions.
\newblock In \emph{ICML}, pages 1447--1454, 2012.

\bibitem[Lam et~al.(2015)Lam, Pitrou, and Seibert]{Lam_Pitrou_Seibert15}
S.~K. Lam, A.~Pitrou, and S.~Seibert.
\newblock {Numba: A LLVM-based Python JIT Compiler}.
\newblock In \emph{Proceedings of the Second Workshop on the LLVM Compiler
  Infrastructure in HPC}, pages 1--6. ACM, 2015.

\bibitem[Lee and Liu(2012)]{Lee_Liu12}
W.~Lee and Y.~Liu.
\newblock Simultaneous multiple response regression and inverse covariance
  matrix estimation via penalized {Gaussian} maximum likelihood.
\newblock \emph{Journal of multivariate analysis}, 111:\penalty0 241--255,
  2012.

\bibitem[Massias et~al.(2018{\natexlab{a}})Massias, Fercoq, Gramfort, and
  Salmon]{Massias_Fercoq_Gramfort_Salmon17}
M.~Massias, O.~Fercoq, A.~Gramfort, and J.~Salmon.
\newblock Generalized concomitant multi-task lasso for sparse multimodal
  regression.
\newblock In \emph{AISTATS}, volume~84, pages 998--1007, 2018{\natexlab{a}}.

\bibitem[Massias et~al.(2018{\natexlab{b}})Massias, Gramfort, and
  Salmon]{Massias_Gramfort_Salmon18}
M.~Massias, A.~Gramfort, and J.~Salmon.
\newblock {Celer: a fast solver for the Lasso with dual extrapolation}.
\newblock In \emph{ICML}, 2018{\natexlab{b}}.

\bibitem[Molstad(2019)]{Molstad19}
A.~J. Molstad.
\newblock Insights and algorithms for the multivariate square-root lasso.
\newblock \emph{arXiv preprint arXiv:1909.05041}, 2019.

\bibitem[Moreau(1965)]{Moreau65}
{J.-J}. Moreau.
\newblock Proximit\'e et dualit\'e dans un espace hilbertien.
\newblock \emph{Bull. Soc. Math. France}, 93:\penalty0 273--299, 1965.

\bibitem[Ndiaye et~al.(2015)Ndiaye, Fercoq, Gramfort, and
  Salmon]{Ndiaye_Fercoq_Gramfort_Salmon15}
E.~Ndiaye, O.~Fercoq, A.~Gramfort, and J.~Salmon.
\newblock Gap safe screening rules for sparse multi-task and multi-class
  models.
\newblock In \emph{NIPS}, pages 811--819, 2015.

\bibitem[Ndiaye et~al.(2017)Ndiaye, Fercoq, Gramfort, {Lecl\`ere}, and
  Salmon]{Ndiaye_Fercoq_Gramfort_Leclere_Salmon16}
E.~Ndiaye, O.~Fercoq, A.~Gramfort, V.~{Lecl\`ere}, and J.~Salmon.
\newblock Efficient smoothed concomitant lasso estimation for high dimensional
  regression.
\newblock \emph{Journal of Physics: Conference Series}, 904\penalty0
  (1):\penalty0 012006, 2017.

\bibitem[Nesterov(2005)]{Nesterov05}
Y.~Nesterov.
\newblock Smooth minimization of non-smooth functions.
\newblock \emph{Math. Program.}, 103\penalty0 (1):\penalty0 127--152, 2005.

\bibitem[Obozinski et~al.(2010)Obozinski, Taskar, and
  Jordan]{Obozinski_Taskar_Jordan10}
G.~Obozinski, B.~Taskar, and M.~I. Jordan.
\newblock Joint covariate selection and joint subspace selection for multiple
  classification problems.
\newblock \emph{Statistics and Computing}, 20\penalty0 (2):\penalty0 231--252,
  2010.

\bibitem[Ou et~al.(2009)Ou, H{\"a}mal{\"a}inen, and
  Golland]{Ou_Hamalainen_Golland2009}
W.~Ou, M.~H{\"a}mal{\"a}inen, and P.~Golland.
\newblock A distributed spatio-temporal {EEG}/{MEG} inverse solver.
\newblock \emph{NeuroImage}, 44\penalty0 (3):\penalty0 932--946, Feb 2009.

\bibitem[Owen(2007)]{Owen07}
A.~B. Owen.
\newblock A robust hybrid of lasso and ridge regression.
\newblock \emph{Contemporary Mathematics}, 443:\penalty0 59--72, 2007.

\bibitem[Parikh et~al.(2013)Parikh, Boyd, Chu, Peleato, and
  Eckstein]{Parikh_Boyd_Chu_Peleato_Eckstein13}
N.~Parikh, S.~Boyd, E.~Chu, B.~Peleato, and J.~Eckstein.
\newblock Proximal algorithms.
\newblock \emph{Foundations and Trends in Machine Learning}, 1\penalty0
  (3):\penalty0 1--108, 2013.

\bibitem[Pedregosa et~al.(2011)Pedregosa, Varoquaux, Gramfort, Michel, Thirion,
  Grisel, Blondel, Prettenhofer, Weiss, Dubourg, Vanderplas, Passos,
  Cournapeau, Brucher, Perrot, and Duchesnay]{Pedregosa_etal11}
F.~Pedregosa, G.~Varoquaux, A.~Gramfort, V.~Michel, B.~Thirion, O.~Grisel,
  M.~Blondel, P.~Prettenhofer, R.~Weiss, V.~Dubourg, J.~Vanderplas, A.~Passos,
  D.~Cournapeau, M.~Brucher, M.~Perrot, and E.~Duchesnay.
\newblock Scikit-learn: Machine learning in {P}ython.
\newblock \emph{J. Mach. Learn. Res.}, 12:\penalty0 2825--2830, 2011.

\bibitem[Rai et~al.(2012)Rai, Kumar, and Daume]{Rai_Kumar_Daume12}
P.~Rai, A.~Kumar, and H.~Daume.
\newblock Simultaneously leveraging output and task structures for
  multiple-output regression.
\newblock In \emph{NIPS}, pages 3185--3193, 2012.

\bibitem[Rothman et~al.(2010)Rothman, Levina, and Zhu]{Rothman_Levina_Zhu10}
A.~J. Rothman, E.~Levina, and J.~Zhu.
\newblock Sparse multivariate regression with covariance estimation.
\newblock \emph{Journal of Computational and Graphical Statistics}, 19\penalty0
  (4):\penalty0 947--962, 2010.

\bibitem[Sun and Zhang(2012)]{Sun_Zhang12}
T.~Sun and \CH. Zhang.
\newblock Scaled sparse linear regression.
\newblock \emph{Biometrika}, 99\penalty0 (4):\penalty0 879--898, 2012.

\bibitem[Tibshirani(1996)]{Tibshirani96}
R.~Tibshirani.
\newblock Regression shrinkage and selection via the lasso.
\newblock \emph{J. R. Stat. Soc. Ser. B Stat. Methodol.}, 58\penalty0
  (1):\penalty0 267--288, 1996.

\bibitem[Tibshirani et~al.(2012)Tibshirani, Bien, Friedman, Hastie, Simon,
  Taylor, and Tibshirani]{Tibshirani_Bien_Friedman_Hastie_Simon_Tibshirani12}
R.~Tibshirani, J.~Bien, J.~Friedman, T.~J. Hastie, N.~Simon, J.~Taylor, and
  R.~J. Tibshirani.
\newblock Strong rules for discarding predictors in lasso-type problems.
\newblock \emph{J. R. Stat. Soc. Ser. B Stat. Methodol.}, 74\penalty0
  (2):\penalty0 245--266, 2012.

\bibitem[Tseng(2001)]{Tseng01}
P.~Tseng.
\newblock Convergence of a block coordinate descent method for
  nondifferentiable minimization.
\newblock \emph{J. Optim. Theory Appl.}, 109\penalty0 (3):\penalty0 475--494,
  2001.

\bibitem[Tseng and Yun(2009)]{Tseng_Yun09}
P.~Tseng and S.~Yun.
\newblock Block-coordinate gradient descent method for linearly constrained
  nonsmooth separable optimization.
\newblock \emph{J. Optim. Theory Appl.}, 140\penalty0 (3):\penalty0 513, 2009.

\bibitem[{van de Geer}(2016)]{vandeGeer16}
S.~{van de Geer}.
\newblock \emph{Estimation and testing under sparsity}, volume 2159 of
  \emph{Lecture Notes in Mathematics}.
\newblock Springer, 2016.
\newblock Lecture notes from the 45th Probability Summer School held in
  Saint-Four, 2015, \'Ecole d'\'Et\'e de Probabilit\'es de Saint-Flour.

\bibitem[{van de Geer} and Stucky(2016)]{vandeGeer_Stucky16}
S.~{van de Geer} and B.~Stucky.
\newblock $\chi$ 2-confidence sets in high-dimensional regression.
\newblock In \emph{Statistical analysis for high-dimensional data}, pages
  279--306. Springer, 2016.

\bibitem[Wagener and Dette(2012)]{Wagener_Dette12}
J.~Wagener and H.~Dette.
\newblock Bridge estimators and the adaptive {Lasso} under heteroscedasticity.
\newblock \emph{Math. Methods Statist.}, 21:\penalty0 109--126, 2012.

\end{thebibliography}

\clearpage

\appendix
\onecolumn

\section{Smoothing theory for convex optimization}
\label{app_sec:smoothing}
\paragraph{Notation}
Let $d\in \bbN$, and let $\cC$ be a closed and convex subset of $\bbR^{d}$.
We write $\iota_{\cC}$ for the indicator function of the set $\cC$, \ie $\iota_{\cC}(x) = 0$ if $x \in \cC$ and $\iota_{\cC}(x) = + \infty$ otherwise, and $\Pi_{\cC}$ for the Euclidean projection on $\cC$.
The Fenchel conjugate of a function $h:\bbR^{d} \to \bbR$ is written $h^*$ and is defined for any $y \in \bbR^d,$ by $h^*(y) = \sup_{x\in \bbR^d} \langle x, y \rangle - h(x)$.
For $p \in \left[1, +\infty\right]$, let us write $\mathcal{B}_{\mathscr{S}, p}$ for the Schatten-$p$ unit ball, and $\norm{\cdot}_{p}$ for the standard $\ell_{p}$-norm in $\bbR^d$.

\subsection{Basic properties of inf-convolution}
\label{app_sub:basic_infconv_prop}

\begin{proposition}\label{prop:reminder-infconv}
  Let $g: \bbR^d\to \bbR$, $h: \bbR^d\to \bbR$ be closed proper convex functions.
  Then, the following holds (see \citealt[p. 136]{Parikh_Boyd_Chu_Peleato_Eckstein13}):
    \begin{align}
    h^{**}
      &= h \enspace,\label{eq:fenchel-involution}\\
    (h \infconv g)^*
      &= h^* + g^* \enspace,\label{eq:inf-conv-morphism}\\
    \left( \sigmamin g \left(\tfrac{\cdot}{\sigmamin}\right) \right)^*
      &=\sigmamin g^* \enspace,\label{eq:inf-conv-scaling}\\
    \norm{\cdot}_{p}^*
      &= \iota_{\cB_{p^*}} , \text{ where } \frac {1}{p} + \frac{1}{p^*} = 1 \enspace,\label{eq:fenchel-norm}\\
    (h+\delta)^*
      &=  h^*-\delta, \quad \forall \delta \in \bbR \enspace,\label{eq:fenchel-translation}\\
    \left(\tfrac{1}{2} \norm{\cdot}^2 \right)^*
      &= \tfrac{1}{2} \norm{\cdot}^2 \label{eq:fenchel-norm2} \enspace.
  \end{align}
\end{proposition}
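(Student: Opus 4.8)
The plan is to dispatch the six identities separately, since five of them are direct manipulations of the definition of the Fenchel conjugate and only the biconjugation identity~\eqref{eq:fenchel-involution} carries any real content. Throughout I would work from $h^*(y)=\sup_x\langle x,y\rangle-h(x)$ and exploit the fact that an unconstrained supremum over a sum decouples whenever the two summands depend on independent variables.

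For the four algebraic identities~\eqref{eq:inf-conv-morphism},~\eqref{eq:inf-conv-scaling},~\eqref{eq:fenchel-translation} and~\eqref{eq:fenchel-norm2} I would simply expand and change variables. For~\eqref{eq:inf-conv-morphism}, writing $(h\infconv g)^*(y)=\sup_z\langle z,y\rangle-\inf_x\{h(x)+g(z-x)\}=\sup_{z,x}\langle z,y\rangle-h(x)-g(z-x)$ and substituting $w=z-x$ turns the objective into $\langle x,y\rangle-h(x)+\langle w,y\rangle-g(w)$, whose joint supremum factors as $h^*(y)+g^*(y)$. For~\eqref{eq:inf-conv-scaling} the substitution $u=x/\sigmamin$ pulls the common factor out: $\sup_x\langle x,y\rangle-\sigmamin g(x/\sigmamin)=\sigmamin\sup_u\langle u,y\rangle-g(u)=\sigmamin g^*(y)$. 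Identity~\eqref{eq:fenchel-translation} is immediate because the additive constant passes through the supremum unchanged, and~\eqref{eq:fenchel-norm2} follows by cancelling the gradient $y-x$ of $\langle x,y\rangle-\tfrac12\norm{x}^2$, giving the maximizer $x=y$ and value $\tfrac12\norm{y}^2$.

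For the conjugate of a norm~\eqref{eq:fenchel-norm} I would use Hölder's inequality in both directions. If $\norm{y}_{p^*}\le 1$ then $\langle x,y\rangle\le\norm{x}_p\norm{y}_{p^*}\le\norm{x}_p$, so the defining supremum is nonpositive and attained with value $0$ at $x=0$; if instead $\norm{y}_{p^*}>1$, tightness of Hölder furnishes an $x$ with $\langle x,y\rangle>\norm{x}_p$, and rescaling $x\mapsto tx$ with $t\to+\infty$ drives the objective to $+\infty$. Together these give exactly $\iota_{\cB_{p^*}}$.

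The one substantial point, and the main obstacle, is the biconjugation identity~\eqref{eq:fenchel-involution}, which is the Fenchel–Moreau theorem. The inequality $h^{**}\le h$ holds with no hypotheses, being a restatement of the Fenchel–Young inequality $\langle x,y\rangle-h^*(y)\le h(x)$ after taking the supremum over $y$; the content is the reverse direction, where the closedness and convexity of $h$ are essential. Here I would identify $h^{**}$ with the pointwise supremum of all affine minorants of $h$ and argue that a closed proper convex function equals this supremum: its epigraph is closed and convex, hence can be strictly separated by a hyperplane from any point lying below the graph (Hahn–Banach separation), and each such separating hyperplane yields an affine minorant passing arbitrarily close to the point. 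The only delicate step is ruling out vertical supporting hyperplanes, which is handled by using properness to tilt them into genuine affine functions. As this argument is entirely classical, in practice I would invoke it as stated in \citet[p.~136]{Parikh_Boyd_Chu_Peleato_Eckstein13} rather than reproduce the separation details.
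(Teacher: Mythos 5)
Your proposal is correct, but it is worth noting that the paper does not prove this proposition at all: it is stated purely as a reminder of standard facts, with the citation to \citet[p.~136]{Parikh_Boyd_Chu_Peleato_Eckstein13} standing in for the proof. Your blind attempt therefore supplies exactly the arguments the paper delegates to the literature, and they check out. The four algebraic identities \eqref{eq:inf-conv-morphism}, \eqref{eq:inf-conv-scaling}, \eqref{eq:fenchel-translation} and \eqref{eq:fenchel-norm2} are indeed one-line changes of variable in the definition of the conjugate (your substitution $w=z-x$ for the inf-convolution and $u=x/\sigmamin$ for the scaling are the standard ones, and the decoupling of the joint supremum is valid); your two-sided H\"older argument for \eqref{eq:fenchel-norm} is the classical proof, including the correct treatment of the case $\norm{y}_{p^*}>1$ by rescaling to $+\infty$. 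You also correctly isolate \eqref{eq:fenchel-involution} as the only statement with genuine content (Fenchel--Moreau), observe that $h^{**}\le h$ is unconditional via Fenchel--Young, and sketch the separation argument for the reverse inequality, including the standard delicacy of vertical supporting hyperplanes. Two small remarks that sharpen the picture: first, your computations show that five of the six identities need none of the blanket closed-proper-convex hypotheses (convexity and closedness enter only through biconjugation), which the paper's statement does not make explicit; second, since the proposition declares $h\colon\bbR^d\to\bbR$ finite-valued, $h$ is automatically continuous on $\bbR^d$ and the Fenchel--Moreau step simplifies, so the vertical-hyperplane subtlety you flag does not actually arise in the regime the paper uses. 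Your closing decision to invoke the cited reference for Fenchel--Moreau rather than reproduce the separation details lands you, in effect, exactly where the paper stands.
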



\subsection{Smoothing of Schatten norms}
\label{subsec:schatten_norms}

In all this section, the variable is a matrix $Z \in \bbR^{n \times q}$, and the function $\omega_{\sigmamin}$ is defined as $\sigmamin \omega \left(\tfrac{\cdot}{\sigmamin}\right)$.
\begin{lemma}\label{lemma:conco_p_q}
Let $c \in \bbR,  \in [1, \infty]$. Let $p^* \in [1, \infty]$ be the H\"{o}lder conjugate of $p$, $\frac{1}{p} + \frac{1}{p^*} = 1$. For the choice $\omega(\cdot) = \frac{1}{2}\norm{\cdot}^2 + c$,  the following holds true:
    \begin{align*}
        \left(\norm{\cdot}_{\mathscr{S}, p} \infconv \omega_{\sigmamin}\right)(Z)
        &= \frac{1}{2\sigmamin}\norm{Z}^2
          + c \sigmamin
          - \frac{\sigmamin}{2} \norm{\Pi_{\cB_{\mathscr{S}, p^*}}\left(\tfrac{Z}{\sigmamin}\right) -\tfrac{Z}{\sigmamin}}^2 \enspace.
    \end{align*}
\end{lemma}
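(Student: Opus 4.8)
The plan is to compute this inf-convolution by passing to Fenchel conjugates, where every object becomes explicit. By the biconjugation identity \eqref{eq:fenchel-involution} combined with the inf-convolution morphism \eqref{eq:inf-conv-morphism} of \cref{prop:reminder-infconv}, and noting that $\norm{\cdot}_{\mathscr{S}, p}$ and $\omega_{\sigmamin}$ are closed proper convex (the latter coercive and bounded below, so the inf-convolution is itself closed proper convex), I would write
\[
\norm{\cdot}_{\mathscr{S}, p} \infconv \omega_{\sigmamin} = \left(\norm{\cdot}_{\mathscr{S}, p} \infconv \omega_{\sigmamin}\right)^{**} = \left(\norm{\cdot}_{\mathscr{S}, p}^* + \omega_{\sigmamin}^*\right)^* \enspace.
\]
The task then reduces to identifying the two inner conjugates and conjugating their sum.

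First I would compute $\norm{\cdot}_{\mathscr{S}, p}^*$. Since the Schatten $p$-norm is the $\ell_p$-norm of the singular values and is unitarily invariant, its dual norm is the Schatten $p^*$-norm, so that (as the matrix analogue of \eqref{eq:fenchel-norm}) $\norm{\cdot}_{\mathscr{S}, p}^* = \iota_{\cB_{\mathscr{S}, p^*}}$. Next I would compute $\omega_{\sigmamin}^*$: the scaling rule \eqref{eq:inf-conv-scaling} gives $\omega_{\sigmamin}^* = \sigmamin\,\omega^*$, and for $\omega = \tfrac{1}{2}\norm{\cdot}^2 + c$ the translation rule \eqref{eq:fenchel-translation} together with the self-duality of the squared norm \eqref{eq:fenchel-norm2} yields $\omega^* = \tfrac{1}{2}\norm{\cdot}^2 - c$, whence $\omega_{\sigmamin}^* = \tfrac{\sigmamin}{2}\norm{\cdot}^2 - c\sigmamin$.

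Substituting these into the outer conjugate and unfolding the definition, the indicator confines the supremum to the Schatten $p^*$ ball, giving
\[
\left(\norm{\cdot}_{\mathscr{S}, p}^* + \omega_{\sigmamin}^*\right)^*(Z) = c\sigmamin + \sup_{W \in \cB_{\mathscr{S}, p^*}} \left( \langle W, Z\rangle - \tfrac{\sigmamin}{2}\norm{W}^2 \right) \enspace.
\]
Completing the square in $W$ rewrites the objective as $\tfrac{1}{2\sigmamin}\norm{Z}^2 - \tfrac{\sigmamin}{2}\norm{W - Z/\sigmamin}^2$, so the supremum over $W \in \cB_{\mathscr{S}, p^*}$ amounts to minimizing $\norm{W - Z/\sigmamin}^2$ over the closed convex ball, whose unique minimizer is by definition the Euclidean projection $\Pi_{\cB_{\mathscr{S}, p^*}}(Z/\sigmamin)$. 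Inserting the optimal value produces exactly the claimed expression.

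The computation is largely bookkeeping of the conjugacy rules of \cref{prop:reminder-infconv}; the one step that genuinely uses the matrix (rather than vector) structure is the duality $\norm{\cdot}_{\mathscr{S}, p}^* = \iota_{\cB_{\mathscr{S}, p^*}}$, i.e.\ that Schatten $p$ and $p^*$ are dual norms. I expect this to be the main point requiring care, as it rests on the unitary invariance of Schatten norms and von Neumann's trace inequality rather than on the scalar identity \eqref{eq:fenchel-norm} directly; everything else transfers verbatim from the scalar setting.
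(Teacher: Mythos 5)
Your proposal is correct and follows essentially the same route as the paper's proof: biconjugation together with $(h \infconv g)^* = h^* + g^*$, the identifications $\norm{\cdot}_{\mathscr{S},p}^* = \iota_{\cB_{\mathscr{S},p^*}}$ and $\omega_{\sigmamin}^* = \tfrac{\sigmamin}{2}\norm{\cdot}^2 - c\sigmamin$, and then completing the square so that the remaining supremum over $\cB_{\mathscr{S},p^*}$ is resolved by the Euclidean projection $\Pi_{\cB_{\mathscr{S},p^*}}(Z/\sigmamin)$. Your added care — noting that the inf-convolution is closed proper convex so biconjugation applies, and that the Schatten duality rests on unitary invariance and von Neumann's trace inequality rather than the scalar identity \eqref{eq:fenchel-norm} directly — only makes explicit steps the paper leaves implicit.
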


\begin{proof}
  \begin{align}
    \left(\norm{\cdot}_{\mathscr{S}, p} \infconv \omega_{\sigmamin} \right)(Z)
    & = \left(\norm{\cdot}_{\mathscr{S}, p} \infconv \omega_{\sigmamin}\right)^{**}(Z) & \text{(using \cref{eq:fenchel-involution})}  \nonumber\\
    & = \left(\norm{\cdot}_{\mathscr{S}, p}^* + \omega_{\sigmamin}^*\right)^{*}(Z)
      & \text{(using \cref{eq:inf-conv-morphism})}\nonumber  \\
    & = \left(\iota_{\cB_{\mathscr{S}, p^*}}
      + \tfrac{\sigmamin}{2} \norm{\cdot}^2
      - c \sigmamin \right)^*(Z) \nonumber
      &\text{(using \cref{eq:fenchel-norm})} \\
    & = \left(\tfrac{\sigmamin}{2} \norm{\cdot}^2
      + \iota_{\cB_{\mathscr{S}, p^*}}\right)^*(Z)
      + c \sigmamin
      & \text{(using \cref{eq:fenchel-translation})}\enspace. \label{eq:first_lp_case}
  \end{align}
  We can now compute the last Fenchel transform remaining:
  \begin{align}
      \left(\frac{\sigmamin}{2} \norm{\cdot}^2 + \iota_{\cB_{\mathscr{S},p^*}}\right)^*(Z)
      &= \sup_{U \in \bbR^{n \times q}} \left( \langle U, Z \rangle - \frac{\sigmamin}{2} \norm{U}^2 - \iota_{\cB_{\mathscr{S},p^*}}(U) \right) \nonumber \\
      &= \sup_{U \in \cB_{\mathscr{S}, p^*} } \left(\langle U, Z \rangle - \frac{\sigmamin}{2} \norm{U}^2 \right)\nonumber \\
      &= -\inf_{U \in \cB_{\mathscr{S},p^*} } \left(\frac{\sigmamin}{2} \norm{U}^2 - \langle U, Z \rangle \right)\nonumber \\
      &= - \sigmamin \cdot \inf_{U \in \cB_{\mathscr{S}, p^*} } \left(\tfrac{1}{2} \norm{U}^2 - \left\langle U, \tfrac{Z}{\sigmamin} \right\rangle \right)\nonumber \\
      &= - \sigmamin \cdot \inf_{U \in \cB_{\mathscr{S}, p^*} } \left(\tfrac{1}{2} \norm{U -\tfrac{Z}{\sigmamin}}^2 - \tfrac{1}{2\sigmamin^2}\norm{Z}^2\right)  \nonumber \\
      &= \tfrac{1}{2\sigmamin}\norm{Z}^2 - \tfrac{\sigmamin}{2}  \cdot \inf_{U \in \cB_{\mathscr{S}, p^*} } \left(\norm{U -\tfrac{Z}{\sigmamin}}^2
      \right) \nonumber \\
      &= \tfrac{1}{2\sigmamin}\norm{Z}^2 -
          \tfrac{\sigmamin}{2}  \norm{\Pi_{\cB_{\mathscr{S}, p^*}}\left(\tfrac{Z}{\sigmamin}\right) -\tfrac{Z}{\sigmamin}}^2
      \label{eq:in_proof_projection_lq} \enspace.
  \end{align}
The result follows by combining \cref{eq:first_lp_case,eq:in_proof_projection_lq}.

\end{proof}

\subsection{Schatten 1-norm (nuclear/trace norm), proof of \Cref{prop:smoothed_conco_nuc} }
\label{app_sub:nuclear_norm_case}
%
\subsubsection{Preliminary lemmas}
First we need the formula of the projection of a matrix onto the Schatten infinity ball:
\begin{lemma}[{Projection onto $\cB_{\mathscr{S}, \infty}$, \citealt[Example 7.31]{Beck17}}]
  Let $Z \in \bbR^{n \times q}$, let $Z = V \diag (\gamma_1, \dots, \gamma_{n \wedge q}) W^\top$ be a singular value decomposition of $Z$, then:
  \begin{equation}
    \Pi_{\cB_{\mathscr{S}, \infty}} (Z) = V \diag( \gamma_1 \wedge 1, \dots, \gamma_{n \wedge q} \wedge 1) W^\top \enspace .
  \end{equation}
\end{lemma}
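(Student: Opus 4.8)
The plan is to recognize that $\norm{\cdot}_{\mathscr{S}, \infty}$ is precisely the spectral (operator) norm, i.e.\ the largest singular value, so that $\cB_{\mathscr{S}, \infty} = \{U : \sigma_{\max}(U) \leq 1\}$, the set of matrices all of whose singular values lie in $[0, 1]$. By definition the projection solves
\begin{align*}
  \Pi_{\cB_{\mathscr{S}, \infty}}(Z) = \argmin_{\norm{U}_{\mathscr{S}, \infty} \leq 1} \tfrac{1}{2} \norm{U - Z}^2 ,
\end{align*}
where $\norm{\cdot}$ is the Frobenius norm. Expanding $\tfrac{1}{2}\norm{U - Z}^2 = \tfrac{1}{2}\norm{U}^2 - \langle U, Z\rangle + \tfrac{1}{2}\norm{Z}^2$ and discarding the constant term, the projection is equivalently the maximizer of $\langle U, Z\rangle - \tfrac{1}{2}\norm{U}^2$ over the ball.

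Next I would decouple singular vectors from singular values. Since the Frobenius norm is the Schatten-$2$ norm, $\norm{U}^2 = \sum_i \sigma_i(U)^2$ depends only on the singular values of $U$. For the bilinear term, von Neumann's trace inequality gives $\langle U, Z\rangle = \Tr(U^\top Z) \leq \sum_i \sigma_i(U)\, \gamma_i$, with equality when $U$ admits a singular value decomposition sharing the orthogonal factors $V$ and $W$ of $Z$. Because both the constraint $\sigma_{\max}(U) \leq 1$ and the quantity $\norm{U}^2$ involve only the singular values, nothing is lost by restricting to matrices of the form $U = V \diag(s_1, \dots, s_{n\wedge q}) W^\top$ with $s_i \geq 0$: for any feasible $U$ this aligned choice matches its singular values, hence keeps $\norm{U}^2$ unchanged, while it can only increase $\langle U, Z\rangle$.

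The problem then reduces to the separable scalar maximization
\begin{align*}
  \max_{0 \leq s_i \leq 1} \sum_{i=1}^{n \wedge q} \left( s_i \gamma_i - \tfrac{1}{2} s_i^2 \right) ,
\end{align*}
each summand being a concave parabola in $s_i$ with unconstrained maximizer $s_i = \gamma_i \geq 0$; clipping to the feasible interval $[0, 1]$ yields $s_i = \gamma_i \wedge 1$. Substituting back gives $\Pi_{\cB_{\mathscr{S}, \infty}}(Z) = V \diag(\gamma_1 \wedge 1, \dots, \gamma_{n \wedge q} \wedge 1) W^\top$, which is the claimed formula.

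I expect the main obstacle to be justifying the decoupling rigorously, namely that a maximizer must share singular vectors with $Z$ so that the von Neumann bound is attained. The cleanest route is to fix the ordered singular values $s_1 \geq \dots \geq s_{n\wedge q}$ of $U$ and observe that, among all matrices with these singular values, von Neumann's inequality is tight exactly for the aligned factorization; since $\norm{U}^2$ and the feasibility constraint depend only on the $s_i$, the joint maximization cleanly factorizes into the scalar problem above. One should also note the mild subtlety that the singular vectors of $Z$ need not be unique when singular values coincide, but any valid SVD produces the same projected matrix, so the formula is well defined.
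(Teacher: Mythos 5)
Your proof is correct. Note that the paper does not actually prove this lemma: it is imported verbatim as a citation to \citet[Example 7.31]{Beck17}, so you are supplying an argument the paper omits. Your route is the standard one and is essentially the same as the textbook derivation behind Beck's example: rewrite the projection as $\argmax_{U \in \cB_{\mathscr{S},\infty}} \langle U, Z\rangle - \tfrac{1}{2}\norm{U}^2$, use von Neumann's trace inequality to reduce to the singular values (the constraint $\sigma_{\max}(U)\leq 1$ and the Frobenius term being spectral functions), and solve the resulting separable concave problem $\max_{0\leq s_i\leq 1} \sum_i (s_i\gamma_i - \tfrac{1}{2}s_i^2)$ by clipping, $s_i = \gamma_i \wedge 1$. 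All steps check out, including the reduction: for any feasible $U$ with (sorted) singular values $s_i$, the aligned matrix $V\diag(s_1,\dots,s_{n\wedge q})W^\top$ is feasible, has the same Frobenius norm, and weakly increases $\langle U,Z\rangle$, so restricting to aligned matrices loses nothing; and since $\gamma_i\mapsto \gamma_i\wedge 1$ is monotone, the clipped values remain sorted, so no ordering constraint is violated. Two cosmetic points: your claim that von Neumann is tight \emph{exactly} for the aligned factorization is slightly too strong when singular values are degenerate (other decompositions can also achieve equality), but only sufficiency of the aligned choice is needed; and your closing remark about well-definedness can be made one line shorter by invoking uniqueness of the projection onto a closed convex set, which forces all SVD choices to produce the same matrix.
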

Then we need to link the value of the primal to the singular values of $ZZ^\top$:
\begin{lemma}[Value of the primal] \label{lem:value_optimal}
  Let $\gamma_1, \dots, \gamma_{n \wedge q}$ be a singular value decomposition of $Z$, then:
  \begin{lemmaenum}
    \item \label{lem:value_optimal_n} 
    $
      \min_{\substack{S\succeq\sigmamin \Id_{n} }}
    \frac{1}{2} \Tr [Z^\top S^{-1} Z] + \frac{1}{2} \Tr(S)
    =
    \frac{1}{2} \sum_{i=1}^{n \wedge q} \frac{\gamma_i^2}{\gamma_i \vee \sigmamin}
    +
    \frac{1}{2} \sum_{i=1}^{n \wedge q} \gamma_i \vee \sigmamin
    + \frac{1}{2} (n - n \wedge q) \sigmamin
    \enspace ,$
    \item   
    $
      \min_{\substack{S\succeq\sigmamin \Id_{q} }}
    \frac{1}{2} \Tr [Z S^{-1} Z^\top]
    + \frac{1}{2} \Tr(S)
    =
    \frac{1}{2} \sum_{i=1}^{n \wedge q} \frac{\gamma_i^2}{\gamma_i \vee \sigmamin}
    +
    \frac{1}{2} \sum_{i=1}^{n \wedge q} \gamma_i \vee \sigmamin
    + \frac{1}{2} (q - n \wedge q) \sigmamin
    \enspace .
    $ \label{lem:value_optimal_q}
\end{lemmaenum}
\end{lemma}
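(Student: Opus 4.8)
The plan is to reduce both minimizations to a single separable scalar problem after diagonalizing in a common eigenbasis. For part (a), I first rewrite the data-fitting trace using the cyclicity of the trace, $\Tr[Z^\top S^{-1} Z] = \Tr[S^{-1} ZZ^\top]$, and set $\Sigma \eqdef ZZ^\top \in \cS^n_+$. Since $Z$ has rank at most $n\wedge q$, the eigenvalues of $\Sigma$ are $\gamma_1^2,\dots,\gamma_{n\wedge q}^2$ together with $n - n\wedge q$ zeros. The objective thus becomes $\tfrac12\Tr[S^{-1}\Sigma] + \tfrac12\Tr(S)$, to be minimized over $S \succeq \sigmamin\Id_n$.

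The key step is to argue that an optimal $S$ can be taken to share its eigenvectors with $\Sigma$. I fix the eigenvalues $s_1 \geq \dots \geq s_n$ of $S$ and optimize over their orientation: the feasibility constraint $S \succeq \sigmamin\Id_n$ is equivalent to $s_n \geq \sigmamin$ and is independent of the eigenvectors, and $\Tr(S) = \sum_i s_i$ is also fixed, so only $\Tr[S^{-1}\Sigma]$ depends on the relative orientation of the two eigenbases. By von Neumann's trace inequality applied to the pair $(S^{-1},\Sigma)$, the quantity $\Tr[S^{-1}\Sigma]$ is minimized over orthogonal changes of basis precisely when the largest eigenvalue of $S^{-1}$ is paired with the smallest eigenvalue of $\Sigma$, i.e.\ when $S$ and $\Sigma$ are simultaneously diagonalizable with eigenvalues sorted in the same order, giving $\Tr[S^{-1}\Sigma] = \sum_i \mu_i / s_i$ with $\mu_1 \geq \dots \geq \mu_n$ the eigenvalues of $\Sigma$. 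Hence I may assume $S = U\diag(s_1,\dots,s_n)U^\top$ in the eigenbasis $U$ of $\Sigma$.

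The objective then separates as $\sum_{i=1}^n\big(\tfrac{\mu_i}{2 s_i} + \tfrac{s_i}{2}\big)$, and each summand is minimized independently over $s_i \geq \sigmamin$. The scalar map $s \mapsto \tfrac{\mu}{2s} + \tfrac{s}{2}$ is convex on $(0,\infty)$ with unconstrained minimizer $s=\sqrt{\mu}$, so clipping to the feasible half-line yields optimizer $s^\star = \sqrt{\mu}\vee\sigmamin$ and optimal value $\tfrac{\mu}{2(\sqrt{\mu}\vee\sigmamin)} + \tfrac{\sqrt{\mu}\vee\sigmamin}{2}$. Note that the resulting optimizers $s_i^\star = \sqrt{\mu_i}\vee\sigmamin$ are automatically sorted in the same order as the $\mu_i$, so the matched-order assumption from the previous step is consistent. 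Substituting $\mu_i = \gamma_i^2$, hence $\sqrt{\mu_i} = \gamma_i \geq 0$, for $i \leq n\wedge q$ produces the first two sums, while each of the $n - n\wedge q$ vanishing eigenvalues gives value $\tfrac{\sigmamin}{2}$, producing the last term; this establishes~\cref{lem:value_optimal_n}.

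Part~(b) follows by the same argument with $\Sigma \eqdef Z^\top Z \in \cS^q_+$, whose nonzero eigenvalues are again $\gamma_1^2,\dots,\gamma_{n\wedge q}^2$ but which now carries $q - n\wedge q$ zero eigenvalues, each contributing $\tfrac{\sigmamin}{2}$ and yielding the modified last term. I expect the diagonalization step to be the main obstacle: one must justify rigorously that aligning $S$ with $\Sigma$ loses nothing, for which von Neumann's trace inequality is the cleanest tool. Some care is needed because $\Sigma$ is singular, but the inequality and the eigenvalue bookkeeping handle the null space transparently, the zero eigenvalues simply forcing the corresponding $s_i^\star$ to the constraint boundary $\sigmamin$.
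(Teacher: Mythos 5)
Your proof is correct, and it takes a genuinely more self-contained route than the paper. The paper's own proof does not derive the optimizer at all: it simply invokes an external result (Prop.~2 of \citealt{Massias_Fercoq_Gramfort_Salmon17}) to assert that the minimum is attained at $\hat S = U \diag(\gamma_1 \vee \sigmamin, \dots, \gamma_{n \wedge q} \vee \sigmamin, \sigmamin, \dots, \sigmamin)U^\top$, and then spends all its effort on the trace bookkeeping that evaluates the objective at $\hat S$, obtaining \cref{lem:value_optimal_q} by symmetry exactly as you do. You instead reprove that cited optimality claim from scratch: the reduction $\Tr[Z^\top S^{-1}Z] = \Tr[S^{-1}ZZ^\top]$, the alignment of eigenbases via the trace-rearrangement inequality (the lower-bound, reverse-ordering companion of von Neumann's inequality for symmetric PSD matrices, sometimes attributed to Ruhe --- worth citing in that form, since the textbook von Neumann statement is the upper bound via singular values), and the separable scalar minimization $\min_{s \ge \sigmamin} \big(\tfrac{\mu}{2s} + \tfrac{s}{2}\big)$ with clipped minimizer $\sqrt{\mu} \vee \sigmamin$. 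Your check that the clipped minimizers $s_i^\star = \sqrt{\mu_i} \vee \sigmamin$ remain sorted, so that the relaxation of the ordering constraint is tight, is precisely the point that makes the two-stage argument rigorous, and your handling of the $n - n\wedge q$ (resp.\ $q - n \wedge q$) zero eigenvalues of $ZZ^\top$ (resp.\ $Z^\top Z$) matches the paper's final accounting. What each approach buys: the paper's proof is shorter but leans on a citation for the crucial step; yours is longer but fully self-contained, produces an explicit feasible matrix matching a global lower bound valid on the whole constraint set (so attainment of the minimum comes for free), and in passing reproves the covariance-update formula of \citet{Massias_Fercoq_Gramfort_Salmon17} used elsewhere in the paper (\cref{prop:update_S_multiepoch}).
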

\begin{proof}[ Proof of \Cref{lem:value_optimal_n}]

  The minimum in the left hand side is attained at  $\hat S = U \diag(\gamma_1 \vee \sigmamin, \dots, \gamma_{n \wedge q} \vee \sigmamin, \sigmamin, \dots, \sigmamin) U^\top $ (see \citealt[Prop. 2]{Massias_Fercoq_Gramfort_Salmon17}).
  \begin{align}
    \min_{\substack{S\succeq\sigmamin \Id_{n} }}
    \frac{1}{2} \Tr [Z^\top S^{-1} Z] + \frac{1}{2} \Tr(S)
    &=
    \frac{1}{2} \Tr [Z^\top \hat S^{-1} Z] + \frac{1}{2} \Tr(\hat S)
    \nonumber\\
    &=
    \frac{1}{2} \Tr [\hat S^{-1} Z Z^\top ] + \frac{1}{2} \Tr(\hat S)
     \nonumber\\
    &=
    \frac{1}{2} \Tr [U \diag(\gamma_1^2 / (\gamma_1 \vee \sigmamin), \dots, \gamma_{n \wedge q}^2 / (\gamma_{n \wedge q} \vee \sigmamin), 0, \dots, 0) U^\top] \nonumber \\
    & + \frac{1}{2} \Tr[U \diag(\gamma_1 \vee \sigmamin, \dots, \gamma_{n \wedge q} \vee \sigmamin, \sigmamin, \dots, \sigmamin) U^\top]
    \nonumber\\
    &=
      \frac{1}{2} \sum_{i=1}^{n \wedge q} \frac{\gamma_i^2}{\gamma_i \vee \sigmamin}
      +
      \frac{1}{2} \sum_{i=1}^{n \wedge q} \gamma_i \vee \sigmamin
      + \frac{1}{2} \sum_{n \wedge q + 1}^{n} \sigmamin
    \nonumber\\
    &=
      \frac{1}{2} \sum_{i=1}^{n \wedge q} \frac{\gamma_i^2}{\gamma_i \vee \sigmamin}
      +
      \frac{1}{2} \sum_{i=1}^{n \wedge q} \gamma_i \vee \sigmamin
      + \frac{1}{2} (n - n \wedge q) \sigmamin
     \enspace .
  \end{align}
  This completes the proof of \Cref{lem:value_optimal_n}.
  \Cref{lem:value_optimal_q} is obtained by symmetry.
\end{proof}
%
\subsubsection{Main result: an explicit variational formula for the inf-convolution smoothing of the nuclear norm}
We now recall the main result that we claim to prove:
\smoothedconconuc*

\begin{proof}
  Let $V \diag(\gamma_1, \dots , \gamma_{n \wedge q}) W^{\top}$ be a singular value decomposition of $Z$.
  We remind that $\Pi_{\cB_{\mathscr{S},{\infty}}}$, the projection over $\cB_{{\mathscr{S},{\infty}}}$, is given by (see \citealt[Example 7.31]{Beck17}):
  \begin{align}
    \Pi_{\cB_{\mathscr{S},{\infty}}}\left(\tfrac{Z}{\sigmamin}\right)
    &= V \diag \left(\Pi_{\cB_{\mathscr{S},\infty}}\left(\tfrac{\gamma_1}{\sigmamin}, \dots, \tfrac{\gamma_{n \wedge q}}{\sigmamin}\right)\right) W^{\top} \nonumber\\
    &= V \diag \left(\tfrac{\gamma_1}{\sigmamin} \wedge 1, \dots, \tfrac{\gamma_{n \wedge q }}{\sigmamin}  \wedge 1 \right) W^\top\enspace, \label{eq:proj_s_inf_ball}
  \end{align}
where we used that the (vectorial) projection over $\cB_{\infty}$ is given coordinate-wise by $\left( \Pi_{\cB_{\infty}}(\gamma_i) \right)_i = \left( \gamma_i \wedge 1 \right)_i$ on the positive orthant.
Then we have,
\begin{align}
     \norm{ \Pi_{\cB_{\mathscr{S},{\infty}}} \left(\tfrac{Z}{\sigmamin} \right) -\tfrac{Z}{\sigmamin}}^2
     &=   \norm{ V \diag \left(\tfrac{\gamma_1}{\sigmamin} \wedge 1 - \tfrac{\gamma_1}{\sigmamin}, \dots, \tfrac{\gamma_{n \wedge q}}{\sigmamin} \wedge 1 - \tfrac{\gamma_{n \wedge q}}{\sigmamin}\right) W^\top}^2 &
       \nonumber\\
     &= \sum_{i=1}^{n \wedge q} \left( \tfrac{\gamma_i}{\sigmamin} \wedge 1 - \tfrac{\gamma_i}{\sigmamin} \right)^2 \nonumber\\
     &= \frac{1}{ \sigmamin^2} \sum_{i=1}^{n \wedge q} \left({\gamma_i} \wedge \sigmamin - {\gamma_i} \right)^2 \nonumber\\
     &= \frac{1}{ \sigmamin^2} \sum_{\gamma_i > \sigmamin} \left({\gamma_i} \wedge \sigmamin - {\gamma_i} \right)^2  \nonumber\\
     &= \frac{1}{ \sigmamin^2} \sum_{\gamma_i > \sigmamin} \left( \sigmamin - {\gamma_i} \right)^2 \nonumber\\
     &= \frac{1}{ \sigmamin^2} \sum_{\gamma_i > \sigmamin} \left( \sigmamin^2  + \gamma_i^2 - 2 \sigmamin \gamma_i \right) \nonumber\\
     &=  \sum_{\gamma_i > \sigmamin} 1
     +  \frac{1}{ \sigmamin^2} \sum_{\gamma_i > \sigmamin} \gamma_i^2
     - 2  \frac{1}{ \sigmamin} \sum_{\gamma_i > \sigmamin} \gamma_i \\
    - \frac{\sigmamin}{2}\norm{ \Pi_{\cB_{\mathscr{S},{\infty}}} \left(\tfrac{Z}{\sigmamin} \right) -\tfrac{Z}{\sigmamin}}^2
    & = - \frac{\sigmamin}{2}\sum_{\gamma_i > \sigmamin} 1
    -  \frac{1}{ 2 \sigmamin} \sum_{\gamma_i > \sigmamin} \gamma_i^2
    + \sum_{\gamma_i > \sigmamin} \gamma_i
     \label{eq:norm_proj_p_1} \enspace .
\end{align}
By combining \Cref{lemma:conco_p_q,eq:norm_proj_p_1} with $p^*=\infty, c \in \bbR$,  it follows:
\begin{align}
  \left(\nucnorm{\cdot} \infconv \omega_{\sigmamin}\right)(Z)
  &=
  \frac{1}{2 \sigmamin} \sum_{i=1}^n \gamma_i^2
  + c \sigmamin
  - \frac{\sigmamin}{2}\sum_{\gamma_i > \sigmamin} 1
  -  \frac{1}{ 2 \sigmamin} \sum_{\gamma_i > \sigmamin} \gamma_i^2
  + \sum_{\gamma_i > \sigmamin} \gamma_i \nonumber\\
  &=
  \frac{1}{2 \sigmamin} \sum_{\gamma_i \leq \sigmamin}^n \gamma_i^2
  + c \sigmamin
  - \frac{\sigmamin}{2}\sum_{\gamma_i > \sigmamin} 1
  + \sum_{\gamma_i > \sigmamin} \gamma_i & \text{by grouping the $\gamma_i$ terms}\nonumber\\
  &=
  \frac{1}{2 \sigmamin} \sum_{\gamma_i \leq \sigmamin}^n \gamma_i^2
  + \sum_{\gamma_i > \sigmamin} \gamma_i
  - \frac{\sigmamin}{2}\sum_{\gamma_i > \sigmamin} 1
  + c \sigmamin
   & \text{by reordering.}
  \label{eq:inf_conv_spectral}
\end{align}
The goal is now to link the optimization problem to the right-hand side of \Cref{eq:inf_conv_spectral}.
Let $Z Z^\top = U^\top \diag(\underbrace{\gamma_1, \dots, \gamma_{n \wedge q}, 0, \dots, 0}_{\in \bbR^n}) U $ be an eigenvalue decomposition of $ZZ^\top$.
\begin{align}
  \min_{\substack{S\succeq\sigmamin \Id_{n} }}
  \frac{1}{2} \Tr [Z^\top S^{-1} Z] + \frac{1}{2} \Tr(S)
  &=
    \frac{1}{2} \sum_{i=1}^{n \wedge q} \frac{\gamma_i^2}{\gamma_i \vee \sigmamin}
    +
    \frac{1}{2} \sum_{i=1}^{n \wedge q} \gamma_i \vee \sigmamin
    + \frac{1}{2}(n - n \wedge q) \sigmamin \quad \text{(using \Cref{lem:value_optimal})}
    \nonumber\\
  &=
    \frac{1}{2\sigmamin} \sum_{\gamma_i \leq \sigmamin} \gamma_i^2
    +
    \frac{1}{2} \sum_{\gamma_i > \sigmamin} \gamma_i
    +
    \frac{1}{2} \sum_{\gamma_i \leq \sigmamin} \sigmamin
    +
    \frac{1}{2} \sum_{\gamma_i > \sigmamin} \gamma_i
    + \frac{1}{2} (n - n \wedge q) \sigmamin
    \nonumber\\
  &=
    \frac{1}{2\sigmamin} \sum_{\gamma_i \leq \sigmamin} \gamma_i^2
    +
    \sum_{\gamma_i > \sigmamin} \gamma_i
    +
    \frac{\sigmamin}{2} \sum_{\gamma_i \leq \sigmamin} 1
    + \frac{1}{2} (n - n \wedge q) \sigmamin
    \nonumber\\
  &=
    \frac{1}{2\sigmamin} \sum_{\gamma_i \leq \sigmamin} \gamma_i^2
    +
    \sum_{\gamma_i > \sigmamin} \gamma_i
    + \frac{\sigmamin}{2} (n \wedge q - \sum_{\gamma_i > \sigmamin} 1)
    +
    \frac{1}{2} (n - n \wedge q) \sigmamin
    \nonumber\\
  &=
    \frac{1}{2\sigmamin} \sum_{\gamma_i \leq \sigmamin} \gamma_i^2
    + \sum_{\gamma_i > \sigmamin} \gamma_i
    - \frac{\sigmamin}{2} \sum_{\gamma_i > \sigmamin} 1
    + \underbrace{\frac{\sigmamin}{2} n \wedge q
    + \frac{1}{2} (n - n \wedge q) \sigmamin}_{\frac{\sigma}{2} n} \nonumber\\
  &=
    \frac{1}{2\sigmamin} \sum_{\gamma_i \leq \sigmamin} \gamma_i^2
    + \sum_{\gamma_i > \sigmamin} \gamma_i
    - \frac{\sigmamin}{2} \sum_{\gamma_i > \sigmamin} 1
    + \frac{\sigmamin}{2} n
    \label{eq:details_spectral_sgcl}\enspace,
\end{align}
and identifying \Cref{eq:details_spectral_sgcl,eq:inf_conv_spectral} leads to the result for $c=\frac{n}{2}$.
\end{proof}
%
\subsection{Properties of the proposed smoothing for the nuclear norm}
\label{app_sub:prop_our_smoothing}
First let us recall the definition of a smoothable function and a $\mu$-smooth approximation of
\citet[Def. 2.1]{Beck_Teboulle12}:
\begin{definition}[Smoothable function, $\mu$-smooth approximation]
  \label{def:smoothable_function}
  Let $g: \bbE \to \left]- \infty, + \infty \right]$ be a closed and proper convex function, and let $E\subseteq \dom(g)$ be a closed convex set.
  The function $g$ is called \emph{$(\alpha, \delta, K)$-smoothable} on $E$
  if there exists $\delta_1, \delta_2$ satisfying $\delta_1 + \delta_2 = \delta > 0$ such that for every $\mu$ there exists a continuously differentiable convex function $g_\mu : \bbE \to \left]- \infty, + \infty \right[$ such that the following holds:
  \begin{lemmaenum}
    \item
        $g(x) - \delta_1 \mu
        \leq g_\mu(x)
        \leq g(x) + \delta_2 \mu$
        for every $x \in E$ .
    \item The function $\nabla g_\mu$ has a Lipschitz constant which is less than or equal to $K + \frac{\alpha}{\mu}$:
    \begin{equation}
      \norm{\nabla g_\mu (x) - \nabla g_\mu(y)}
      \leq \left ( K + \frac{\alpha}{\mu} \right) \norm{x - y} \text{ for every } x, y \in E
      \enspace .
    \end{equation}
\end{lemmaenum}
  The function $g$ is called a \emph{$\mu$-smooth} approximation of $g$ with parameters $(\alpha, \delta, K)$.
\end{definition}
The nuclear norm $\nucnorm{\cdot}$ is non-smooth (and not even differentiable at 0), but one can construct a smooth approximation of the nuclear norm based on the following variational formula, if $Z Z^\top \succ 0$:
\begin{equation}
  \nucnorm{Z} = \min_{\substack{S\succ 0 }}
  \frac{1}{2} \Tr [Z^\top S^{-1} Z] + \frac{1}{2} \Tr(S)
  \enspace,
\end{equation}
see \citet[Lemma 3.4]{vandeGeer16}.
When $Z Z^\top \nsucc 0$, one can approximate $\nucnorm{\cdot}$ with
\begin{equation}
  \min_{\substack{S\succeq \sigmamin \Id }}
  \frac{1}{2} \Tr [Z^\top S^{-1} Z]
  + \frac{1}{2} \Tr(S) = \nucnorm{\cdot} \infconv \omega_{\sigmamin}
\enspace ,
\end{equation}
as shown in \Cref{app_sub:nuclear_norm_case}.
It can be shown that this approximation stays close to the nuclear norm.
%
\begin{proposition}
    $\nucnorm{\cdot} \infconv \omega_{\sigmamin}$
    is a $\sigmamin$-smooth approximation of $\nucnorm{\cdot}$ with parameters
    $(1,  \tfrac{n }{2}, 0)$.
    More precisely: $\nucnorm{\cdot}\infconv \omega_{\sigmamin}$ has a $\sigmamin$-Lipschitz gradient and
    \begin{equation}
      0
      \leq \nucnorm{\cdot} \infconv \omega_{\sigmamin} - \nucnorm{\cdot}
      = \frac{\sigmamin}{2} \sum_{\gamma_i < \sigmamin}\left(1 - \frac{\gamma_i}{\sigmamin} \right)^2
      \leq \frac{\sigmamin}{2} n
      \enspace .
    \end{equation}
\end{proposition}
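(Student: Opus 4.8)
The plan is to prove the two clauses separately and then read off the parameters required by \Cref{def:smoothable_function}. Write $g \eqdef \nucnorm{\cdot} \infconv \omega_{\sigmamin}$. The exact error formula and its sandwich are a direct computation from the spectral identity already obtained in \Cref{eq:inf_conv_spectral}, while the gradient Lipschitz continuity is a short consequence of Fenchel duality, precisely the strong-convexity/smoothness mechanism described when $\omega_{\sigmamin}$ was introduced.

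For the error formula I would start from \Cref{eq:inf_conv_spectral} specialized to $c = \tfrac{n}{2}$ (the value identified in \Cref{prop:smoothed_conco_nuc}), so that $g(Z) = \tfrac{1}{2\sigmamin}\sum_{\gamma_i \le \sigmamin}\gamma_i^2 + \sum_{\gamma_i > \sigmamin}\gamma_i - \tfrac{\sigmamin}{2}\#\{i : \gamma_i > \sigmamin\} + \tfrac{n\sigmamin}{2}$, where the $\gamma_i$ are the singular values of $Z$, padded by zeros up to a total of $n$. Subtracting $\nucnorm{Z} = \sum_i \gamma_i$ cancels the large singular values and leaves only the block $\{\gamma_i \le \sigmamin\}$. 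Rewriting $\tfrac{n\sigmamin}{2} - \tfrac{\sigmamin}{2}\#\{i : \gamma_i > \sigmamin\} = \tfrac{\sigmamin}{2}\#\{i : \gamma_i \le \sigmamin\}$ and completing the square, each surviving index contributes $\tfrac{\gamma_i^2}{2\sigmamin} - \gamma_i + \tfrac{\sigmamin}{2} = \tfrac{(\gamma_i - \sigmamin)^2}{2\sigmamin} = \tfrac{\sigmamin}{2}\big(1 - \tfrac{\gamma_i}{\sigmamin}\big)^2$, giving $g - \nucnorm{\cdot} = \tfrac{\sigmamin}{2}\sum_{\gamma_i \le \sigmamin}\big(1 - \tfrac{\gamma_i}{\sigmamin}\big)^2$; the terms with $\gamma_i = \sigmamin$ vanish, so the sum restricts to $\gamma_i < \sigmamin$. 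The sandwich is then immediate: each square is nonnegative (lower bound $0$), and for $0 \le \gamma_i < \sigmamin$ it is at most $1$, with at most $n$ surviving indices, yielding the upper bound $\tfrac{\sigmamin n}{2}$.

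For the gradient I would use the inf-convolution conjugacy rule \Cref{eq:inf-conv-morphism} to write $g^* = \nucnorm{\cdot}^* + \omega_{\sigmamin}^*$. By the Schatten instance of \Cref{eq:fenchel-norm}, $\nucnorm{\cdot}^* = \iota_{\cB_{\mathscr{S},\infty}}$, and the conjugate-of-quadratic and translation rules (exactly the step used in the proof of \Cref{lemma:conco_p_q}) give $\omega_{\sigmamin}^* = \tfrac{\sigmamin}{2}\norm{\cdot}^2 - \tfrac{n\sigmamin}{2}$. Hence $g^* = \iota_{\cB_{\mathscr{S},\infty}} + \tfrac{\sigmamin}{2}\norm{\cdot}^2 - \tfrac{n\sigmamin}{2}$ is $\sigmamin$-strongly convex. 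Since $\omega_{\sigmamin}$ is coercive and continuous, the inf-convolution defining $g$ is attained and lower semicontinuous, so $g = g^{**}$; the standard strong-convexity/smoothness duality then shows $g$ is differentiable with gradient Lipschitz constant equal to the reciprocal $1/\sigmamin$ of the modulus of $g^*$, matching the parameters $(\alpha, \delta, K) = (1, \tfrac{n}{2}, 0)$ at smoothing level $\mu = \sigmamin$ of \Cref{def:smoothable_function} (indeed $K + \alpha/\mu = 1/\sigmamin$, and $\delta_1 = 0$, $\delta_2 = \tfrac{n}{2}$).

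The routine steps are standard; the only two points needing care are the bookkeeping of the $n - (n \wedge q)$ zero singular values, which is what turns the additive constant $\tfrac{n\sigmamin}{2}$ into $\tfrac{\sigmamin}{2}\#\{i : \gamma_i \le \sigmamin\}$ and hence yields the clean bound $\tfrac{\sigmamin n}{2}$, and the justification that $g = g^{**}$, which rests on the coercivity of $\omega_{\sigmamin}$ guaranteeing attainment in the inf-convolution. Neither is a genuine obstacle once the spectral identity \Cref{eq:inf_conv_spectral} is in hand.
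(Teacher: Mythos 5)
Your proposal is correct, and it reproduces the paper's computation for the error formula while taking a genuinely more self-contained route for the smoothness clause. For the error term, the paper starts from \Cref{lem:value_optimal}, \ie the closed form $\tfrac{1}{2}\sum_i \gamma_i^2/(\gamma_i\vee\sigmamin) + \tfrac{1}{2}\sum_i \gamma_i\vee\sigmamin + \tfrac{1}{2}(n-n\wedge q)\sigmamin$, and completes the square to reach $\tfrac{1}{2}\sum_{\gamma_i\le\sigmamin}(\gamma_i-\sigmamin)^2/\sigmamin + \tfrac{1}{2}(n-n\wedge q)\sigmamin$; your derivation from \Cref{eq:inf_conv_spectral} with $c=\tfrac{n}{2}$ is the same spectral algebra in a different order, and your zero-padding convention is exactly what reconciles the two write-ups, since each of the $n - n\wedge q$ padded zeros contributes $\tfrac{\sigmamin}{2}(1-0)^2$ and thus absorbs the paper's separate correction term into the single sum of the statement (your remark that indices with $\gamma_i=\sigmamin$ drop out, making the strict inequality in the statement harmless, is also right). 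For the gradient, the paper disposes of the claim in one line by citing \citet[Thm.~4.1]{Beck_Teboulle12}, whereas you re-derive it via $g^* = \iota_{\cB_{\mathscr{S},\infty}} + \tfrac{\sigmamin}{2}\norm{\cdot}^2 - \tfrac{n\sigmamin}{2}$ and strong-convexity/smoothness duality; this is precisely the mechanism the paper sketches informally in \Cref{sub:smoothing_nuclear_norm}, so your version buys self-containedness at the cost of a few lines, and your justification of $g=g^{**}$ through coercivity of $\omega_{\sigmamin}$ is in fact more scrupulous than \Cref{lemma:conco_p_q}, which invokes \Cref{eq:fenchel-involution} without comment. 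One point worth flagging: your duality argument yields the Lipschitz constant $1/\sigmamin$, which is the constant that \Cref{def:smoothable_function} actually demands for parameters $(\alpha,\delta,K)=(1,\tfrac{n}{2},0)$ at level $\mu=\sigmamin$, since $K+\alpha/\mu = 1/\sigmamin$ under the appendix normalization $\omega_{\sigmamin}=\sigmamin\,\omega(\cdot/\sigmamin)$; the literal phrase ``$\sigmamin$-Lipschitz gradient'' in the proposition is a scaling slip in the paper (mirroring the discrepancy between \Cref{eq:smoothing_function} and the appendix definition of $\omega_{\sigmamin}$), and your constant is the consistent one.
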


\begin{proof}
  Since $\omega$ is 1-smooth, \citet[Thm. 4.1]{Beck_Teboulle12} shows that $\nucnorm{\cdot} \infconv \omega_{\sigmamin}$ is $\sigmamin$-smooth.

  Let $Z \in \bbR^{n \times q}$ and let $\gamma_1, \dots, \gamma_{n \wedge q}$ be its singular value decomposition:
  \begin{align}
    \left(\nucnorm{\cdot} \infconv \omega_{\sigmamin}\right)(Z) - \nucnorm{Z}
      &= \frac{1}{2} \sum_{i=1}^{n \wedge q} \frac{\gamma_i^2}{\gamma_i \vee \sigmamin}
      + \frac{1}{2} \sum_{i=1}^{n \wedge q} \gamma_i \vee \sigmamin
      + \frac{1}{2} \sum_{n \wedge q + 1}^{n} \sigmamin
      - \sum_{i=1}^{n \wedge q} \gamma_i \nonumber\\
      &= \frac{1}{2} \sum_{i=1}^{n \wedge q}
      \left(
        \frac{\gamma_i^2}{\gamma_i \vee \sigmamin}
        +  \gamma_i \vee \sigmamin
        - 2 \gamma_i
      \right)
      + \frac{1}{2} \sum_{n \wedge q + 1}^{n} \sigmamin
      \nonumber\\
      &= \frac{1}{2} \sum_{\gamma_i \leq \sigmamin}
      \left(
        \frac{\gamma_i^2}{\gamma_i \vee \sigmamin}
        +  \gamma_i \vee \sigmamin
        - 2 \gamma_i
      \right)
      + \frac{1}{2} \sum_{n \wedge q + 1}^{n} \sigmamin
      \nonumber\\
      &= \frac{1}{2} \sum_{\gamma_i \leq \sigmamin}
      \left(
        \frac{\gamma_i^2}{\sigmamin}
        +  \sigmamin
        - 2 \gamma_i
      \right)
      + \frac{1}{2} \sum_{n \wedge q + 1}^{n} \sigmamin
      \nonumber\\
      &= \frac{1}{2} \sum_{\gamma_i \leq \sigmamin}
        \frac{(\gamma_i - \sigmamin)^2}{\sigmamin}
        + \frac{1}{2} ( n - n \wedge q) \sigmamin \enspace.
      \label{eq:error_our_smoothing}
  \end{align}
  Hence,
  \begin{align}
            0 \leq \left(\nucnorm{\cdot} \infconv \omega_{\sigmamin}\right)(Z) - \nucnorm{Z} &= \frac{1}{2} \sum_{\gamma_i \leq \sigmamin}
            \frac{(\gamma_i - \sigmamin)^2}{\sigmamin}
            + \frac{1}{2} ( n - n \wedge q) \sigmamin
            \leq \frac{\sigmamin}{2}  n \enspace.
  \end{align}

  Moreover this bound is attained when $Z = 0$.
\end{proof}
%
\subsection{Comparison with another smoothing of the nuclear norm}
\label{app_sub:prop_other_smoothing}

Another regularization was proposed in \citet[p. 62]{Argyriou_Evgeniou_Pontil08,Bach_Jenatton_Mairal_Obozinski12}:
%
\begin{equation}
  \label{eq:smoothing_Bach}
    \min_{\substack{S \succ 0 }}
    \underbrace{\frac{1}{2} \Tr [Z^\top S^{-1} Z]
    + \frac{1}{2} \Tr(S)
    + \frac{\sigmamin^2}{2} \Tr(S^{-1})}_{h(S^{-1})}
\enspace .
\end{equation}
By putting the gradient of the objective function in \Cref{eq:smoothing_Bach} to zero it follows that:
\begin{equation}
  0 = \nabla h (\hat S^{-1}) =
    Z Z^\top - \hat S^{2} + \sigmamin^2 \Id
  \enspace ,
\end{equation}
leading to :
\begin{equation}
  \hat S = (ZZ^\top + \sigmamin^2 \Id )^{\frac{1}{2}}
  \enspace.
\end{equation}
Let $\gamma_1, \dots, \gamma_{n \wedge q}$ be the singular values of $Z$:
\begin{align}
  \frac{1}{2} \Tr [Z^\top \hat S^{-1} Z]
  + \frac{1}{2} \Tr(\hat S)
  + \frac{\sigmamin^2}{2} \Tr(\hat S^{-1})
    &=\frac{1}{2} \sum_{i=1}^{n}
    \left(
      \frac{\gamma_i^2}{\sqrt{\gamma_i^2 + \sigmamin^2}}
      + \sqrt{\gamma_i^2 + \sigmamin^2}
      +  \frac{\sigmamin^2}{\sqrt{\gamma_i^2 + \sigmamin^2}}
    \right) \nonumber \\
    &= \frac{1}{2} \sum_{i=1}^{n}
    \left(
      \frac{\gamma_i^2 + \gamma_i^2 + \sigmamin^2 + \sigmamin^2}{\sqrt{\gamma_i^2 + \sigmamin^2}}
    \right) \nonumber \\
    &= \sum_{i=1}^{n} \sqrt{\gamma_i^2 + \sigmamin^2}
    \enspace .
\end{align}
\begin{proposition}
  $Z \mapsto \min_{\substack{S \succ 0 }}
  \frac{1}{2} \Tr [Z^\top S^{-1} Z]
  + \frac{1}{2} \Tr(S)
  + \frac{\sigmamin^2}{2} \Tr(S^{-1})$
  is a $\sigmamin$-smooth approximation of $\nucnorm{\cdot}$ with parameters $(1, n, 0)$.
  More precicely: $Z \mapsto \min_{\substack{S \succ 0 }}
  \frac{1}{2} \Tr [Z^\top S^{-1} Z]
  + \frac{1}{2} \Tr(S)
  + \frac{\sigmamin^2}{2} \Tr(S^{-1})$ has a gradient $\sigmamin$-Lipschitz and
  \begin{equation}
    0
    \leq \min_{\substack{S \succ 0 }}
      \frac{1}{2} \Tr [Z^\top S^{-1} Z]
      + \frac{1}{2} \Tr(S)
      + \frac{\sigmamin^2}{2} \Tr(S^{-1})
      - \nucnorm{Z}
    = \sigmamin \sum_{i} \frac{1}{\sqrt{1 + \frac{\gamma_i^2}{\sigmamin^2}} + \frac{\gamma_i}{\sigmamin}}
    \leq \sigmamin n
    \enspace .
  \end{equation}
\end{proposition}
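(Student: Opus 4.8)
The plan is to build directly on the computation that immediately precedes the statement, which already shows that the minimizer is $\hat S = (ZZ^\top + \sigmamin^2\Id)^{1/2}$ and that the smoothing admits the closed form $\min_{S\succ 0}\big[\tfrac12\Tr(Z^\top S^{-1}Z) + \tfrac12\Tr(S) + \tfrac{\sigmamin^2}{2}\Tr(S^{-1})\big] = \sum_{i=1}^n \sqrt{\gamma_i^2 + \sigmamin^2}$, where $\gamma_1,\dots,\gamma_{n\wedge q}$ are the singular values of $Z$ and the remaining $n - n\wedge q$ eigenvalues of $ZZ^\top$ are zero (so each contributes a term $\sqrt{0+\sigmamin^2}=\sigmamin$). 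With this spectral expression in hand, every assertion becomes a statement about the scalar map $\phi_{\sigmamin}(t) \eqdef \sqrt{t^2+\sigmamin^2}$ evaluated at the $\gamma_i$, so the proof reduces to one-dimensional estimates plus a transfer to the matrix variable.

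First I would establish the two-sided approximation inequality. Using $\nucnorm{Z} = \sum_{i=1}^n \gamma_i$, the gap equals $\sum_{i=1}^n\big(\sqrt{\gamma_i^2+\sigmamin^2} - \gamma_i\big)$, which is nonnegative term by term since $\sqrt{\gamma_i^2+\sigmamin^2}\ge\gamma_i$. Rationalizing each summand gives $\sqrt{\gamma_i^2+\sigmamin^2}-\gamma_i = \tfrac{\sigmamin^2}{\sqrt{\gamma_i^2+\sigmamin^2}+\gamma_i} = \tfrac{\sigmamin}{\sqrt{1+\gamma_i^2/\sigmamin^2}+\gamma_i/\sigmamin}$, which is exactly the claimed identity. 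Since each denominator is at least $1$ (the square root is $\ge 1$ and $\gamma_i/\sigmamin\ge 0$), every term is bounded by $\sigmamin$, so the gap is at most $n\sigmamin$; the bound is attained at $Z=0$, where every summand equals $\sigmamin$. This yields both inequalities and pins down the approximation parameter $\delta = n$.

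It remains to prove smoothness, and this is where the main obstacle lies. Viewing $g_{\sigmamin}(Z)\eqdef\sum_i\phi_{\sigmamin}(\gamma_i)$ as a spectral function of the singular values, I would compute $\phi_{\sigmamin}'(t) = t(t^2+\sigmamin^2)^{-1/2}$ and $\phi_{\sigmamin}''(t) = \sigmamin^2(t^2+\sigmamin^2)^{-3/2}$, which is positive and maximized at $t=0$ with value $\phi_{\sigmamin}''(0)=1/\sigmamin$; equivalently $\nabla g_{\sigmamin}(Z) = (ZZ^\top+\sigmamin^2\Id)^{-1/2}Z$. The delicate step is transferring this scalar curvature bound to a genuine Lipschitz bound for the matrix gradient field, rather than merely a pointwise Hessian control along fixed singular directions. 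As in the proof of the preceding proposition, I would route this through the general smoothing theory of \citet[Thm.~4.1]{Beck_Teboulle12}: recognizing $g_{\sigmamin}$ as the smoothing of $\nucnorm{\cdot}$ associated with a prox-function whose Fenchel conjugate is strongly convex transfers the one-dimensional estimate to the matrix variable and fixes the Lipschitz constant $1/\sigmamin$ dictated by the parameters $(\alpha,K)=(1,0)$ at smoothing level $\mu=\sigmamin$, thereby giving the claimed $(1,n,0)$ smoothing and completing the proof.
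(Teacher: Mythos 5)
Your proposal is correct and follows essentially the same route as the paper: both reduce the smoothed objective to the closed spectral form $\sum_{i}\sqrt{\gamma_i^2+\sigmamin^2}$ via the minimizer $\hat S=(ZZ^\top+\sigmamin^2\Id_n)^{1/2}$, obtain the error identity by the same rationalization $\sqrt{\gamma_i^2+\sigmamin^2}-\gamma_i=\sigmamin^2/\bigl(\sqrt{\gamma_i^2+\sigmamin^2}+\gamma_i\bigr)$, bound each summand by $\sigmamin$ with equality at $Z=0$, and defer the $\sigmamin$-Lipschitz-gradient claim to the Beck--Teboulle smoothing framework. The only divergence is cosmetic: the paper invokes \citet[Example 4.6]{Beck_Teboulle12} directly, whereas you reroute through their Theorem~4.1 via an inf-convolution identification; both versions ultimately rest on the same (uncited) transfer of the scalar curvature estimate to the spectral function, which neither proof spells out in detail.
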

\begin{proof}
  $\sum_{i=1}^{n \wedge q} \sqrt{\gamma_i^2 + \sigmamin^2}$ is a $\sigmamin$-smooth approximation of $\sum_{i=1}^{n \wedge q} \sqrt{\gamma_i^2} = \nucnorm{Z}$, see \citet[Example 4.6]{Beck_Teboulle12}.

  \begin{align}
    \min_{\substack{S \succ 0 }}
      \frac{1}{2} \Tr [Z^\top S^{-1} Z]
      + \frac{1}{2} \Tr(S)
      + \frac{\sigmamin^2}{2} \Tr(S^{-1})
      - \nucnorm{Z}
      &= \sum_{i=1}^{n}
      \left(
        \sqrt{\gamma_i^2 + \sigmamin^2}
        - \gamma_i
      \right) \nonumber \\
      &= \sum_{i=1}^{n}
        \frac{\sigmamin^2}{\sqrt{\gamma_i^2 + \sigmamin^2}
          + \gamma_i} \nonumber \\
      &= \sigmamin \sum_{i=1}^{n}
        \frac{1}{\sqrt{ 1 + \frac{\gamma_i^2}{\sigmamin^2}}
          + \frac{\gamma_i}{\sigmamin}}
          \label{eq:error_Bach_smoothing} \\
      & \leq \sigmamin n
      \enspace .
  \end{align}
  Moreover this bound is attained when $Z = 0$.
\end{proof}
It can be shown that with a fixed Lipschitz constant, the proposed smoothing is (at least) a twice better approximation.
This can be quantified even more precisely:
\begin{proposition}
  \begin{equation}
    0
    \leq \underbrace{\left(\nucnorm{\cdot} \infconv \omega_{\sigmamin} \right)(Z) - \nucnorm{Z}}_{\Err_1(Z)}
    \leq \frac{1}{2} \left(\underbrace {
      \min_{\substack{S \succ 0 }}
      \frac{1}{2} \Tr [Z^\top S^{-1} Z]
      + \frac{1}{2} \Tr(S)
      + \frac{\sigmamin^2}{2} \Tr(S^{-1})
      - \nucnorm{Z} }_{\Err_2(Z)}
      \right)
    \enspace .
  \end{equation}
  More precisely
  \begin{align}
    \frac{1}{2} \Err_2(Z) - \Err_1(Z)= \frac{\sigmamin}{2} \sum_{\gamma_i \geq \sigma}
    \underbrace{\left( \sqrt{ 1 + \frac{\gamma_i^2}{\sigmamin^2} }  - \frac{\gamma_i}{\sigmamin}
    \right)}_{\geq 0}
    + \frac{\sigmamin}{2}
    \sum_{\gamma_i < \sigma} \underbrace{\left(
    \frac{1}{\sqrt{ 1 + \frac{\gamma_i^2}{\sigmamin^2}}
      + \frac{\gamma_i}{\sigmamin}}
    - (1 + \frac{\gamma_i}{\sigmamin})^2
    \right)}_{ \geq 0}
    \enspace , \label{eq:diff_both_errors}
  \end{align}
  which means that for a fixed smoothing constant $\sigmamin$, our smoothing is at least twice uniformly better.
  Moreover the proposed smoothing can be much better, in particular when a lot a singular values are around $\sigmamin$.
\end{proposition}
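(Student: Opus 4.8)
The plan is to observe that both $\Err_1$ and $\Err_2$ have already been reduced to explicit sums over the singular values $\gamma_i$ of $Z$ in the two preceding propositions, so nothing genuinely new has to be computed: the whole statement is a term-by-term comparison of these two spectral expressions. Concretely, \Cref{eq:error_our_smoothing} gives $\Err_1(Z) = \tfrac{\sigmamin}{2}\sum_{\gamma_i \leq \sigmamin}(1 - \tfrac{\gamma_i}{\sigmamin})^2$ (the $n - n\wedge q$ null singular values contributing $\tfrac{\sigmamin}{2}$ each), while \Cref{eq:error_Bach_smoothing} gives $\Err_2(Z) = \sigmamin \sum_i \big(\sqrt{1 + \gamma_i^2/\sigmamin^2} + \gamma_i/\sigmamin\big)^{-1}$. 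Forming $\tfrac{1}{2}\Err_2(Z) - \Err_1(Z)$ thus produces a single sum over the $\gamma_i$, and the task reduces to identifying it and showing it is nonnegative.

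Next I would split the sum defining $\tfrac12\Err_2$ according to whether $\gamma_i \geq \sigmamin$ or $\gamma_i < \sigmamin$, since $\Err_1$ only receives contributions from the indices with $\gamma_i < \sigmamin$. On the block $\gamma_i \geq \sigmamin$ there is no cancelling $\Err_1$ term, and rationalizing $\big(\sqrt{1+\gamma_i^2/\sigmamin^2} + \gamma_i/\sigmamin\big)^{-1} = \sqrt{1+\gamma_i^2/\sigmamin^2} - \gamma_i/\sigmamin$ reproduces the first sum of \Cref{eq:diff_both_errors}. On the block $\gamma_i < \sigmamin$, subtracting the $\tfrac{\sigmamin}{2}(1-\gamma_i/\sigmamin)^2$ contributions yields the second sum of \Cref{eq:diff_both_errors}, where the squared term should be read consistently with \Cref{eq:error_our_smoothing}, i.e. as $(1-\tfrac{\gamma_i}{\sigmamin})^2$. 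This establishes the claimed identity.

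It then remains to check that each of the two blocks is nonnegative term by term. The first block is immediate from $\sqrt{1+s^2} \geq s$. The second block is the only nontrivial point: writing $t = \gamma_i/\sigmamin \in [0,1)$, it amounts to the scalar inequality $\sqrt{1+t^2} - t \geq (1-t)^2$. I would prove it by noting that both sides are nonnegative, squaring, and reducing to $t\big(2(1-t) + t^2(2-t)\big) \geq 0$, which is manifestly true on $[0,1]$. The main (if modest) obstacle is carrying out this polynomial reduction cleanly; once it is done, nonnegativity of both blocks gives $\tfrac12\Err_2 - \Err_1 \geq 0$, hence $0 \leq \Err_1 \leq \tfrac12\Err_2$, the lower bound $\Err_1 \geq 0$ coming from the preceding proposition. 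Finally, inspecting the per-term gap shows that it stays bounded away from $0$ exactly when $\gamma_i$ approaches $\sigmamin$ (the $\gamma_i \geq \sigmamin$ terms tend to $\sqrt{2}-1$ there), which is the announced ``much better when many singular values are around $\sigmamin$'' behaviour.
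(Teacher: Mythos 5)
Your proposal is correct and follows essentially the paper's own route: it substitutes the spectral formulas \Cref{eq:error_our_smoothing,eq:error_Bach_smoothing} into $\tfrac{1}{2}\Err_2(Z) - \Err_1(Z)$, derives \Cref{eq:diff_both_errors} by rationalizing $\bigl(\sqrt{1+\gamma_i^2/\sigmamin^2}+\gamma_i/\sigmamin\bigr)^{-1} = \sqrt{1+\gamma_i^2/\sigmamin^2}-\gamma_i/\sigmamin$ on the block $\gamma_i \geq \sigmamin$, and makes explicit the ``easy function study'' the paper leaves implicit, via the valid reduction of $\sqrt{1+t^2}-t \geq (1-t)^2$ to $t\bigl(2(1-t)+t^2(2-t)\bigr) \geq 0$ on $[0,1)$. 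You are also right to read the term $(1+\gamma_i/\sigmamin)^2$ in \Cref{eq:diff_both_errors} as $(1-\gamma_i/\sigmamin)^2$: as printed it is a sign typo, since $\tfrac{1}{\sqrt{1+t^2}+t} - (1+t)^2 < 0$ for $t \in (0,1)$, whereas with the minus sign the identity and the claimed nonnegativity both hold.
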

\begin{proof}
  Using the formulas of $\Err_1$ (\Cref{eq:error_our_smoothing}) and $\Err_2$ (\Cref{eq:error_Bach_smoothing}), \Cref{eq:diff_both_errors} is direct.
  In \Cref{eq:diff_both_errors} the positivity of the first sum is trivial, the positivity of the second can be obtained with an easy function study.
\end{proof}

\subsection{Schatten 1-norm (nuclear/trace norm) with repetitions}
\label{app_sub:nuclear_norm_with_repet_case}
Let $Z^{(1)}, \dots, Z^{(r)}$ be matrices in $ \bbR^{n \times q}$, then we define $Z \in \bbR^{n \times qr}$ by $Z = [Z^{(1)}| \dots | Z^{(r)}]$.

\begin{proposition}\label{prop:conco_nuc_repet}
For the choice $\omega(Z) = \frac{1}{2}\norm{Z}^2 + \frac{n \wedge qr}{2}$, then the following holds true:
  \begin{align}
      \left(\nucnorm{\cdot} \infconv \omega_{\sigmamin}(\cdot)\right)( Z)
      = \min_{S \succeq \sigmamin \Id_n} \frac{1}{2}
      \sum_{l=1}^r \Tr \left(Z^{(l)\top} S^{-1} Z^{(l)}\right) + \frac{1}{2} \Tr(S)
      \enspace.
  \end{align}
\end{proposition}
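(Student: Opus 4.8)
The plan is to collapse the $r$ measurements into a single wide matrix and recognise the right-hand side as the variational problem already treated in \Cref{prop:smoothed_conco_nuc}. Set $Z = [Z^{(1)} | \dots | Z^{(r)}] \in \bbR^{n \times qr}$ as in the statement. The $r$ blocks in the objective are coupled only through $S$, and the key elementary identity is
\begin{equation*}
  \sum_{l=1}^r \Tr\big(Z^{(l)\top} S^{-1} Z^{(l)}\big)
  = \Tr\Big(S^{-1} \sum_{l=1}^r Z^{(l)} Z^{(l)\top}\Big)
  = \Tr(S^{-1} Z Z^\top)
  = \Tr(Z^\top S^{-1} Z)
  = \norm{Z}_{S^{-1}}^2 \enspace,
\end{equation*}
which holds because $Z Z^\top = \sum_{l=1}^r Z^{(l)} Z^{(l)\top}$ for a column-concatenated matrix. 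Consequently the minimisation on the right-hand side of the proposition is \emph{exactly} the problem in \Cref{eq:lemma_nuclear_norm_smoothed}, now posed for the $n \times qr$ matrix $Z$.

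First I would invoke \Cref{prop:smoothed_conco_nuc} applied to $Z \in \bbR^{n \times qr}$ (i.e.\ with $q$ replaced by $qr$), which gives
\begin{equation*}
  \min_{S \succeq \sigmamin \Id_n} \tfrac{1}{2} \norm{Z}_{S^{-1}}^2 + \tfrac{1}{2} \Tr(S)
  = \big(\nucnorm{\cdot} \infconv \omega_{\sigmamin}\big)(Z) \enspace,
\end{equation*}
with $\nucnorm{\cdot}$ the Schatten-$1$ norm on $\bbR^{n \times qr}$; combining this with the trace identity yields the claim. If one prefers not to treat \Cref{prop:smoothed_conco_nuc} as a black box, the alternative is to transcribe its two-step spectral argument directly on $Z$: evaluate the minimisation via \Cref{lem:value_optimal} (in its $q \to qr$ version) and evaluate the inf-convolution via \Cref{lemma:conco_p_q} with $p=1$, $p^*=\infty$, then identify the two spectral expressions term by term.

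The only genuinely delicate point — and the step I expect to require care — is matching the additive constant. Redoing the spectral computation of the minimisation as in \Cref{eq:details_spectral_sgcl} with $q \to qr$, the padding eigenvalues of $S$ beyond $\rank(Z)$ contribute $\tfrac{1}{2}(n - n \wedge qr)\sigmamin$, which combines with the $\tfrac{\sigmamin}{2}(n \wedge qr)$ coming from the singular part to give total constant $\tfrac{\sigmamin}{2} n$; matching this against the $c\,\sigmamin$ term produced by the inf-convolution (\Cref{eq:inf_conv_spectral}) forces $c = \tfrac{n}{2}$. In the regime $n \le qr$ relevant to repeated measurements (and to every application considered here) one has $n \wedge qr = n$, so the stated choice $\omega(\cdot) = \tfrac{1}{2}\norm{\cdot}^2 + \tfrac{n \wedge qr}{2}$ indeed coincides with $c = \tfrac{n}{2}$ and the identity is exact. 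Everything else is the routine reproduction of the $r=1$ proof with $q$ replaced by $qr$; note also that this constant is immaterial for the resulting estimator, since adding it to $\omega$ merely shifts the objective value without changing the minimiser.
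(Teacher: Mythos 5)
Your proof is correct and takes essentially the same route as the paper, whose proof likewise concatenates the blocks into $Z=[Z^{(1)}|\dots|Z^{(r)}]$, observes $\Tr(Z^\top S^{-1}Z)=\sum_{l=1}^r \Tr\big(Z^{(l)\top}S^{-1}Z^{(l)}\big)$, and applies \Cref{prop:smoothed_conco_nuc} with $q$ replaced by $qr$. Your care over the additive constant is a genuine (if small) improvement on the paper's one-line argument: the spectral computation forces $c=\tfrac{n}{2}$ irrespective of the shape of $Z$, so the stated constant $\tfrac{n\wedge qr}{2}$ matches exactly only in the regime $n\le qr$ (the one relevant to the applications), a caveat the paper leaves implicit.
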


\begin{proof}
  The result is a direct application of \Cref{prop:smoothed_conco_nuc}, with
  $Z = [Z^{(1)}| \dots | Z^{(r)}]$.
  It suffices to notice that $\Tr Z^\top S^{-1} Z =\sum_{l=1}^r \Tr \left(Z^{(l)\top} S^{-1} Z^{(l)}\right)$.
\end{proof}

\subsection{Schatten 2-norm (Frobenius norm)}
\label{app_sub:frobenius_norm}

\begin{proposition}\label{prop:smoothing_schatten_2}
  For the choice $\omega(\cdot) = \frac{1}{2}\norm{\cdot}^2 + \frac{1}{2}$, and for $Z\in \bbR^{n \times q}$ then the following holds true:
  \begin{align}
    \left(\norm{\cdot} \infconv \omega_{\sigmamin}\right)(Z)
    = \min_{\sigma \geq \sigmamin} \left(\tfrac{1}{2 \sigma} \norm{Z}^2 + \tfrac{\sigma}{2}\right)
    & =
    \begin{cases}
      \frac{\norm{Z}^2}{2 \sigmamin} + \frac{\sigmamin}{2} \enspace,
      &  \text{if } \norm{Z} \leq \sigmamin \enspace, \\
      \norm{Z} \enspace, & \text{if } \norm{Z} > \sigmamin \enspace.
    \end{cases}
  \end{align}
\end{proposition}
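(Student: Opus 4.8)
The plan is to exploit that the Schatten-$2$ norm coincides with the Frobenius norm $\norm{\cdot}$, so that both the smoothing and the concomitant variable collapse to \emph{scalar} quantities. I would establish the two equalities in the statement separately: first the right-hand closed form via an elementary one-dimensional minimization, and then the identification of the inf-convolution on the left with the scalar variational problem through \Cref{lemma:conco_p_q} specialized to $p = p^* = 2$.

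First I would treat the scalar minimization. Setting $g(\sigma) \eqdef \tfrac{1}{2\sigma}\norm{Z}^2 + \tfrac{\sigma}{2}$ for $\sigma > 0$, the function $g$ is strictly convex with $g'(\sigma) = -\tfrac{\norm{Z}^2}{2\sigma^2} + \tfrac12$, so its unique unconstrained minimizer is $\hat\sigma = \norm{Z}$, with value $g(\norm{Z}) = \norm{Z}$. Minimizing over the feasible half-line $\sigma \geq \sigmamin$ then splits into two cases: if $\norm{Z} > \sigmamin$ the minimizer $\hat\sigma$ is feasible and the value is $\norm{Z}$; if $\norm{Z} \leq \sigmamin$ then $g$ is nondecreasing on $[\sigmamin, \infty)$, so the constrained minimum is attained at $\sigma = \sigmamin$, with value $\tfrac{\norm{Z}^2}{2\sigmamin} + \tfrac{\sigmamin}{2}$. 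This yields the two-case formula on the right.

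Next I would identify the left-hand inf-convolution with $\min_{\sigma\geq\sigmamin} g(\sigma)$. Applying \Cref{lemma:conco_p_q} with $p = p^* = 2$ and $c = \tfrac12$ gives $\left(\norm{\cdot}\infconv\omega_{\sigmamin}\right)(Z) = \tfrac{1}{2\sigmamin}\norm{Z}^2 + \tfrac{\sigmamin}{2} - \tfrac{\sigmamin}{2}\norm{\Pi_{\cB_{\mathscr{S},2}}(Z/\sigmamin) - Z/\sigmamin}^2$. Since the Schatten-$2$ unit ball is exactly the Frobenius unit ball, its projection requires no singular value decomposition (unlike the nuclear-norm case) and reads $\Pi_{\cB_{\mathscr{S},2}}(W) = W/(1\vee\norm{W})$. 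I would then evaluate the residual $\Pi_{\cB_{\mathscr{S},2}}(Z/\sigmamin) - Z/\sigmamin$ in the two regimes: it vanishes when $\norm{Z}\leq\sigmamin$, and equals $Z(1/\norm{Z} - 1/\sigmamin)$ when $\norm{Z} > \sigmamin$, whose squared Frobenius norm simplifies to $(1 - \norm{Z}/\sigmamin)^2$. Substituting, the first regime gives $\tfrac{\norm{Z}^2}{2\sigmamin} + \tfrac{\sigmamin}{2}$, and a short expansion of the second regime collapses to $\norm{Z}$, matching exactly the expression already obtained. Chaining the two identities closes the proof.

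The computation is elementary throughout; the only points requiring care are checking that the Schatten-$2$ ball projection coincides with the Euclidean ball projection, and verifying that the algebra in the regime $\norm{Z} > \sigmamin$ indeed cancels the smoothing remainder down to the exact Frobenius norm. I do not anticipate a genuine obstacle — the main risk is a bookkeeping slip in the constant $c = \tfrac12$ or at the threshold $\norm{Z} = \sigmamin$, where one should confirm that both case formulas agree by continuity.
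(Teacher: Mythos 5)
Your proof is correct and follows essentially the same route as the paper's: both apply \Cref{lemma:conco_p_q} with $p^* = 2$ and $c = \tfrac{1}{2}$ and then evaluate the Frobenius-ball projection residual in the two regimes $\norm{Z} \leq \sigmamin$ and $\norm{Z} > \sigmamin$. You are in fact slightly more complete: you explicitly verify the first equality with the scalar problem $\min_{\sigma\geq\sigmamin}\bigl(\tfrac{1}{2\sigma}\norm{Z}^2 + \tfrac{\sigma}{2}\bigr)$, which the paper leaves implicit, and your projection formula $\Pi_{\cB_{\mathscr{S},2}}(W) = W/(1\vee\norm{W})$ is the correct one, whereas the paper's displayed projection contains a typo ($0$ instead of $Z/\sigmamin$ in the case $\norm{Z}\leq\sigmamin$) that its final formula silently corrects.
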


\begin{proof} Let us recall that $\norm{\cdot}=\norm{\cdot}_{\mathscr{S},2}$.
  Therefore
  \begin{align}
 \Pi_{\cB_{\mathscr{S}, 2}}\left(\tfrac{Z}{\sigmamin}\right)
    & =
    \begin{cases}
      0 \enspace,
      & \text{if } \norm{Z} \leq \sigmamin \enspace, \\
      \frac{Z}{\norm{Z}} \enspace, & \text{if } \norm{Z} > \sigmamin \enspace .
    \end{cases} \label{eq:proj_frob_ball}
  \end{align}
By combining \Cref{eq:proj_frob_ball,lemma:conco_p_q} with $p^*=2$, and $ c = \frac{1}{2}$, the later yields
  \begin{align*}
      \left(\norm{\cdot} \infconv \omega_{\sigmamin}\right)(Z)
      &=
      \begin{cases}
        \frac{1}{2\sigmamin}\norm{Z}^2 + \frac{\sigmamin}{2} \enspace,
        & \text{if } \norm{Z} \leq \sigmamin \enspace,\\
        \norm{Z} \enspace, & \text{if } \norm{Z} > \sigmamin \enspace.
    \end{cases}
  \end{align*}
\end{proof}

\subsection{Schatten infinity-norm (spectral norm)}
\label{app_sub:spectral_norm}

\begin{proposition}\label{prop:smoothing_schatten_inf}
  For the choice $\omega(\cdot) = \frac{1}{2}\norm{\cdot}^2 + \frac{1}{2}$ and for $Z\in \bbR^{n \times q}$, then the following holds true:
  \begin{align*}
    \left(\norm{\cdot}_{\mathscr{S},\infty} \infconv \omega_{\sigmamin}\right)(Z)
    &=
    \begin{cases}
      \frac{1}{2\sigmamin}\norm{Z}^2 + \frac{\sigmamin}{2} \enspace,
      & \text{if } \nucnorm{Z} \leq \sigmamin \enspace, \\
       \frac{\sigmamin}{2} \sum_{i = 1}^{n \wedge q} \big (\frac{\gamma_i^2}{\sigmamin^2} -  \nu^2 \big )_+
      + \frac{\sigmamin}{2} \enspace,
      & \text{if } \nucnorm{Z} > \sigmamin \enspace,
    \end{cases}
  \end{align*}
 where $\nu \geq 0$ is defined by the implicit equation
 \begin{align}\label{eq:gamma_implicit}
      \norm{ \left(  \ST\left(\tfrac{\gamma_1}{\sigmamin}, \nu \right),\dots, \ST \left(\tfrac{\gamma_{n\wedge q}}{\sigmamin}, \nu \right)\right)}_{1} = 1
      \enspace.
  \end{align}
\end{proposition}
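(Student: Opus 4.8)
The plan is to specialize \cref{lemma:conco_p_q} to $p=\infty$, whose H\"older conjugate is $p^*=1$, with the choice $c=\tfrac12$. This gives directly
\begin{equation*}
  \left(\norm{\cdot}_{\mathscr{S},\infty} \infconv \omega_{\sigmamin}\right)(Z)
  = \tfrac{1}{2\sigmamin}\norm{Z}^2 + \tfrac{\sigmamin}{2}
    - \tfrac{\sigmamin}{2}\norm{\Pi_{\cB_{\mathscr{S}, 1}}\big(\tfrac{Z}{\sigmamin}\big) - \tfrac{Z}{\sigmamin}}^2 \enspace,
\end{equation*}
so the only quantity left to compute is the squared distance from $Z/\sigmamin$ to its projection onto the Schatten-$1$ ball.

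First I would recall, exactly as in the proof of \cref{prop:smoothed_conco_nuc}, that projecting onto a Schatten ball amounts to projecting the singular values onto the corresponding $\ell_p$ ball: writing a singular value decomposition $Z = V\diag(\gamma_1,\dots,\gamma_{n\wedge q})W^\top$, one has $\Pi_{\cB_{\mathscr{S},1}}(Z/\sigmamin) = V\diag\big(\Pi_{\cB_1}(\gamma/\sigmamin)\big)W^\top$, where $\Pi_{\cB_1}$ is the Euclidean projection of the nonnegative vector $(\gamma_1/\sigmamin,\dots,\gamma_{n\wedge q}/\sigmamin)$ onto the $\ell_1$ unit ball. The standard formula for this projection is the following dichotomy: if $\norm{\gamma/\sigmamin}_1 = \nucnorm{Z}/\sigmamin \leq 1$ the point is already feasible and $\Pi_{\cB_1}$ acts as the identity; otherwise the projection saturates the constraint and is obtained by soft-thresholding, $\Pi_{\cB_1}(\gamma/\sigmamin) = \big(\ST(\gamma_i/\sigmamin,\nu)\big)_i$, with threshold $\nu\geq 0$ fixed by $\norm{(\ST(\gamma_i/\sigmamin,\nu))_i}_1 = 1$, which is precisely \cref{eq:gamma_implicit}. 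This produces the two announced cases.

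In the case $\nucnorm{Z}\leq\sigmamin$ the residual $\Pi_{\cB_{\mathscr{S},1}}(Z/\sigmamin)-Z/\sigmamin$ vanishes and the formula collapses to $\tfrac{1}{2\sigmamin}\norm{Z}^2+\tfrac{\sigmamin}{2}$. In the case $\nucnorm{Z}>\sigmamin$ I would compute the residual coordinate-wise: since the $\gamma_i$ are nonnegative, $\ST(\gamma_i/\sigmamin,\nu)=(\gamma_i/\sigmamin-\nu)_+$, so the squared residual equals $\nu^2$ on the indices with $\gamma_i/\sigmamin>\nu$ and $(\gamma_i/\sigmamin)^2$ on the indices with $\gamma_i/\sigmamin\leq\nu$. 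Plugging this into the display and using $\norm{Z}^2=\sum_i\gamma_i^2$, the small-singular-value contribution cancels exactly between the $\tfrac{1}{2\sigmamin}\norm{Z}^2$ term and the residual term, leaving $\tfrac{\sigmamin}{2}\sum_{\gamma_i/\sigmamin>\nu}(\gamma_i^2/\sigmamin^2-\nu^2)+\tfrac{\sigmamin}{2}$. Rewriting the restricted sum as $\sum_{i=1}^{n\wedge q}(\gamma_i^2/\sigmamin^2-\nu^2)_+$ (the discarded indices being exactly those with $\gamma_i^2/\sigmamin^2\leq\nu^2$) gives the claimed expression.

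The only genuinely nontrivial point is the $\ell_1$-ball projection: I expect the main work to be justifying that the projection of an infeasible point lands on the boundary and is given by a single scalar soft-threshold $\nu$ determined implicitly by the saturation condition. This is the standard KKT / water-filling argument, and \cref{eq:gamma_implicit} has a unique solution because $\nu\mapsto\norm{(\ST(\gamma_i/\sigmamin,\nu))_i}_1$ is continuous, piecewise linear, and strictly decreasing from $\nucnorm{Z}/\sigmamin>1$ at $\nu=0$ down to $0$. Everything else is bookkeeping of the two sums over large and small singular values, which is routine once the cancellation above is spotted.
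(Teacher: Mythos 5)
Your proposal is correct and follows essentially the same route as the paper's own proof: both specialize \cref{lemma:conco_p_q} to $p^*=1$, $c=\tfrac{1}{2}$, use the soft-thresholding formula for the projection onto $\cB_{\mathscr{S},1}$ (which the paper simply cites from \citet[Example 7.31]{Beck17} rather than re-deriving via the KKT/water-filling argument you sketch), and conclude with the same coordinate-wise cancellation between $\tfrac{1}{2\sigmamin}\norm{Z}^2$ and the squared projection residual. There are no gaps; your added remark on the monotonicity of $\nu\mapsto\norm{(\ST(\gamma_i/\sigmamin,\nu))_i}_{1}$ matches the observation the paper makes in the remark following the proposition.
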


\begin{proof}

  We remind that $\Pi_{\cB_{\mathscr{S},{1}}}$, the projection over $\cB_{\mathscr{S},{1}}$, is given by \citet[Example 7.31]{Beck17}:
  \begin{align}\label{eq:proj_l1_ball}
    \Pi_{\cB_{\mathscr{S},1}} \left(\frac{Z}{\sigmamin}\right)
    &= \begin{cases}
        \tfrac{Z}{\sigmamin} \enspace,
    & \text{if } \nucnorm{Z} \leq \sigmamin \enspace, \\
        V \diag(\ST(\frac{\gamma_i}{\sigmamin}, \nu )) W^{\top} \enspace, & \text{if } \nucnorm{Z} > \sigmamin \enspace, \\
        \end{cases}
  \end{align}
 $\gamma$ being defined by the implicit equation
 \begin{align}
      \norm{ \left(\ST\Big(\tfrac{\gamma_1}{\sigmamin}, \nu \Big),\dots, \ST\Big(\tfrac{\gamma_{n\wedge q}}{\sigmamin},\nu \Big) \right)}_{1} = 1 \enspace.
  \end{align}
By combining \Cref{eq:proj_l1_ball} and \Cref{lemma:conco_p_q} (with $p^*=1, c = \frac{1}{2}$) it follows that
  \begin{align}\label{eq:proj_schatten_1}
    (\norm{\cdot} \infconv \omega_{\sigmamin})(Z)
    &=
      \begin{cases}
      \frac{1}{2\sigmamin}\norm{Z}^2 + \frac{\sigmamin}{2} \enspace,
      & \text{if } \nucnorm{Z} \leq \sigmamin \enspace, \\
    \frac{1}{2\sigmamin}\norm{Z}^2
      + \frac{\sigmamin}{2}
      - \frac{\sigmamin}{2}\norm{\Pi_{\cB_{\mathscr{S}, 1}}\left(\frac{Z}{\sigmamin}\right) -\frac{Z}{\sigmamin}}^2 \enspace,
      & \text{if } \nucnorm{Z} > \sigmamin   \enspace .
    \end{cases}
  \end{align}

Let us compute $\norm{\Pi_{\cB_{\mathscr{S}, 1}}\left(\frac{Z}{\sigmamin}\right) -\frac{Z}{\sigmamin}}^2$.
If $\nucnorm{Z} > \sigmamin$ we have
\begin{align}
    \norm{\Pi_{\cB_{\mathscr{S}, 1}}\left(\tfrac{Z}{\sigmamin}\right) -\tfrac{Z}{\sigmamin}}^2
    &= \frac{1}{\sigmamin^2} \norm{V \diag((\gamma_i - \nu\sigmamin)_+ -\gamma_i) W^\top}^2 \quad \quad \text{(using \Cref{eq:proj_l1_ball})} \nonumber \\
    &= \frac{1}{\sigmamin^2}  \sum_{i=1}^{n \wedge q} \big( (\gamma_i - \nu\sigmamin)_+ -\gamma_i \big)^2  \nonumber\\
    &= \frac{1}{\sigmamin^2}  \big (\sum_{\gamma_i \geq \nu\sigmamin} \nu^2 \sigmamin^2
      + \sum_{\gamma_i < \nu\sigmamin} \gamma_i^2 \big)  \label{eq:calcul_int_proj_l_1_ball} \enspace .
\end{align}

By plugging \Cref{eq:calcul_int_proj_l_1_ball} into \Cref{eq:proj_schatten_1} it follows, that if $\nucnorm{Z} > \sigmamin$:
\begin{align}
    \left(\norm{\cdot}_{\mathscr{S},\infty} \infconv \omega_{\sigmamin}\right)(Z)
    &= \frac{1}{2\sigmamin} \sum_{i=1}^{n \wedge q} {\gamma_i^2}
      + \frac{\sigmamin}{2}
      - \frac{1}{2\sigmamin} \sum_{\gamma_i \geq \nu\sigmamin}^{n \wedge q}  \nu^2 \sigmamin^2
      - \frac{1}{2\sigmamin} \sum_{\gamma_i < \nu\sigmamin}^{n \wedge q}  \gamma_i^2 \nonumber \\
    &= \frac{1}{2\sigmamin} \sum_{\gamma_i \geq \nu\sigmamin}^{n \wedge q} \big ( \gamma_i^2 -  \nu^2 \sigmamin^2 \big )
      + \frac{\sigmamin}{2} \nonumber \\
    &= \frac{\sigmamin}{2} \sum_{i = 1}^{n \wedge q} \left(\frac{\gamma_i^2}{\sigmamin^2} -  \nu^2 \right)_+
      + \frac{\sigmamin}{2} \label{eq:calcul_int_2_proj_l_1_ball} \enspace .
\end{align}

\Cref{prop:smoothing_schatten_inf} follows by plugging \Cref{eq:calcul_int_2_proj_l_1_ball} for the case $\nucnorm{Z} > \sigmamin$, and the fact that when $\nucnorm{Z} \leq \sigmamin$ the result is straightforward.
\end{proof}

\begin{remark}
  Since $\nu\mapsto \norm{ \left(  \ST\left(\tfrac{\gamma_1}{\sigmamin}, \nu \right),\dots, \ST \left(\tfrac{\gamma_{n\wedge q}}{\sigmamin}, \nu \right)\right)}_{1}$ is decreasing and piecewise linear, the solution of \Cref{eq:gamma_implicit} can be computed exactly in $\cO(n \wedge q \; \log (n \wedge q))$ operations.
\end{remark}


\section{Proofs \us}
\label{app_sec:proof_clar}

\subsection{Proof of \Cref{prop:smoothed_sglc_us}} 
\label{sub:proof_of_smoothed_sgcl_us}

\convexsmoothing*

\begin{proof}
\Cref{prop:smoothed_sglc_us} follows from \Cref{app_sub:nuclear_norm_with_repet_case} by choosing $Z=\tfrac{1}{\sqrt{rq}} [Y^{(1)}-X\Beta, \dots, Y^{(r)}-X\Beta]$ and by taking the $\argmin$ over $\Beta$.
\end{proof}

\subsection{Proof of \Cref{prop:joint_convexity}}
\label{app_sub:proof_joint_convexity}
\jointconv*

\begin{proof}
    \begin{align*}
      f(\Beta, \Snoise)
      &= \frac{1}{2nqr}  \sum_{1}^r \norm{Y^{(l)} - X \Beta}_{\Snoise^{-1}}^2 + \frac{1}{2n}\Tr(\Snoise) = \Tr(Z^T \Snoise^{-1} Z) + \frac{1}{2n}\Tr(\Snoise)\enspace,
    \end{align*}
    with $Z = \tfrac{1}{\sqrt{2nqr}} [Y^{(1)} -X\Beta| \dots | Y^{(r)} -X\Beta]$~.

    First note that the (joint) function $(Z, \Sigma) \mapsto \Tr Z^\top \Sigma^{-1} Z$ is jointly convex over $\bbR^{n \times q} \times \cS_{++}^n$, see \citet[Example~3.4]{Boyd_Vandenberghe04}.
    This means that $f$ is jointly  convex in $(Z, \Snoise)$, moreover $\Beta \mapsto \tfrac{1}{\sqrt{2nqr}} [Y^{(1)} -X\Beta| \dots | Y^{(r)} -X\Beta]$ is linear in $\Beta$, thus $f$ is jointly convex in $(\Beta, \Snoise)$, meaning that $(\Beta, \Snoise) \to f + \lambda \norm{\cdot}_{2, 1}$ is jointly convex in $(\Beta, \Snoise)$~.
    Moreover the constraint set is convex and thus solving \us is a convex problem.

    The function $f$ is convex and smooth on the feasible set and $\norm{\cdot}_{2, 1}$ is convex in $\Beta$ and separable in $\Beta_{j:} $'s, thus (see \citealt{Tseng01,Tseng_Yun09}) $f + \lambda \norm{\cdot}_{2, 1}$ can be minimized through coordinate descent in $\Snoise$ and the $\Beta_{j:} $'s (on the feasible set).
\end{proof}

\subsection{Proof of \Cref{prop:update_S_multiepoch}} 
\label{app_sub:proof_updae_S_clar}

\updateS*

\begin{proof}
  Minimizing $f(\Beta, \cdot)$ amounts to solving
  \begin{equation}
    \argmin_{S\succeq\sigmamin\Id_n}
    \tfrac{1}{2} \norm{Z}_{S^{-1}}^2 + \tfrac{1}{2} \Tr(S) \enspace, \enspace \text{ with } Z = \frac{1}{\sqrt{r}}[Z^{(1)} | \dots | Z^{(l)}] \enspace.
  \end{equation}
  The solution is $ \SpCl \Big(ZZ^\top , \sigmamin \Big)$ (see \citealt[Appendix A2]{Massias_Fercoq_Gramfort_Salmon17}), and
   $Z Z^\top = \frac{1}{r} \sum_{l=1}^r Z^{(l)} Z^{(l)\top}$.
\end{proof}

\subsection{Proof of \Cref{prop:update_Bj_clar}} 
\label{app_sub:proof_updae_B_j_clar}
\updateB*

\begin{proof}
    The function to minimize is the sum of a smooth term $f(\cdot, S)$ and a non-smooth but separable term, $\normin{\cdot}_{2,1}$, whose proximal operator \footnote{As a reminder, for a scalar $t > 0$, the proximal operator of a function $h:\bbR^{d} \to \bbR$ can be defined for any $x_0 \in \bbR^d$ by $\prox_{t,h}(x_0) = \argmin_{x \in \bbR^d} \tfrac{1}{2t} \norm{x-x_0}^2 + h(x)~.$} can be computed:
    \begin{itemize}
      \item $f$ is $\norm{X_{:j}}_{ \Snoise^{-1}}^2 / nq $-smooth with respect to $\Beta_{j:}$, with partial gradient
        $\nabla_j f(\cdot, S) = - \frac{1}{nq} X_{:j}^{\top} \Snoise^{-1}(\bar{Y} - X\Beta)$,
      \item $\normin{\Beta}_{2,1} = \sum_{j=1}^p \normin{\Beta_{j:}}$ is row-wise separable over $\Beta$, with     $\prox_{{\lambda n q }/{\norm{X_{:j}}_{ \Snoise^{-1}}^2}, \normin{\cdot}}(\cdot) =
        \BST \left(\cdot, \frac{\lambda n q}{\norm{X_{:j}}_{ \Snoise^{-1}}^2} \right)$.
    \end{itemize}
    Hence, proximal block-coordinate descent converges  \citep{Tseng_Yun09}, and the update are given by \Cref{eq:update_Bj_clar}.
    The closed-form formula arises since the smooth part of the objective is quadratic and isotropic \wrt $\Beta_{j:}$~.
  \end{proof}

\subsection{Proof of $\lambda_{\max}$ \us} 
\label{app_sub:proof_lambda_max_clar}

\begin{proof}
    First notice that if $\Betaopt = 0$, then $\Snoiseopt = \SpCl\left(\tfrac{1}{qr} \sum_{l=1}^r Y^{(l)} Y^{(l)\top}, \sigmamin\right) \eqdef \Snoise_{\max}$~.

    Fermat's rules states that
    \begin{align}
      \Betaopt = 0 &\Leftrightarrow  0 \in \partial \big( f(\cdot, \Snoise_{\max}) + \lambda \normin{\cdot}_{2, 1} \big) (0) \nonumber \\
      &\Leftrightarrow - \nabla f(\cdot, \Snoise_{\max}) \in \lambda \cB_{\normin{\cdot}_{2, \infty}} \nonumber\\
      &\Leftrightarrow \frac{1}{nq} \norm{X^\top \Snoise_{\max}^{-1} \bar{Y}}_{2, \infty} \eqdef \lambda_{\max} \leq \lambda \enspace.
    \end{align}
  \end{proof}

\subsection{Proof of dual formulation}
\label{app_sub:proof_dual_clar}

\begin{proposition}\label{prop:dual_clar}
  With $\Thetaopt = (\Thetaopt^{(1)}, \dots, \Thetaopt^{(r)})$, the dual formulation of \Cref{eq:clar} is
  \begin{align}\nonumber
    \Thetaopt
    &= \argmax_{\substack{(\Theta^{(1)}, \dots, \Theta^{(r)}) \in \Delta_{X, \lambda}}}
      \frac{\sigmamin}{2}
    \left (
        1 - \frac{q n \lambda^2}{r}  \sum_{l=1}^r \Tr \Theta^{(l)} {\Theta^{(l)}}^{\top}
      \right )
        + \dfrac{\lambda}{r} \sum_{l=1}^r \left \langle \Theta^{(l)}, Y^{(l)} \right \rangle \enspace,
  \end{align}
  with  $\bar{\Theta} = \frac{1}{r} \sum_{1}^r \Theta^{(l)}$ and
\begin{align}\label{eq:dual_feasible_clar}
  \nonumber &\Delta_{X, \lambda} = \Big\{
                    (\Theta^{(1)}, \dots, \Theta^{(r)}) \in (\bbR^{n \times q})^r
                    : \norm{X^{\top} \bar{\Theta} }_{2, \infty} \leq 1,
                    \Big \Vert\sum_{l=1}^r \Theta^{(l)} {\Theta^{(l)}}^{\top} \Big \Vert_2 \leq
                    \frac{r}{\lambda^2 n^2 q}
                    \Big\} \enspace.
\end{align}
\end{proposition}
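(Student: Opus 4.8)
The strategy is to first eliminate the matrix variable $\Snoise$, reducing \eqref{eq:clar} to a minimization in $\Beta$ alone with a \emph{smooth} data-fitting term, and then to apply Fenchel--Rockafellar duality. Setting $Z(\Beta) \eqdef \tfrac{1}{\sqrt{rq}}[Y^{(1)} - X\Beta | \dots | Y^{(r)} - X\Beta]$, one has $f(\Beta, \Snoise) = \tfrac1n\bigl(\tfrac12\norm{Z(\Beta)}_{\Snoise^{-1}}^2 + \tfrac12\Tr(\Snoise)\bigr)$, so by \cref{prop:conco_nuc_repet} the inner minimization gives $\min_{\Snoise\succeq\sigmamin\Id_n} f(\Beta,\Snoise) = \tfrac1n (\nucnorm{\cdot}\infconv\omega_{\sigmamin})(Z(\Beta))$. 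Thus \eqref{eq:clar} is equivalent to minimizing over $\Beta$ the function $\tfrac1n g(Z(\Beta)) + \lambda\norm{\Beta}_{2,1}$, where $g \eqdef \nucnorm{\cdot}\infconv\omega_{\sigmamin}$ is finite, convex and smooth everywhere on $\bbR^{n\times qr}$.

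Next I would write $Z(\Beta) = C - A\Beta$ with the constant $C \eqdef \tfrac{1}{\sqrt{rq}}[Y^{(1)}|\dots|Y^{(r)}]$ and the linear map $A\Beta \eqdef \tfrac{1}{\sqrt{rq}}[X\Beta|\dots|X\Beta]$, and introduce a block dual variable $\Theta = [\Theta^{(1)}|\dots|\Theta^{(r)}]$ via the Fenchel--Moreau identity $g(x) = \sup_\Theta \langle\Theta, x\rangle - g^*(\Theta)$. Strong duality holds since $g$ is continuous and finite everywhere, so the $\inf_\Beta\sup_\Theta$ may be swapped with no gap. The inner minimization $\min_\Beta \lambda\norm{\Beta}_{2,1} - \langle A^*\Theta, \Beta\rangle$ equals $0$ exactly when $\norm{A^*\Theta}_{2,\infty}\leq\lambda$ (the dual norm of $\norm{\cdot}_{2,1}$ being $\norm{\cdot}_{2,\infty}$) and $-\infty$ otherwise; a direct computation gives $A^*\Theta = \sqrt{r/q}\,X^\top\bar{\Theta}$, so this reads $\norm{X^\top\bar{\Theta}}_{2,\infty}\leq\lambda\sqrt{q/r}$. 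The surviving dual objective is $\langle\Theta, C\rangle - \tfrac1n g^*(n\Theta)$.

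The core computation is $g^*$. By the inf-convolution morphism \eqref{eq:inf-conv-morphism}, the Schatten identity $\nucnorm{\cdot}^* = \iota_{\cB_{\mathscr{S},\infty}}$ (as already used in \cref{lemma:conco_p_q}), and the scaling/translation rules \eqref{eq:inf-conv-scaling}, \eqref{eq:fenchel-norm2}, \eqref{eq:fenchel-translation}, one obtains $g^* = \iota_{\cB_{\mathscr{S},\infty}} + \omega_{\sigmamin}^*$ with $\omega_{\sigmamin}^*(\Theta) = \tfrac{\sigmamin}{2}\norm{\Theta}^2 - \tfrac{(n\wedge qr)\sigmamin}{2}$. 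The indicator forces $\norm{n\Theta}_{\mathscr{S},\infty}\leq 1$, i.e.\ $\bigl\Vert\sum_{l=1}^r\Theta^{(l)}\Theta^{(l)\top}\bigr\Vert_2\leq 1/n^2$, since $\Theta\Theta^\top = \sum_l\Theta^{(l)}\Theta^{(l)\top}$ and the Schatten-$\infty$ norm is the spectral norm; the quadratic term yields $-\tfrac{\sigmamin n}{2}\sum_l\Tr(\Theta^{(l)}\Theta^{(l)\top})$ in the objective; and the constant yields the offset $+\tfrac{\sigmamin}{2}$ (using $n\wedge qr = n$ in the regime $n\leq qr$). Together with $\langle\Theta, C\rangle = \tfrac{1}{\sqrt{rq}}\sum_l\langle\Theta^{(l)}, Y^{(l)}\rangle$, this is the dual up to normalization.

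Finally, the clean constants in the statement follow from the change of variables $\Theta^{(l)} \mapsto \lambda\sqrt{q/r}\,\Theta^{(l)}$: the support constraint becomes $\norm{X^\top\bar{\Theta}}_{2,\infty}\leq 1$, the spectral constraint becomes $\bigl\Vert\sum_l\Theta^{(l)}\Theta^{(l)\top}\bigr\Vert_2\leq \tfrac{r}{\lambda^2 n^2 q}$ (yielding $\Delta_{X,\lambda}$), the quadratic term becomes $-\tfrac{\sigmamin}{2}\tfrac{qn\lambda^2}{r}\sum_l\Tr(\Theta^{(l)}\Theta^{(l)\top})$, and the linear term becomes $\tfrac{\lambda}{r}\sum_l\langle\Theta^{(l)}, Y^{(l)}\rangle$, exactly the announced objective. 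The main obstacle is bookkeeping: computing $g^*$ on the block space $\bbR^{n\times qr}$ and propagating the normalizing factors ($\tfrac{1}{\sqrt{rq}}$ and the powers of $n,q,r,\lambda$) consistently through the rescaling, while justifying strong duality so that the $\sup$ is attained and equals the primal value.
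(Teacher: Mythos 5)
Your proposal is correct, but it takes a genuinely different route from the paper's. The paper never passes through the smoothed nuclear norm in its dual derivation: it introduces the residuals $R^{(l)} = Y^{(l)} - X\Beta$ as explicit linear constraints with multipliers $\tfrac{\lambda}{r}\Theta^{(l)}$, swaps $\min$ and $\max$ via Slater's condition, and performs three inner minimizations directly --- over each $R^{(l)}$ (a quadratic, producing the terms $\langle \Theta^{(l)}\Theta^{(l)\top}, \Snoise\rangle$), over $\Beta$ (producing $\iota_{\cB_{2,\infty}}(X^\top\bar\Theta)$), and finally over $\Snoise \succeq \sigmamin \Id_n$ of a \emph{linear} form, which is bounded below exactly when $\Id_n \succeq \tfrac{qn^2\lambda^2}{r}\sum_l \Theta^{(l)}\Theta^{(l)\top}$ and is then attained at $\Snoise = \sigmamin\Id_n$; the spectral constraint in $\Delta_{X,\lambda}$ is just this PSD condition rewritten. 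You instead eliminate $\Snoise$ first via \cref{prop:conco_nuc_repet}, reduce the primal to $\tfrac1n g(Z(\Beta)) + \lambda\norm{\Beta}_{2,1}$ with $g = \nucnorm{\cdot}\infconv\omega_{\sigmamin}$, and apply Fenchel--Rockafellar duality with $g^* = \iota_{\cB_{\mathscr{S},\infty}} + \omega_{\sigmamin}^*$ from the conjugate calculus of \cref{prop:reminder-infconv}. Your bookkeeping checks out: $A^*\Theta = \sqrt{r/q}\,X^\top\bar\Theta$, the equivalence $\norm{n\Theta}_{\mathscr{S},\infty}\leq 1 \Leftrightarrow \normin{\sum_l\Theta^{(l)}\Theta^{(l)\top}}_2 \leq 1/n^2$ (via $\Theta\Theta^\top = \sum_l \Theta^{(l)}\Theta^{(l)\top}$), and the rescaling $\Theta^{(l)} \mapsto \lambda\sqrt{q/r}\,\Theta^{(l)}$ all reproduce the stated objective and $\Delta_{X,\lambda}$ exactly. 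What your route buys: it reuses the smoothing machinery the paper built anyway, makes the spectral constraint transparent as the polar of the Schatten-1 ball, the constraint qualification is simply the everywhere-finiteness and continuity of $g$, and dual attainment comes for free since $\Delta_{X,\lambda}$ is compact. What the paper's route buys: it is self-contained (no conjugate calculus beyond the dual norm of $\norm{\cdot}_{2,1}$) and keeps $\Snoise$ visible, so one sees directly where each dual constraint is generated. Two minor remarks: your hedge ``$n \wedge qr = n$'' is only needed because of the constant stated in \cref{prop:conco_nuc_repet}; the variational value always carries the constant $n/2$ (cf.\ \cref{lem:value_optimal}), so your additive offset $\tfrac{\sigmamin}{2}$ is correct without any regime restriction. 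And to fully justify the $\argmax$ in the statement, your appeal to continuity of $g$ should be stated as the qualification condition of the Fenchel--Rockafellar theorem (guaranteeing zero gap \emph{and} dual attainment), playing the role that Slater plays in the paper.
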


In \Cref{alg:sgcl_and_clar} the dual point $\Theta$ at iteration $t$ is obtained through a residual rescaling similar to the way the dual point is created,\ie $\Theta^{(l)} = \frac{1}{nq\lambda} (Y^{(l)} -X\Beta)$ (with $\Beta$ the current primal iterate); then the dual point hence created is projected on $\Delta_{X, \lambda}$~.

\begin{proof}
  Let the primal optimum be
  \begin{equation}
      p^* \eqdef \min_{
            \substack{\Beta \in \bbR^{p \times q} \\ \Snoise \succeq \sigmamin\Id_n}
            }
            \frac{1}{2nqr}  \sum_{l=1}^{r} \normin{ Y^{(l)} - X \Beta}_{\Snoise^{-1}}^2
            + \frac{1}{2n}\Tr(\Snoise)
            + \lambda \norm{\Beta}_{2, 1}
            \nonumber
    \end{equation}

    Then
    \begin{align}
          p^* & = \nonumber \min_{
              \substack{
                  \Beta \in \bbR^{p \times q}\\
                  R^{(l)} = Y^{(l)} - X \Beta, \; \forall l \in [r] \\
                  \Snoise  \succeq \sigmamin \Id_n
              }
          }
          \frac{1}{2nqr}  \sum_{l=1}^{r} \normin{R^{(l)}}_{\Snoise^{-1}}^2
          + \frac{1}{2n}\Tr(\Snoise)
          + \lambda \norm{\Beta}_{2, 1}
          \enspace\\
          & =        \nonumber \min_{
              \substack{
                  \Beta \in \bbR^{p \times q}\\
                  R^{(1)}, \dots, R^{(r)} \in \bbR^{n \times q} \\
                  \Snoise  \succeq \sigmamin \Id_n
              }
          }
          \max_{\substack{\Theta^{(1)}, \dots, \Theta^{(r)}} \in \bbR^{n \times q} }
          \frac{1}{2nqr}  \sum_{l=1}^{r} \normin{R^{(l)}}_{\Snoise^{-1}}^2
          + \frac{1}{2n}\Tr(\Snoise)\\
          & \phantom{
           = \nonumber \min_{
              \substack{
                  \Beta \in \bbR^{p \times q}\\
                  R^{(1)}, \dots, R^{(r)} \in \bbR^{n \times q} \\
                  \Snoise  \succeq \sigmamin \Id_n
              }
          }
          \max_{\substack{\Theta^{(1)}, \dots, \Theta^{(r)}} \in \bbR^{n \times q} }
          }
          + \lambda \normin{\Beta}_{2, 1}
          + \frac{\lambda}{r} \sum_{l=1}^{r} \Big \langle \Theta^{(l)}, Y^{(l)} - X \Beta - R^{(l)} \Big \rangle
          \enspace.
    \end{align}

    Since Slater's conditions are met $\min$ and $\max$ can be inverted:

    \begin{align}
       p^* &= \max_{\substack{\Theta^{(1)}, \dots, \Theta^{(r)}} \in \bbR^{n \times q} }
           \min_{
              \substack{
                  \Beta \in \bbR^{p \times q}\\
                  R^{(1)}, \dots, R^{(r)} \in \bbR^{n \times q} \\
                  \Snoise  \succeq \sigmamin \Id_n
              }
          }
          \frac{1}{2nqr}  \sum_{l=1}^{r} \normin{R^{(l)}}_{\Snoise^{-1}}^2
          + \frac{1}{2n}\Tr(\Snoise)\\
          & \phantom{
           = \nonumber \min_{
              \substack{
                  \Beta \in \bbR^{p \times q}\\
                  R^{(1)}, \dots, R^{(r)} \in \bbR^{n \times q} \\
                  \Snoise  \succeq \sigmamin \Id_n
              }
          }
          \max_{\substack{\Theta^{(1)}, \dots, \Theta^{(r)}} \in \bbR^{n \times q} }
          }
          + \lambda \norm{\Beta}_{2, 1}
          + \frac{\lambda}{r} \sum_{l=1}^{r} \Big \langle \Theta^{(l)}, Y^{(l)} - X \Beta - R^{(l)} \Big \rangle\\
      &=
          \max_{\substack{\Theta^{(1)}, \dots, \Theta^{(r)} \in \bbR^{n \times q} }}
          \left(
              \min_{\substack{\Snoise  \succeq \sigmamin \Id_n}}
              \frac{1}{r}  \sum_{l=1}^{r}
              \min_{R^{(l)} \in \bbR^{n \times q}
                  }
              \left(
                \frac{\norm{R^{(l)}}_{\Snoise^{-1}}^2}{2nq}
                - \Big \langle  \Theta^{(l)}, R^{(l)} \Big \rangle
              \right)
              +\frac{1}{2n}\Tr(\Snoise)
              \right. \nonumber\\
              &
              \left.
              \phantom{=\max_{\substack{\Theta^{(1)}, \dots, \Theta^{(r)} \in \bbR^{n \times q} }}}
              + \lambda  \min_{
                  \substack{
                       \Beta \in \bbR^{ p \times q}
                  }
              } \left(
              \norm{\Beta}_{2, 1}
              - \Big \langle \bar{\Theta}, X \Beta \Big \rangle
              \right)
              + \frac{\lambda}{r} \sum_{l=1}^{r} \Big \langle \Theta^{(l)}, Y^{(l)} \Big \rangle \right) \enspace .
      \end{align}

    Morover we have
    \[\min_{\substack{R^{(l)} \in \bbR^{n \times q}}}
          \left (
            \frac{\norm{R^{(l)}}_{\Snoise^{-1}}^2}{2nq}
            - \Big \langle  \Theta^{(l)}, R^{(l)}  \Big\rangle
          \right )
          = - \frac{nq \lambda^2}{2} \left \langle \Theta^{(l)} {\Theta^{(l)}}^{\top}, \Snoise \right \rangle \]  and

    \[ \min_{
              \substack{
                   \Beta \in \bbR^{ p \times q}
              }
          } \left(
          \norm{\Beta}_{2, 1}
          - \langle \bar{\Theta}, X \Beta \rangle
          \right)
          = - \max \left (\langle X^{\top} \bar{\Theta}, \Beta \rangle - \norm{\Beta}_{2, 1} \right)
          = - \iota_{\cB_{2, \infty}}(X^{\top}  \bar{\Theta}) \enspace.\]

    This leads to:
    \begin{align}
          d^* &=
          \max_{\substack{\Theta^{(1)}, \dots, \Theta^{(r)}\in \bbR^{n \times q} }}
          \min_{\substack{\Snoise  \succeq \sigmamin \Id_n}}
          - \frac{1}{r}  \sum_{l=1}^r
           \tfrac{nq \lambda^2}{2} \left \langle \Theta^{(l)} {\Theta^{(l)}}^{\top}, \Snoise \right \rangle
           -\lambda \iota_{\cB_{2, \infty}}(X^{\top} \bar{\Theta})
           + \frac{\Tr(\Snoise)}{2n}
           \nonumber \\
           & \phantom{= \nonumber
          \max_{\substack{\Theta^{(1)}, \dots, \Theta^{(r)}\in \bbR^{n \times q} }}
          \min_{\substack{\Snoise  \succeq \sigmamin \Id_n}}
          }
          + \frac{\lambda}{r} \sum_{l=1}^{r} \langle \Theta^{(l)}, Y^{(l)}\rangle
         \nonumber \\
          &=
          \max_{\substack{\Theta^{(1)}, \dots, \Theta^{(r)}\in \bbR^{n \times q} }}
          \frac{1}{2n}
          \min_{\substack{\Snoise  \succeq \sigmamin \Id_n}}
          \left (
            \Big \langle \Id_n, \Snoise \Big \rangle
            - \frac{q n^2 \lambda^2}{r}  \sum_{l=1}^{r}
            \left \langle \Theta^{(l)} {\Theta^{(l)}}^{\top}, \Snoise \right \rangle
          \right )
           -\lambda \iota_{\cB_{2, \infty}}(X^{\top} \bar{\Theta})
            \nonumber \\
          &\phantom{=\max_{\substack{\Theta^{(1)}, \dots, \Theta^{(r)}\in \bbR^{n \times q} }}}
          + \frac{\lambda}{r} \sum_{l=1}^r \langle \Theta^{(l)}, Y^{(l)}\rangle
             \nonumber\\
          &= \nonumber\max_{\substack{\Theta^{(1)}, \dots, \Theta^{(r)} \in \bbR^{n \times q} }}
            \frac{1}{2n}
            \min_{\substack{\Snoise  \succeq \sigmamin \Id_n}}
            \left \langle
                \Id_n - \tfrac{q n^2 \lambda^2}{r}  \sum_{l=1}^{r}
                \Theta^{(l)} {\Theta^{(l)}}^{\top}, \Snoise
            \right \rangle
           -\lambda \iota_{\cB_{2, \infty}}(X^{\top} \bar{\Theta})\nonumber\\
          &\phantom{=\max_{\substack{\Theta^{(1)}, \dots, \Theta^{(r)}\in \bbR^{n \times q} }}}
          + \frac{\lambda}{r} \sum_{l=1}^r \langle \Theta^{(l)}, Y^{(l)}\rangle
          \enspace.
    \end{align}

    \begin{align}
        \nonumber &\min_{\substack{\Snoise  \succeq \sigmamin \Id_n}}
            \left \langle
                \Id_n - \tfrac{q n^2 \lambda^2}{r}  \sum_{l=1}^{r} \Theta^{(l)} {\Theta^{(l)}}^{\top},
                \Snoise
            \right \rangle
       \\& \quad \quad \quad \quad =
       \begin{cases}
            \left \langle
                \Id_n - \tfrac{q n^2 \lambda^2}{r}  \sum_{l=1}^{r}
                \Theta^{(l)} {\Theta^{(l)}}^{\top}, \sigmamin
            \right \rangle \enspace,
        & \text{if } \Id_n \succeq \frac{q n^2 \lambda^2}{r}  \sum_{l=1}^{r} \Theta^{(l)} {\Theta^{(l)}}^{\top} \enspace ,\\
        - \infty \enspace, & \text{otherwise.}
       \end{cases}
    \end{align}
It follows that the dual problem of \us is
\begin{equation}
    \max_{\substack{(\Theta^{(1)}, \dots, \Theta^{(r)}) \in \Delta_{X, \lambda}}}
    \frac{\sigmamin}{2}
  \left (
      1 - \frac{q n \lambda^2}{r}  \sum_{l=1}^{r} \Tr \Theta^{(l)} {\Theta^{(l)}}^{\top}
    \right )
    + \dfrac{\lambda}{r} \sum_{l=1}^{r} \left \langle \Theta^{(l)}, Y^{(l)} \right \rangle
    \enspace,
\end{equation}
where $\Delta_{X, \lambda} \eqdef
  \Big \{
    (\Theta^{(1)}, \dots, \Theta^{(r)}) \in \bbR^{n \times q \times r}:
    \normin{X^{\top} \bar{\Theta} }_{2, \infty} \leq 1,
    \normin{ \sum_{l=1}^r \Theta^{(l)} {\Theta^{(l)}}^{\top} }_2 \leq
    \frac{r}{\lambda^2 n^2 q}
  \Big \}$~.
\end{proof}

\subsection{Proof of \Cref{rem:cost_sgcl_clar}} 
\label{app_sub:proof_cost_sgcl_clar}

\costsgclclar*

\begin{proof}
  \begin{align}
   RR^\top
  & = \sum_{l=1}^r R^{(l)} R^{(l)\top} \nonumber\\
  & = \sum_{l=1}^r (Y^{(l)} - X \Beta) (Y^{(l)} - X \Beta)^\top \nonumber\\
  & = \sum_{l=1}^r  Y^{(l)} Y^{(l)\top}
      -  \sum_1^r Y^{(l)} (X \Beta)^\top -  \sum_1^r X \Beta Y^{(l)\top}
      + r X \Beta (X \Beta)^{\top} \nonumber \\
  & = r \text{cov}_Y
      - r \bar{Y}^{\top} X \Beta
      - r (X\Beta)^\top \bar{Y} +  r X \Beta (X \Beta)^{\top}   \label{eq:computation_sigma_me}
  \end{align}
\end{proof}


\subsection{Statistical comparison}
\label{app_sub:stat_comparison}

In this subsection, we show the statistical interest of using all repetitions of the experiments instead of using a mere averaging as \sgcl would do (remind that the later is equivalent to \us with $r=1$ and $Y^{(1)} = \bar{Y}$, see \Cref{rem:clar_gives_sgcl}).

Let us introduce $\Sigma^*$, the true covariance matrix of the noise (\ie $\Sigma^{*} = \Snoise^{*2} $ with our notation).
In \sgcl and \us alternate minimization consists in a succession of estimations of $\Beta^{*}$ and $\Sigma^{*}$ (more precisely $\Snoise=\SpCl(\Sigma, \sigmamin)$ is estimated along the process).
In this section we explain why the estimation of $\Sigma^*$ provided by \us has better statistical properties than that of \sgcl.
For that, we can compare the estimates of $\Sigma^*$ one would obtain provided that the true parameter $\Beta^*$ is known by both \sgcl and \us.
In such ``ideal'' scenario, the associated estimators of $\Sigma^*$ could be written:
\begin{align}
  \hat\Sigma^{\US} &\eqdef \frac{1}{qr} \sum_{l=1}^r (Y^{(l)}-X\hat\Beta) (Y^{(l)}-X\hat\Beta)^\top\enspace ,\label{eq:def_sig_us}\\
  \hat\Sigma^{\SGCL} &\eqdef \frac{1}{qr} \Big(\sum_{l=1}^r Y^{(l)}-X\hat\Beta \Big) \Big(\sum_{l=1}^r Y^{(l)}-X\hat\Beta \Big)^\top,\label{eq:def_sig_sgcl}
\end{align}
with $\hat\Beta=\Beta^*$, and satisfy the following properties:
\begin{proposition}\label{prop:expectation_covariance_sgcl_clar}
  Provided that the true signal is known, and that the covariance estimator $\hat\Sigma^{\US}$ and $\hat\Sigma^{\SGCL}$ are defined thanks to
  \Cref{eq:def_sig_us,eq:def_sig_sgcl}, then one can check that
  \begin{align}
    \bbE(\hat\Sigma^{\US}) &= \bbE(\hat\Sigma^{\SGCL}) = \Sigma^* \label{eq:expectation_sgcl_clar}
    \enspace , \\
    \cov(\hat\Sigma^{\US}) &= \frac{1}{r} \cov(\hat\Sigma^{\SGCL}) \label{eq:covariance_sgcl_clar}
    \enspace .
  \end{align}
\end{proposition}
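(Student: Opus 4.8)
The plan is to evaluate both estimators at $\hat\Beta = \Beta^*$, where the residuals simplify dramatically. Under \Cref{eq:model_multiepochs} we have $Y^{(l)} - X\Beta^* = \Snoise^*\Epsilon^{(l)}$, and since $\Snoise^*$ is symmetric I would set $A^{(l)} \eqdef \Snoise^* \Epsilon^{(l)}\Epsilon^{(l)\top}\Snoise^*$, so that \Cref{eq:def_sig_us} reads $\hat\Sigma^{\US} = \tfrac{1}{qr}\sum_{l=1}^r A^{(l)}$, a scaled sum of $r$ \iid matrices. For \Cref{eq:def_sig_sgcl}, the bracket equals $\Snoise^*\sum_{l=1}^r\Epsilon^{(l)}$; since each entry of $\sum_{l=1}^r\Epsilon^{(l)}$ is a sum of $r$ independent standard normals, one has $\sum_{l=1}^r \Epsilon^{(l)} = \sqrt{r}\,\tilde\Epsilon$ with $\tilde\Epsilon$ having \iid standard normal entries, hence $\hat\Sigma^{\SGCL} = \tfrac{1}{q}\,\Snoise^*\tilde\Epsilon\tilde\Epsilon^\top\Snoise^* = \tfrac{1}{q} B$, where $B \eqdef \Snoise^*\tilde\Epsilon\tilde\Epsilon^\top\Snoise^*$. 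The key structural observation, which drives both identities, is that $\tilde\Epsilon$ and $\Epsilon^{(1)}$ have the same law, so $B$ and $A^{(1)}$ are identically distributed.

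For the expectation claim I would use the elementary fact that $\bbE(\Epsilon^{(1)}\Epsilon^{(1)\top}) = q\,\Id_n$ (computed entrywise, as the $q$ columns are \iid $N(0,\Id_n)$). This gives $\bbE(A^{(1)}) = q\,\Snoise^{*2} = q\Sigma^*$, whence $\bbE(\hat\Sigma^{\US}) = \tfrac{1}{qr}\cdot r\,\bbE(A^{(1)}) = \Sigma^*$ by linearity, and $\bbE(\hat\Sigma^{\SGCL}) = \tfrac{1}{q}\bbE(B) = \tfrac{1}{q}\bbE(A^{(1)}) = \Sigma^*$ using $B \stackrel{d}{=} A^{(1)}$, which proves \Cref{eq:expectation_sgcl_clar}.

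For the covariance claim, I would read $\cov$ as the covariance of the vectorized matrix, using that it is additive over independent summands and scales by the square of a scalar factor. Since the $A^{(l)}$ are \iid, $\cov(\hat\Sigma^{\US}) = \tfrac{1}{(qr)^2}\sum_{l=1}^r\cov(A^{(l)}) = \tfrac{1}{q^2 r}\cov(A^{(1)})$, while $\cov(\hat\Sigma^{\SGCL}) = \tfrac{1}{q^2}\cov(B) = \tfrac{1}{q^2}\cov(A^{(1)})$, again by $B\stackrel{d}{=}A^{(1)}$. Dividing yields $\cov(\hat\Sigma^{\US}) = \tfrac{1}{r}\cov(\hat\Sigma^{\SGCL})$, which is \Cref{eq:covariance_sgcl_clar}. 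Note that this route never requires the explicit fourth-moment (Wishart) covariance of $A^{(1)}$; only its finiteness and its invariance in law between the two estimators matter.

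The main obstacle is bookkeeping rather than a deep difficulty: one must (i) read the bracket in \Cref{eq:def_sig_sgcl} as $\sum_l(Y^{(l)}-X\hat\Beta)$ and track the $\sqrt{r}$ variance inflation that makes $\hat\Sigma^{\SGCL}$ a single rescaled $A$-type matrix rather than an average, and (ii) be precise that $\cov$ of a matrix-valued variable means the covariance of its vectorization, so that multiplication by the fixed linear map $M\mapsto \Snoise^* M\Snoise^*$ and by scalars behaves as claimed. Once the distributional identity $B\stackrel{d}{=}A^{(1)}$ is in place, both parts follow from the standard $\tfrac{1}{r}$ variance reduction of averaging \iid terms.
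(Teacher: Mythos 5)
Your proof is correct and takes essentially the same route as the paper's: both evaluate the residuals at $\hat\Beta = \Beta^*$, use Gaussian stability to identify $\sum_{l=1}^r \Snoise^*\Epsilon^{(l)}$ with $\sqrt{r}\,\Snoise^*\tilde\Epsilon$ in law (so that $\hat\Sigma^{\SGCL}$ is a single rescaled copy of $A^{(1)} = \Snoise^*\Epsilon^{(1)}\Epsilon^{(1)\top}\Snoise^*$), and then exploit independence of the $\Epsilon^{(l)}$'s to obtain the $\tfrac{1}{r}$ factor for $\cov(\hat\Sigma^{\US})$, never needing the explicit Wishart fourth moments. Your write-up merely makes explicit details the paper leaves implicit, such as $\bbE\big(\Epsilon^{(1)}\Epsilon^{(1)\top}\big) = q\,\Id_n$ and the vectorization convention for the covariance of a matrix-valued variable.
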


\Cref{prop:expectation_covariance_sgcl_clar} states that $\hat\Sigma^{\US}$ and $\hat\Sigma^{\SGCL}$ are unbiased estimators of $\Sigma^*$ 
but our newly introduced \us, improves the estimation of the covariance structure by a factor $r$, the number of repetitions performed.

Empirically\footnote{In that case we plug $\hat\Beta=\hat\Beta^{\US}$ (resp. $\hat\Beta=\hat\Beta^{\US}$) in \Cref{prop:expectation_covariance_sgcl_clar}.}, we have also observed that $\hat\Sigma^{\US}$ has larger eigenvalues than $\hat\Sigma^{\SGCL}$, leading to a less biased estimation of $\Snoise^*$ after clipping the singular values.

Let us recall that
\begin{align}
  \Sigma^{\SGCL} = \frac{1}{qr} \left(\sum_{l=1}^r R^{(l)}\right) \left(\sum_{l=1}^r R^{(l)}\right)^{\top} \quad \quad  \mathrm{and} \quad\quad
  \Sigma^{\US} = \frac{1}{qr} \sum_{l=1}^r R^{(l)} R^{(l)\top} \enspace.
\end{align}

\paragraph{Proof of \Cref{eq:expectation_sgcl_clar}} 

\begin{proof}
  If $\Beta = \Beta^*$, $R^{(l)} = \Snoise^* \Epsilon^{(l)} $, where $\Epsilon^{(l)}$ are random matrices with normal \iid entries, and the result trivially follows.
\end{proof}

\paragraph{Proof of \Cref{eq:covariance_sgcl_clar}} 

\begin{proof}
If $\hat\Beta = \Beta^*$, $Y^{(l)}-X\hat\Beta = \Snoise^* \Epsilon^{(l)} $,  where the $\Epsilon^{(l)}$'s are random matrices with normal \iid entries.

Now, on the one hand :
\begin{align*}
   \hat\Sigma^{\SGCL}
  & = \frac{1}{qr} \left(\sum_{l=1}^{r} \Snoise^* \Epsilon^{(l)}\right) \left(\sum_{l=1}^{r} \Snoise^* \Epsilon^{(l)} \right)^\top \enspace .
\end{align*}
Since $\frac{1}{\sqrt{r}} \sum_{l=1}^{r} \Snoise^* \Epsilon^{(l)} \underset{law}{\sim} \Snoise^* \Epsilon $ it follows that
\begin{align*}
  \hat\Sigma^{\SGCL}
  & \underset{law}{\sim} \frac{1}{q}\Snoise^* \Epsilon(\Snoise^* \Epsilon )^\top, \\
  \cov(\hat\Sigma^{\SGCL})
  &= \frac{1}{q^2}\cov(\Snoise^* \Epsilon(\Snoise^* \Epsilon )^\top)
  \enspace .
\end{align*}

On the other hand:
\begin{align*}
  \hat\Sigma^{\US}
  &= \frac{1}{qr} \sum_{l=1}^{r} \Snoise^* \Epsilon^{(l)} (\Snoise^* \Epsilon^{(l)})^\top \enspace .
\end{align*}
Since the $\Epsilon^{(l)}$'s are independent it follows that

\begin{align*}
  \cov(\hat\Sigma^{\US})
  &= \frac{1}{r^2 q^2} \sum_{l=1}^{r} \cov\left(\Snoise^* \Epsilon^{(l)} (\Snoise^* \Epsilon^{(l)})^\top\right)
   = \frac{1}{r^2 q^2} \sum_{l=1}^{r} \cov\left(\Snoise^* \Epsilon (\Snoise^* \Epsilon)^\top\right) \\
  &= \frac{1}{r q^2} \cov\left(\Snoise^* \Epsilon (\Snoise^* \Epsilon)^\top\right)
   = \frac{1}{r} \cov\left(\hat\Sigma^{\SGCL}\right) \enspace.
\end{align*}
\end{proof}


\section{Alternative estimators}
\label{app_sec:competitors}

We compare \us to several estimators: \sgcl \citep{Massias_Fercoq_Gramfort_Salmon17}, the (smoothed) \mlefull (\mle), and a version of the \mle with multiple repetitions (\mler), an $\ell_{2,1}$ penalized version of the \mrcefull \citep{Rothman_Levina_Zhu10} (\mrce), an $\ell_{2,1}$ penalized version of \mrce with repetitions (\mrcer) and the Multi-Task Lasso (\citealt{Obozinski_Taskar_Jordan10}, \mtl).
The cost of an epoch of block coordinate descent and the cost of computing the duality gap for each algorithm are summarized in \Cref{tab:algorithms_costs}.
The updates of each algorithms are summarized in \Cref{tab:algorithms_updates}.

\us solves \Cref{eq:clar} and \sgcl solves \Cref{eq:sgcl}, let us introduce the definition of the alternative estimation procedures.

\subsection{\mtlfull (\mtl)}
\label{app_sub:mtl}
The \mtl \citep{Obozinski_Taskar_Jordan10} is the classical estimator used when the additive noise is supposed to be homoscedastic (with no correlation).
\mtl is obtained by solving:
\begin{align}\label{eq:multi_task_lasso}
      \Betaopt^{\MTL} \in
       \argmin_{\substack{\Beta \in \bbR^{p \times q}}}
       \frac{1}{2 n q} \norm{\bar{Y} - X \Beta}^2 + \lambda  \norm{\Beta}_{2, 1} \enspace .
\end{align}

\begin{remark}
  It can be seen that trying to use all the repetitions in the \mtl leads to \mtl itself because $\norm{\bar{Y} - X \Beta}^2 = \frac{1}{r} \sum_{l} \norm{Y^{(l)} - X \Beta}^2$.
\end{remark}

\subsection{\mlefull (\mle)}
\label{app_sub:mle}

Here we study a penalized Maximum Likelihood Estimator \citep{Chen_Banerjee17} (\mle).
When minimizing \mlefull the natural parameters of the problem are the regression coefficients $\Beta$ and the precision matrix $\Sigma^{-1}$.
Since real M/EEG covariance matrices are not full rank, one has to be algorithmically careful when $\Sigma$ becomes singular.
To avoid such numerical errors and to be consistent with the smoothed estimator proposed in the paper (\us), let us define the (smoothed) \mle as following:
\begin{problem}\label{eq:mle}
  (\Betaopt^{\MLE}, \Sigmaopt^{\MLE}) \in
   \argmin_{
       \substack{\Beta \in \bbR^{p \times q} \\
       \Sigma \succeq \sigmamin^2 / r ^2}
       }
       \normin{\bar Y - X \Beta}_{\Sigma^{-1}}^2
       - \log \det (\Sigma^{-1})
       + \lambda  \norm{\Beta}_{2, 1} \enspace,
\end{problem}
and its repetitions version (\mler):
\begin{problem}\label{eq:mler}
  (\Betaopt^{\MLER}, \Sigmaopt^{\MLER}) \in
   \argmin_{
       \substack{\Beta \in \bbR^{p \times q} \\
       \Sigma \succeq \sigmamin^2}
       }
       \sum_{1}^r \normin{Y^{(l)} - X \Beta}_{\Sigma^{-1}}^2
       - \log \det (\Sigma^{-1})
       + \lambda  \norm{\Beta}_{2, 1} \enspace.
\end{problem}

\Cref{eq:mle,eq:mler} are not convex because the objective functions are not convex in $(\Beta, \Sigma^{-1})$, however they are biconvex, \ie convex in $\Beta$ and convex in $\Sigma^{-1}$.
Alternate minimization can be used to solve \Cref{eq:mle,eq:mler}, but without  guarantees to converge toward a global minimum.

\paragraph{Minimization in $\Beta_{j:} $}
As for \us and \sgcl the updates in $\Beta_{j:} $'s for \mle and \mler clearly read:
\begin{align}\label{eq:update_Bj_mle}
  \Beta_{j:}
  = \BST \left(\Beta_{j:}
  + \frac{X_{:j}^{\top} \Sigma^{-1} (\bar Y - X\Beta)}{ \norm{X_{:j}}_{ \Sigma^{-1}}^2 } ,
  \frac{\lambda n q}{\norm{X_{:j}}_{ \Sigma^{-1}}^2 } \right)
  \enspace.
\end{align}
\paragraph{Minimization in $\Sigma^{-1}$:} for \mle (\resp for \mler) the update in $\Sigma$ reads
\begin{align}
  \Sigma = \Cl(\Sigma^{\EMP}, \sigmamin^2) \quad(\text{\resp } \Sigma = \Cl(\Sigma^{\EMP, r}, \sigmamin^2)) \enspace,
\end{align}
with $\Sigma^{\EMP} \eqdef \frac{1}{q} (\bar Y - X \Beta) (\bar Y - X \Beta)^{\top}$ (\resp $\Sigma^{\EMP, r} \eqdef \frac{1}{rq} \sum_{l=1}^r (Y^{(l)} - X \Beta) (Y^{(l)} - X \Beta)^{\top})$

Let us prove the last result.
Minimizing \Cref{eq:mle} in $\Sigma^{-1}$ amounts to solving
\begin{problem}\label{eq:min_Sigma_1_reprop}
    \Sigmaopt^{-1} \in
     \argmin_{
         \substack{0 \prec \Sigma^{-1} \preceq 1 / \sigmamin^2 \quad \,\,}
         }
         \left \langle\Sigma^{\EMP},  \Sigma^{-1} \right \rangle
         - \log\det(\Sigma^{-1}) \enspace.
\end{problem}

\begin{theorem} \label{thm:min_in_Sigma_1}
    Let $\Sigma^{\EMP} = U \diag(\sigma_i^2) U^{\top}$ be an eigenvalue decomposition of $\Sigma^{\EMP}$, a solution to  \Cref{eq:min_Sigma_1_reprop} is given by:
    \begin{align}
        \Sigmaopt^{ -1} =
        U \diag \left ({\frac{1}{\sigma_i^2 \vee \sigmamin^2}} \right ) U^{\top}
    \end{align}
\end{theorem}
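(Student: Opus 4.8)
The plan is to exploit the orthogonal (unitary) invariance of the objective to reduce to a diagonal problem, and then to decouple that problem across eigenvalues using Hadamard's inequality. Writing $P = \Sigma^{-1}$ and $g(P) \eqdef \langle \Sigma^{\EMP}, P\rangle - \log\det(P)$, I would first record that $g$ is convex on the positive definite cone (the linear term plus the convex $-\log\det$) and that the feasible set $\{P : 0 \prec P \preceq \sigmamin^{-2}\Id\}$ is convex. Since $g(P) \to +\infty$ whenever an eigenvalue of $P$ tends to $0$, the infimum is attained away from that part of the boundary, so it suffices to exhibit one feasible matrix at which $g$ attains its infimum.

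Next I would diagonalize. With $\Sigma^{\EMP} = U\diag(\sigma_i^2)U^\top$ and the substitution $P = U \tilde P U^\top$, orthogonality of $U$ gives $\langle\Sigma^{\EMP}, P\rangle = \Tr(\diag(\sigma_i^2)\tilde P) = \sum_i \sigma_i^2 \tilde P_{ii}$, together with $\log\det P = \log\det\tilde P$, and the constraint becomes $0 \prec \tilde P \preceq \sigmamin^{-2}\Id$. Hence one may assume $\Sigma^{\EMP} = \diag(\sigma_i^2)$ and minimize $\sum_i \sigma_i^2\,\tilde P_{ii} - \log\det\tilde P$ over the feasible $\tilde P$.

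The key step is Hadamard's inequality $\det\tilde P \leq \prod_i \tilde P_{ii}$ (valid for $\tilde P \succ 0$, with equality iff $\tilde P$ is diagonal), which gives $-\log\det\tilde P \geq -\sum_i \log\tilde P_{ii}$. Combined with the fact that $\tilde P \preceq \sigmamin^{-2}\Id$ forces $0 < \tilde P_{ii} = e_i^\top \tilde P e_i \leq \sigmamin^{-2}$ for each $i$, the objective is bounded below by the separable sum $\sum_i \phi_i(\tilde P_{ii})$ with $\phi_i(t) \eqdef \sigma_i^2 t - \log t$ to be minimized over $t \in (0, \sigmamin^{-2}]$. Each $\phi_i$ is convex with unconstrained minimizer $t = 1/\sigma_i^2$; projecting this point onto $(0,\sigmamin^{-2}]$ yields $t_i^\star = 1/(\sigma_i^2 \vee \sigmamin^2)$, covering both the case $\sigma_i^2 \geq \sigmamin^2$ (where $1/\sigma_i^2$ is already feasible) and the case $\sigma_i^2 < \sigmamin^2$ (where $\phi_i$ is decreasing on the feasible interval, so the minimum is at $\sigmamin^{-2}$). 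The diagonal matrix $\tilde P^\star = \diag(1/(\sigma_i^2\vee\sigmamin^2))$ is feasible and makes all inequalities tight simultaneously: Hadamard is an equality because $\tilde P^\star$ is diagonal, and each scalar bound is tight by construction. Thus $\tilde P^\star$ attains the lower bound and is a global minimizer, and undoing the change of variables gives $\Sigmaopt^{-1} = U\diag(1/(\sigma_i^2\vee\sigmamin^2))U^\top$, as claimed.

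The main obstacle is this middle step: to decouple the problem I relax the matrix inequality $\tilde P \preceq \sigmamin^{-2}\Id$ to the weaker per-coordinate bounds $\tilde P_{ii} \leq \sigmamin^{-2}$, which \emph{enlarges} the feasible set. The argument only closes because the minimizer of the relaxed, separable problem turns out to be diagonal, hence feasible for the original problem and tight in Hadamard's inequality, so optimality transfers back. I would therefore be careful to state explicitly that exhibiting one feasible point attaining the decoupled lower bound is enough to certify global optimality, and to note that strict convexity of $-\log\det$ makes this minimizer in fact unique.
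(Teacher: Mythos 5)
Your proof is correct, and it takes a genuinely different route from the paper's. The paper argues by conic-programming duality: it writes the KKT conditions for \Cref{eq:min_Sigma_1_reprop} and exhibits an explicit primal--dual pair, with dual certificate $\hat\Gamma = U \diag(\sigma_i^2 \vee \sigmamin^2 - \sigma_i^2) U^\top$, whose feasibility, stationarity ($\Sigma^{\EMP} - \hat\Sigma + \hat\Gamma = 0$) and complementary slackness are verified eigenvalue by eigenvalue; convexity then makes these conditions sufficient. You instead reduce to a diagonal problem by orthogonal invariance, relax the Loewner constraint $\tilde P \preceq \sigmamin^{-2}\Id_n$ to the weaker per-coordinate bounds $\tilde P_{ii} \leq \sigmamin^{-2}$, decouple via Hadamard's inequality $\det \tilde P \leq \prod_i \tilde P_{ii}$, and solve $n$ scalar problems $\min_{0 < t \leq \sigmamin^{-2}} \, \sigma_i^2 t - \log t$; since the resulting candidate is diagonal, it is feasible for the unrelaxed problem and tight in Hadamard's inequality, so it attains a lower bound valid on the whole feasible set and global optimality follows without any duality machinery. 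Your handling of the delicate points is sound: exhibiting one feasible point matching such a lower bound certifies optimality directly (your preliminary coercivity remark about eigenvalues tending to zero is in fact not needed), and your monotonicity argument for the case $\sigma_i^2 < \sigmamin^2$ correctly covers the rank-deficient situation $\sigma_i^2 = 0$, which is relevant here since the paper notes that empirical M/EEG covariance matrices are singular. As for what each approach buys: the paper's KKT verification is shorter once the conic KKT framework is granted and produces an explicit dual certificate; yours is elementary and self-contained, and additionally yields uniqueness of the minimizer through strict convexity of $-\log\det$ on the positive definite cone, a point the paper's phrasing ``a solution'' leaves open.
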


\Cref{thm:min_in_Sigma_1} is very intuitive, the solution of the smoothed optimization problem \eqref{eq:min_Sigma_1_reprop} is the solution of the non-smoothed problem, where the eigenvalues of the solution have been clipped to satisfy the constraint. Let us proove this result.

\begin{proof}\label{prf:thm_min_in_Sigma_1}
    The KKT conditions of \Cref{eq:min_Sigma_1_reprop} for conic programming (see \citealt[p. 267]{Boyd_Vandenberghe04}) state that the optimum in the primal $\Sigmaopt^{-1}$ and the optimum in the dual $\Gammaopt$ should satisfy:
    \begin{align*}
        \Sigma^{\EMP} - \Sigmaopt + \Gammaopt &= 0 \enspace,
        &\Gammaopt^{\top} (\Sigmaopt^{-1} - \frac{1}{\sigmamin^2}\Id_n) = 0 \enspace, \\
        \Gammaopt &\in \cS_{+}^n \enspace,
        &0 \prec \Sigmaopt^{-1} \preceq \frac{1}{\sigmamin^2} \enspace.
    \end{align*}

    Since \Cref{eq:min_Sigma_1_reprop} is convex these conditions are also sufficient. Let us propose a primal-dual point $(\Sigmaopt^{-1}, \Gammaopt)$ satisfying the KKT conditions.
    Let $\Sigma^{\EMP} = U \diag(\sigma_i^2) U^{\top}$ be an eigenvalue decomposition of $\Sigma^{\EMP}$, one can check that
    \begin{align*}
        \Sigmaopt^{-1}
        &= U \diag({\frac{1}{\sigma_i^2 \vee \sigmamin^2}}) U^{\top} \enspace,\\
        \Gammaopt
        &= U \diag(\sigma_i^2 \vee \sigmamin^2 - \sigma_i^2) U^{\top} \enspace.
    \end{align*}
    verify the KKT conditions, leading to the desired result.
\end{proof}


\subsection{\mrcefull (\mrceo)}
\label{app_sub:mrce}

\mrceo \citep{Rothman_Levina_Zhu10} jointly estimates the regression coefficients (assumed to be sparse) and the precision matrix (\ie the inverse of the covariance matrix), which is supposed to be sparse as well.
Originally in \cite{Rothman_Levina_Zhu10} the sparsity enforcing term on the regression coefficients was an $\ell_1$-norm, which is not well suited for our problem, that is why in \Cref{app_sub_sub:mrce21}we introduce an $\ell_{2, 1}$ penalized version of \mrceo: \mrce.

\subsubsection{\mrcefull}
\label{app_sub_sub:mrce}

\mrce if defined as the solution of the following optimization problem:
\begin{problem}\label{eq:mrce}
  (\Betaopt^{\MRCEO}, \Sigmaopt^{\MRCEO}) \in
   \argmin_{
       \substack{\Beta \in \bbR^{p \times q} \\
       \Sigma^{-1} \succ 0 \quad \,\,}
       }
       \norm{\bar Y - X \Beta}_{\Sigma^{-1}}^2
       - \log \det (\Sigma^{-1})
       + \lambda  \norm{\Beta}_{1}
       + \mu \norm{\Sigma^{-1}}_{1} \enspace.
\end{problem}
\Cref{eq:mrce} is not convex, but can be solved heuristically (see \citealt{Rothman_Levina_Zhu10} for details) by coordinate descent doing soft-tresholdings for the udpdates in $\Beta_{j:} $'s and solving a Graphical Lasso \citep{Friedman_Hastie_Tibshirani08} for the update in $\Sigma^{-1}$.
The $\ell_{1}$-norm being not well suited for our problem, we introduce an $\ell_{2, 1}$ version of \mrceo.

\subsubsection{\mrcefull with $l_{2,1}$-norm (\mrce)}
\label{app_sub_sub:mrce21}

The $\ell_1$-norm penalization on the regression penalization $\Beta$ being not well suited for our problem, one can think to an $\ell_{2,1}$-penalized version of \mrceo defined as follow:
\begin{problem}\label{eq:mrce21}
  (\Betaopt^{\MRCE}, \Sigmaopt^{\MRCE}) \in
   \argmin_{
       \substack{\Beta \in \bbR^{p \times q} \\
       \Sigma^{-1} \succ 0 \quad \,\,}
       }
       \norm{\bar Y - X \Beta}_{\Sigma^{-1}}^2
       - \log \det (\Sigma^{-1})
       + \lambda  \norm{\Beta}_{2, 1}
       + \mu \norm{\Sigma^{-1}}_{1} \enspace.
\end{problem}

In order to combine \mrce to take take advantage of all the repetitions, one can think of the following estimator:
\begin{align}\label{eq:mrcer}
  (\Betaopt^{\MRCER}, \Sigmaopt^{\MRCER}) \in
   \argmin_{
       \substack{\Beta \in \bbR^{p \times q} \\
       \Sigma^{-1} \succeq  0 \quad \,\,}
       }
       \sum_{1}^r \norm{Y^{(l)} - X \Beta}_{\Sigma^{-1}}^2
       - \log \det (\Sigma^{-1})
       + \lambda  \norm{\Beta}_{2, 1}
       + \mu \norm{\Sigma^{-1}}_1 \enspace .
\end{align}
As for \Cref{app_sub_sub:mrce}, \Cref{eq:mrce21} (\resp \eqref{eq:mrcer}) can be heuristically solved through coordinate descent.

\paragraph{Update in $\Beta_{j:} $}
It is the same as \mle and \mler:
\begin{align}\label{eq:update_Bj_mle_bis}
  \Beta_{j:}
  = \BST \left(\Beta_{j:}
  + \frac{X_{:j}^{\top} \Sigma^{-1} (\bar Y - X\Beta)}{ \norm{X_{:j}}_{ \Sigma^{-1}}^2 } ,
  \frac{\lambda n q}{\norm{X_{:j}}_{ \Sigma^{-1}}^2 } \right)
  \enspace.
\end{align}

\paragraph{Update in $\Sigma^{-1}$}
Minimizing \eqref{eq:mrce21} in $\Sigma^{-1}$ amounts to solve:
\begin{align}\label{eq:glasso}
  \textrm{glasso}(\Sigma, \mu) \eqdef
   \argmin_{
       \substack{
       \Sigma^{-1} \succ 0 \quad \,\,}
       }
       \langle \Sigma^{\EMP}, \Sigma^{-1} \rangle
       - \log \det (\Sigma^{-1})
       + \mu \norm{\Sigma^{-1}}_{1} \enspace.
\end{align}

This is a well known and well studied problem \citep{Friedman_Hastie_Tibshirani08} that can be solved through coordinate descent.
For ourselves we used the \texttt{scikit-learn} \citep{Pedregosa_etal11} implementation of the Graphical Lasso.
Note that applying the Graphical Lasso on very ill conditioned empirical covariance matrix such as $\Sigma^{\EMP}$ is very long.
We thus only considered \mrcer were the Graphical Lasso is applied on $\Sigma^{\EMP, r}$.

\subsection{Algorithms summary}
\label{app_sub:summary_algos}

Each estimator, proposed or compared to is based on an optimization problem to solve.
Each optimization problem is solve with block coordinate descent, whether there is theoretical guarantees for it to converge toward a global minimum (for convex formulations, \us, \sgcl and \mtl), or not (for non-convex formulations, \mle, \mler, \mrcer).
The cost for the updates for each algorithm can be found in \Cref{tab:algorithms_costs}.
The formula for the updates in $\Beta_{j:} $'s and $\Snoise / \Sigma$ for each algorithm can be found in \Cref{tab:algorithms_updates}.

Let $\freq$ be the number of updates of $\Beta$ for one update of $\Snoise$ or $\Sigma$.

{\centering
\begin{table}[H]
  \caption{Algorithms cost in time summary}
  \label{tab:algorithms_costs}
  \centering
  \begin{tabular}{lccc}
    \toprule
      & CD epoch cost & convex & dual gap cost \\
    \midrule
     \sgclme
     & $\cO(\frac{n^3 + qn^2}{\freq} + pn^2 + pnq) $
     & yes &$\cO(rnq + p)$ \\
    \sgcl
    & $\cO(\frac{n^3 + qn^2}{\freq} + pn^2 + pnq) $
    & yes
    & $\cO(nq + p)$ \\
    \mler
    & $\cO(\frac{n^3 + qn^2}{\freq} + pn^2 + pnq) $
    & no
    & not convex \\
    \mle
    & $\cO(\frac{n^3 + qn^2}{\freq} + pn^2 + pnq) $
    & no
    & not convex  \\
    \mrcer
    & $\cO(\frac{\cO(\glso)}{\freq} + pn^2 + pnq) $
    & no
    & not convex \\
    \mtl  & $\cO(n p q)$ & yes & $\cO(nq + p)$ \\
    \bottomrule
  \end{tabular}
\end{table}
}

Recalling that $\Sigma^{\EMP} \eqdef \frac{1}{q} (\bar Y - X \Beta) (\bar Y - X \Beta)^{\top}$ and $\Sigma^{\EMP, r} \eqdef \frac{1}{rq} \sum_{l=1}^r (Y^{(l)} - X \Beta) (Y^{(l)} - X \Beta)^{\top}$, a summary of the updates in $\Snoise / \Sigma$ and $\Beta_{j:} $'s for each algorithm is given in \Cref{tab:algorithms_updates}.

\paragraph{Comments on \Cref{tab:algorithms_updates}}
The updates in $\Snoise / \Sigma$ and $\Beta_{j:} $'s are given in \Cref{tab:algorithms_updates}.
Although the updates may look similar, all the algorithms can lead to very different results, see \Cref{fig:real_left_audi,fig:real_data_right_audi,fig:real_data_left_visu,fig:real_data_right_visu}.

\begin{table}[H]
  \caption{Algorithms updates summary}
  \label{tab:algorithms_updates}
  \begin{tabular}{lcc}
    \toprule
      & update in $\Beta_{j:} $  & update in $\Snoise / \Sigma$ \\
    \midrule
     \us  & $  \Beta_{j:} = \BST \left(\Beta_{j:} + \frac{X_{:j}^{\top} \Snoise^{-1} (\bar Y - X\Beta)}{ \norm{X_{:j}}_{ \Snoise^{-1}}^2 } ,
     \frac{\lambda n q}{\norm{X_{:j}}_{ \Snoise^{-1}}^2 } \right)$  &$\Snoise = \SpCl(\Sigma^{\EMP, r}, \sigmamin)$\\
    \sgcl & $  \Beta_{j:} = \BST \left(\Beta_{j:} + \frac{X_{:j}^{\top} \Snoise^{-1} (\bar Y - X\Beta)}{ \norm{X_{:j}}_{ \Snoise^{-1}}^2 } ,
    \frac{\lambda n q}{\norm{X_{:j}}_{ \Snoise^{-1}}^2 } \right)$ & $\Snoise = \SpCl(\Sigma^{\EMP}, \sigmamin)$\\
    \mler
    & $  \Beta_{j:} = \BST \left(\Beta_{j:} + \frac{X_{:j}^{\top} \Sigma^{-1} (\bar Y - X\Beta)}{ \norm{X_{:j}}_{ \Sigma^{-1}}^2 } ,
    \frac{\lambda n q}{\norm{X_{:j}}_{ \Sigma^{-1}}^2 } \right)$ &
    $\Sigma = \Cl(\Sigma^{\EMP, r}, \sigmamin^2)$\\
    \mle
    & $  \Beta_{j:} = \BST \left(\Beta_{j:} + \frac{X_{:j}^{\top} \Sigma^{-1} (\bar Y - X\Beta)}{ \norm{X_{:j}}_{ \Sigma^{-1}}^2 } ,
    \frac{\lambda n q}{\norm{X_{:j}}_{ \Sigma^{-1}}^2 } \right)$
    & $\Sigma = \Cl(\Sigma^{\EMP}, \sigmamin^2)$ \\
    \mrcer
    & $  \Beta_{j:} = \BST \left(\Beta_{j:} + \frac{X_{:j}^{\top} \Sigma^{-1} (\bar Y - X\Beta)}{ \norm{X_{:j}}_{ \Sigma^{-1}}^2 } ,
    \frac{\lambda n q}{\norm{X_{:j}}_{ \Sigma^{-1}}^2 } \right)$
    & $\Sigma = \glso(\Sigma^{\EMP, r}, \mu)$ \\
    \mtl
    & $\Beta_{j:} = \BST \left(\Beta_{j:} + \frac{X_{:j}^{\top} (\bar Y - X\Beta)}{ \norm{X_{:j}}^2 } ,
    \frac{\lambda n q}{\norm{X_{:j}}^2 } \right)$
    & no update in $\Snoise / \Sigma$\\
    \bottomrule
  \end{tabular}
\end{table}


\section{Supplementary experiments}
\label{app_sec:supplementary_experiments}

In this section we describe the preprocessing pipeline used for the realistic and real data (see \Cref{app_sub:preprocessing}).
We then propose time comparison for all the algorithms (see \Cref{app_sub:time}).
And finally we expose supplementary experiments on real data (see \Cref{app_sub:real_right_audi,app_sub:real_left_visual,app_sub:real_right_audi}).

\subsection{Preprocessing steps for realistic and real data}
\label{app_sub:preprocessing}

When using multi-modal data without whitening, one has to rescale properly data, indeed data needs to have the same order of magnitude, otherwise some mode (for example EEG data) could be (almost) completely ignored by the optimization algorithm.
The preprocessing pipeline used to rescale realistic data (\Cref{fig:roc_curves_semi_real_influ_amp,fig:roc_curves_semi_real_influ_n_epochs}) and real data (\Cref{fig:real_left_audi,fig:real_data_right_audi,fig:real_data_left_visu,fig:real_data_right_visu}) is described in \Cref{alg:rescaling_process}.

{\fontsize{4}{4}\selectfont
\begin{algorithm}[H]
\SetKwInOut{Input}{input}
\SetKwInOut{Init}{init}
\SetKwInOut{Parameter}{param}
\caption{\textsc{Preprocessing steps for realistic and real data}}
\Input{\quad $X, Y^{(1)}, \dots, Y^{(r)}$}
    \tcp{rescale each line of $X$}
    \For{$i =1,\dots, n$}{
      \For{$l =1,\dots, r$}{
        $Y_{i : }^{(l)} \leftarrow Y_{i : }^{(l)} / \norm{X_{i :}}$
      }
      $X_{i : } \leftarrow X_{i : } / \norm{X_{i :}}$
      }
    \tcp{rescale each column of $X$}
    \For{$j =1,\dots, q$}{
      $X_{: j} \leftarrow X_{: j} / \norm{X_{: j}}$
      }

\Return{$X, Y^{(1)}, \dots, Y^{(r)}$}
\label{alg:rescaling_process}
\end{algorithm}
}

\subsection{Time comparison}
\label{app_sub:time}

The goal of this experiment is to show that our algorithm (\us) is as costly as a \mtlfull or other competitors (in the M/EEG context, \ie $n$ not too large).
The time taken by each algorithm to produce \Cref{fig:real_left_audi} (real data, left auditory stimulations) is given in \Cref{fig:time_comp}.
In this experiment the tolerance is set to $\text{tol=}10^{-3}$, the safe stopping criterion is $\text{duality gap} < \text{tol}$ (only available for convex optimization problems).
The heuristic stopping criterion is "if the objective do not decrease enough anymore then stop" \ie $\text{ if objective}(\Beta^{(t)}, \Sigma^{(t)}) - \text{objective}(\Beta^{(t+1)}, \Sigma^{(t+1)}) < \text{tol} / 10 \text{ then stop}$.
The safe stopping criterion is only available for \us, \sgcl and \mtl (it takes too much time \ie more than 10min for \sgcl to have a duality gap under the fixed tol, so we remove it).
\begin{figure}[H]
  \centering 
 \includegraphics[width=0.97\linewidth]{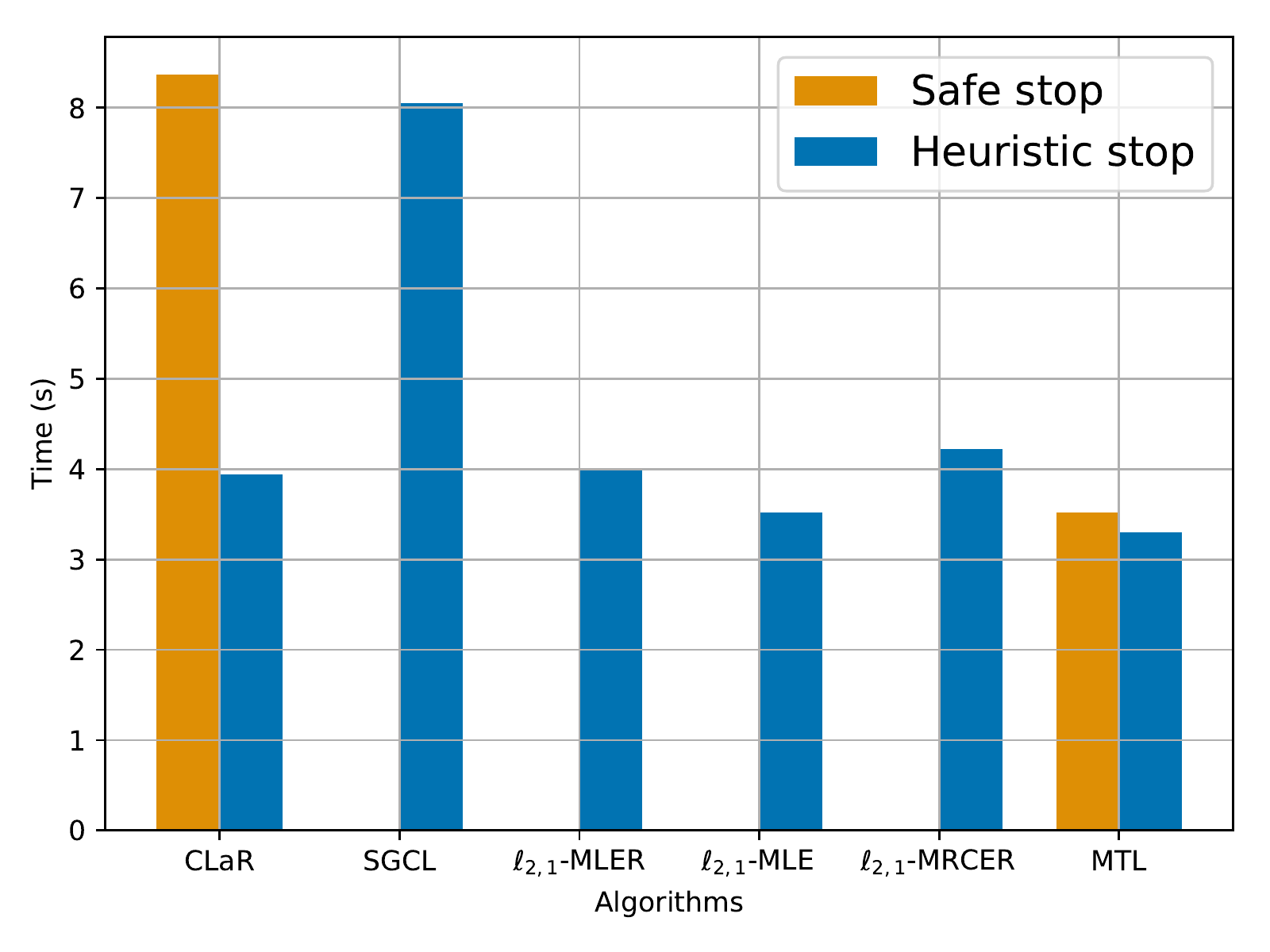}
 \caption{\textit{Time comparison, real data, $n=102$, $p=7498$, $q=54$, $r=56$} Time for each algorithm to produce \Cref{fig:real_left_audi}.}
  \label{fig:time_comp}
\end{figure}

\paragraph{Comment on \Cref{fig:time_comp}}
\Cref{fig:time_comp} shows that if we use the heuristic stopping criterion, \us  is as fast the other algorithm.
In addition \us has a safe stopping criterion which only take 2 to 3 more time than the heuristic one (less than 10sec).

\subsection{Supplementary experiments on real data: right auditory stimulations}
\label{app_sub:real_right_audi}

\begin{figure}[H]
  \centering 

  \begin{subfigure}{\figsize \textwidth}
  \includegraphics[width=68px,height=47px]{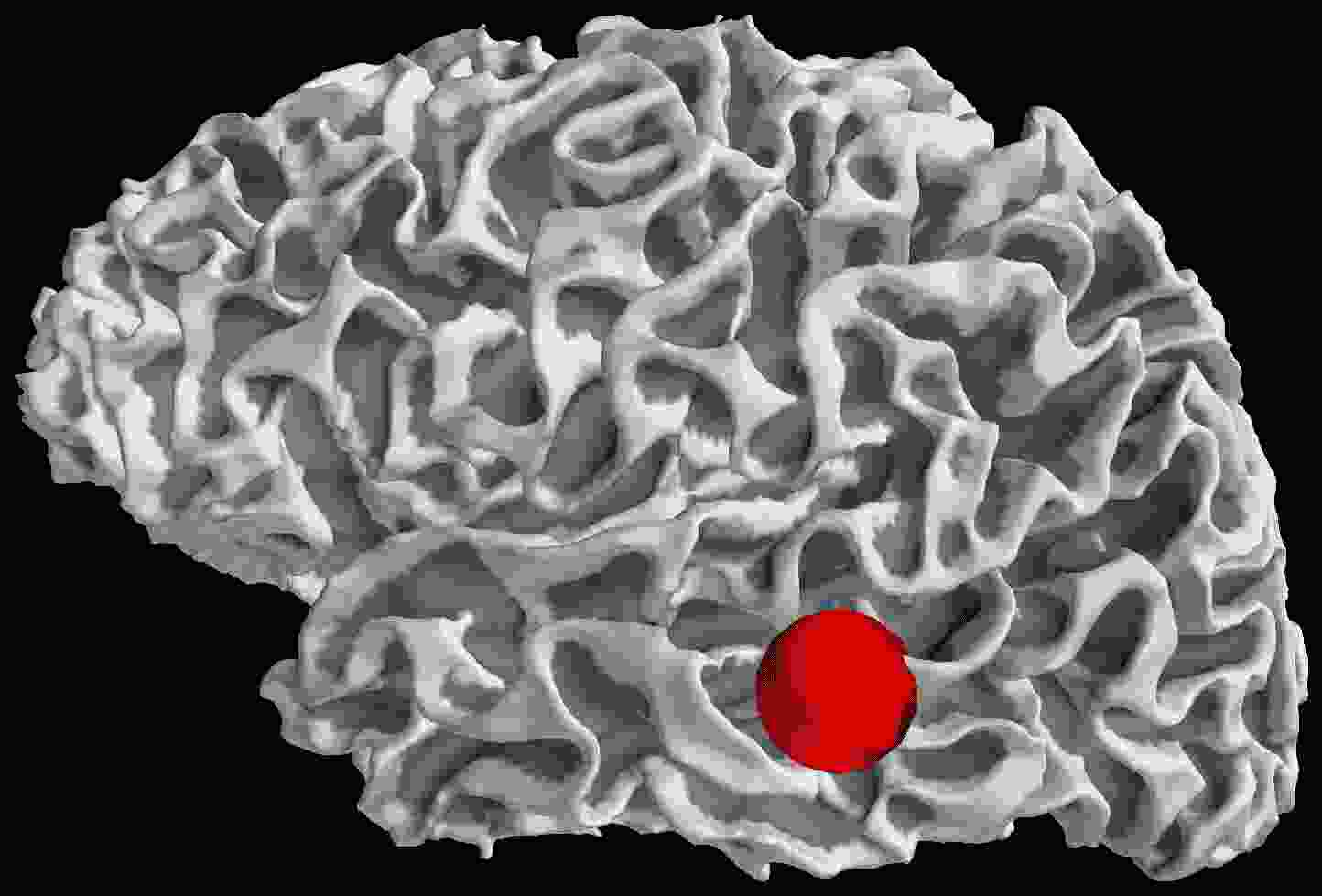}
  \end{subfigure}\hspace{\spacefig em}
  \begin{subfigure}{\figsize \textwidth}
  \includegraphics[width=68px,height=47px]{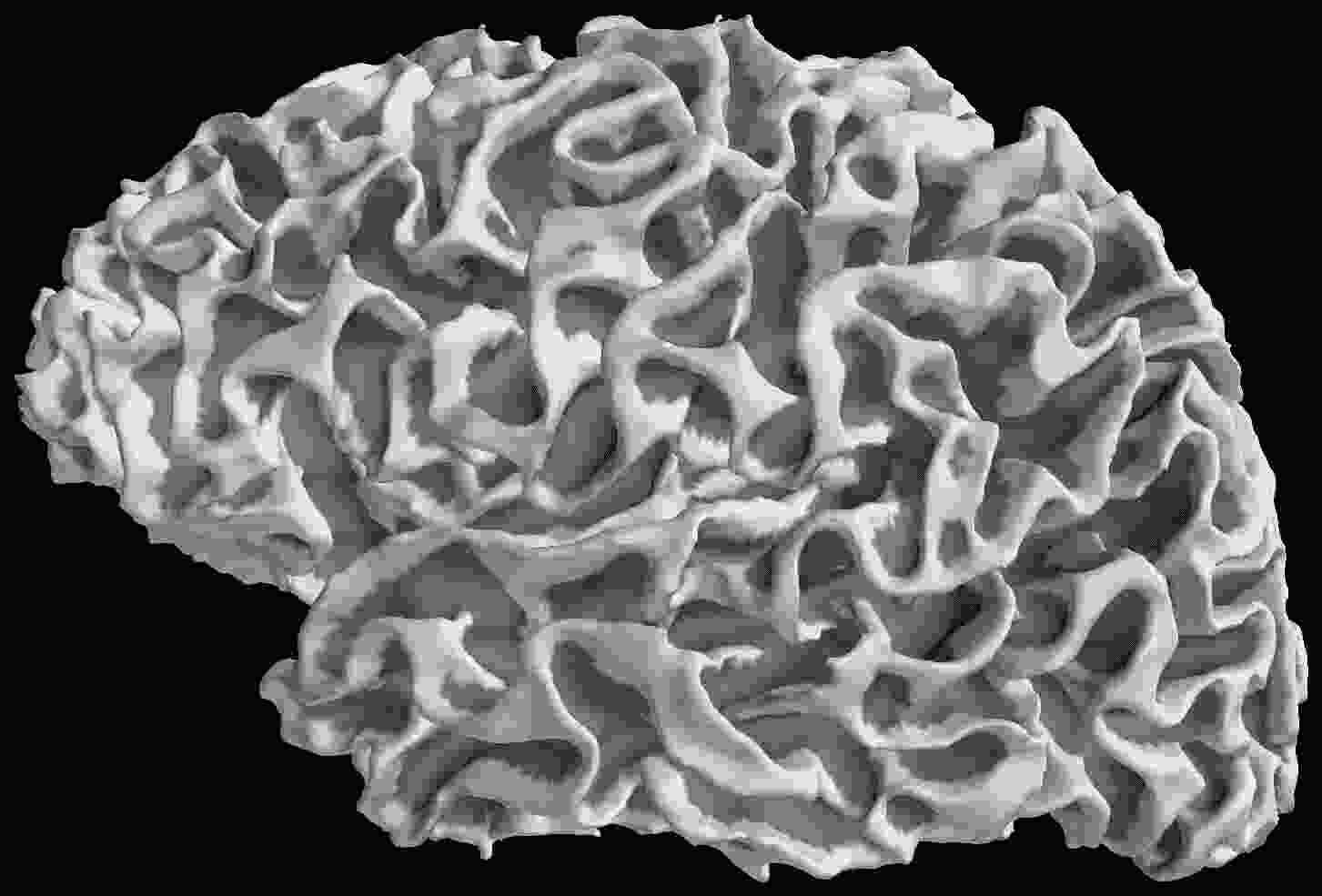}
  \end{subfigure}\hspace{\spacefig em}
  \begin{subfigure}{\figsize\textwidth}
    \includegraphics[width=68px,height=47px]{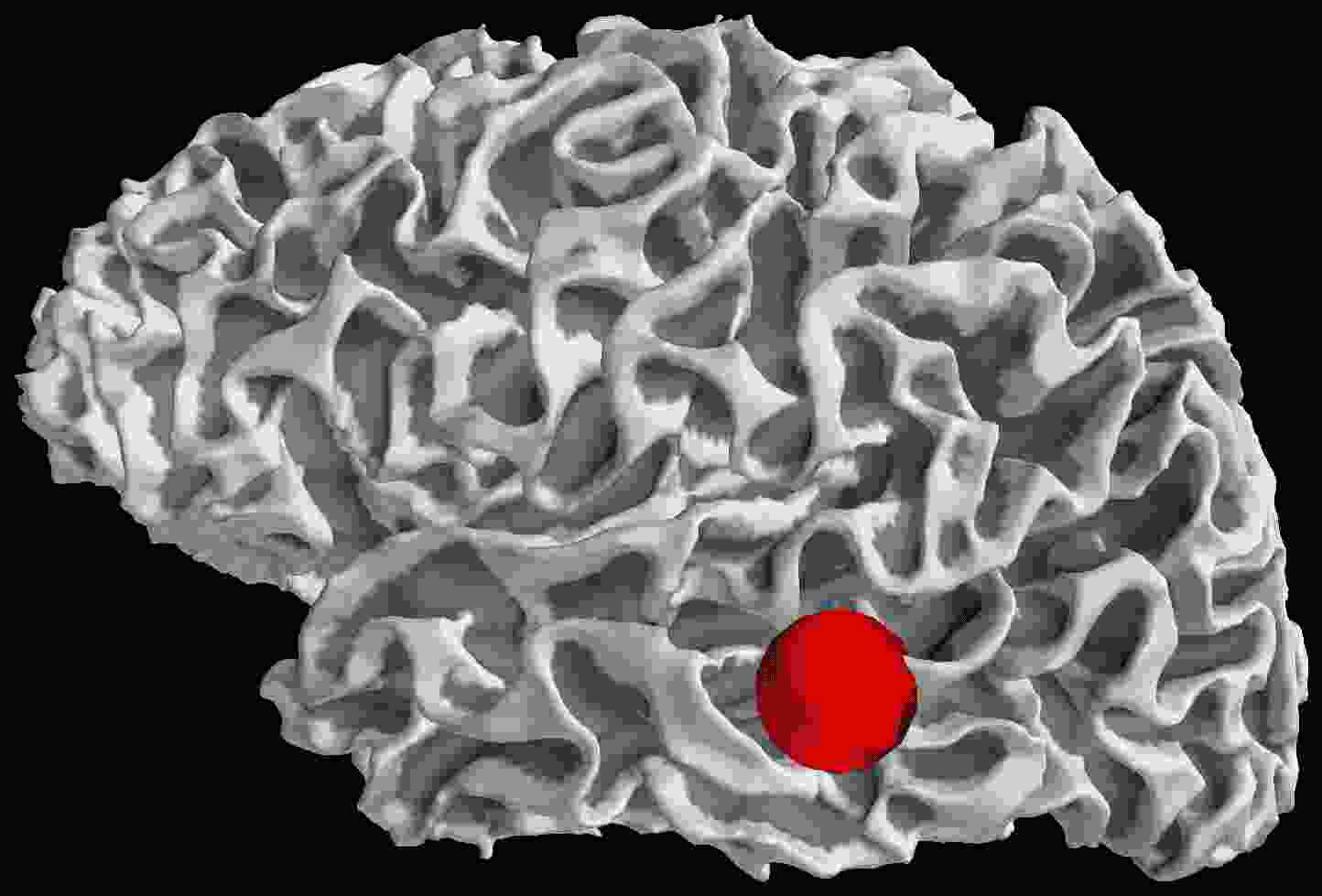}
    \end{subfigure}\hspace{\spacefig em}
  \begin{subfigure}{\figsize\textwidth}
  \includegraphics[width=68px,height=47px]{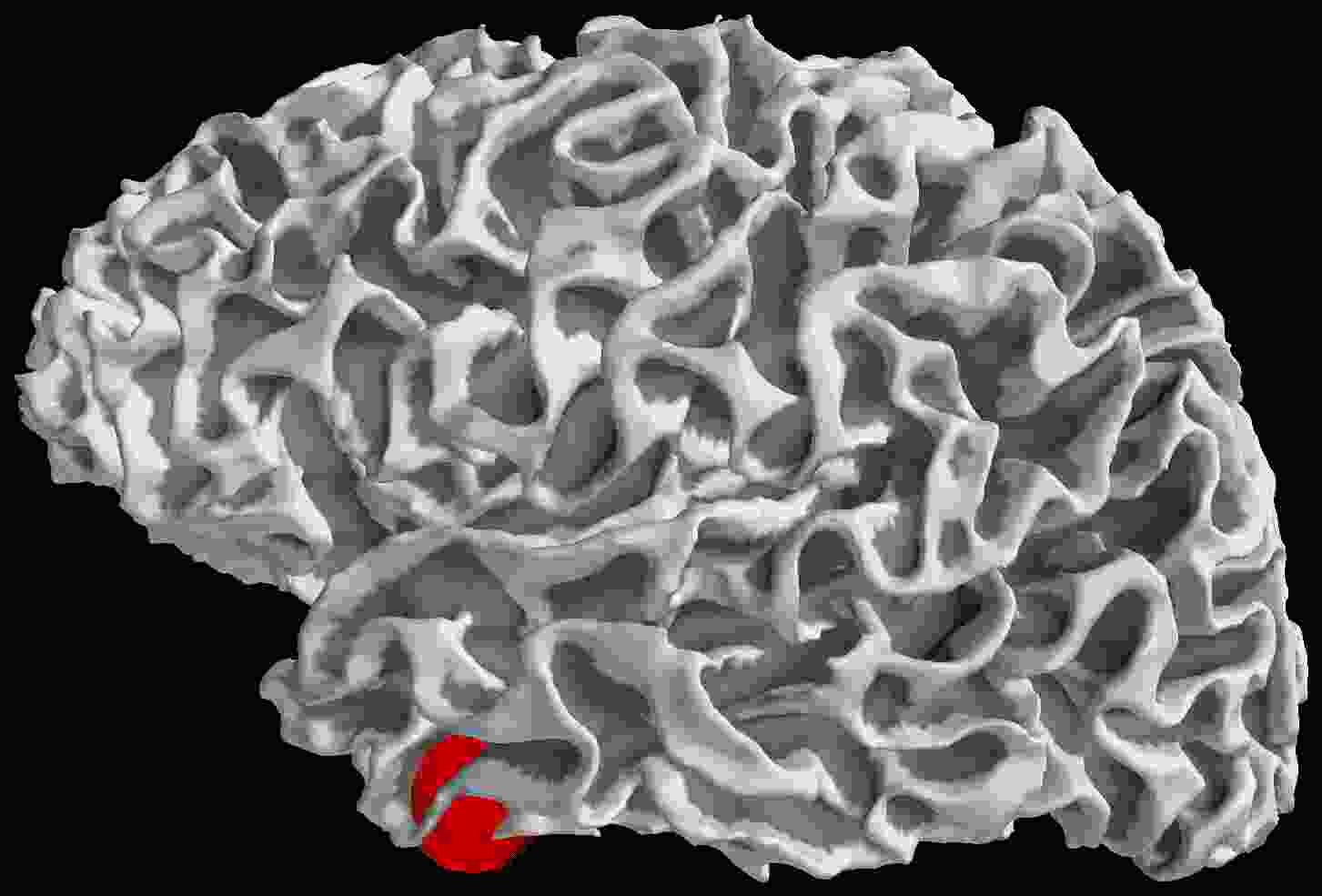}
  \end{subfigure}\hspace{\spacefig em}
  \begin{subfigure}{\figsize\textwidth}
  \includegraphics[width=68px,height=47px]{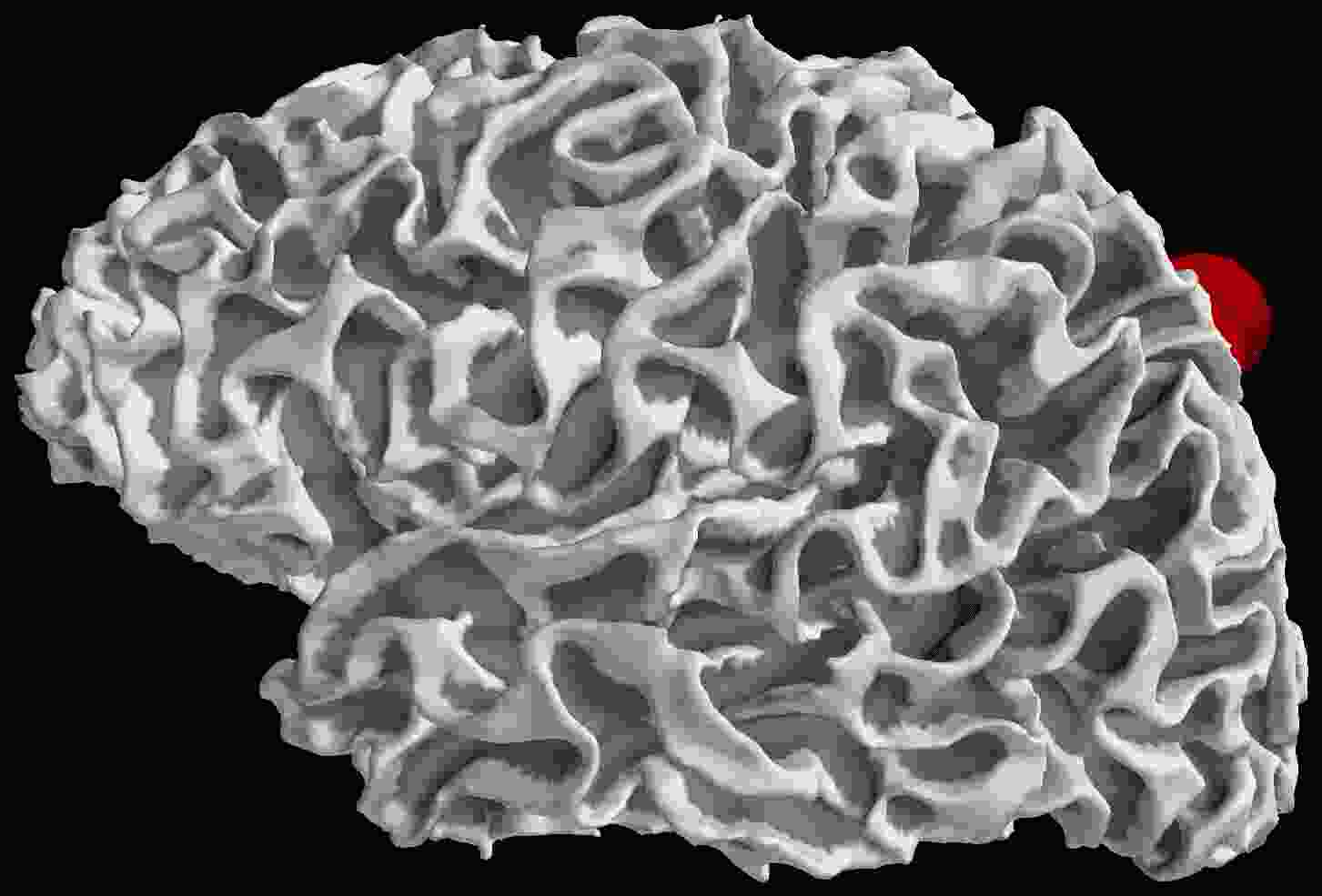}
  \end{subfigure}\hspace{\spacefig em}
  \begin{subfigure}{\figsize \textwidth}
    \includegraphics[width=68px,height=47px]{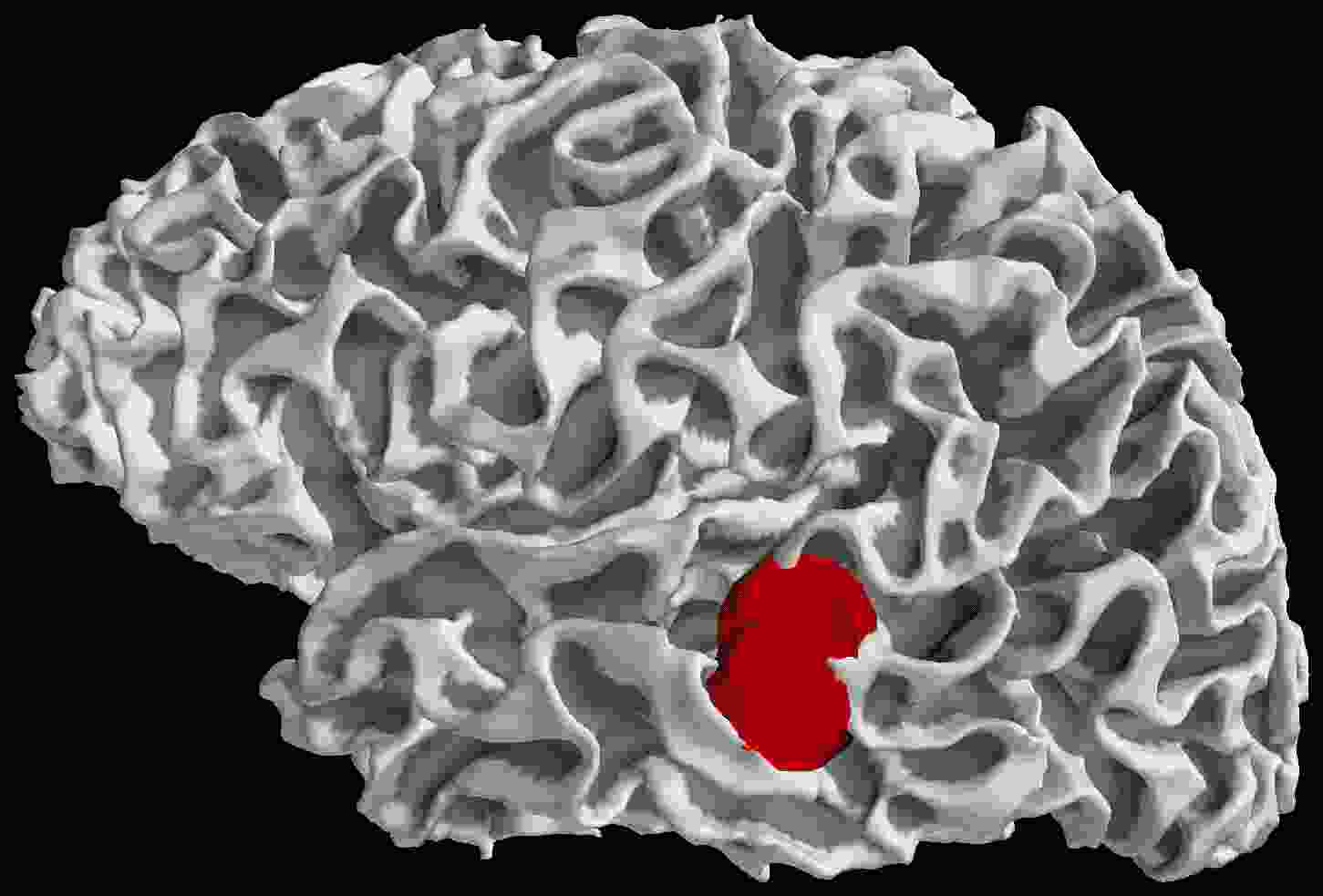}
    \end{subfigure}\hspace{\spacefig em}

  \vspace{-0.1em}
  \begin{subfigure}{\figsize\textwidth}
    \includegraphics[width=68px,height=47px]{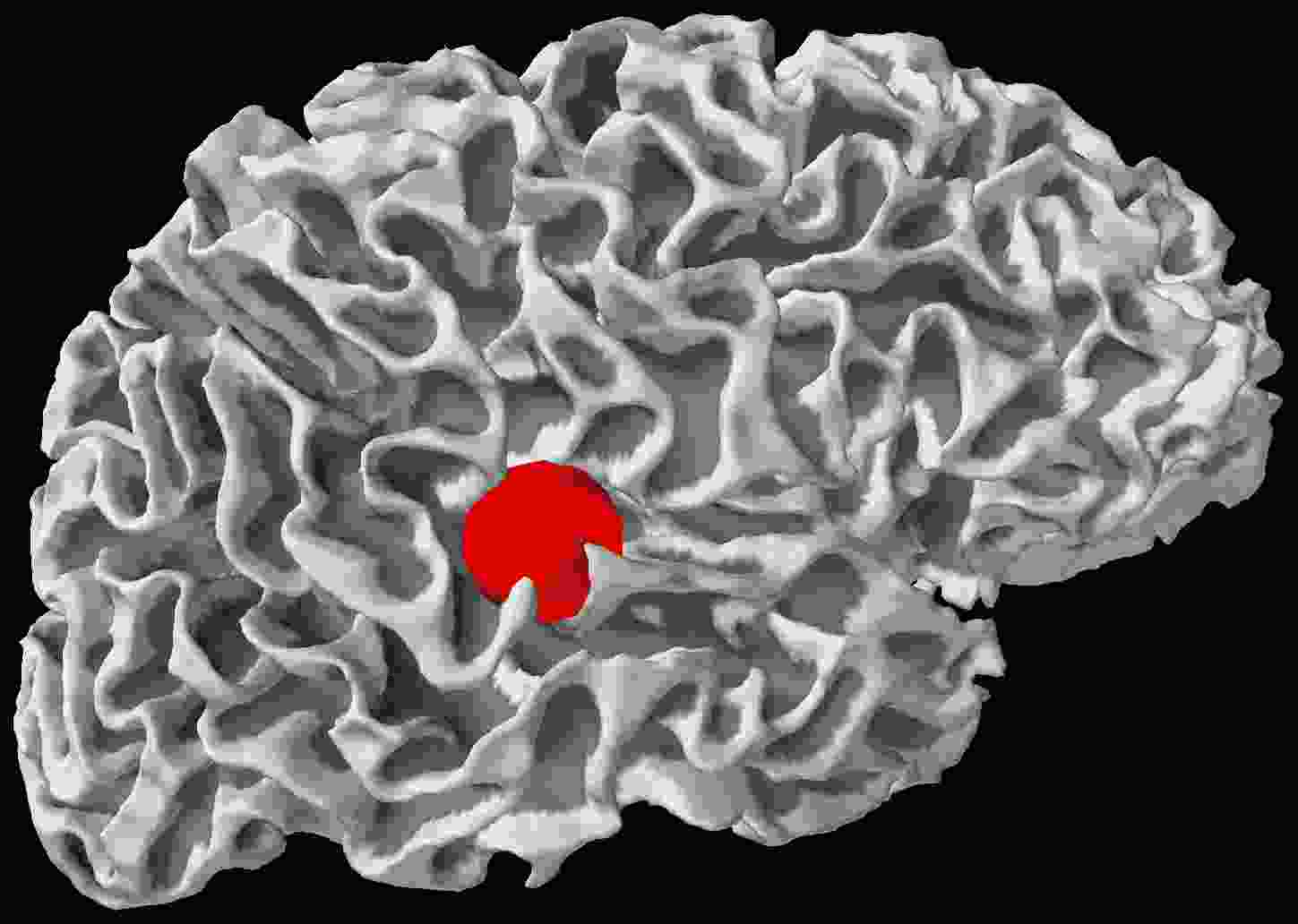}
    \caption{\us}
  \end{subfigure}\hspace{\spacefig em}
  \begin{subfigure}{\figsize\textwidth}
    \includegraphics[width=68px,height=47px]{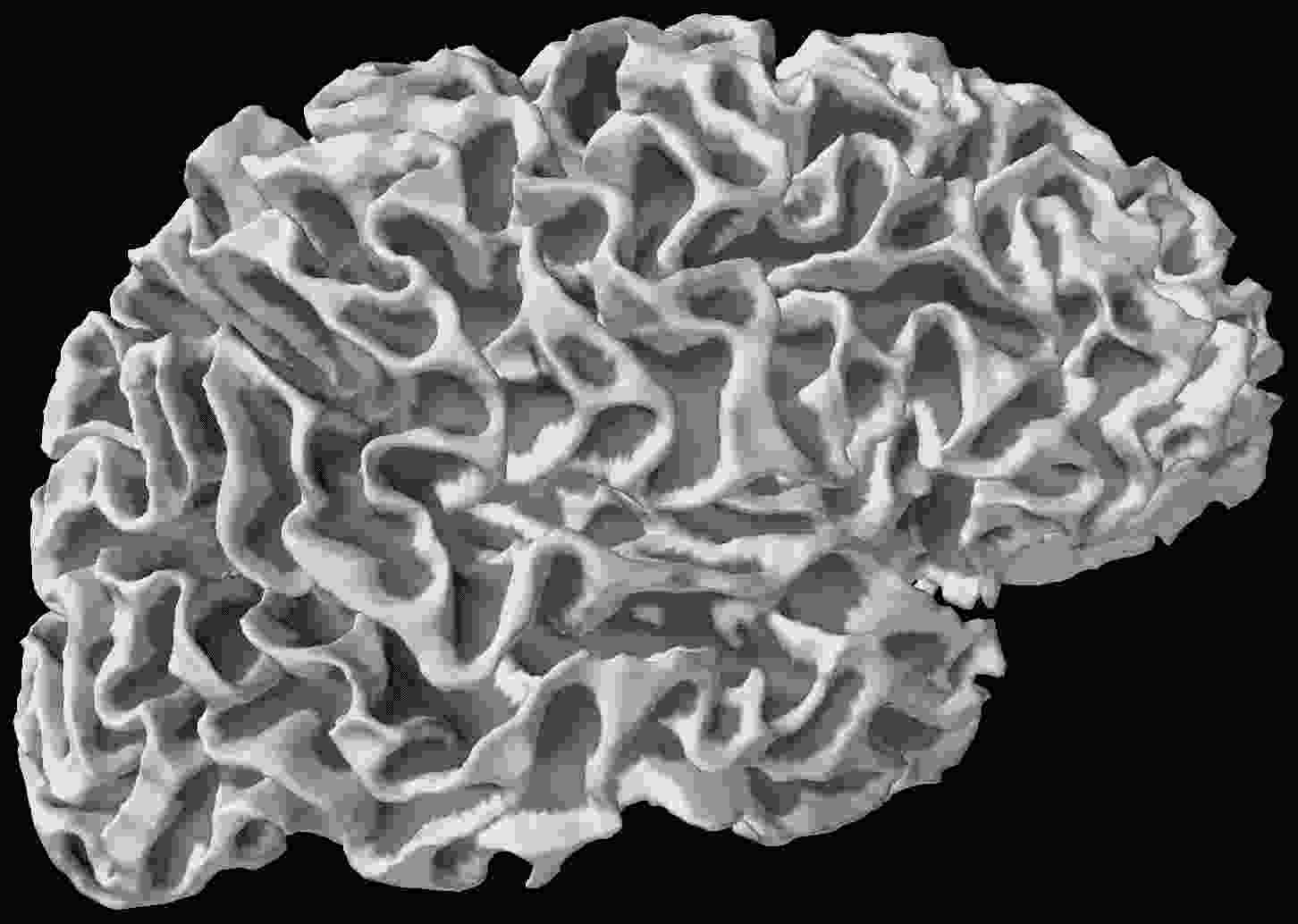}
    \caption{\sgcl}
  \end{subfigure}\hspace{\spacefig em}
  \begin{subfigure}{\figsize\textwidth}
    \includegraphics[width=68px,height=47px]{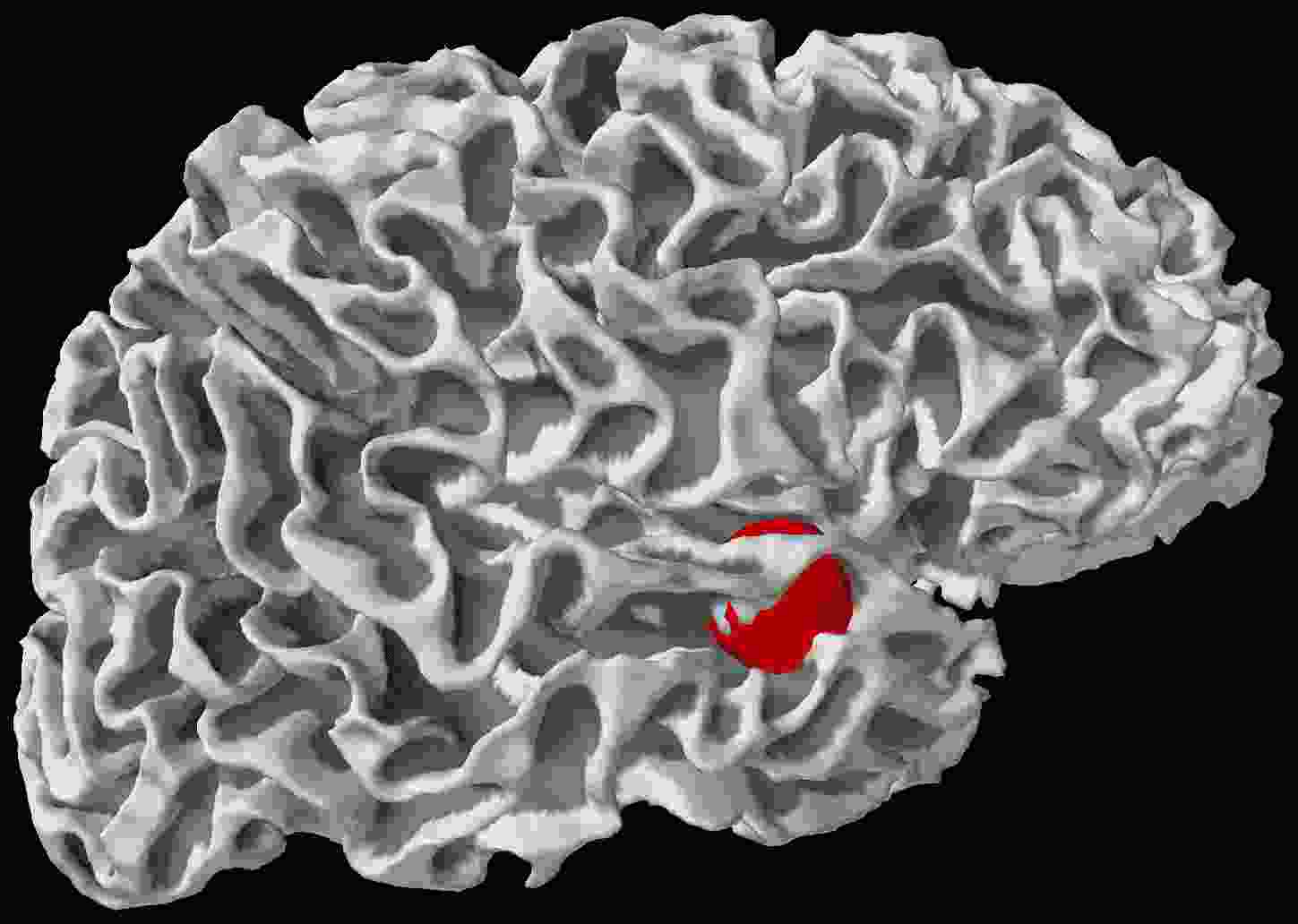}
    \caption{\mler}
    \end{subfigure}\hspace{\spacefig em}
  \begin{subfigure}{\figsize\textwidth}
    \includegraphics[width=68px,height=47px]{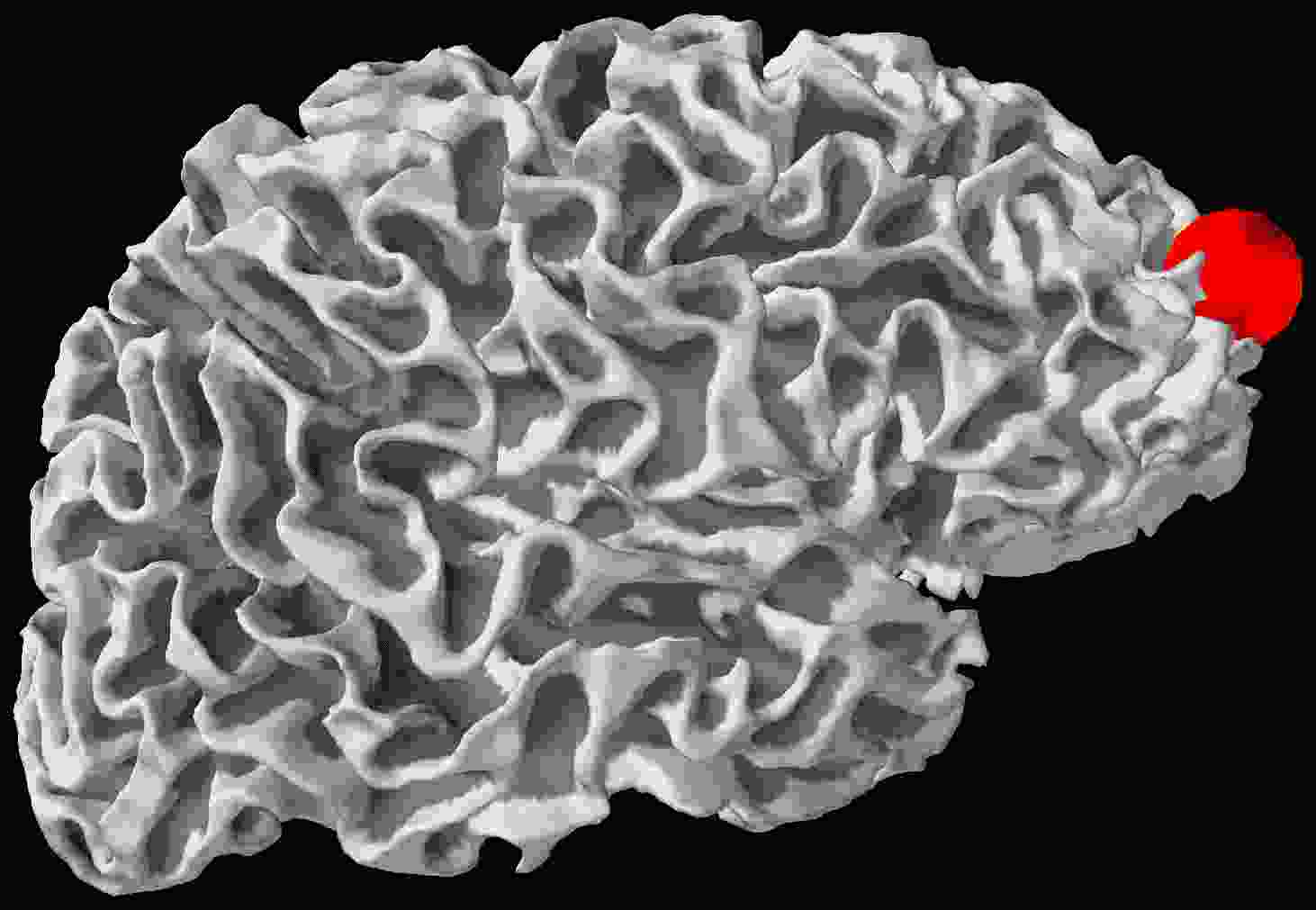}
    \caption{\mle}
    \end{subfigure}\hspace{\spacefig em}
  \begin{subfigure}{\figsize\textwidth}
    \includegraphics[width=68px,height=47px]{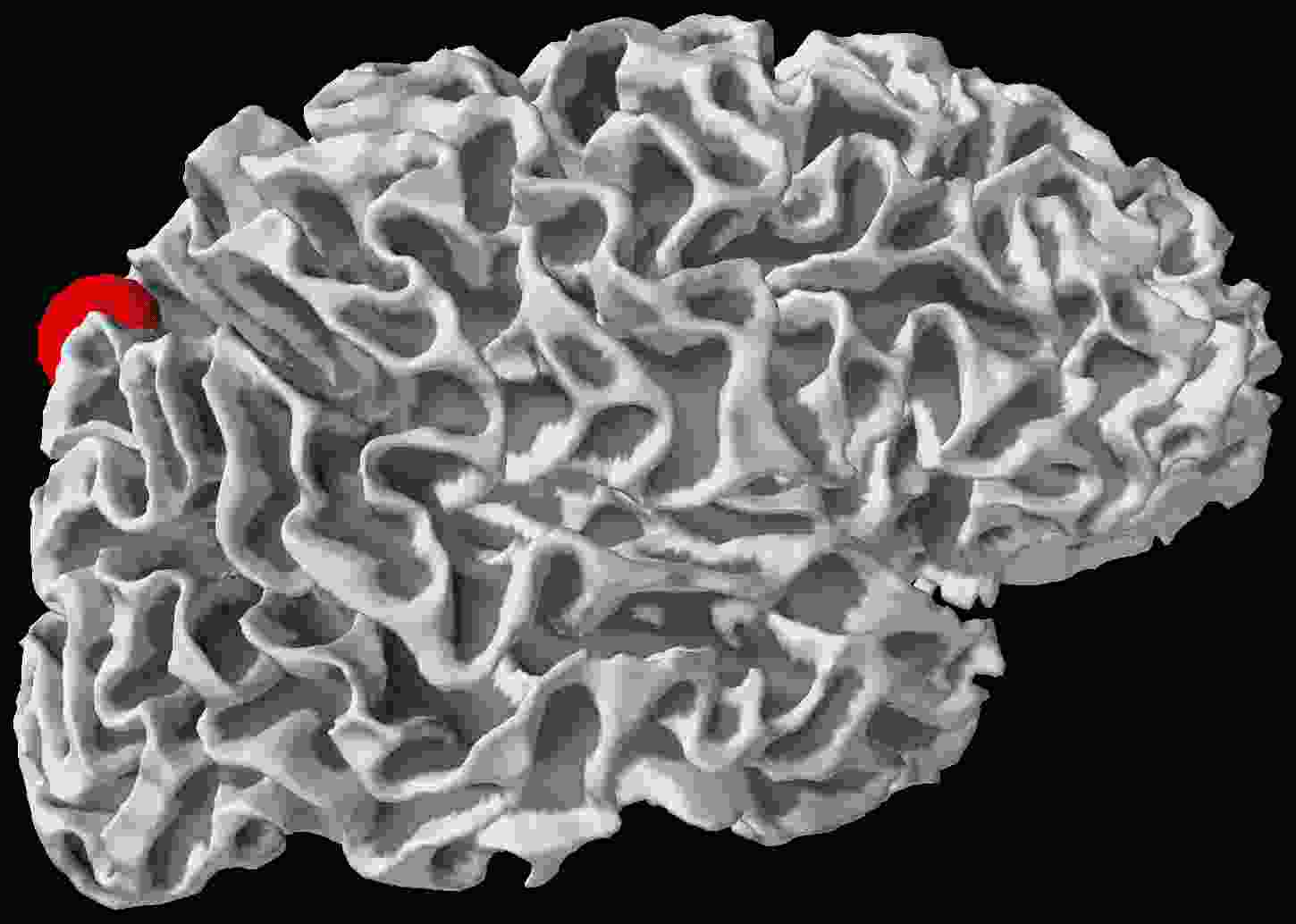}
    \caption{\mrcer}
    \end{subfigure}\hspace{\spacefig em}
  \begin{subfigure}{\figsize\textwidth}
    \includegraphics[width=68px,height=47px]{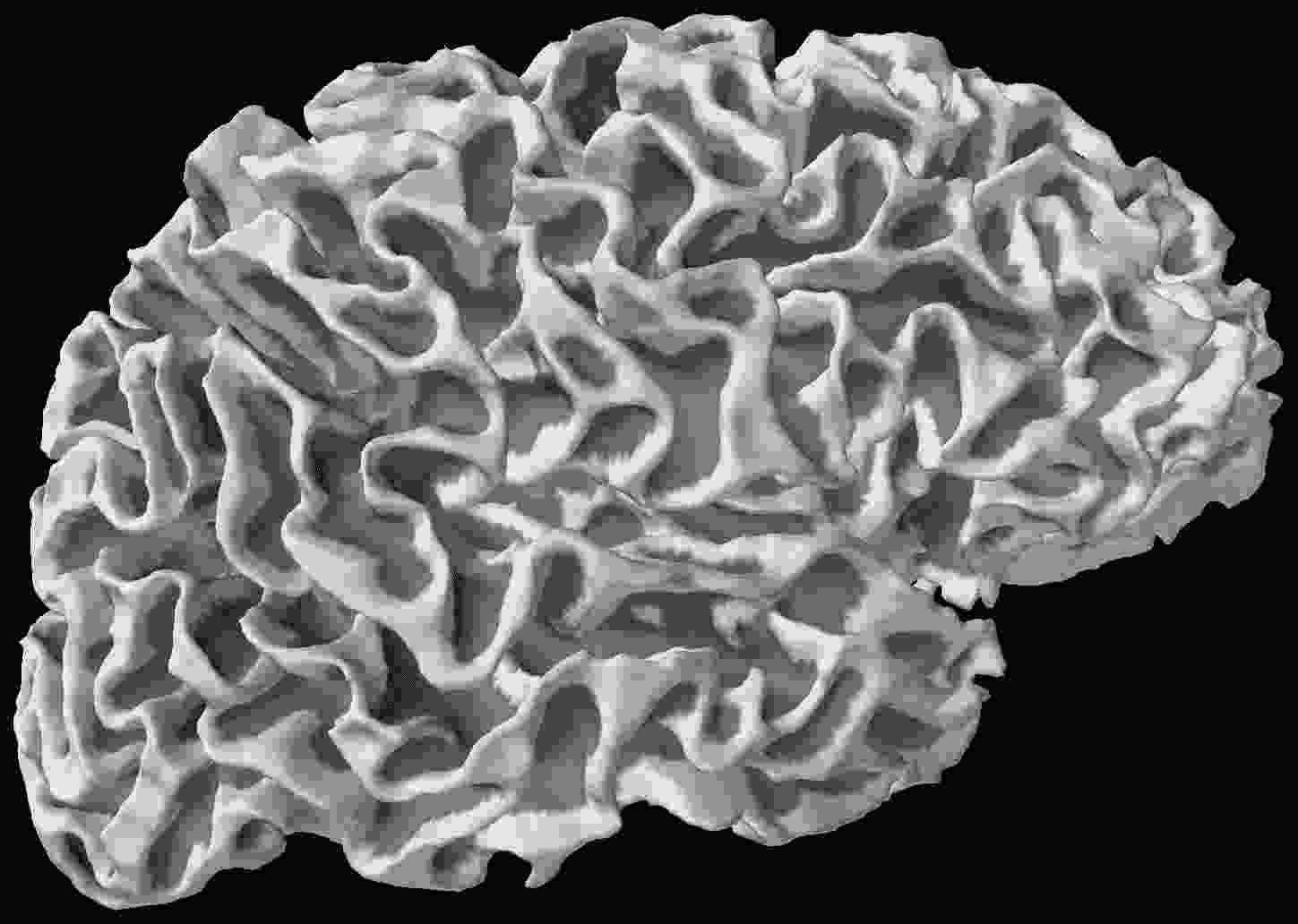}
    \caption{\mtl}
  \end{subfigure}\hspace{\spacefig em}
  \caption{\textit{Real data ($n=102$, $q=7498$, $q=76$, $r=65$)} Sources found in the left hemisphere (top) and the right hemisphere (bottom) after right auditory stimulations.}
  \label{fig:real_data_right_audi}
\end{figure}

\Cref{fig:real_data_right_audi,fig:real_data_right_audi_r_34} show the solution given by each algorithm on real data after right auditory stimulations.
As two sources are expected (one in each hemisphere, in bilateral auditory cortices), we vary $\lambda$ by dichotomy between $\lambda_{\max}$ (returning 0 sources) and a $\lambda_{\min}$ (returning more than 2 sources), until finding a lambda giving exactly 2 sources.
\Cref{fig:real_data_right_audi} (\resp \Cref{fig:real_data_right_audi_r_34}) shows the solution given by the algorithms taking in account all the repetitions (\resp only half of the repetitions).
When the number of repetitions is high (\Cref{fig:real_data_right_audi}) only \us and \mler find one source in each auditory cortex, \mtl does find sources only in one hemisphere, all the other algorithms fail by finding sources not in the auditory cortices at all.
Moreover when the number of repetitions is decreasing (\Cref{fig:real_data_right_audi_r_34}) \mler fails and only \us does find 2 sources, one in each hemisphere.
Once again \us is more robust and performs better, even when the number of repetitions is low.

\begin{figure}[H]
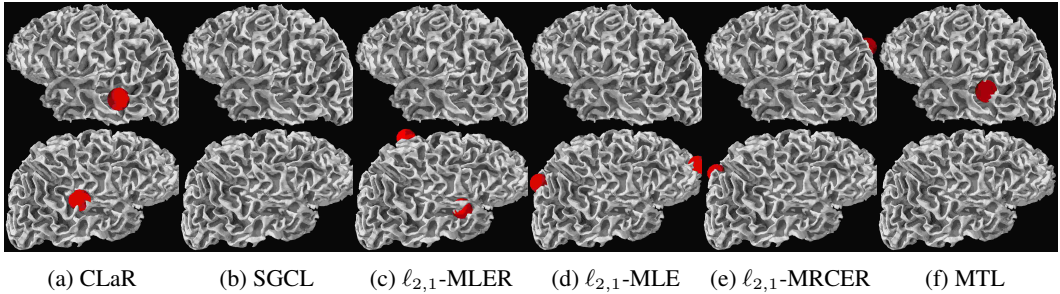

  \centering 

  \begin{subfigure}{\figsize \textwidth}
  \includegraphics[width=68px,height=47px]{compressed/lh_CLaR_event_id_2_decim_2-fs8}
  \end{subfigure}\hspace{\spacefig em}
  \begin{subfigure}{\figsize \textwidth}
  \includegraphics[width=68px,height=47px]{compressed/lh_SGCL_event_id_2_decim_2-fs8}
  \end{subfigure}\hspace{\spacefig em}
  \begin{subfigure}{\figsize\textwidth}
    \includegraphics[width=68px,height=47px]{compressed/lh_MLER_event_id_2_decim_2-fs8}
    \end{subfigure}\hspace{\spacefig em}
  \begin{subfigure}{\figsize\textwidth}
  \includegraphics[width=68px,height=47px]{compressed/lh_MLE_event_id_2_decim_2-fs8}
  \end{subfigure}\hspace{\spacefig em}
  \begin{subfigure}{\figsize\textwidth}
  \includegraphics[width=68px,height=47px]{compressed/lh_MRCER_event_id_2_decim_2-fs8}
  \end{subfigure}\hspace{\spacefig em}
  \begin{subfigure}{\figsize \textwidth}
    \includegraphics[width=68px,height=47px]{compressed/lh_MTL_event_id_2_decim_2-fs8}
    \end{subfigure}\hspace{\spacefig em}

  \vspace{-0.1em}
  \begin{subfigure}{\figsize\textwidth}
    \includegraphics[width=68px,height=47px]{compressed/rh_CLaR_event_id_2_decim_2-fs8}
    \caption{\us}
  \end{subfigure}\hspace{\spacefig em}
  \begin{subfigure}{\figsize\textwidth}
    \includegraphics[width=68px,height=47px]{compressed/rh_SGCL_event_id_2_decim_2-fs8}
    \caption{\sgcl}
  \end{subfigure}\hspace{\spacefig em}
  \begin{subfigure}{\figsize\textwidth}
    \includegraphics[width=68px,height=47px]{compressed/rh_MLER_event_id_2_decim_2-fs8}
    \caption{\mler}
    \end{subfigure}\hspace{\spacefig em}
  \begin{subfigure}{\figsize\textwidth}
    \includegraphics[width=68px,height=47px]{compressed/rh_MLE_event_id_2_decim_2-fs8}
    \caption{\mle}
    \end{subfigure}\hspace{\spacefig em}
  \begin{subfigure}{\figsize\textwidth}
    \includegraphics[width=68px,height=47px]{compressed/rh_MRCER_event_id_2_decim_2-fs8}
    \caption{\mrcer}
    \end{subfigure}\hspace{\spacefig em}
  \begin{subfigure}{\figsize\textwidth}
    \includegraphics[width=68px,height=47px]{compressed/rh_MTL_event_id_2_decim_2-fs8}
    \caption{\mtl}
  \end{subfigure}\hspace{\spacefig em}
  \caption{\textit{Real data ($n=102$, $q=7498$, $q=76$, $r=33$)} Sources found in the left hemisphere (top) and the right hemisphere (bottom) after right auditory stimulations.}
  \label{fig:real_data_right_audi_r_34}
\end{figure}


\subsection{Supplementary experiments on real data: left visual stimulations}
\label{app_sub:real_left_visual}

\begin{figure}[H]
  \centering 

  \begin{subfigure}{\figsize \textwidth}
  \includegraphics[width=68px,height=47px]{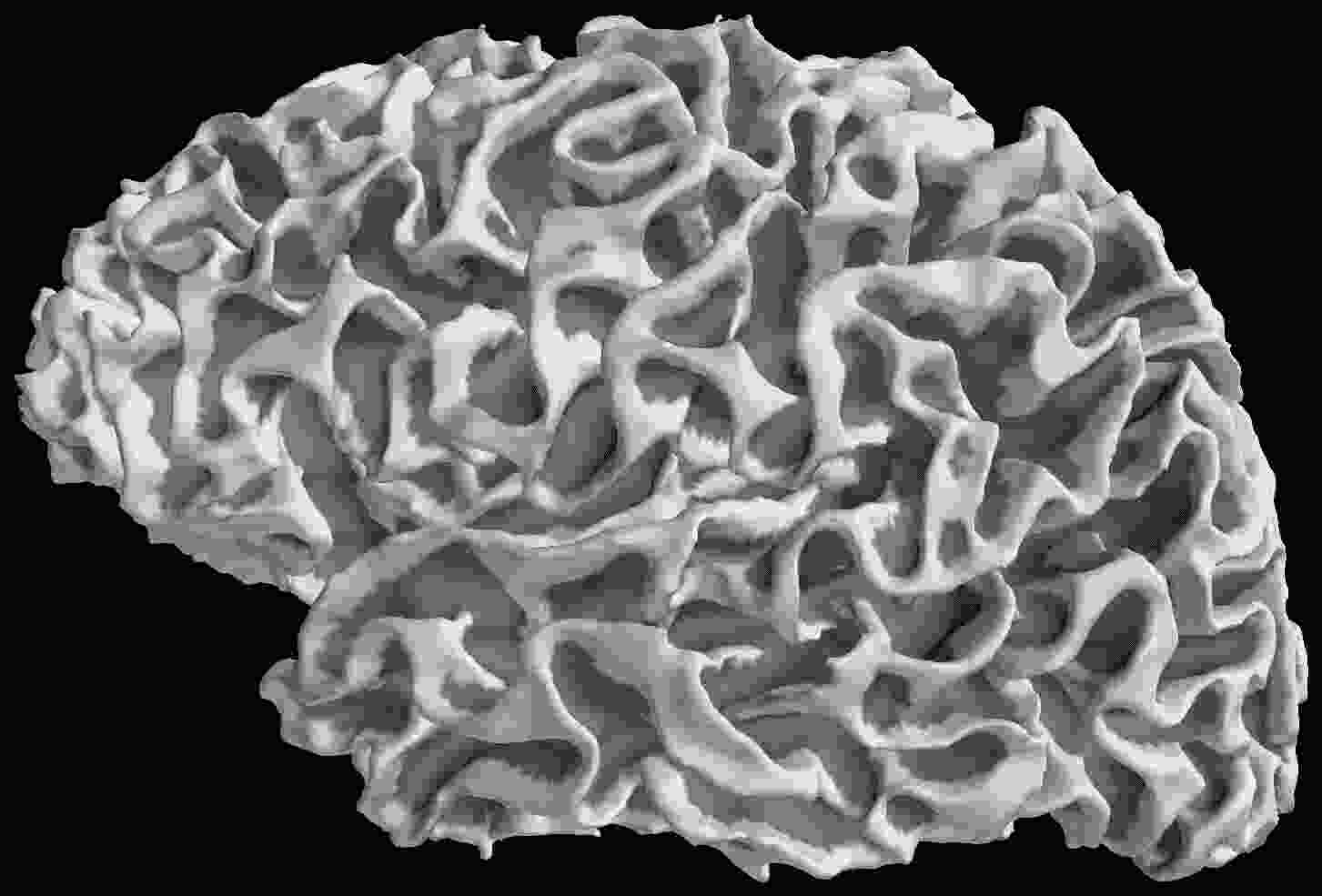}
  \end{subfigure}\hspace{\spacefig em}
  \begin{subfigure}{\figsize \textwidth}
  \includegraphics[width=68px,height=47px]{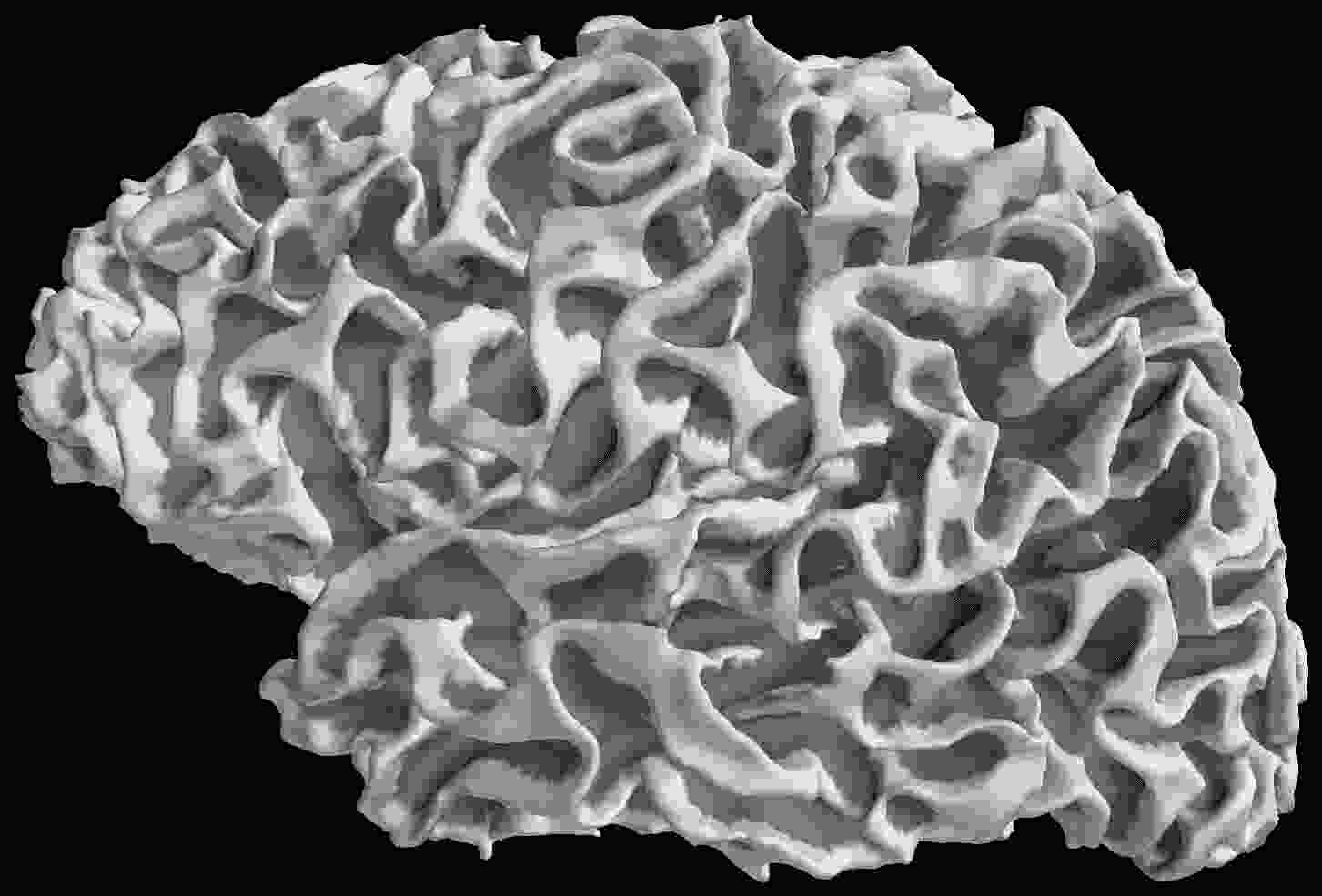}
  \end{subfigure}\hspace{\spacefig em}
  \begin{subfigure}{\figsize\textwidth}
    \includegraphics[width=68px,height=47px]{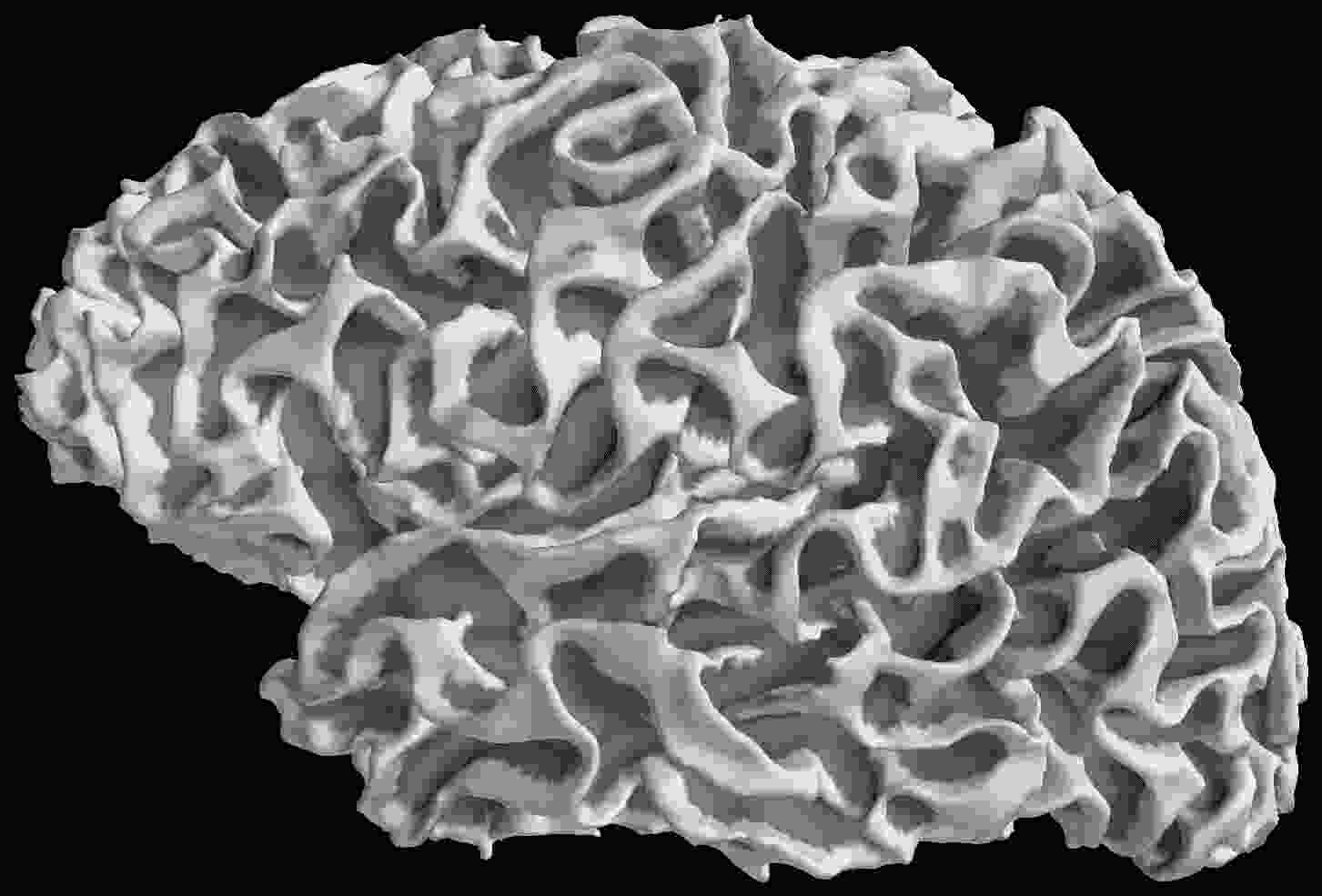}
    \end{subfigure}\hspace{\spacefig em}
  \begin{subfigure}{\figsize\textwidth}
  \includegraphics[width=68px,height=47px]{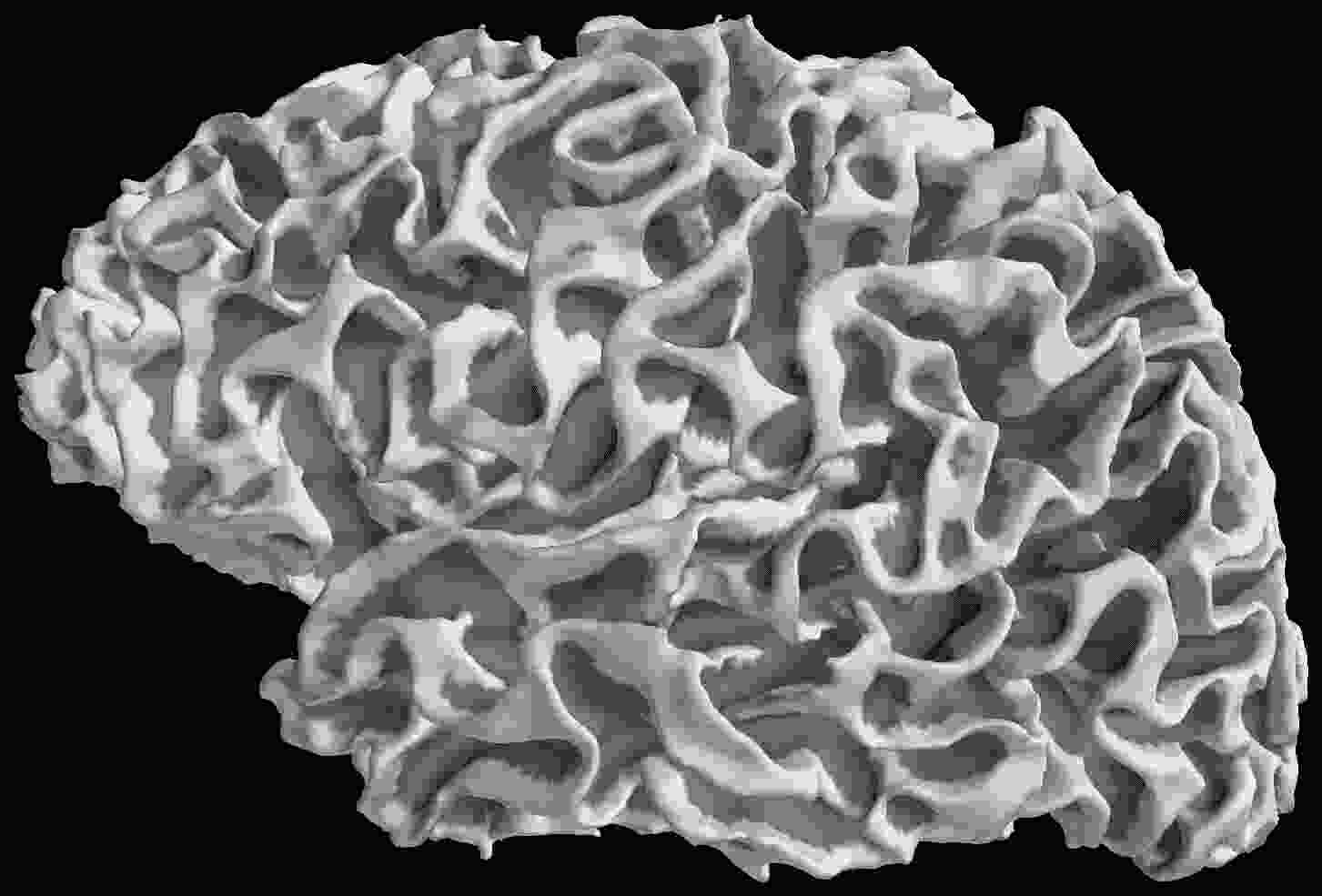}
  \end{subfigure}\hspace{\spacefig em}
  \begin{subfigure}{\figsize\textwidth}
  \includegraphics[width=68px,height=47px]{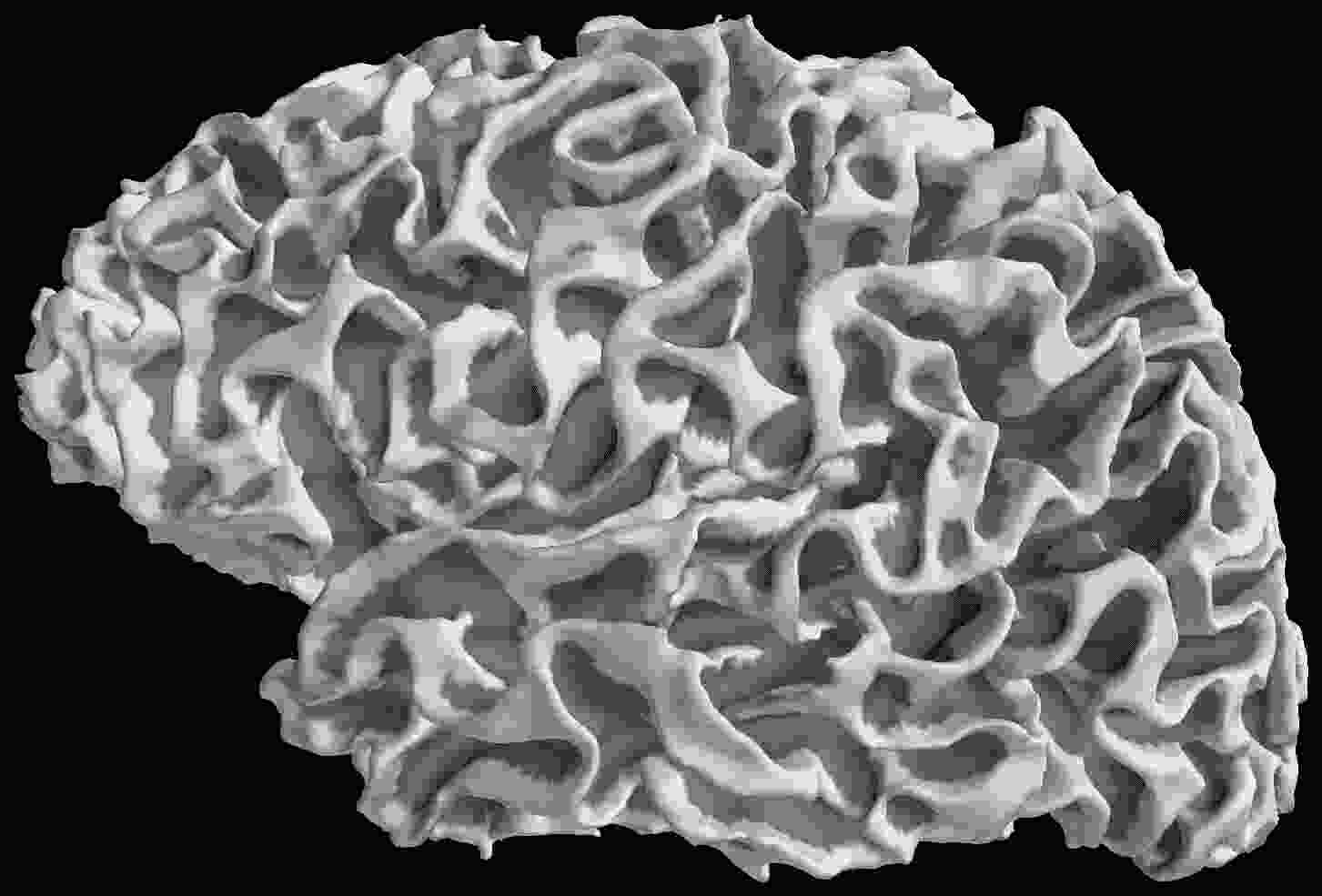}
  \end{subfigure}\hspace{\spacefig em}
  \begin{subfigure}{\figsize \textwidth}
    \includegraphics[width=68px,height=47px]{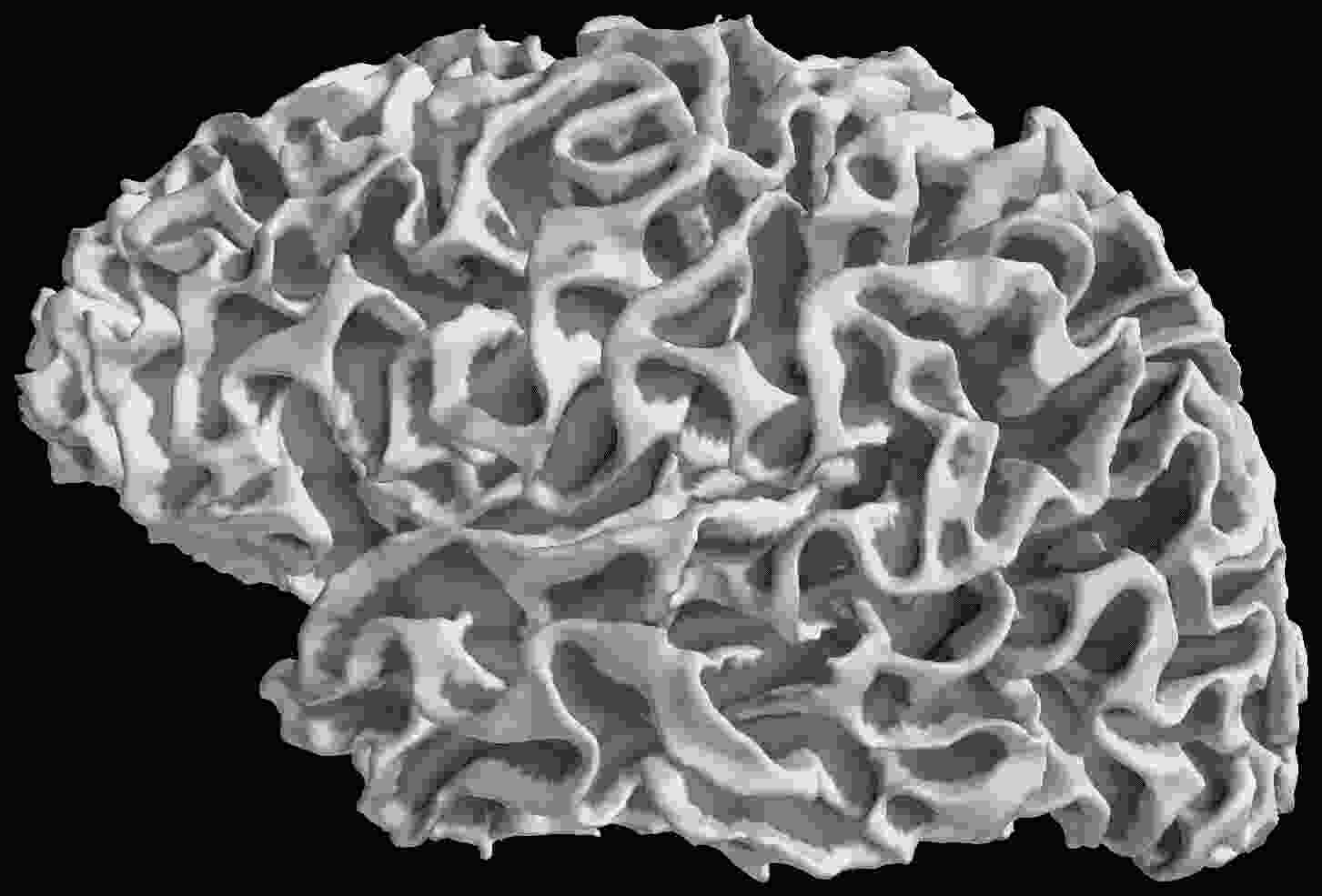}
    \end{subfigure}\hspace{\spacefig em}

  \vspace{-0.1em}
  \begin{subfigure}{\figsize\textwidth}
    \includegraphics[width=68px,height=47px]{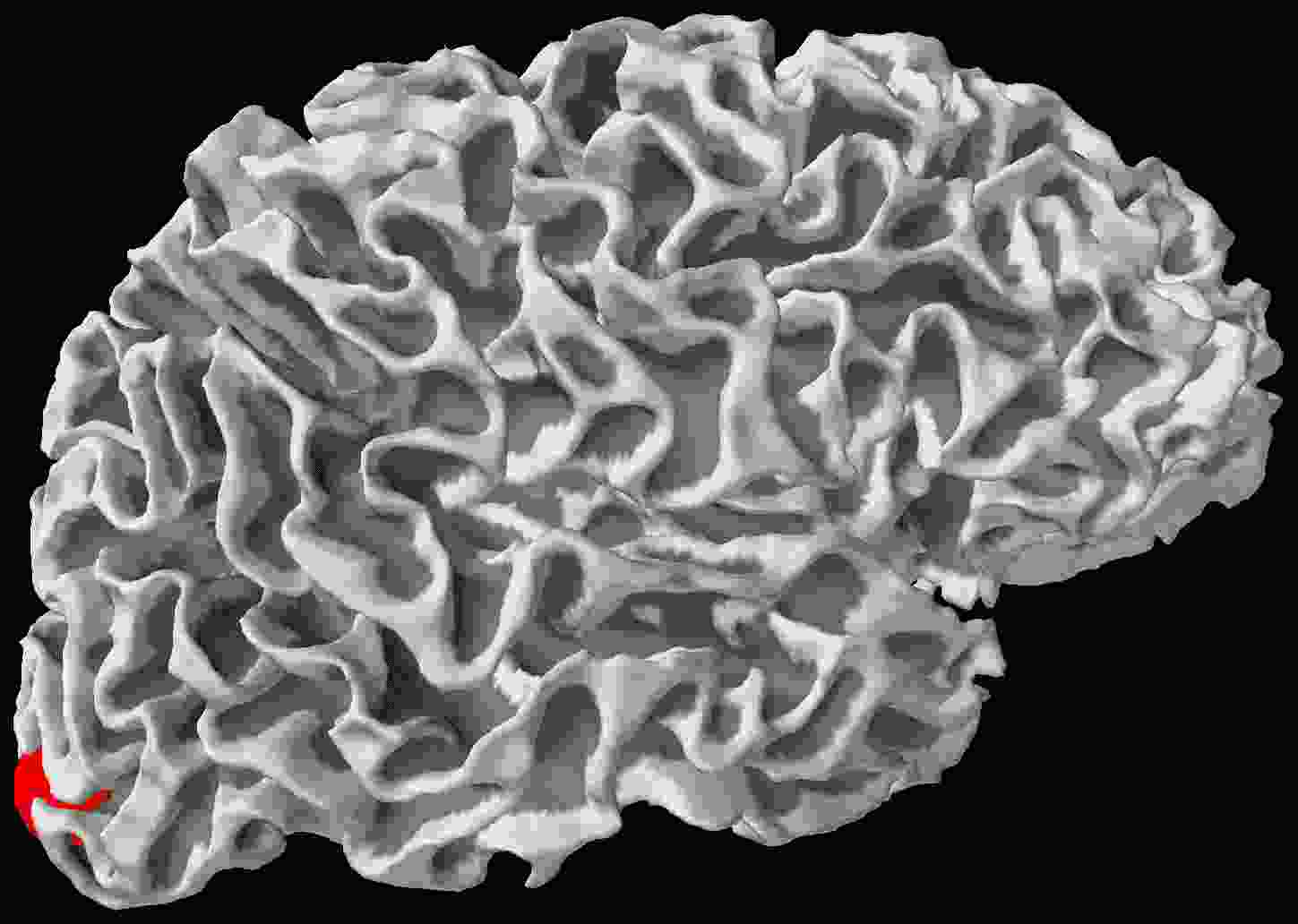}
    \caption{\us}
  \end{subfigure}\hspace{\spacefig em}
  \begin{subfigure}{\figsize\textwidth}
    \includegraphics[width=68px,height=47px]{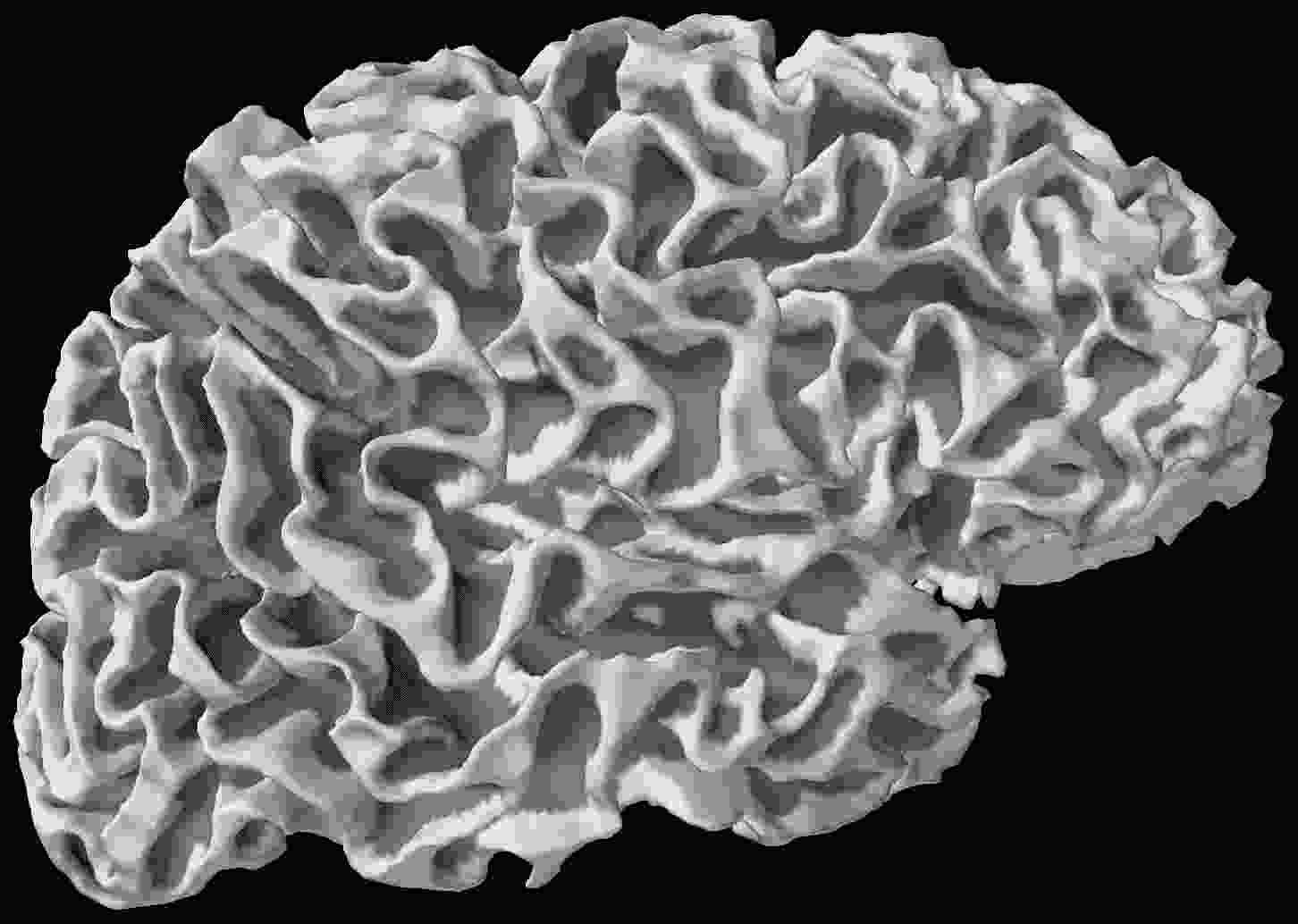}
    \caption{\sgcl}
  \end{subfigure}\hspace{\spacefig em}
  \begin{subfigure}{\figsize\textwidth}
    \includegraphics[width=68px,height=47px]{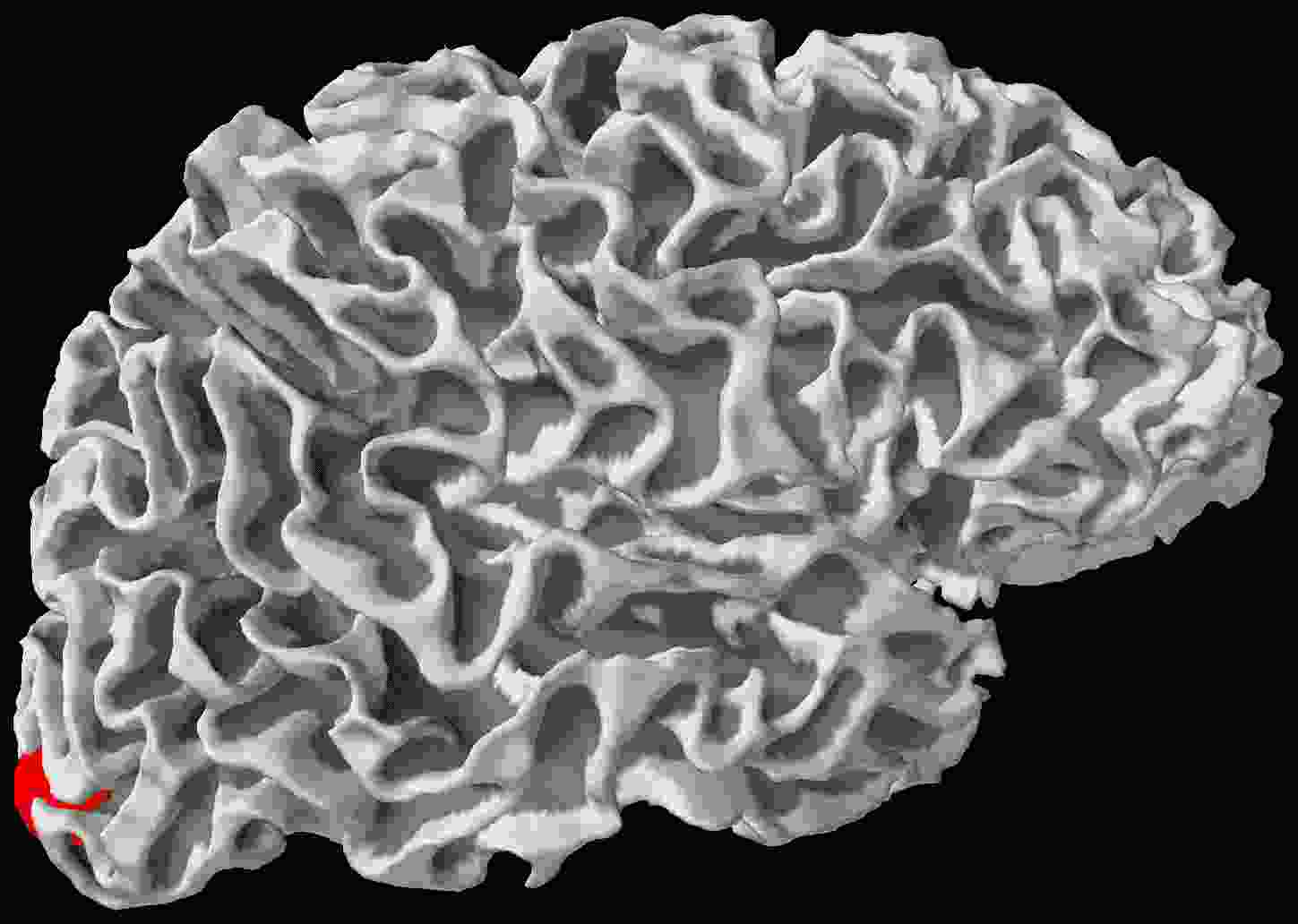}
    \caption{\mler}
    \end{subfigure}\hspace{\spacefig em}
  \begin{subfigure}{\figsize\textwidth}
    \includegraphics[width=68px,height=47px]{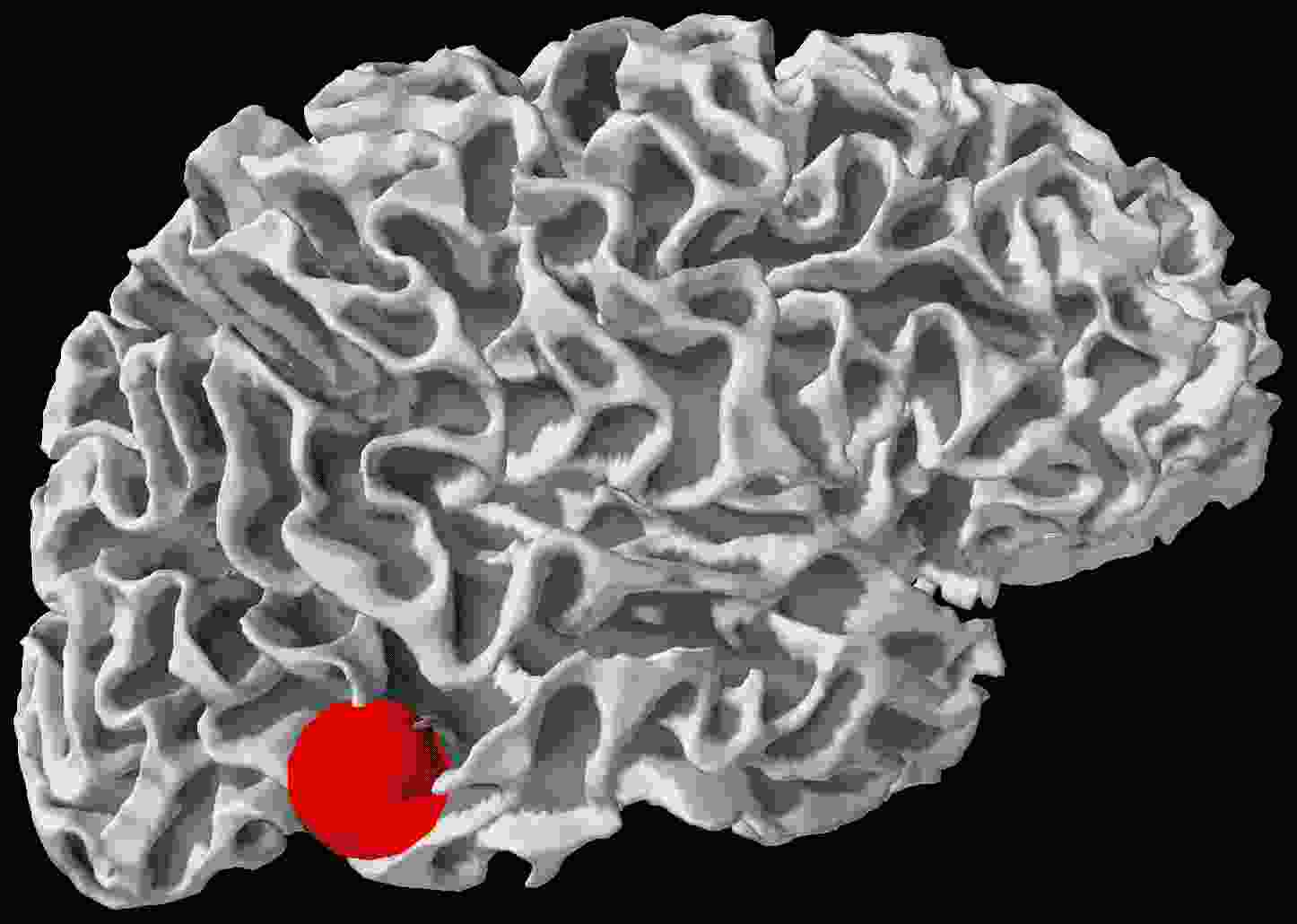}
    \caption{\mle}
    \end{subfigure}\hspace{\spacefig em}
  \begin{subfigure}{\figsize\textwidth}
    \includegraphics[width=68px,height=47px]{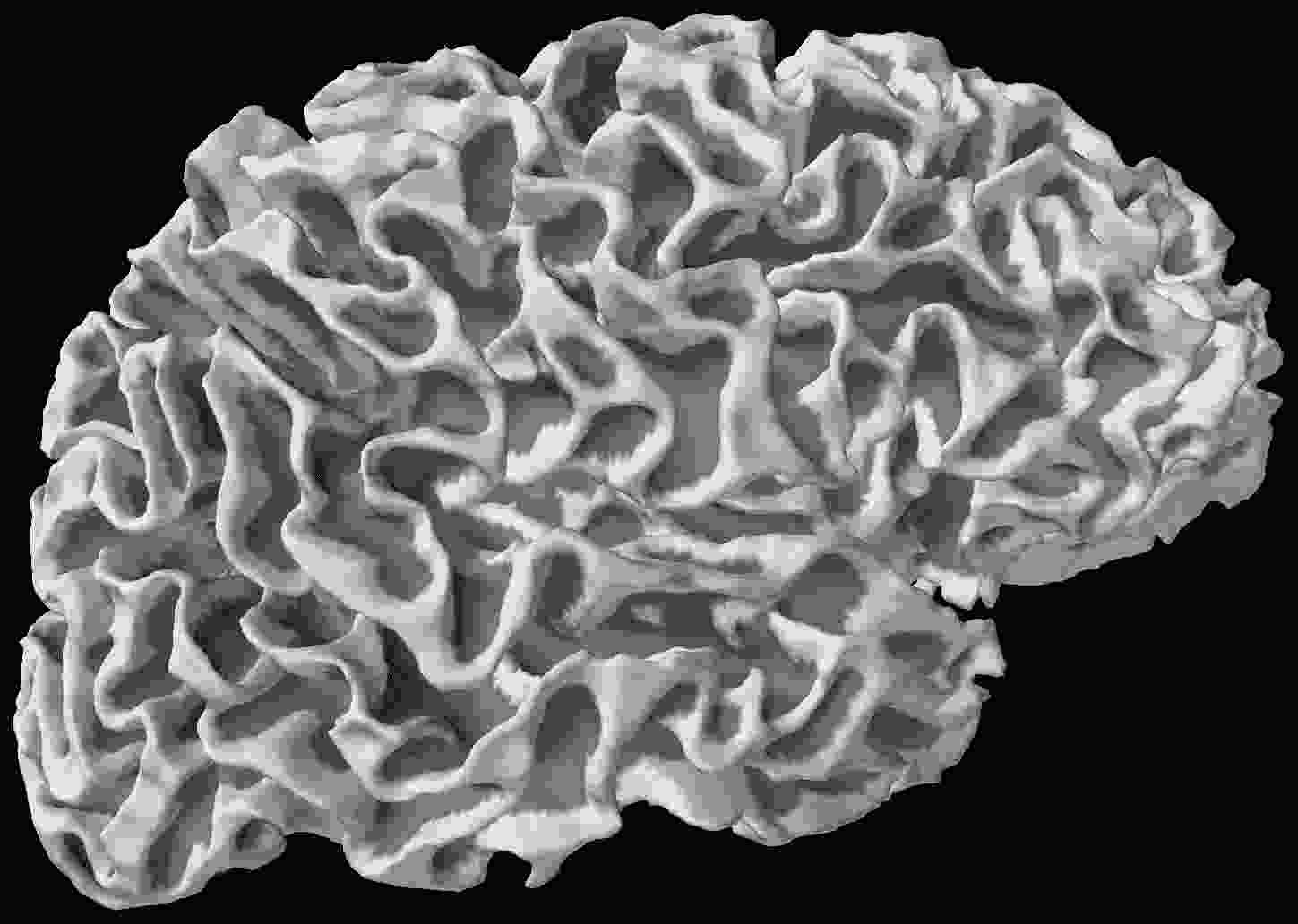}
    \caption{\mrcer}
    \end{subfigure}\hspace{\spacefig em}
  \begin{subfigure}{\figsize\textwidth}
    \includegraphics[width=68px,height=47px]{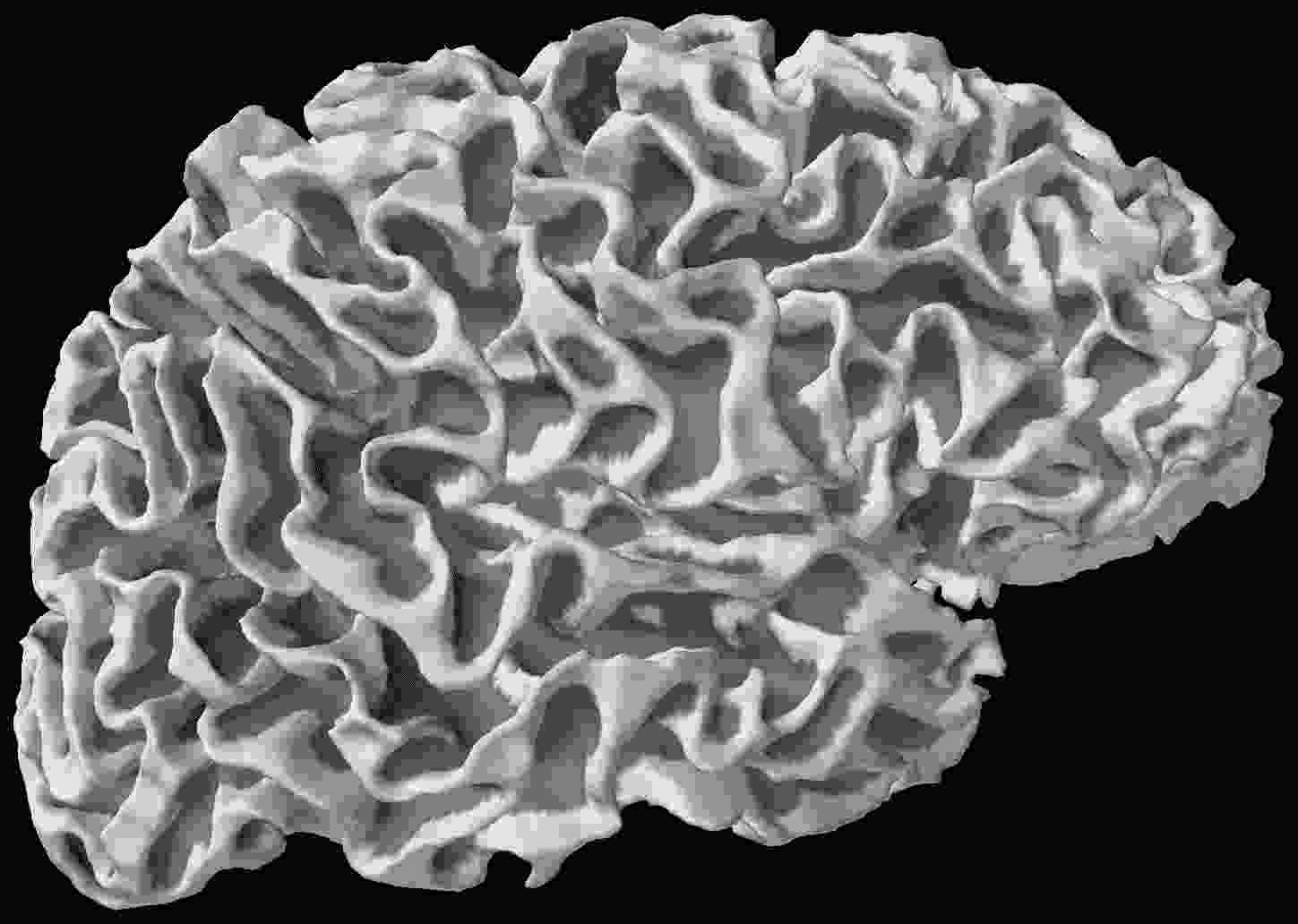}
    \caption{\mtl}
  \end{subfigure}\hspace{\spacefig em}
  \caption{\textit{Real data ($n=102$, $q=7498$, $q=48$, $r=71$)} Sources found in the left hemisphere (top) and the right hemisphere (bottom) after left visual stimulations.}
  \label{fig:real_data_left_visu}
\end{figure}

\Cref{fig:real_data_left_visu,fig:real_data_left_visu_r_31} show the results for each algorithm after left visual stimulations.
As one source is expected (in the right hemisphere), we vary $\lambda$ by dichotomy between $\lambda_{\max}$ (returning 0 sources) and a $\lambda_{\min}$ (returning more than 1 sources), until finding a lambda giving exactly 1 source.
When the number of repetitions is high (\Cref{fig:real_data_left_visu}) only \us and \mler do find a source in the visual cortex.
When the number of repetitions decreases, \us and \mler still find one source in the visual cortex, other algorithms fail.
This highlights this importance of taking into account the repetitions.

\begin{figure}[H]
  \centering 

  \begin{subfigure}{\figsize \textwidth}
  \includegraphics[width=68px,height=47px]{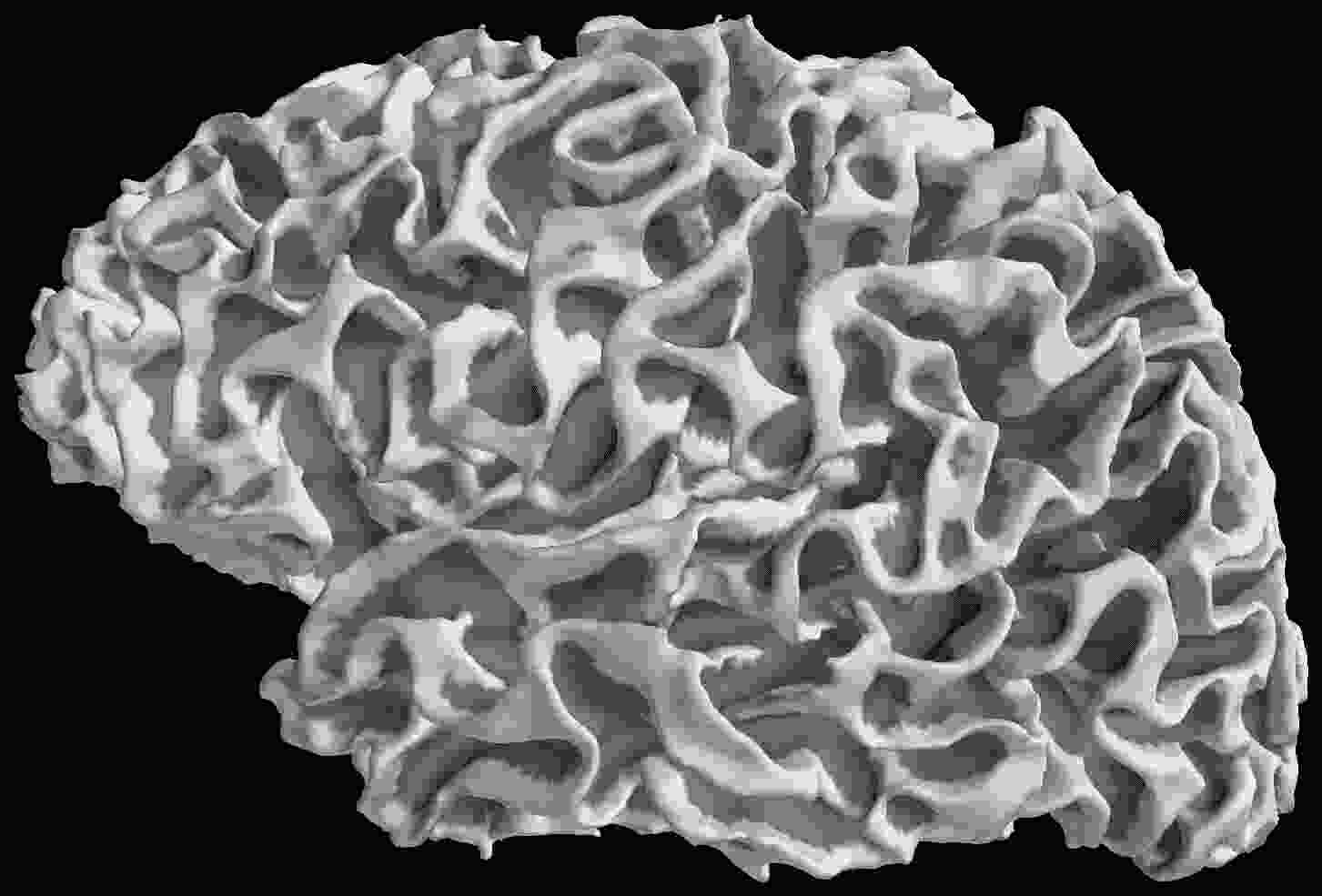}
  \end{subfigure}\hspace{\spacefig em}
  \begin{subfigure}{\figsize \textwidth}
  \includegraphics[width=68px,height=47px]{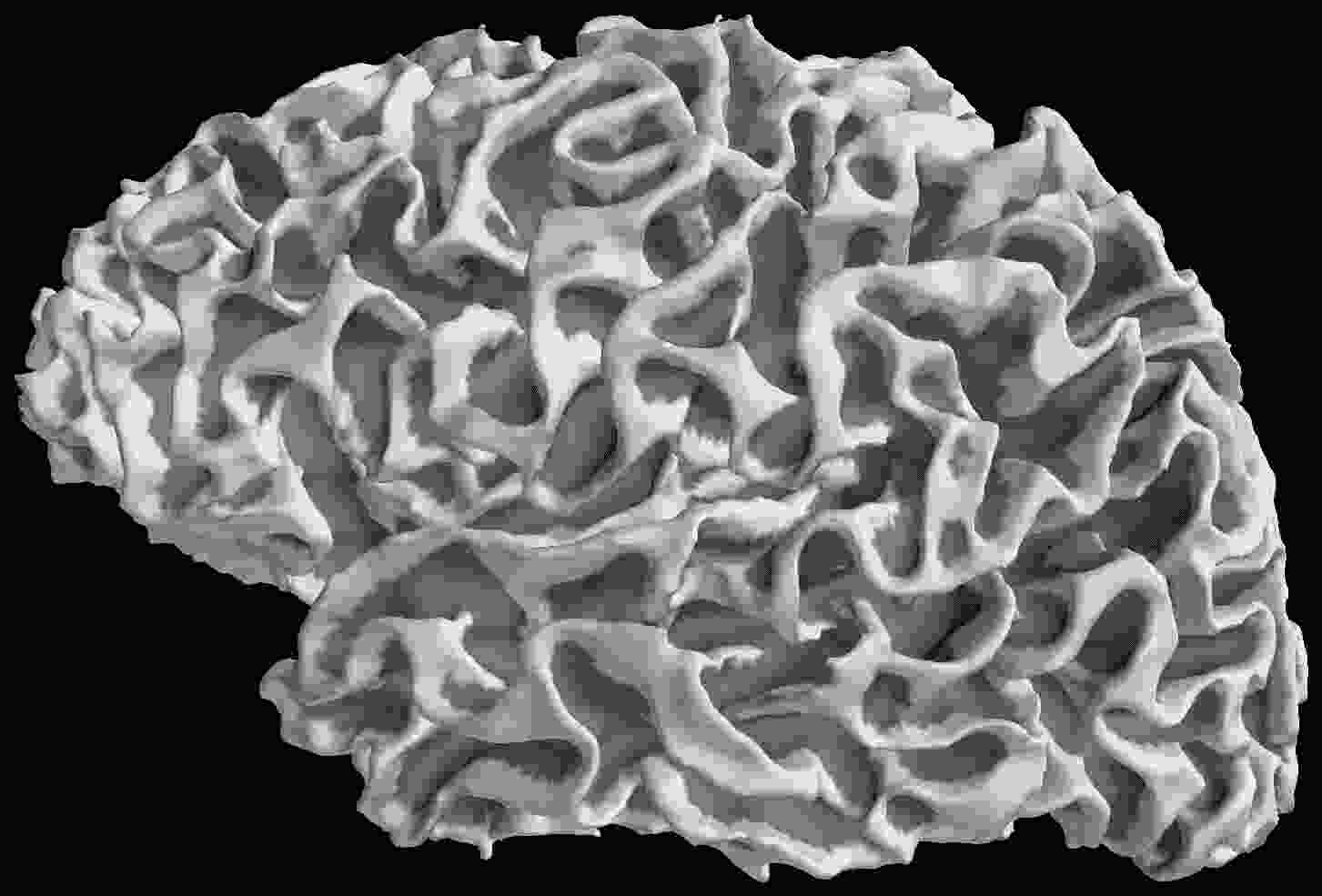}
  \end{subfigure}\hspace{\spacefig em}
  \begin{subfigure}{\figsize\textwidth}
    \includegraphics[width=68px,height=47px]{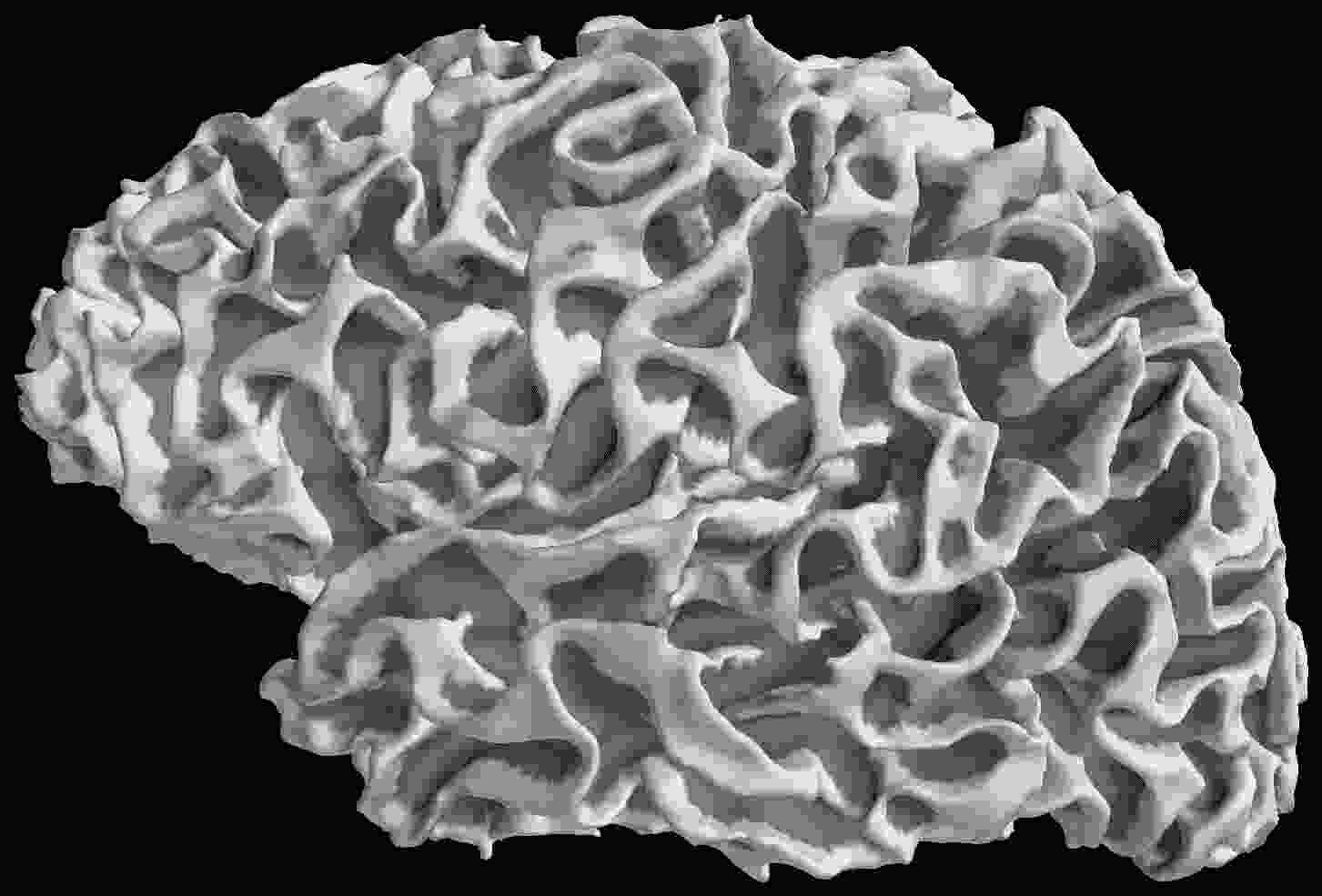}
    \end{subfigure}\hspace{\spacefig em}
  \begin{subfigure}{\figsize\textwidth}
  \includegraphics[width=68px,height=47px]{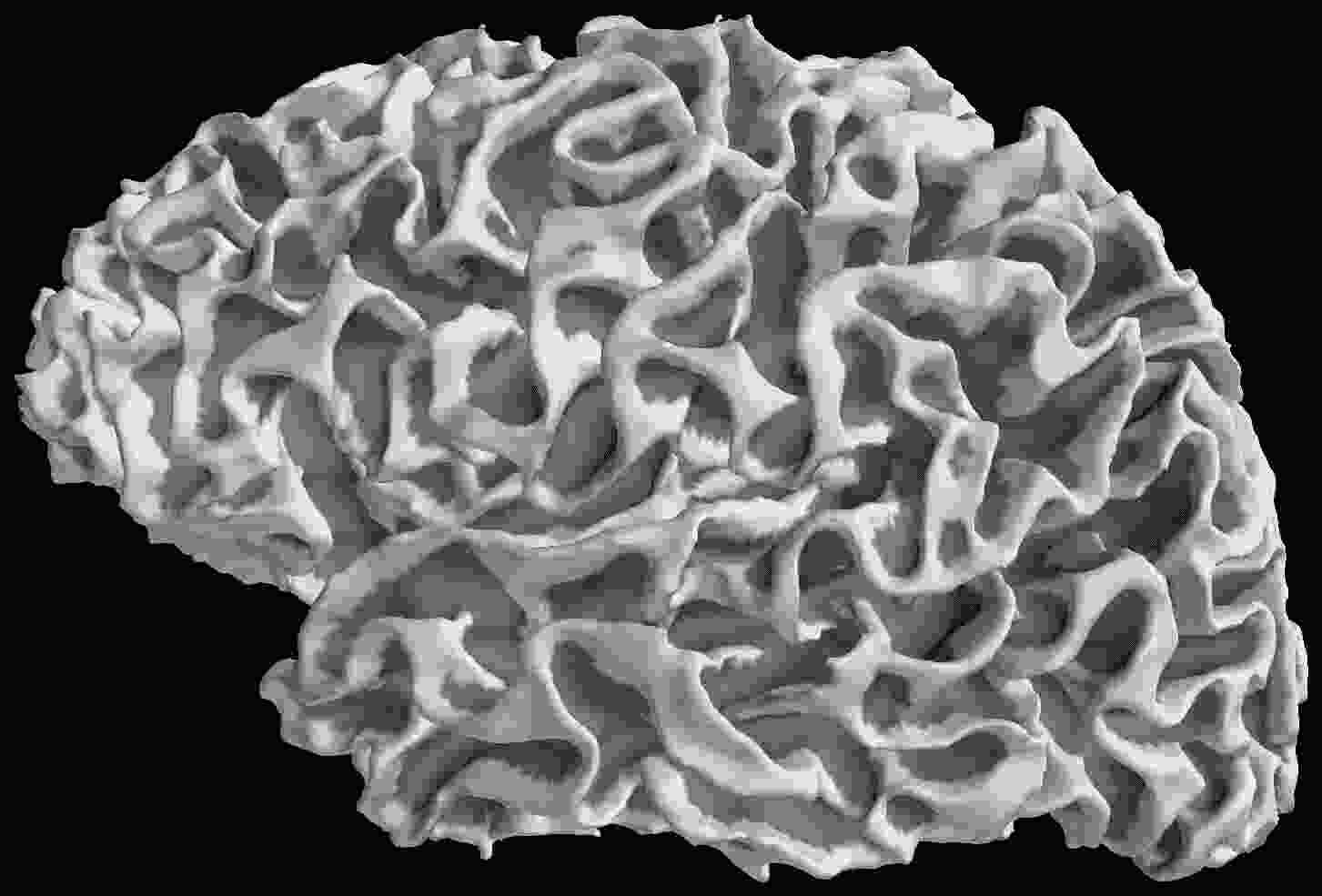}
  \end{subfigure}\hspace{\spacefig em}
  \begin{subfigure}{\figsize\textwidth}
  \includegraphics[width=68px,height=47px]{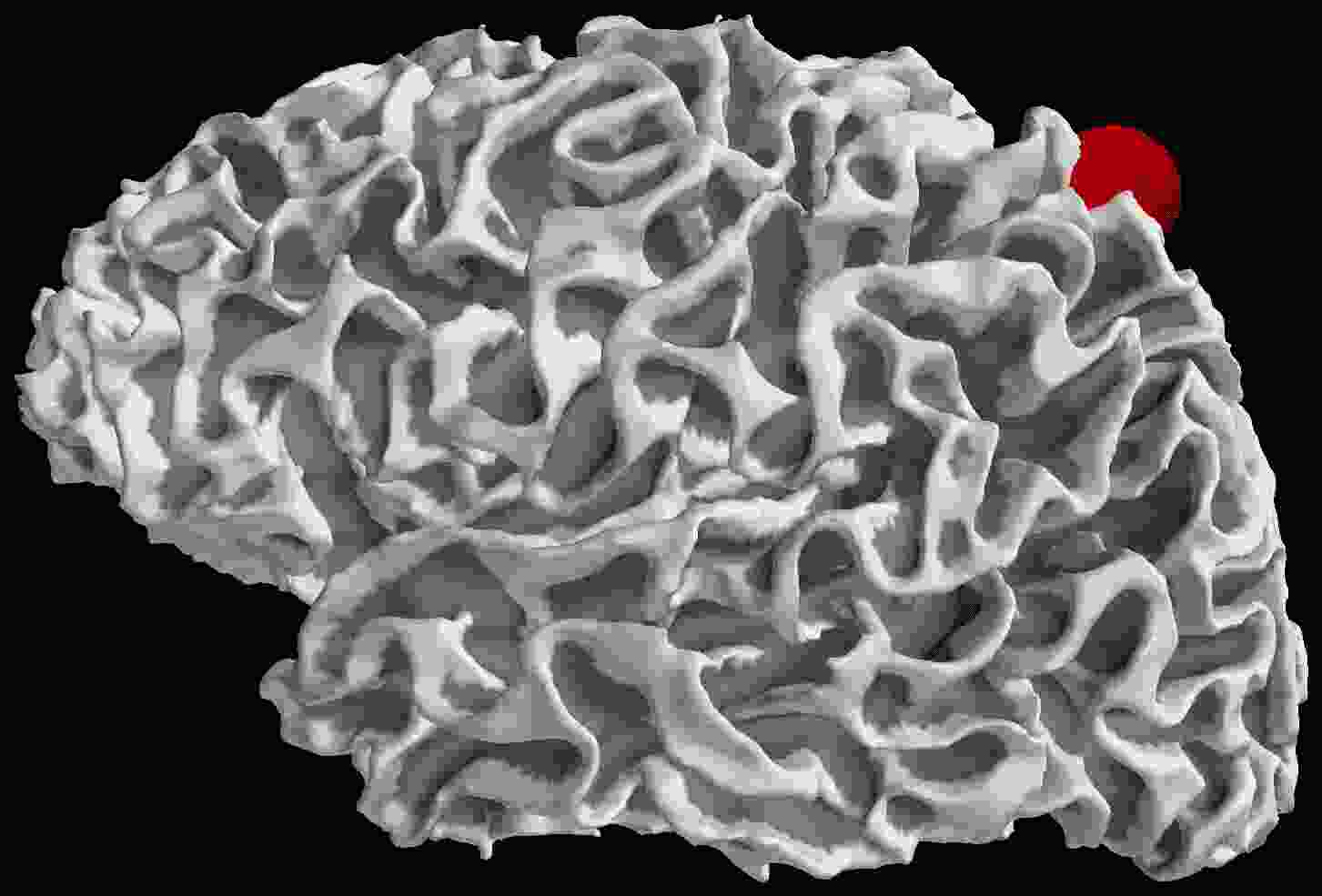}
  \end{subfigure}\hspace{\spacefig em}
  \begin{subfigure}{\figsize \textwidth}
    \includegraphics[width=68px,height=47px]{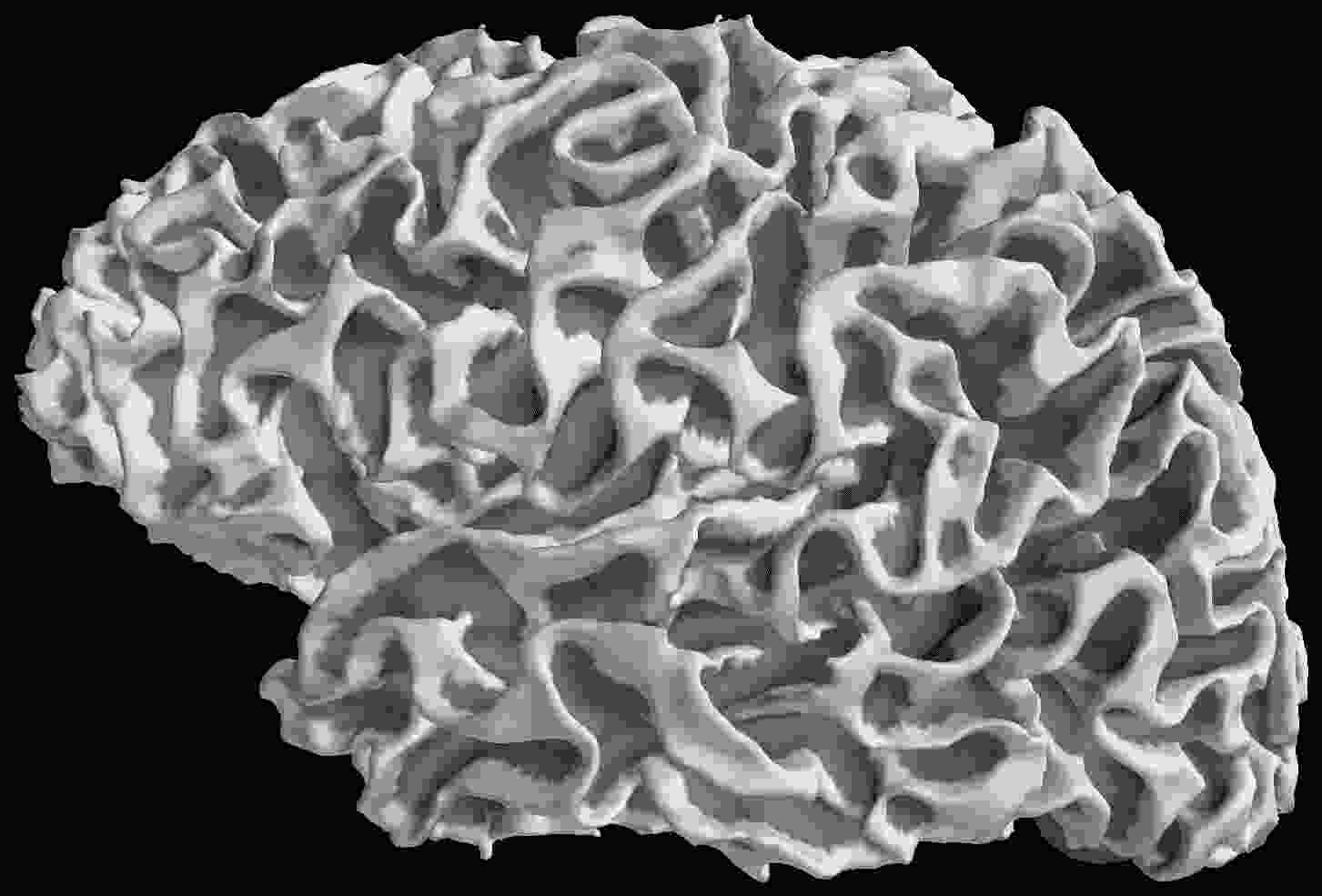}
    \end{subfigure}\hspace{\spacefig em}

  \vspace{-0.1em}
  \begin{subfigure}{\figsize\textwidth}
    \includegraphics[width=68px,height=47px]{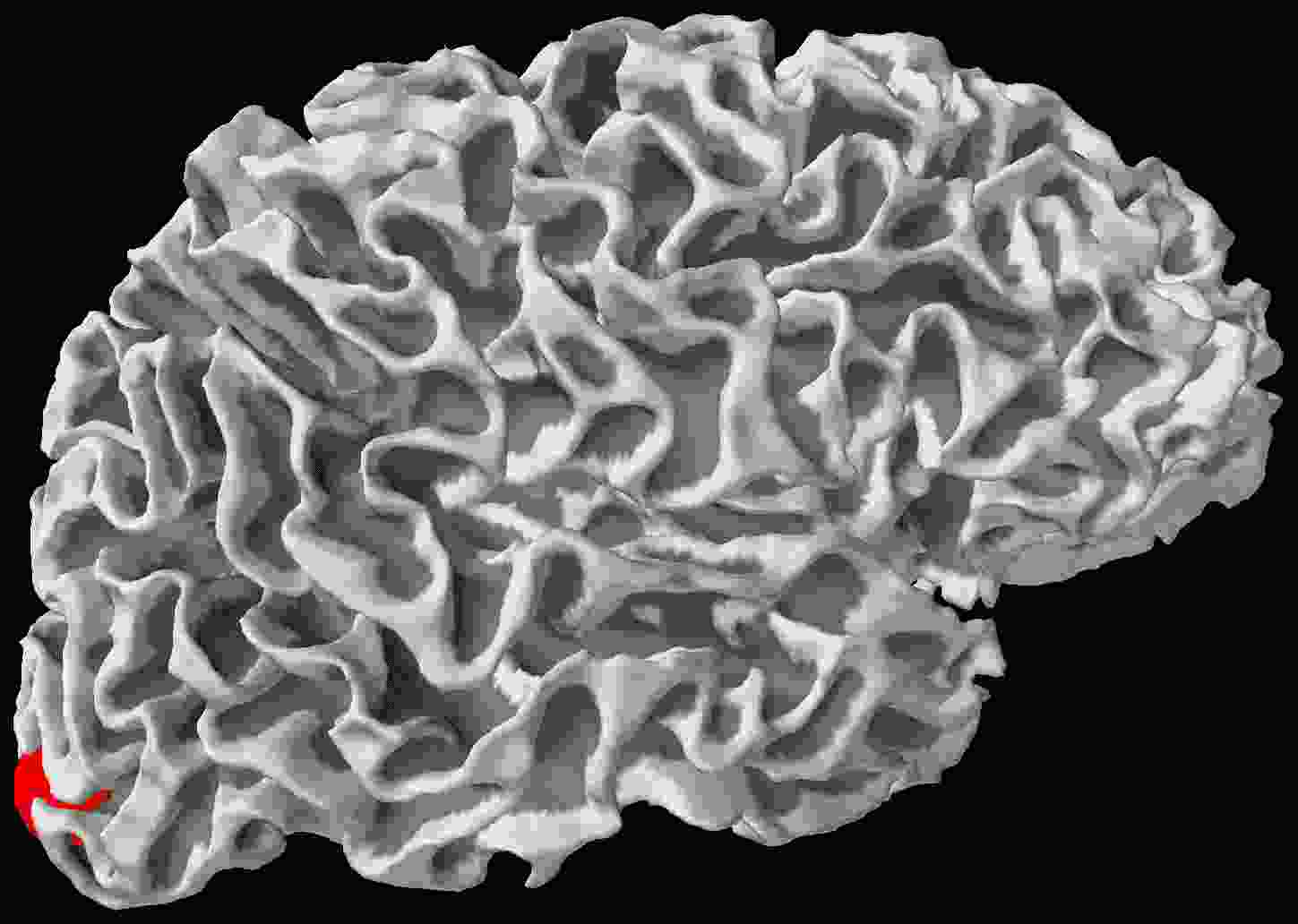}
    \caption{\us}
  \end{subfigure}\hspace{\spacefig em}
  \begin{subfigure}{\figsize\textwidth}
    \includegraphics[width=68px,height=47px]{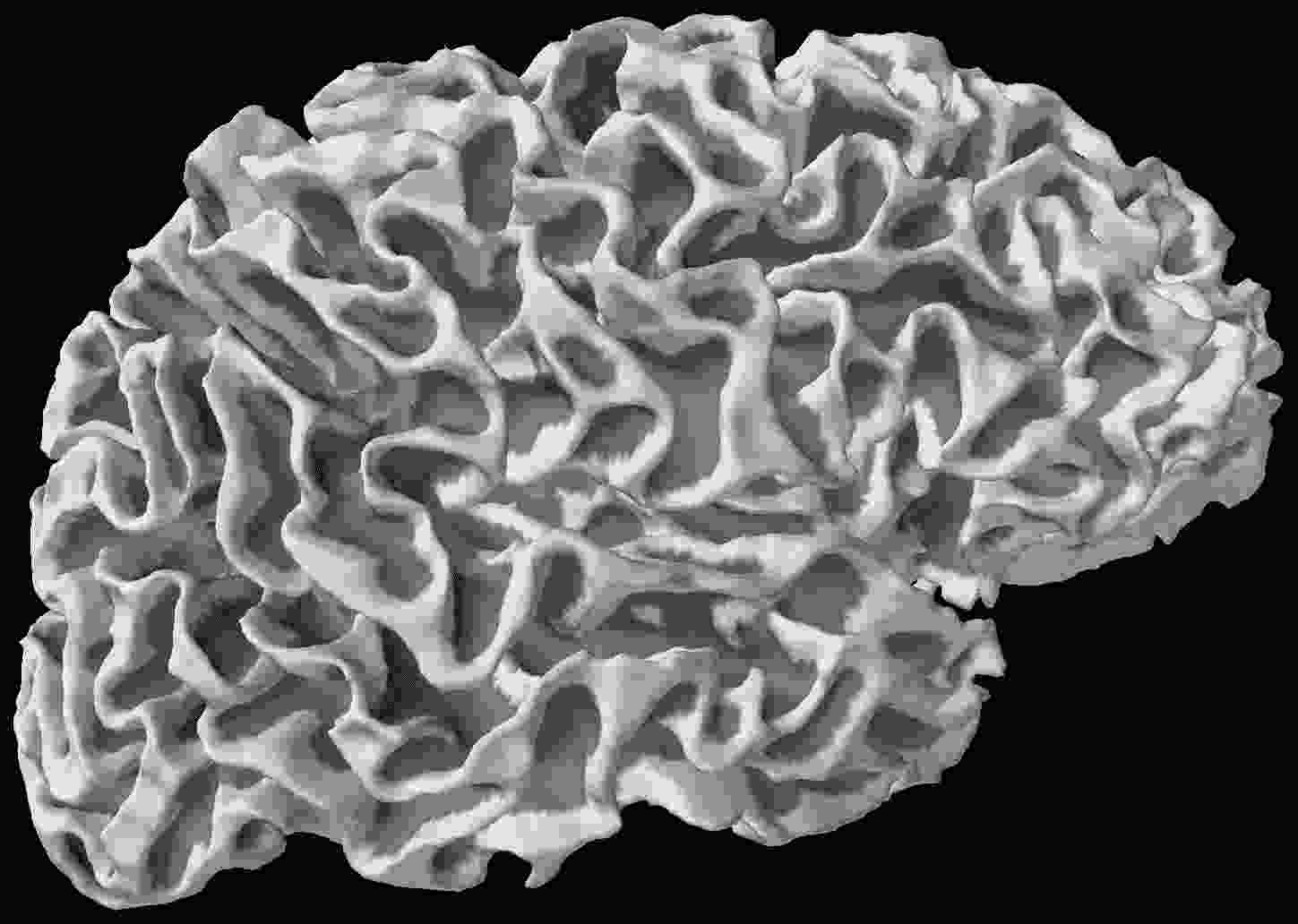}
    \caption{\sgcl}
  \end{subfigure}\hspace{\spacefig em}
  \begin{subfigure}{\figsize\textwidth}
    \includegraphics[width=68px,height=47px]{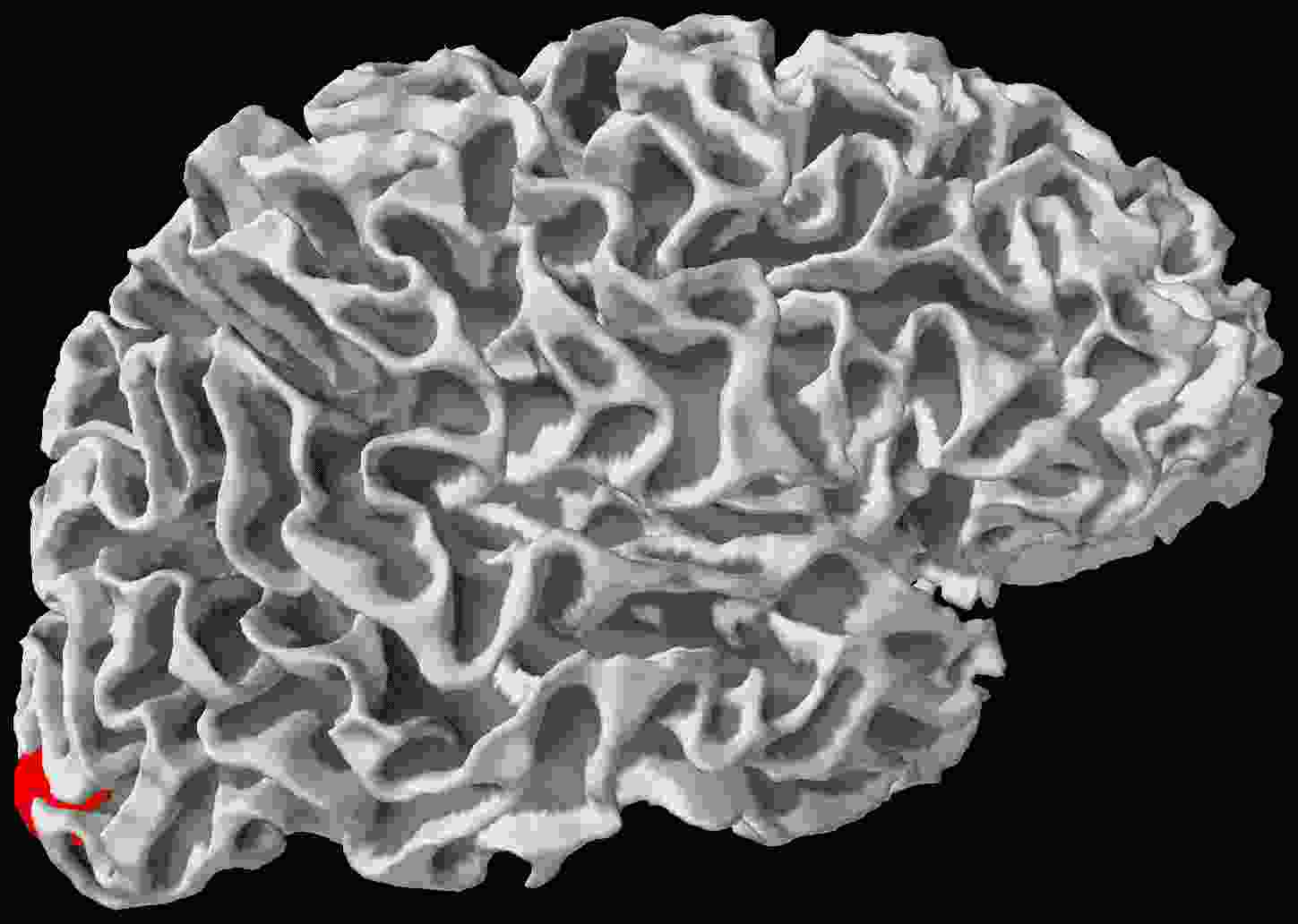}
    \caption{\mler}
    \end{subfigure}\hspace{\spacefig em}
  \begin{subfigure}{\figsize\textwidth}
    \includegraphics[width=68px,height=47px]{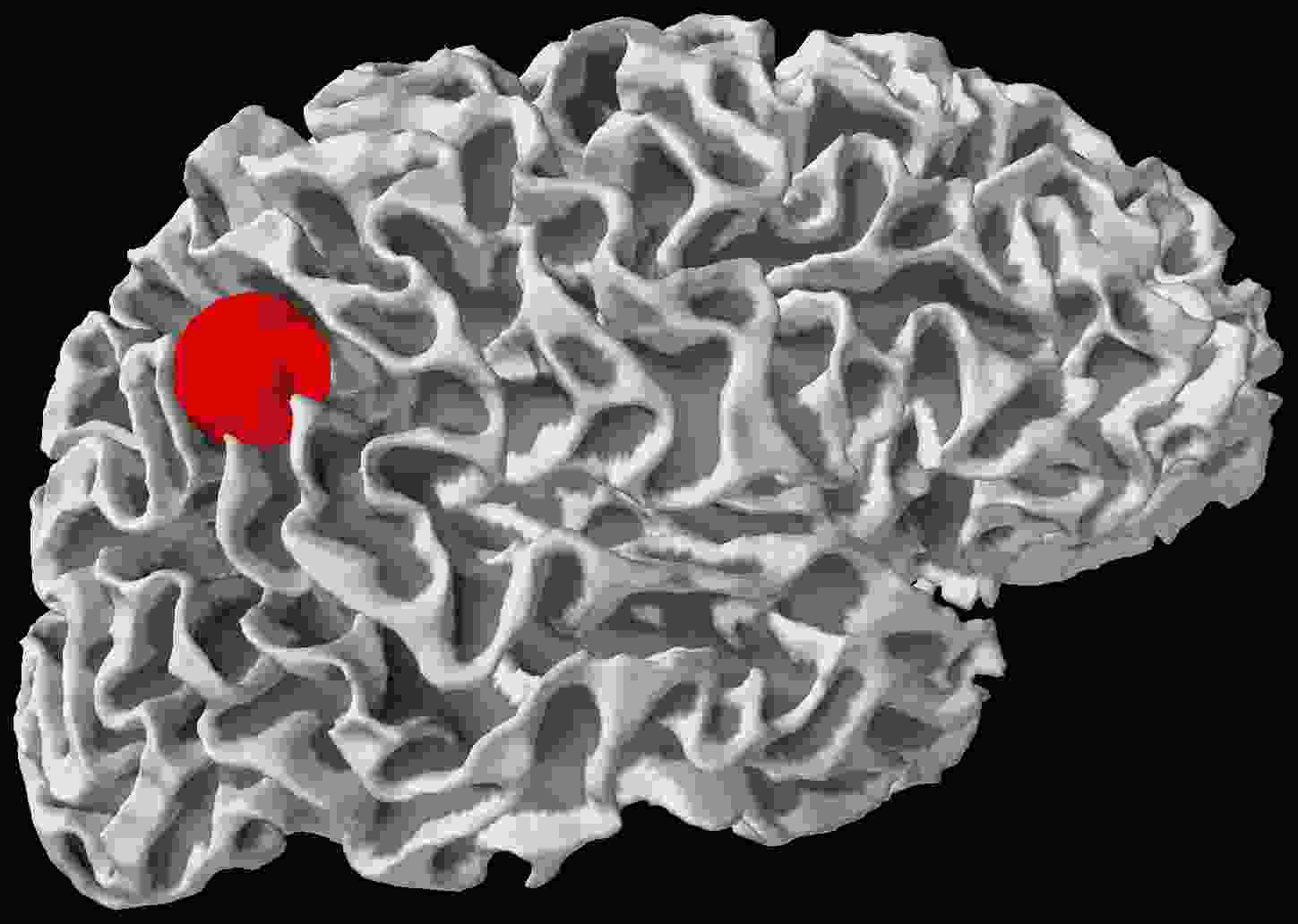}
    \caption{\mle}
    \end{subfigure}\hspace{\spacefig em}
  \begin{subfigure}{\figsize\textwidth}
    \includegraphics[width=68px,height=47px]{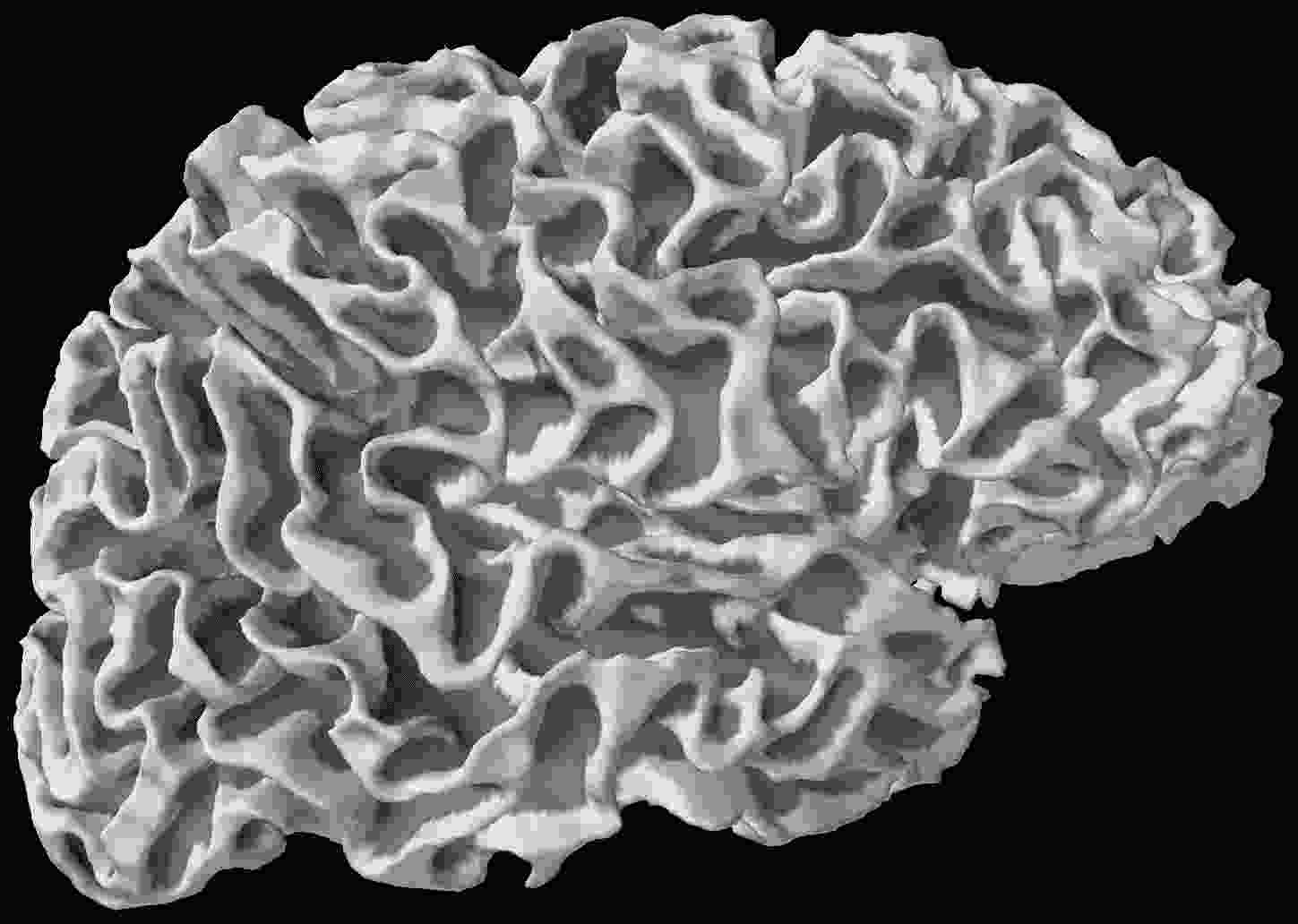}
    \caption{\mrcer}
    \end{subfigure}\hspace{\spacefig em}
  \begin{subfigure}{\figsize\textwidth}
    \includegraphics[width=68px,height=47px]{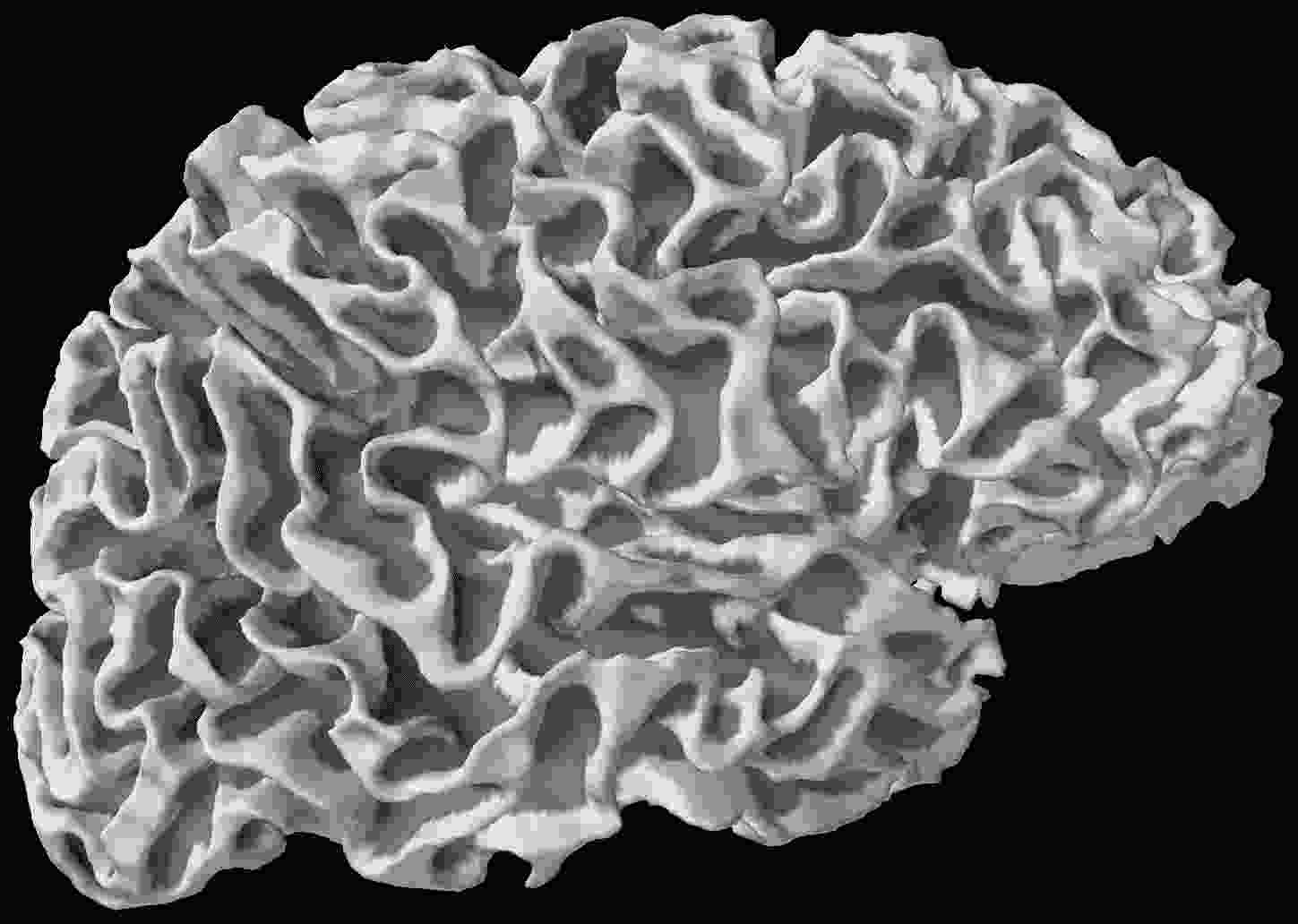}
    \caption{\mtl}
  \end{subfigure}\hspace{\spacefig em}
  \caption{\textit{Real data ($n=102$, $q=7498$, $q=48$, $r=36$)} Sources found in the left hemisphere (top) and the right hemisphere (bottom)
  after left visual stimulations.}
  \label{fig:real_data_left_visu_r_31}
\end{figure}

\subsection{Supplementary experiments on real data: right visual stimulations}
\label{app_sub:real_right_visual}

\begin{figure}[H]
  \centering 

  \begin{subfigure}{\figsize \textwidth}
  \includegraphics[width=68px,height=47px]{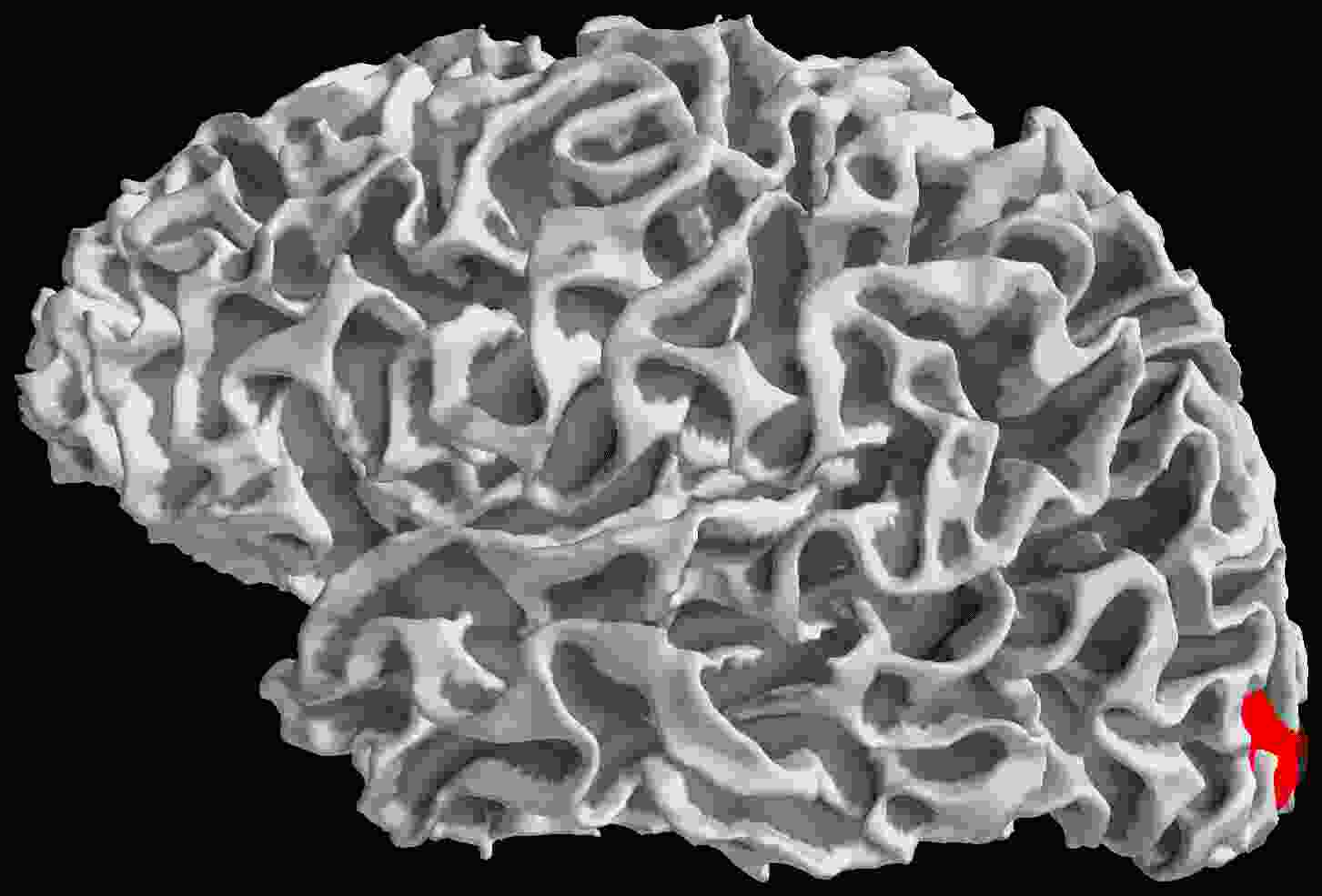}
  \end{subfigure}\hspace{\spacefig em}
  \begin{subfigure}{\figsize \textwidth}
  \includegraphics[width=68px,height=47px]{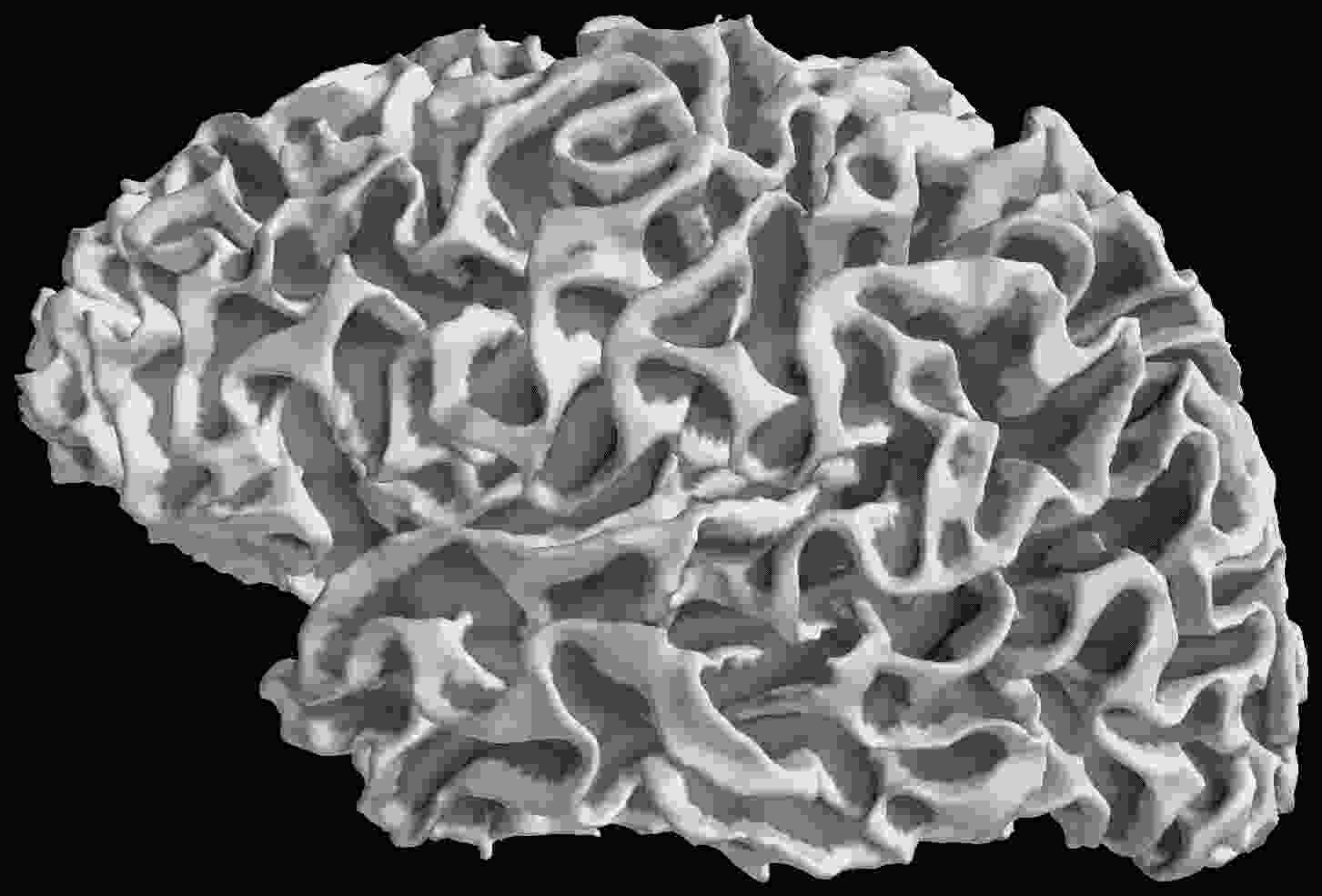}
  \end{subfigure}\hspace{\spacefig em}
  \begin{subfigure}{\figsize\textwidth}
    \includegraphics[width=68px,height=47px]{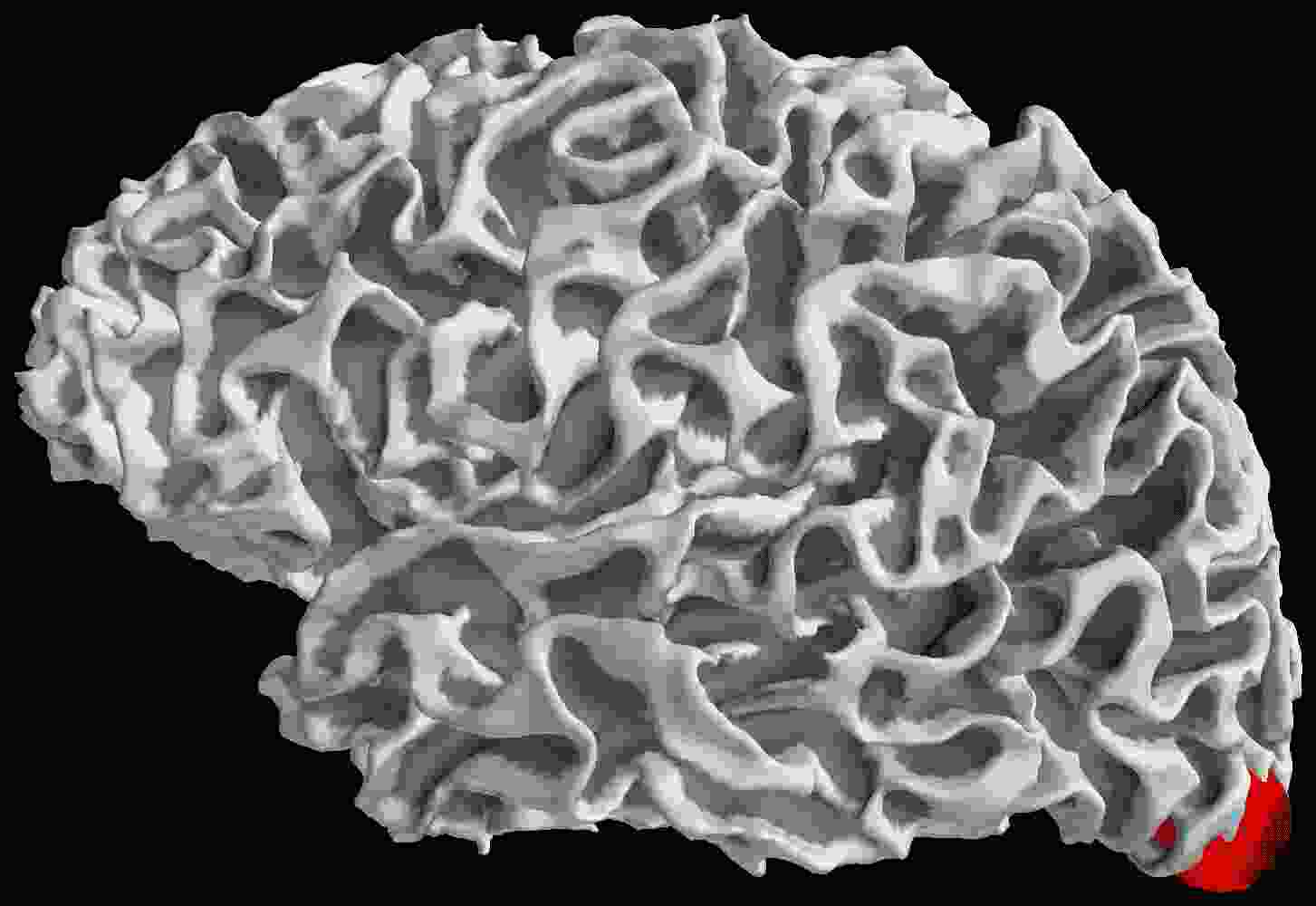}
    \end{subfigure}\hspace{\spacefig em}
  \begin{subfigure}{\figsize\textwidth}
  \includegraphics[width=68px,height=47px]{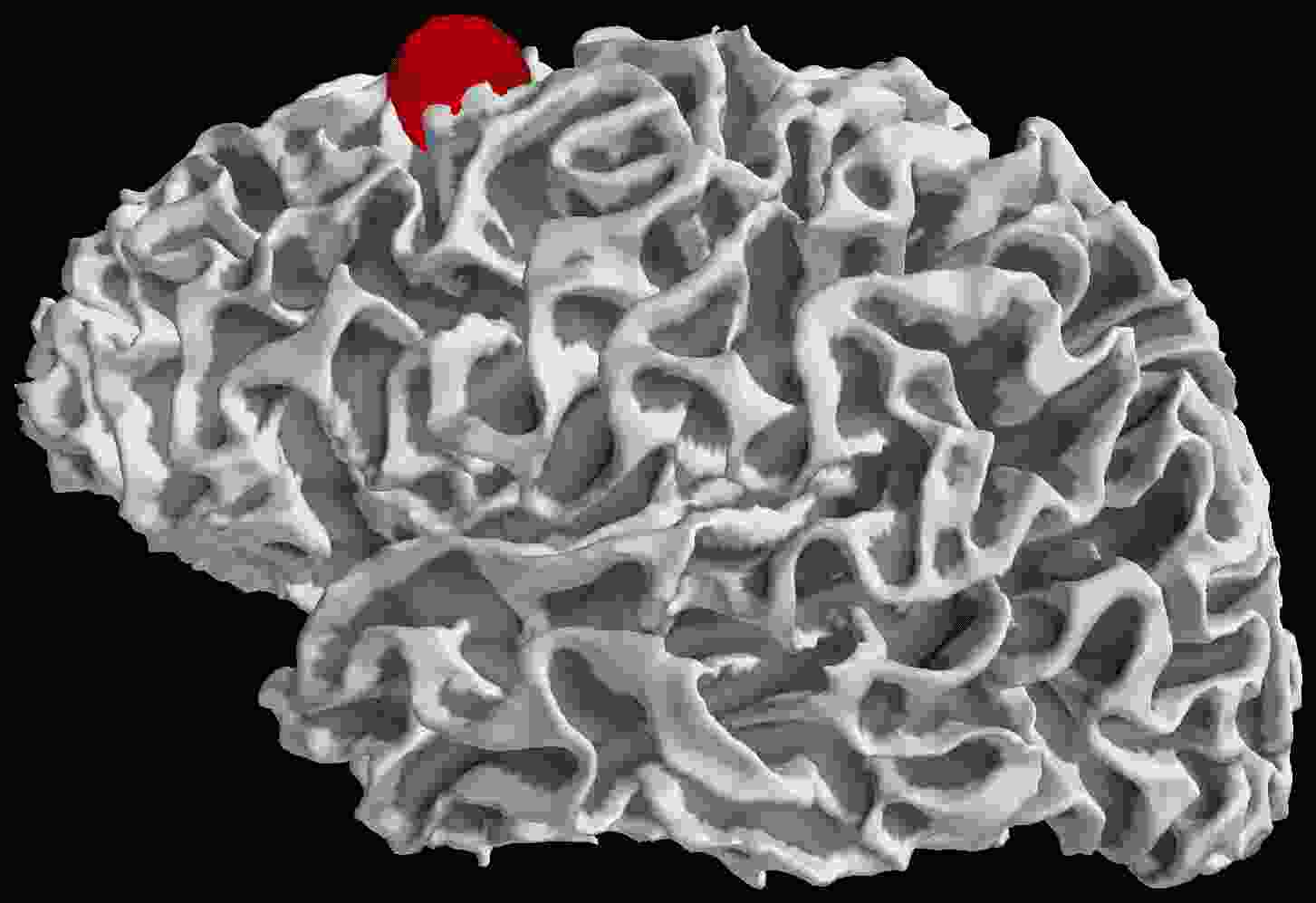}
  \end{subfigure}\hspace{\spacefig em}
  \begin{subfigure}{\figsize\textwidth}
  \includegraphics[width=68px,height=47px]{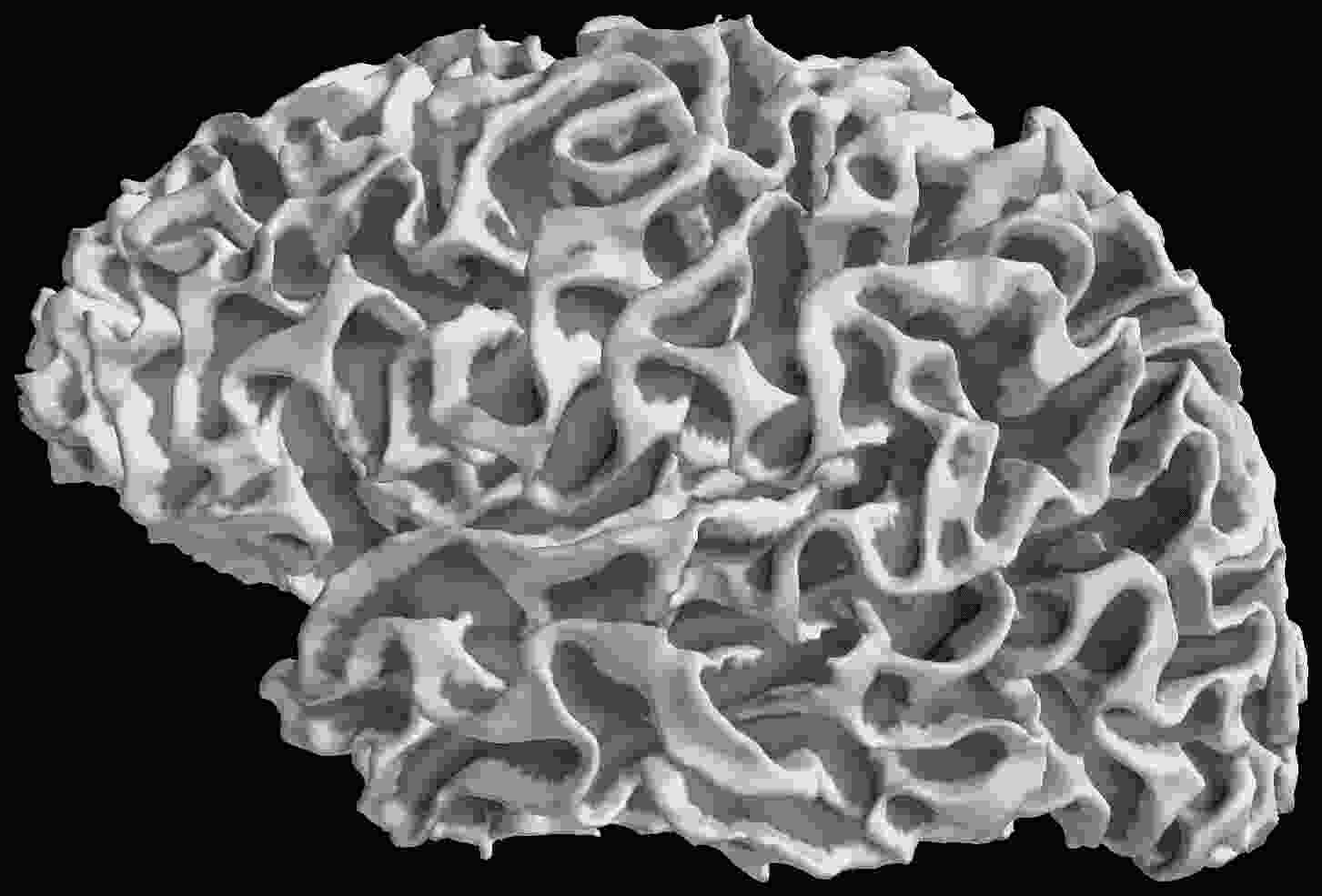}
  \end{subfigure}\hspace{\spacefig em}
  \begin{subfigure}{\figsize \textwidth}
    \includegraphics[width=68px,height=47px]{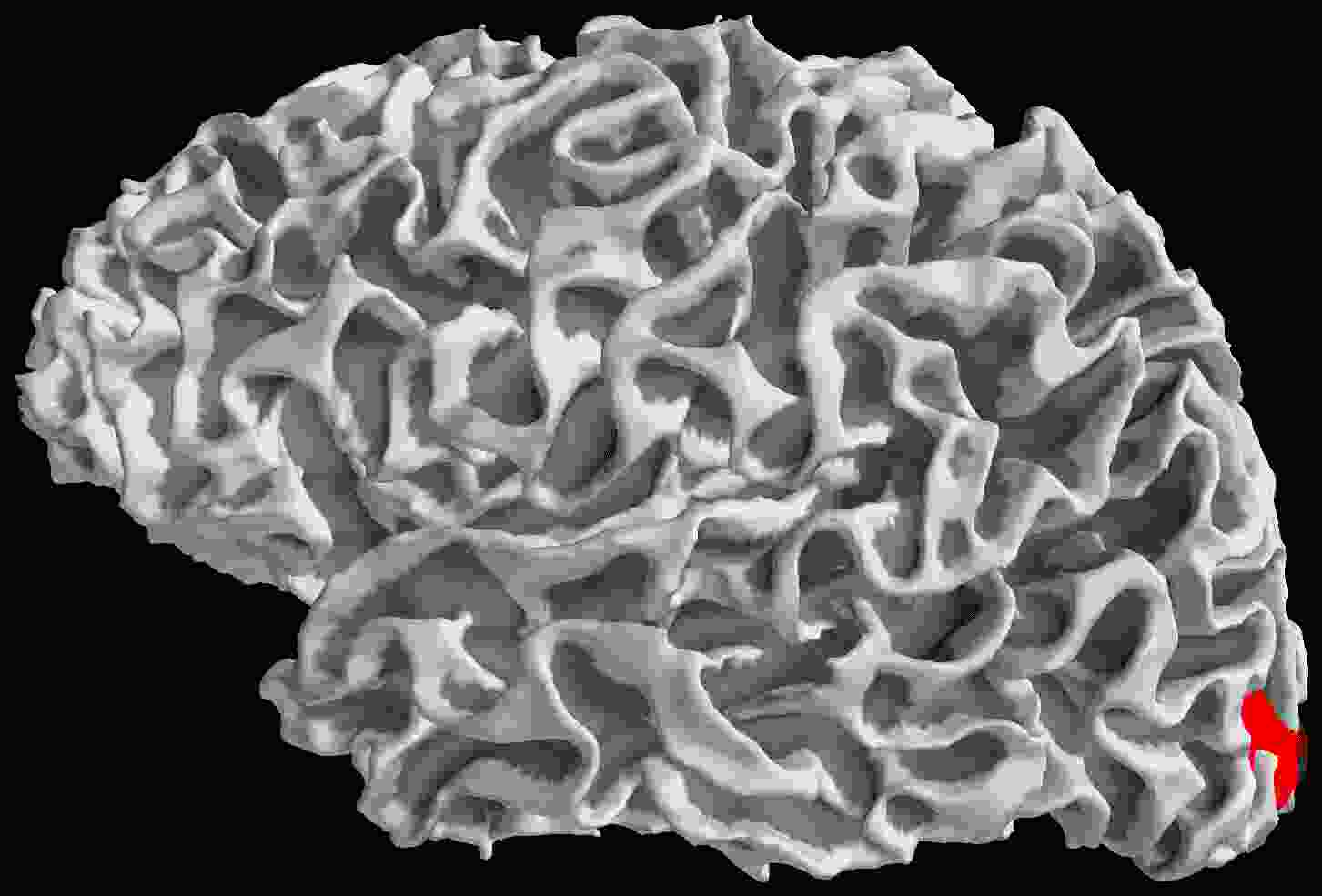}
    \end{subfigure}\hspace{\spacefig em}

  \vspace{-0.1em}
  \begin{subfigure}{\figsize\textwidth}
    \includegraphics[width=68px,height=47px]{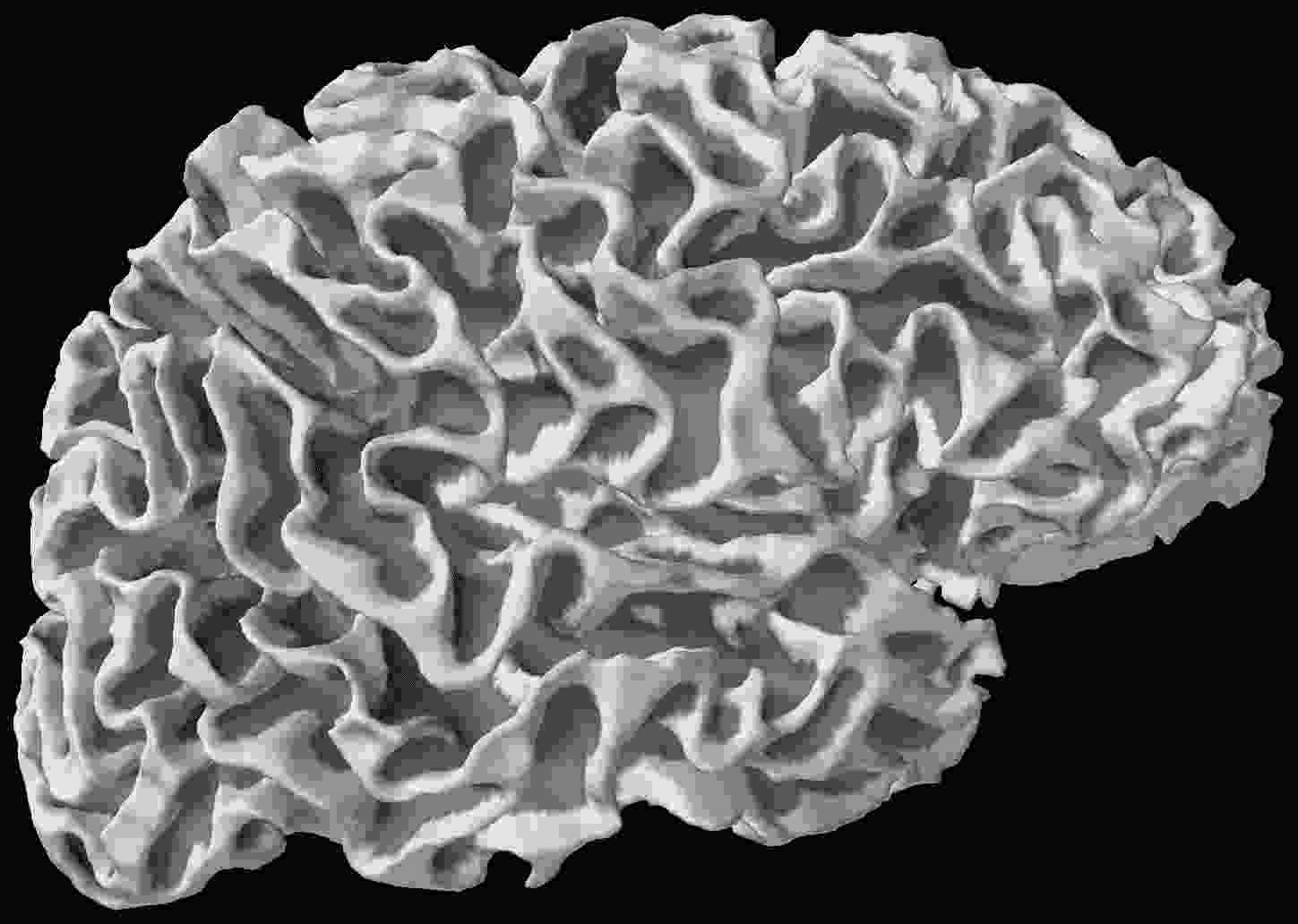}
    \caption{\us}
  \end{subfigure}\hspace{\spacefig em}
  \begin{subfigure}{\figsize\textwidth}
    \includegraphics[width=68px,height=47px]{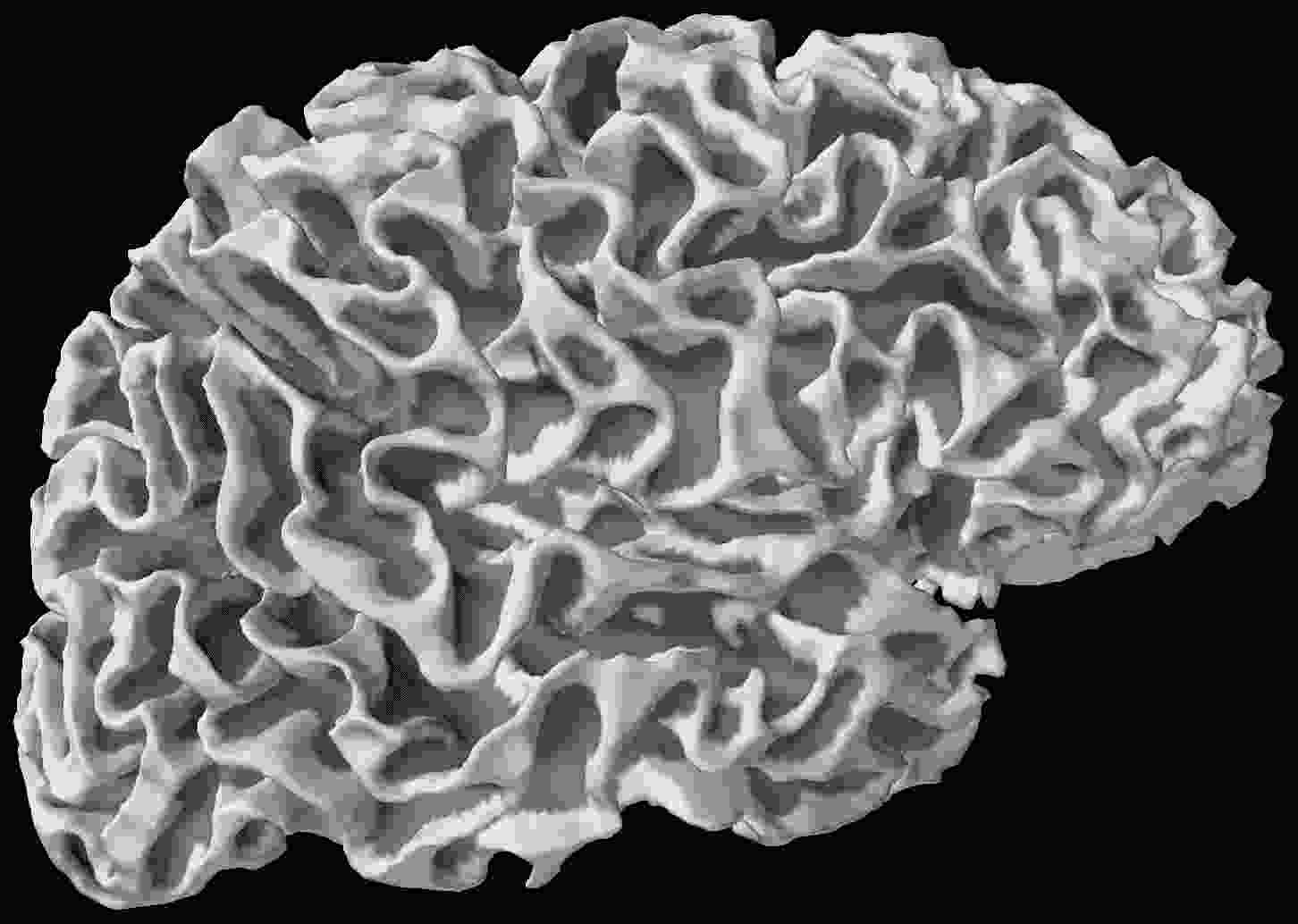}
    \caption{\sgcl}
  \end{subfigure}\hspace{\spacefig em}
  \begin{subfigure}{\figsize\textwidth}
    \includegraphics[width=68px,height=47px]{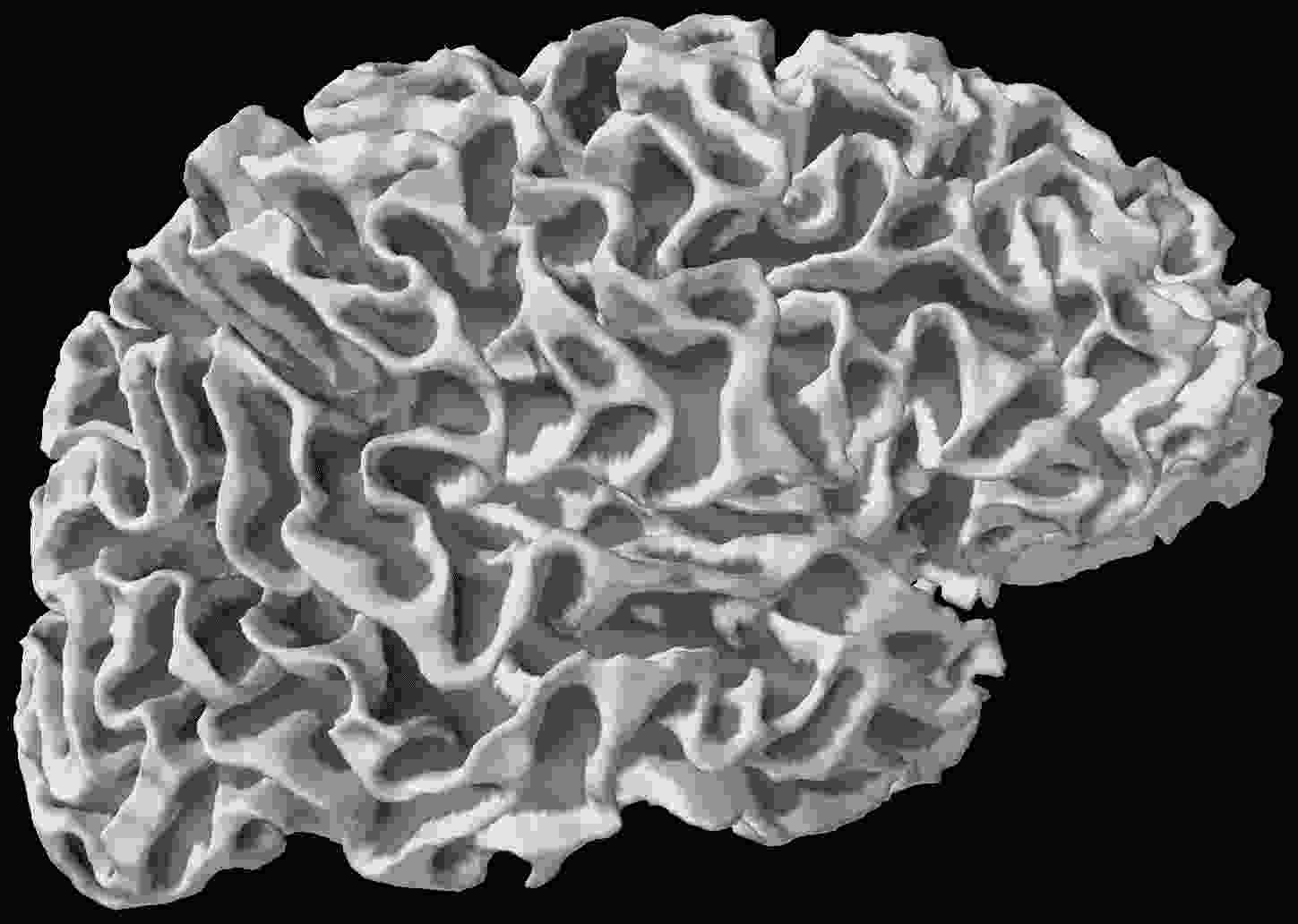}
    \caption{\mler}
    \end{subfigure}\hspace{\spacefig em}
  \begin{subfigure}{\figsize\textwidth}
    \includegraphics[width=68px,height=47px]{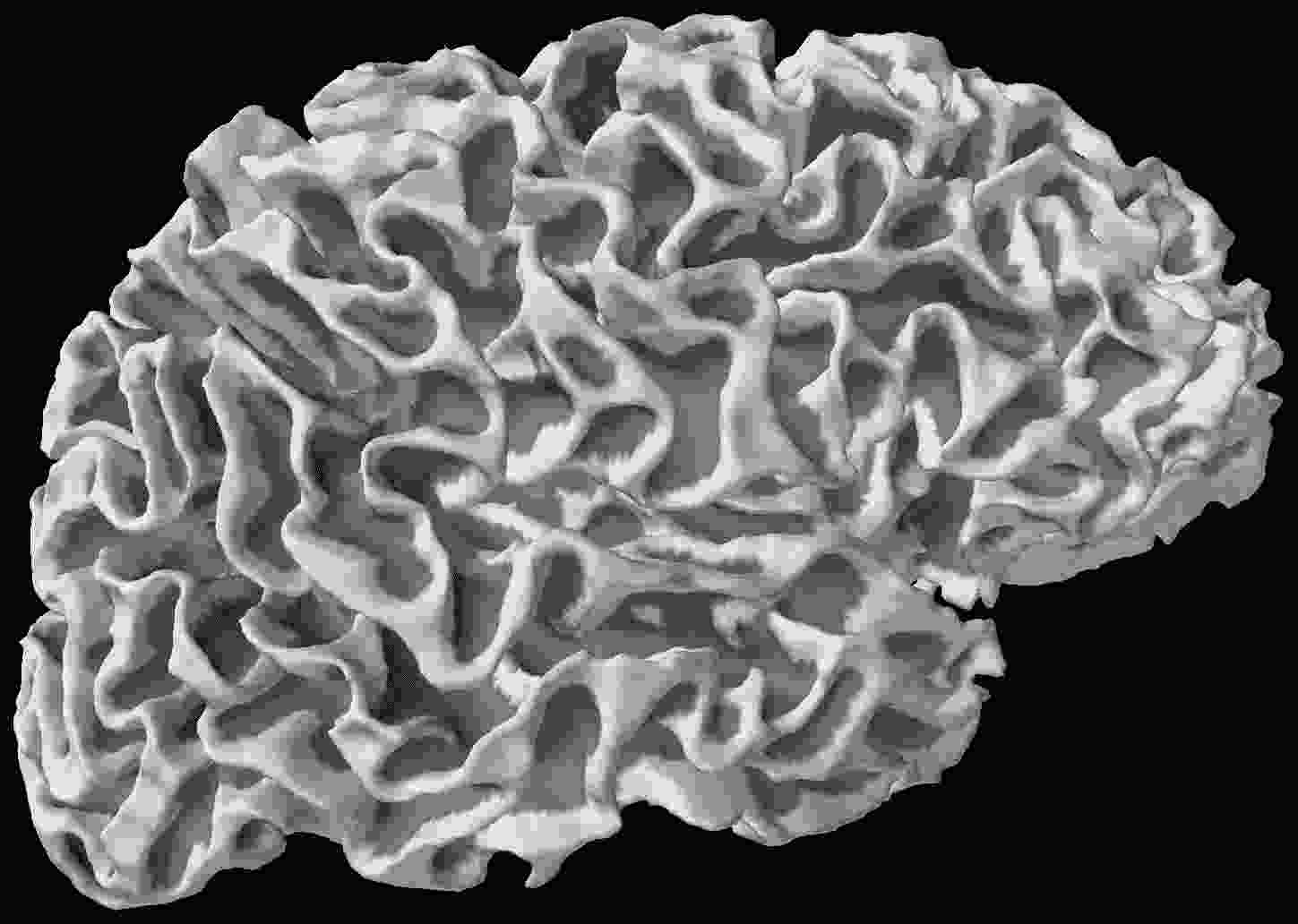}
    \caption{\mle}
    \end{subfigure}\hspace{\spacefig em}
  \begin{subfigure}{\figsize\textwidth}
    \includegraphics[width=68px,height=47px]{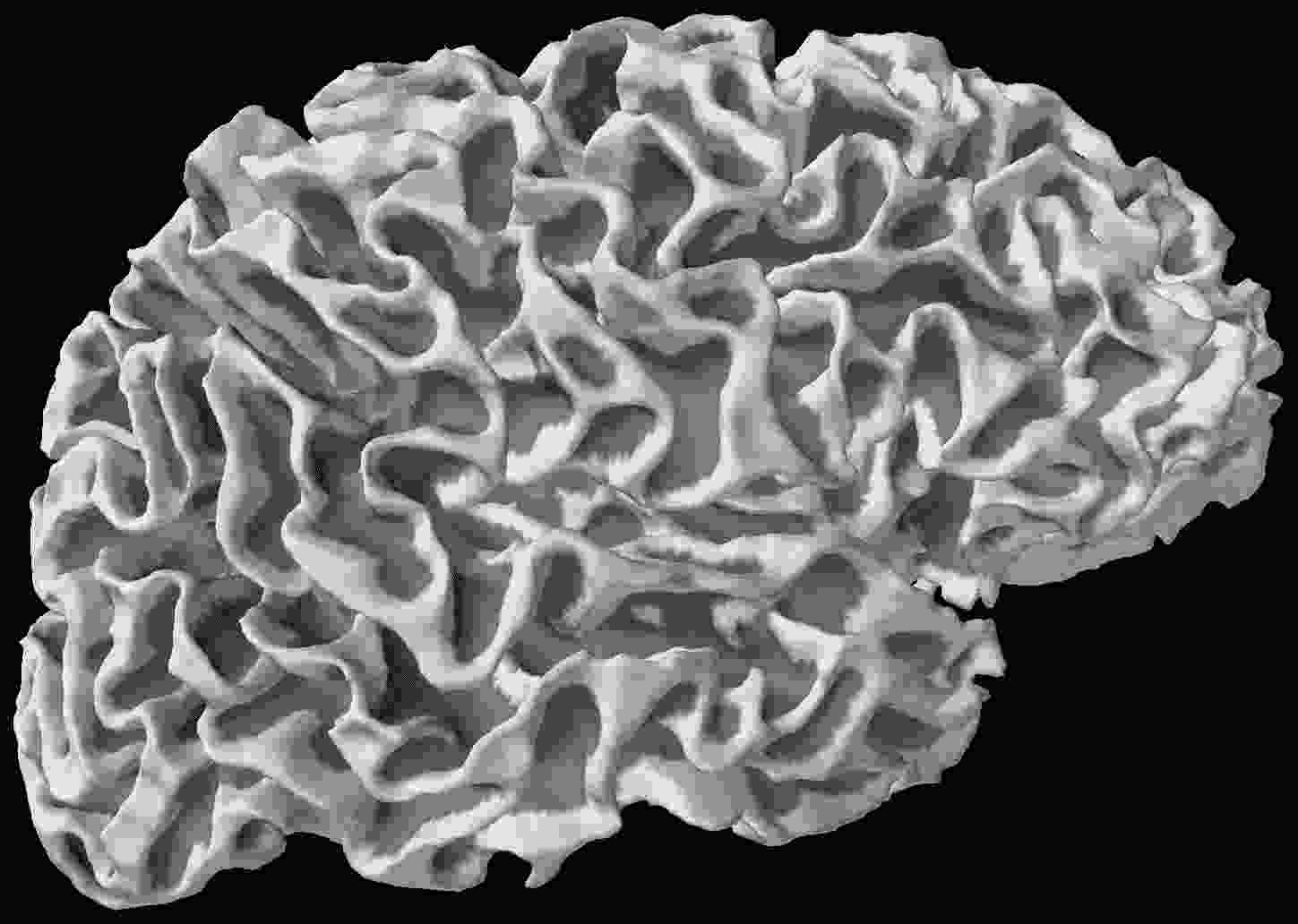}
    \caption{\mrcer}
    \end{subfigure}\hspace{\spacefig em}
  \begin{subfigure}{\figsize\textwidth}
    \includegraphics[width=68px,height=47px]{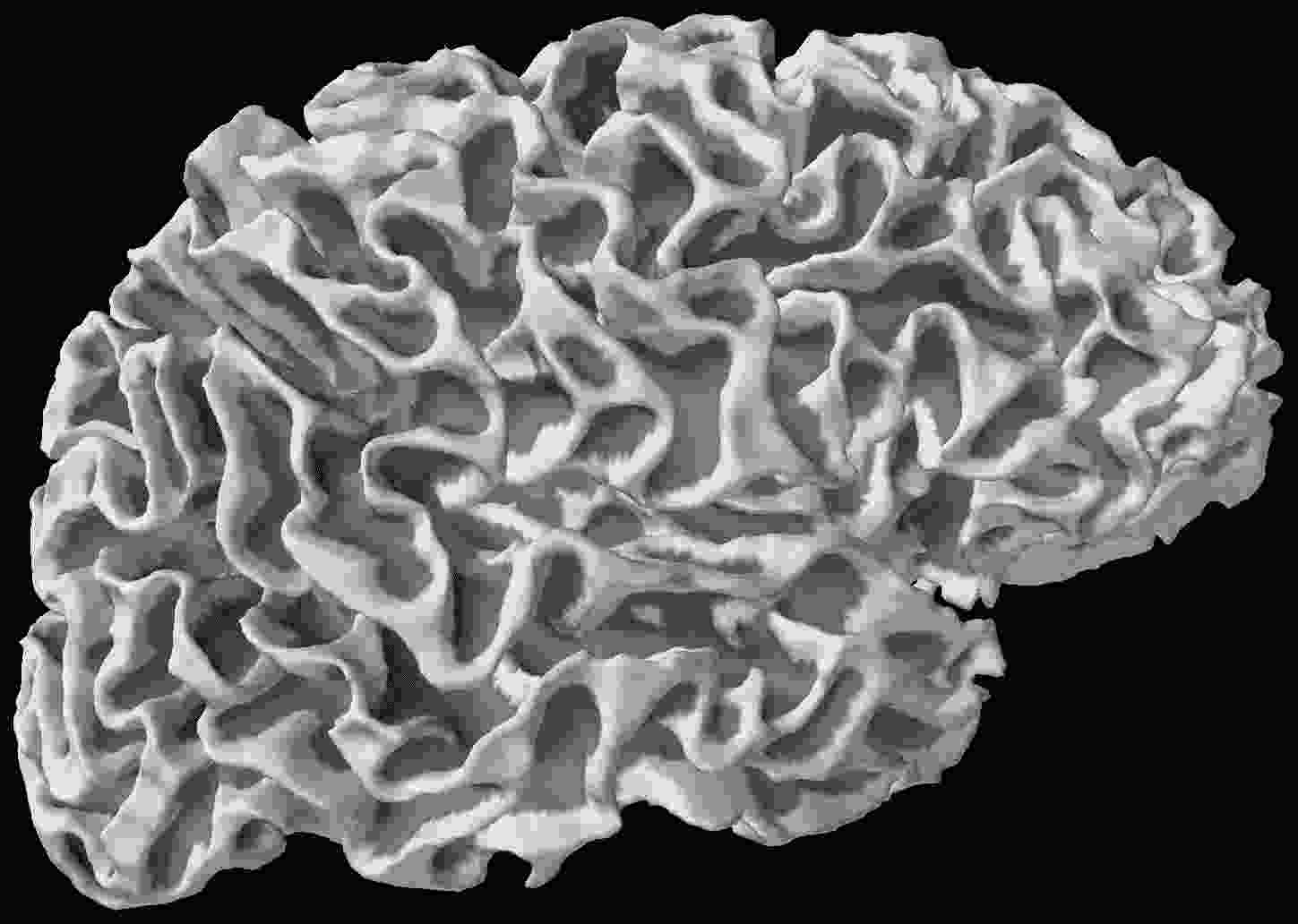}
    \caption{\mtl}
  \end{subfigure}\hspace{\spacefig em}
  \caption{\textit{Real data ($n=102$, $q=7498$, $q=48$, $r=61$)} Sources found in the left hemisphere (top) and the right hemisphere (bottom) after right visual stimulations.}
  \label{fig:real_data_right_visu}
\end{figure}

\Cref{fig:real_data_right_visu,fig:real_data_right_visu_r_31} show the results for each algorithm after right visual stimulations.
As one source is expected (in the left hemisphere), we vary $\lambda$ by dichotomy between $\lambda_{\max}$ (returning 0 sources) and a $\lambda_{\min}$ (returning more than 1 sources), until finding a lambda giving exactly 1 source.
When the number of repetitions is high (\Cref{fig:real_data_right_visu}) only \us, \mler and \mtl do find a source in the visual cortex.
When the number of repetitions decreases (\Cref{fig:real_data_right_visu_r_31}), only \us finds one source in the visual cortex, other algorithms fail.
This highlights once again the robustness of \us, even with a limited number of repetitions.

\begin{figure}[H]
  \centering 

  \begin{subfigure}{\figsize \textwidth}
  \includegraphics[width=68px,height=47px]{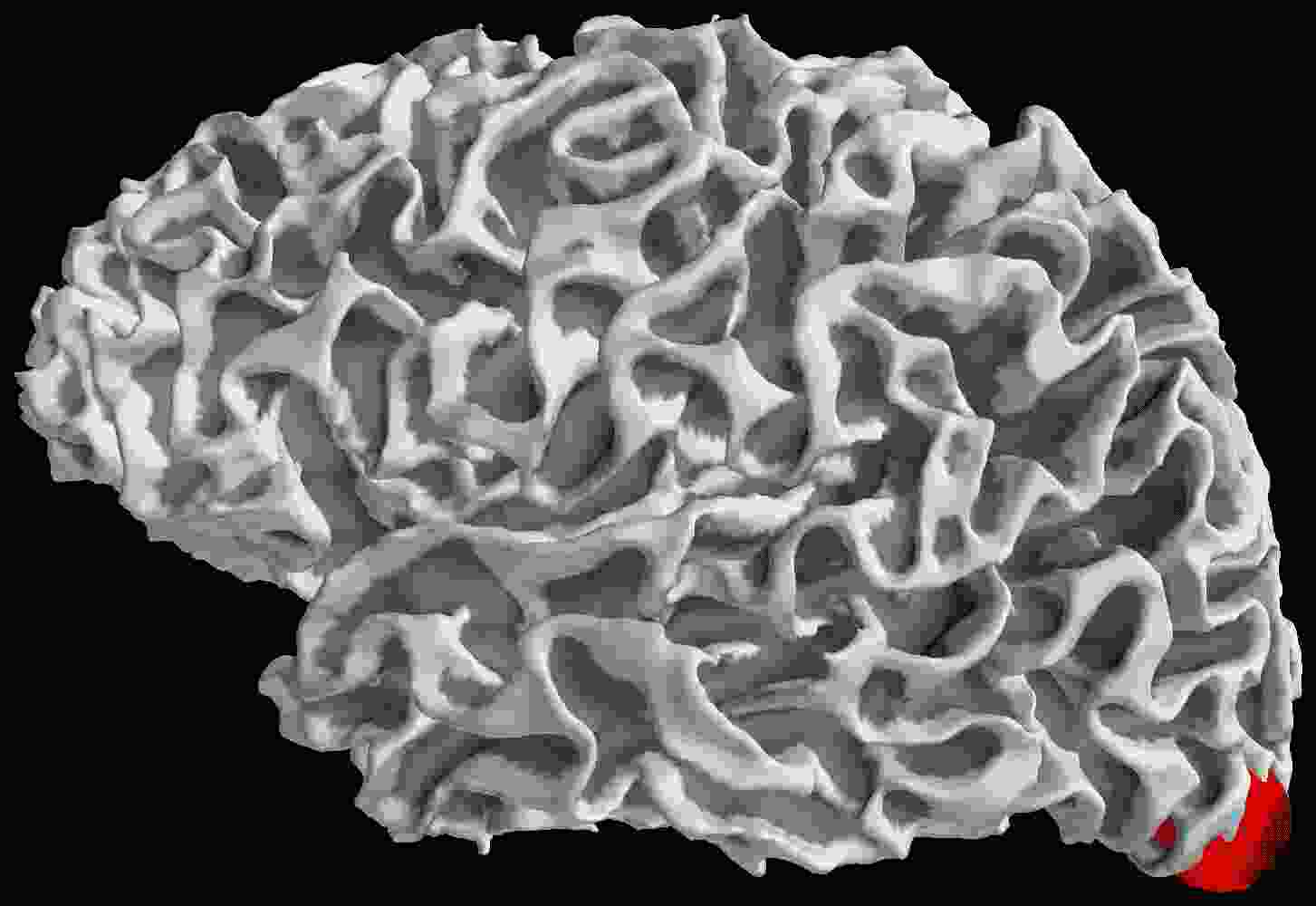}
  \end{subfigure}\hspace{\spacefig em}
  \begin{subfigure}{\figsize \textwidth}
  \includegraphics[width=68px,height=47px]{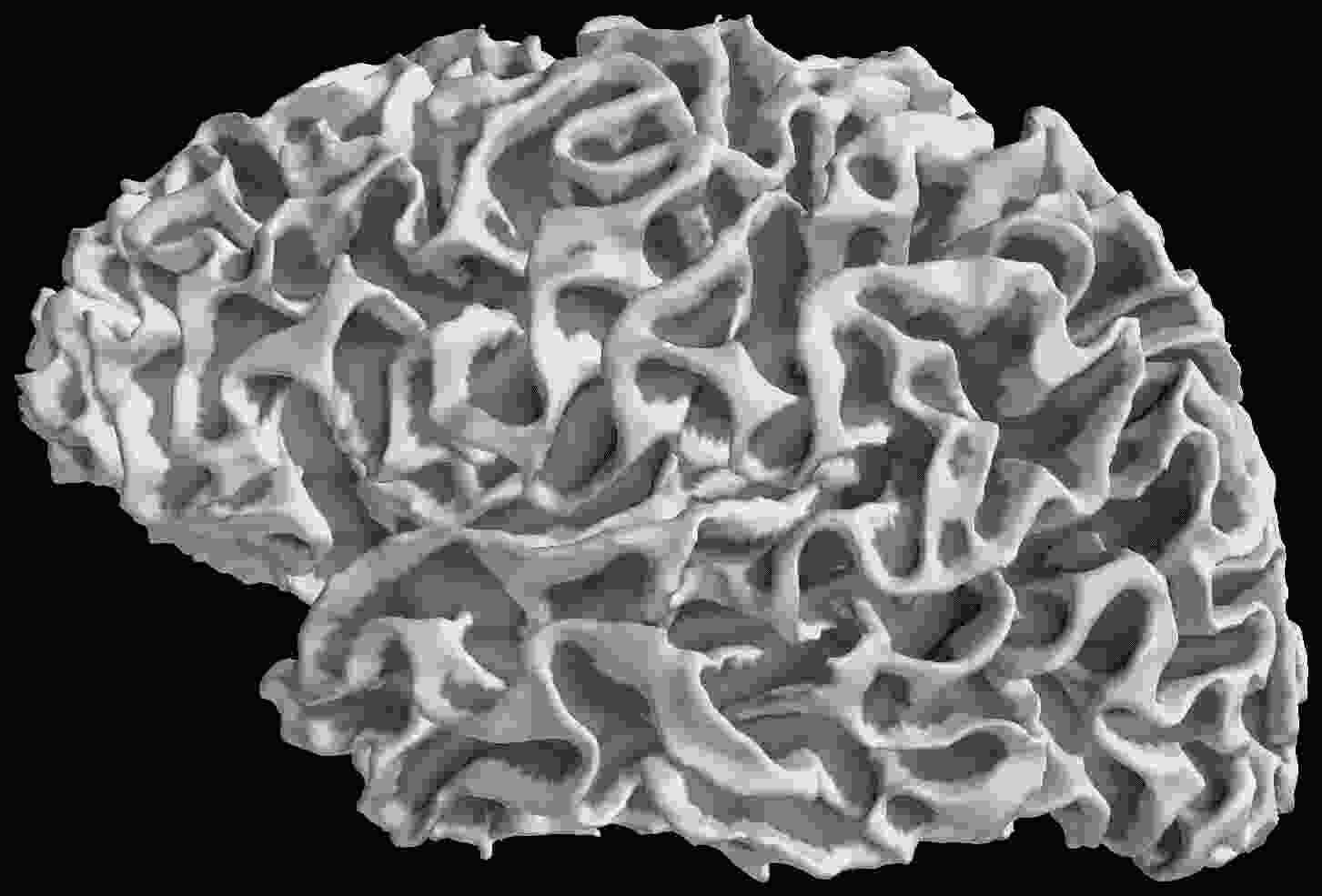}
  \end{subfigure}\hspace{\spacefig em}
  \begin{subfigure}{\figsize\textwidth}
    \includegraphics[width=68px,height=47px]{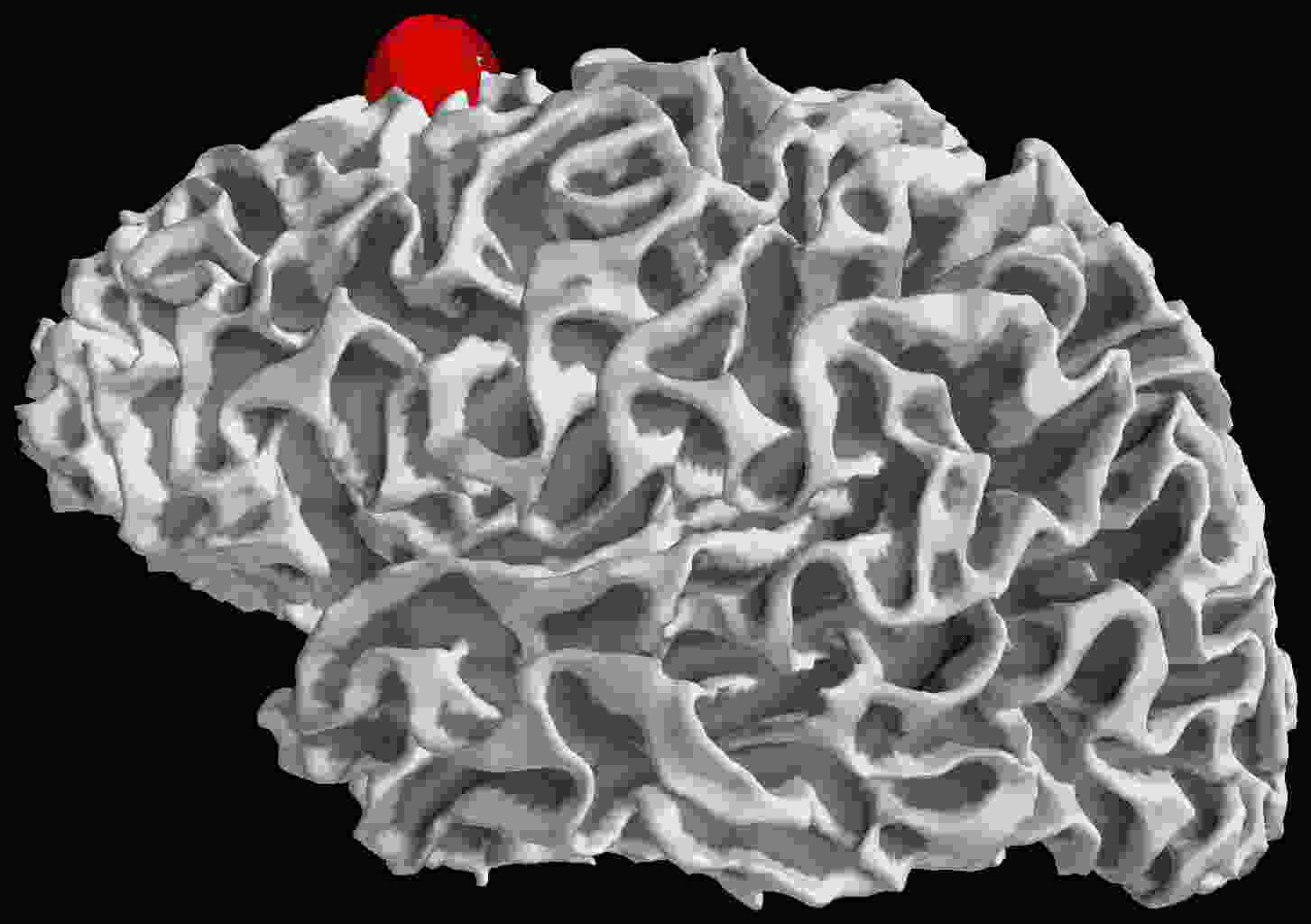}
    \end{subfigure}\hspace{\spacefig em}
  \begin{subfigure}{\figsize\textwidth}
  \includegraphics[width=68px,height=47px]{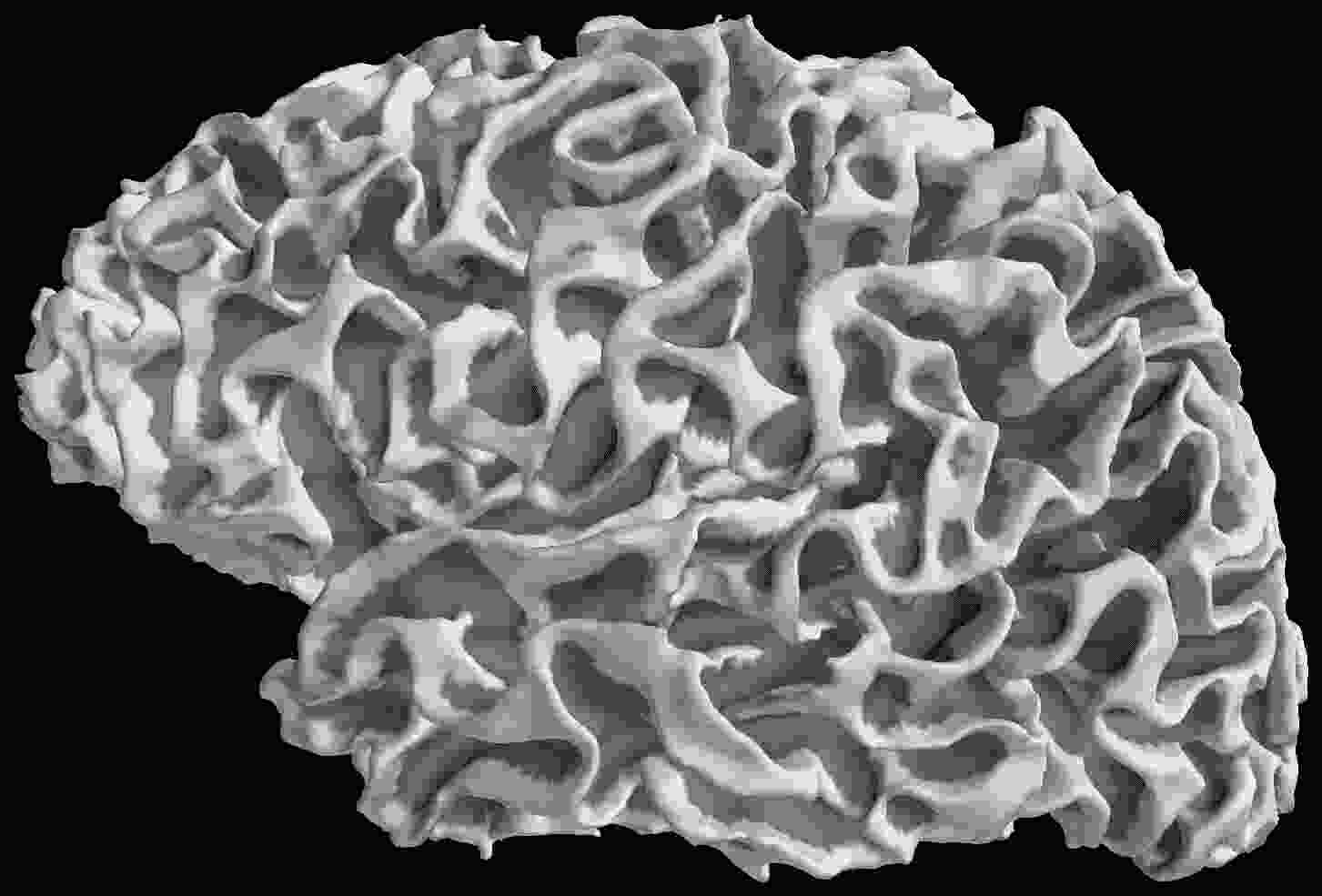}
  \end{subfigure}\hspace{\spacefig em}
  \begin{subfigure}{\figsize\textwidth}
  \includegraphics[width=68px,height=47px]{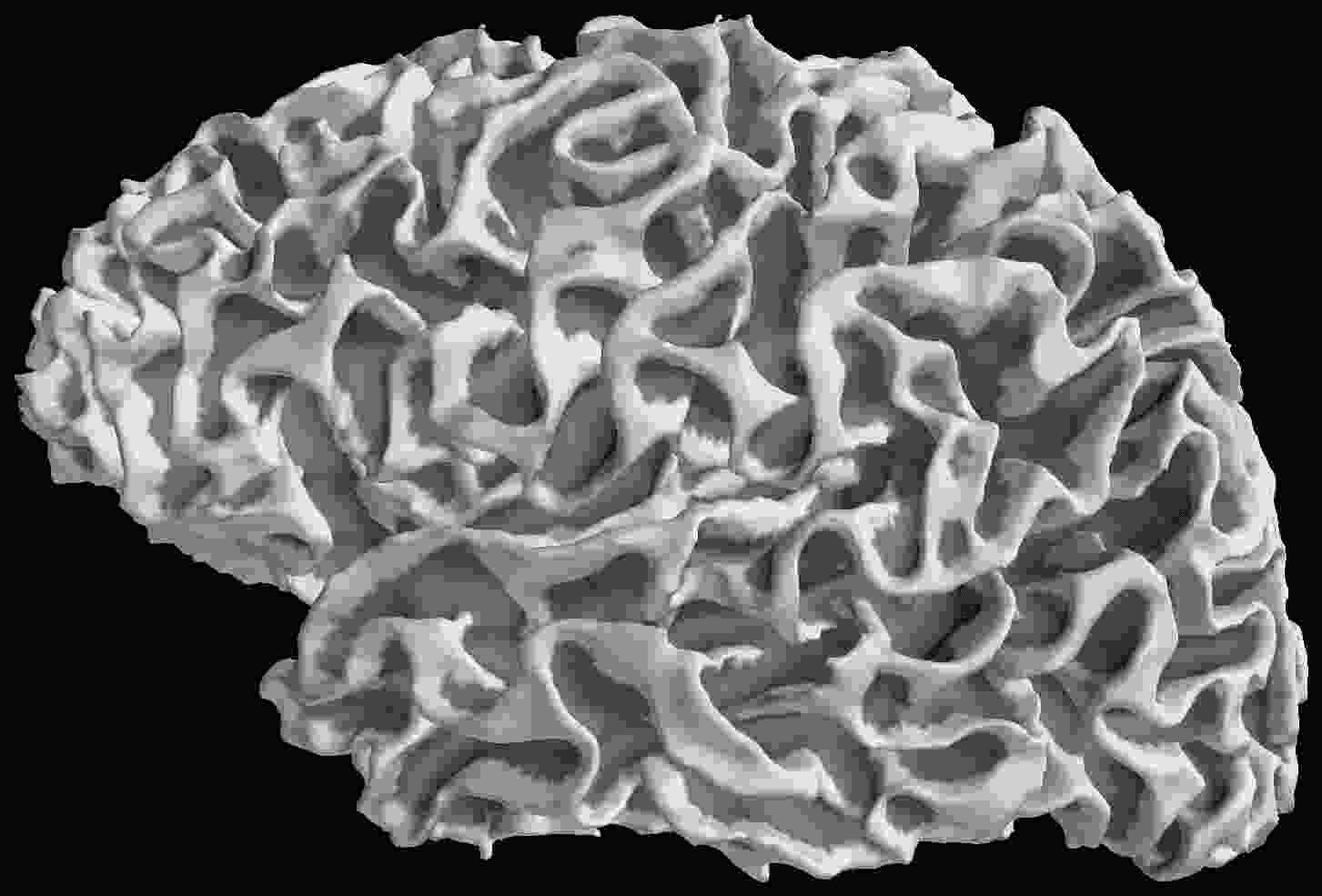}
  \end{subfigure}\hspace{\spacefig em}
  \begin{subfigure}{\figsize \textwidth}
    \includegraphics[width=68px,height=47px]{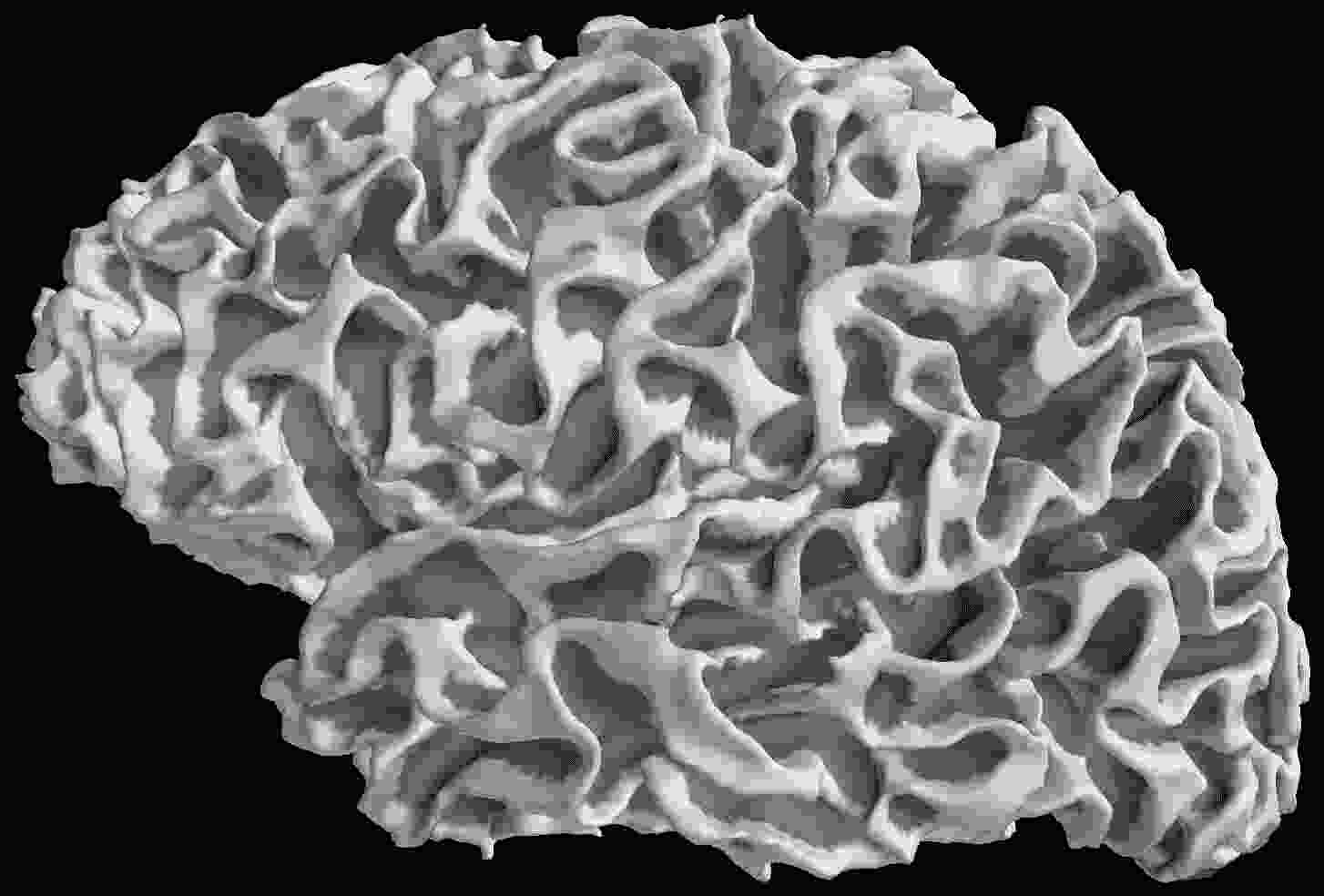}
    \end{subfigure}\hspace{\spacefig em}

  \vspace{-0.1em}
  \begin{subfigure}{\figsize\textwidth}
    \includegraphics[width=68px,height=47px]{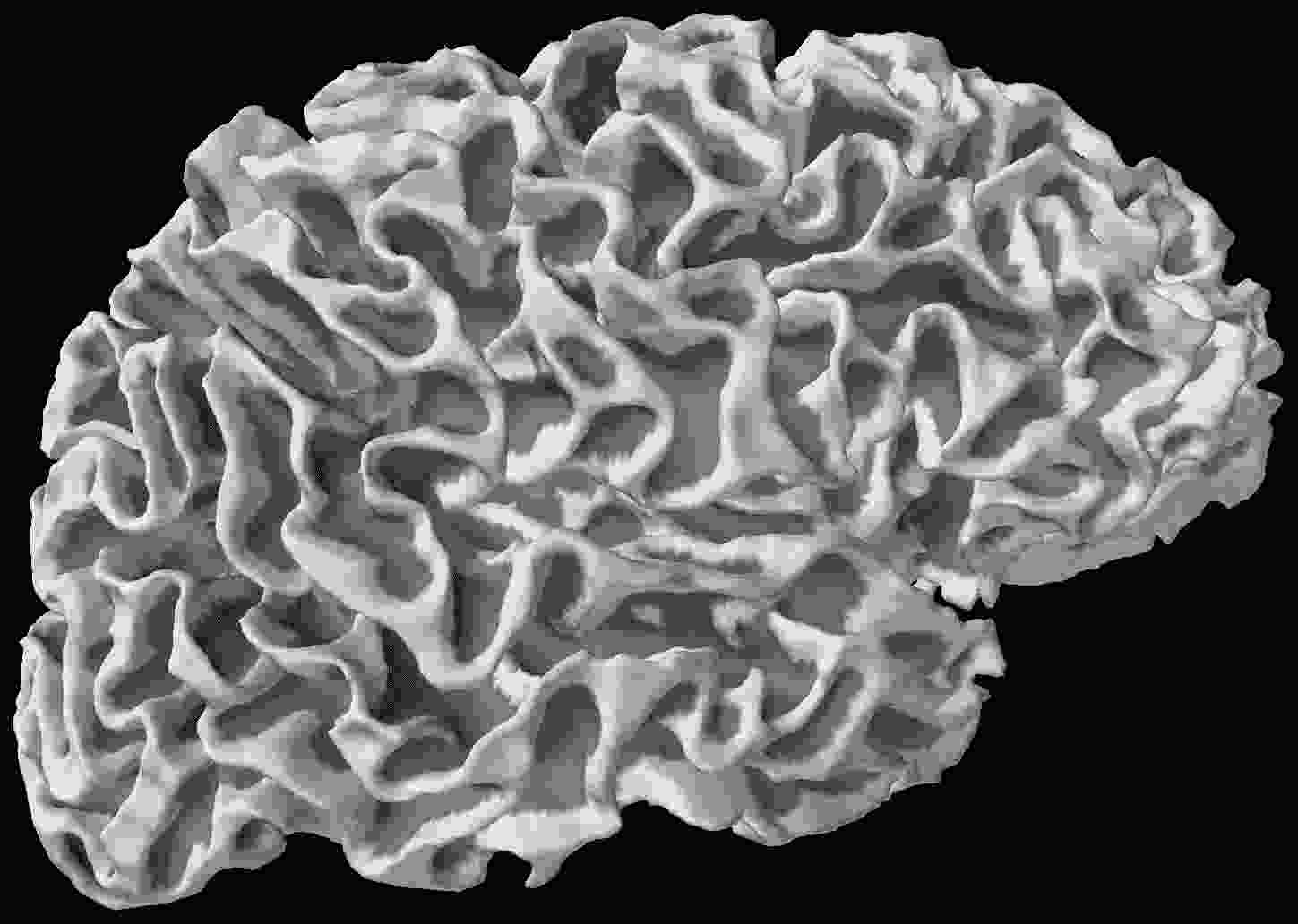}
    \caption{\us}
  \end{subfigure}\hspace{\spacefig em}
  \begin{subfigure}{\figsize\textwidth}
    \includegraphics[width=68px,height=47px]{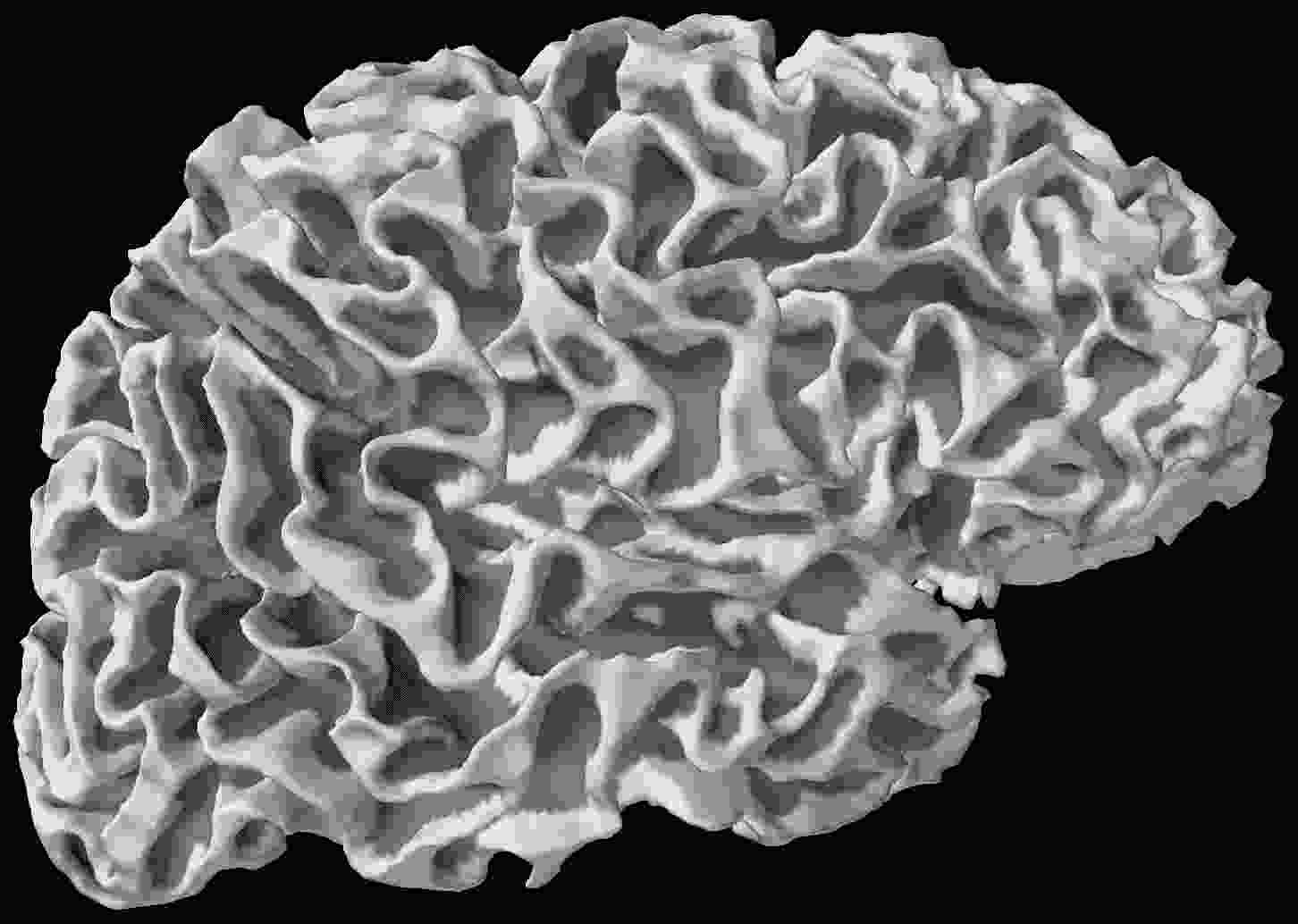}
    \caption{\sgcl}
  \end{subfigure}\hspace{\spacefig em}
  \begin{subfigure}{\figsize\textwidth}
    \includegraphics[width=68px,height=47px]{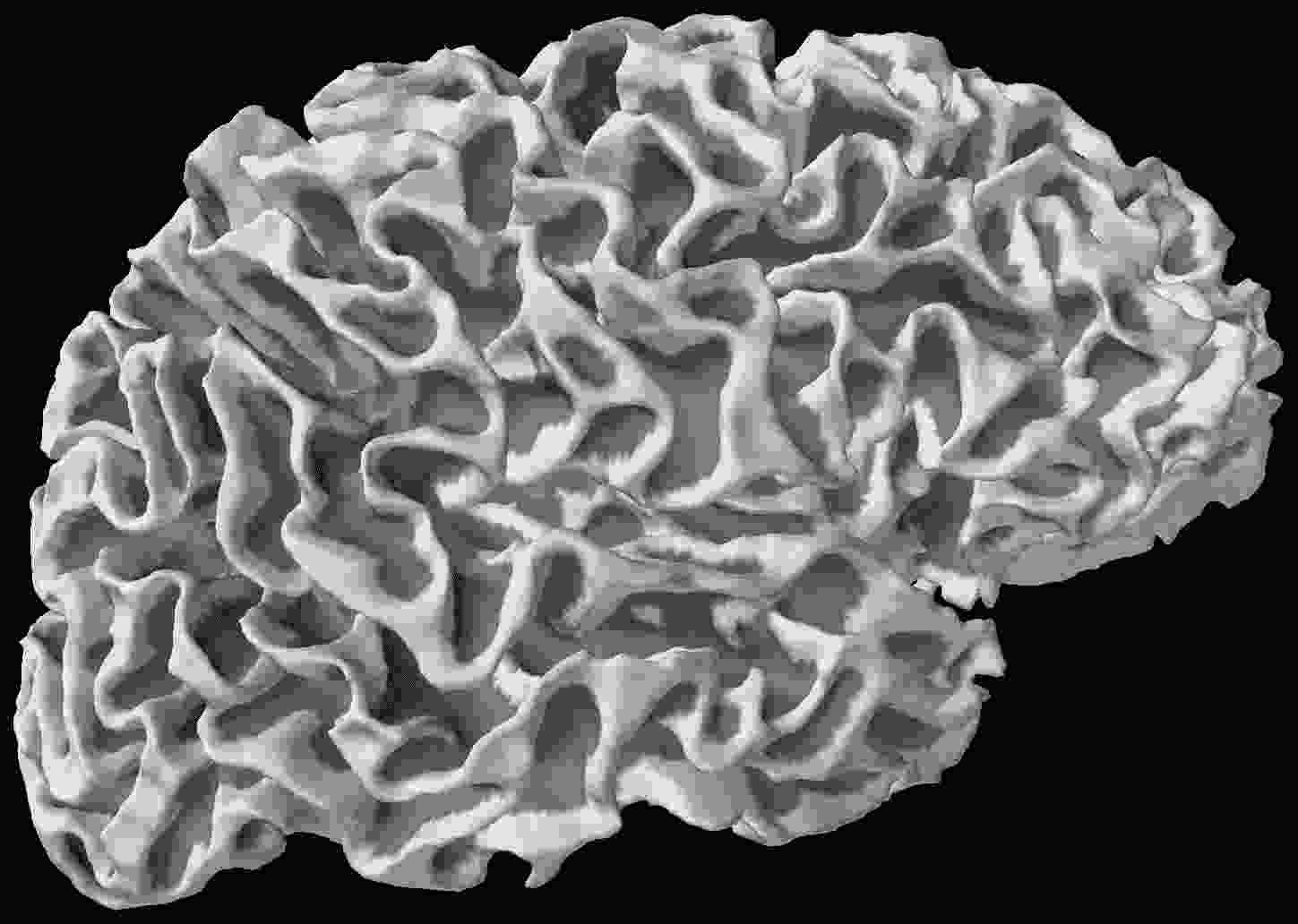}
    \caption{\mler}
    \end{subfigure}\hspace{\spacefig em}
  \begin{subfigure}{\figsize\textwidth}
    \includegraphics[width=68px,height=47px]{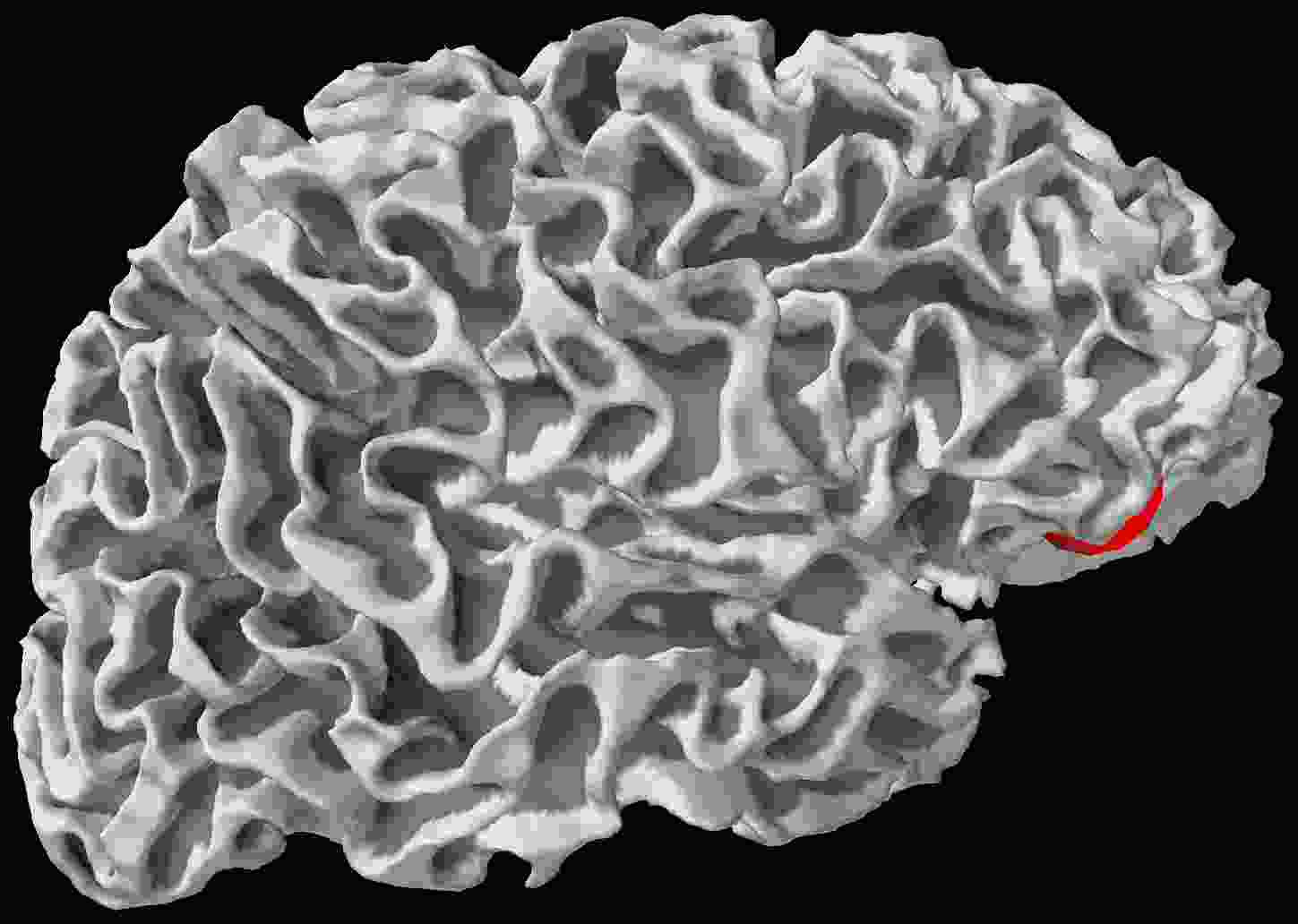}
    \caption{\mle}
    \end{subfigure}\hspace{\spacefig em}
  \begin{subfigure}{\figsize\textwidth}
    \includegraphics[width=68px,height=47px]{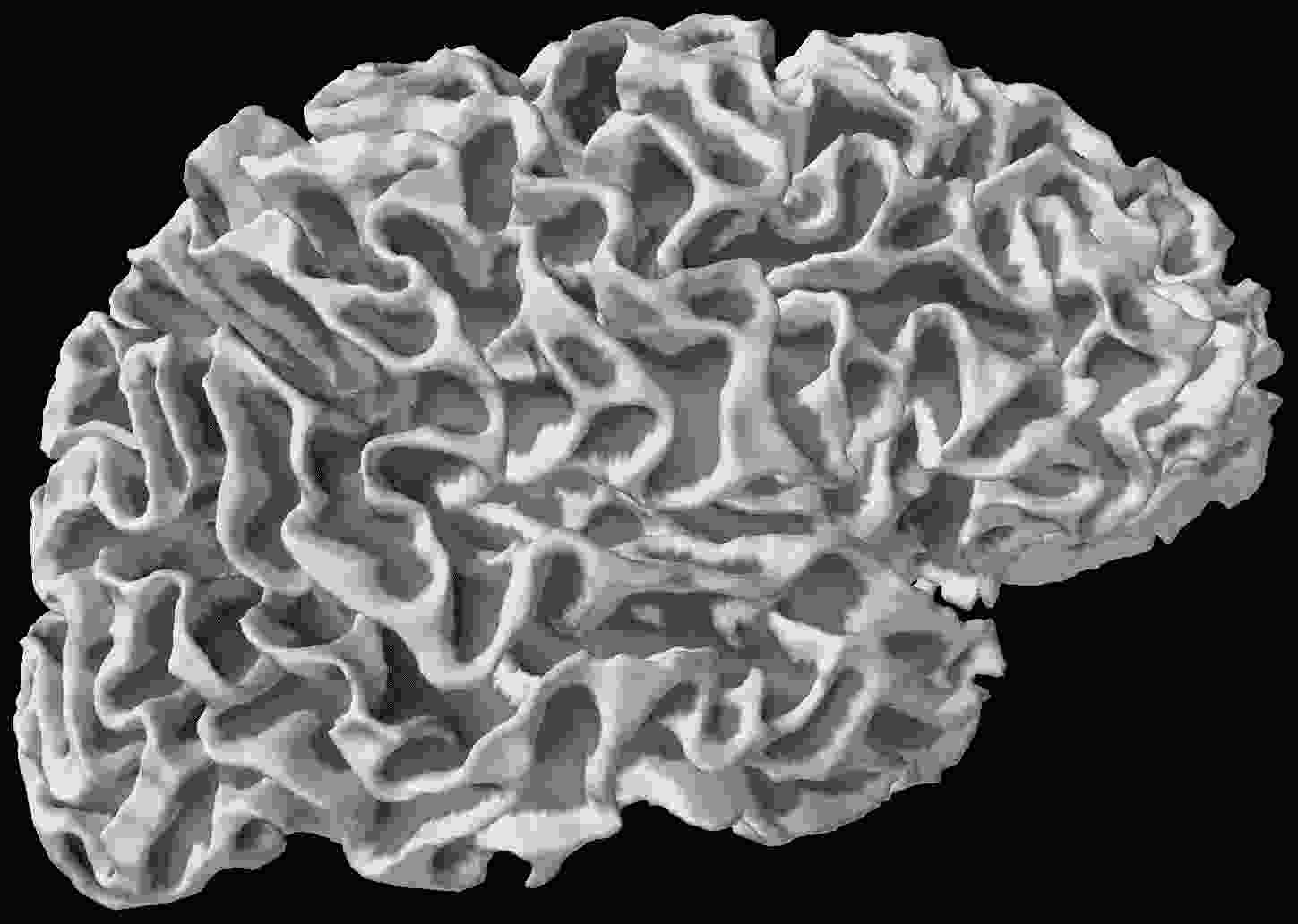}
    \caption{\mrcer}
    \end{subfigure}\hspace{\spacefig em}
  \begin{subfigure}{\figsize\textwidth}
    \includegraphics[width=68px,height=47px]{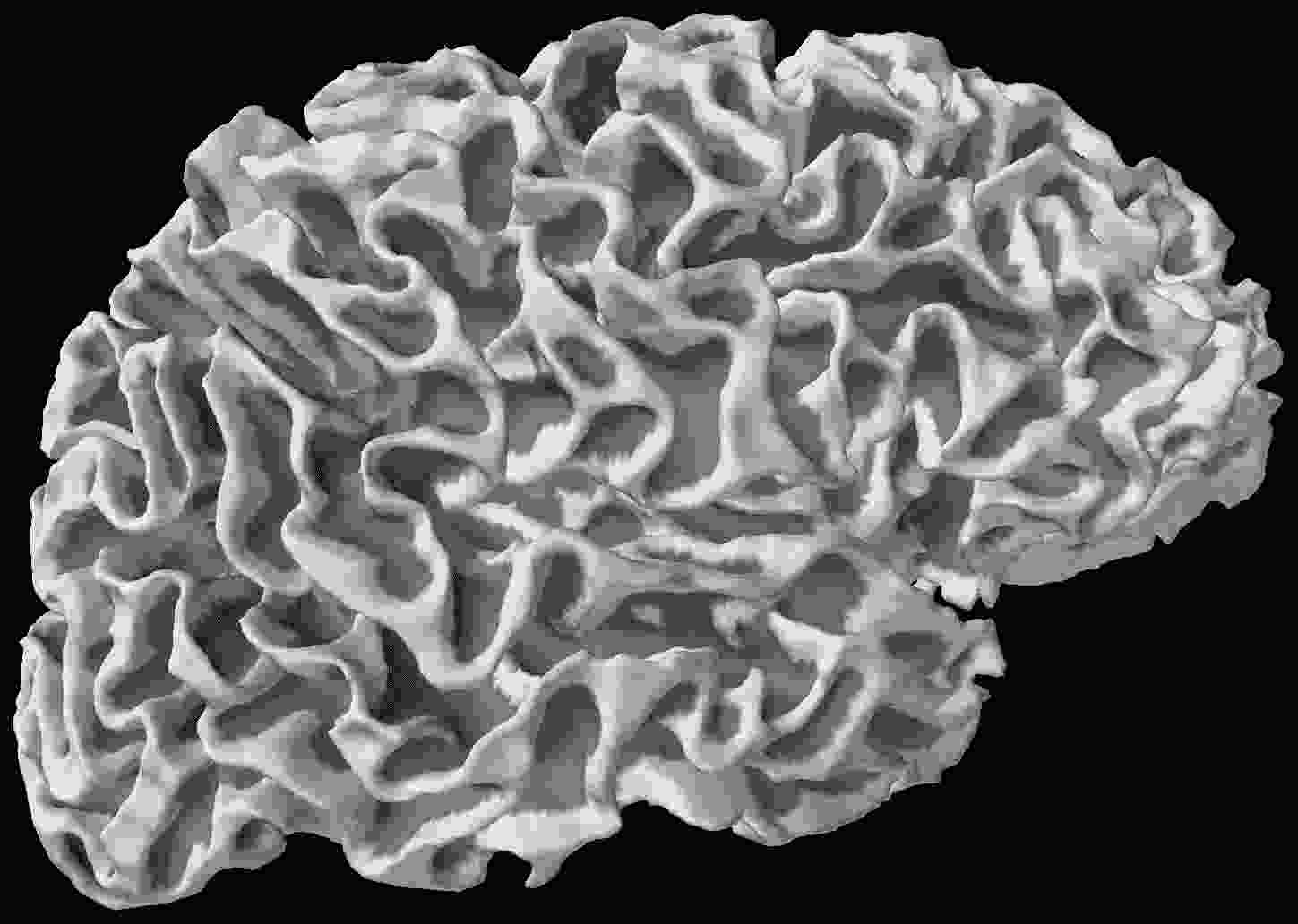}
    \caption{\mtl}
  \end{subfigure}\hspace{\spacefig em}
  \caption{\textit{Real data ($n=102$, $q=7498$, $q=48$, $r=31$)} Sources found in the left hemisphere (top) and the right hemisphere (bottom) after right visual stimulations.}
  \label{fig:real_data_right_visu_r_31}
\end{figure}

\end{document}